\documentclass[a4paper,11pt]{article}

\usepackage[top=3.5cm, bottom=3.5cm, left=2.5cm, right=2.5cm]{geometry}

\usepackage[utf8]{inputenc}
\usepackage[T1]{fontenc}
\usepackage[french,english]{babel} 
\usepackage{enumerate}
\usepackage{enumitem}
\usepackage{subcaption}
\usepackage{xcolor}
\usepackage[]{pdfpages}

\usepackage[normalem]{ulem}

\usepackage{titling}
\usepackage[colorlinks=false,linkcolor=blue]{hyperref}
\usepackage{csquotes}
\usepackage[nottoc, notlof, notlot, numbib]{tocbibind}
\usepackage[style=alphabetic,giveninits,doi=false,isbn=false,url=false,eprint=false,clearlang=true,sorting=nyt,maxbibnames=99]{biblatex}
\DeclareSourcemap{
      \maps[datatype=bibtex]{
      \map[overwrite=false]{
       \step[fieldsource=note]
       \step[fieldset=addendum, origfieldval, final]
       \step[fieldset=note, null]
       }
   }
   }

\usepackage{amsthm,amsfonts,amsmath,amssymb}
\usepackage{mathtools}
\usepackage{thmtools}
\usepackage{bbm}
\usepackage{mathrsfs}
\usepackage{float}
\usepackage{stmaryrd}
\usepackage{setspace}
\usepackage{array}
\usepackage{tabularx}
\usepackage{ltablex} 
\usepackage{comment} 

\usepackage{todonotes}
\mathtoolsset{showonlyrefs}

\newcommand{\B}{\mathcal{B}}
\newcommand{\C}{\mathcal{C}}

\newcommand{\N}{\mathcal{N}}
\newcommand{\R}{\mathbb{R}}

\newcommand{\E}{\mathbb{E}}

\renewcommand{\d}{\mathrm{d}}
\newcommand{\F}{\mathcal{F}}

\renewcommand{\P}{\mathbb{P}} 
\renewcommand{\H}{\mathcal{H}}
\renewcommand{\L}{\mathcal{L}} 
\newcommand{\W}{\mathcal{W}}

\newcommand{\blue}[1]{\textcolor{blue}{#1}}

\newcommand{\purple}[1]{\textcolor{purple}{#1}}
\newcommand{\ndbar}{\overline{D}} 
\newcommand{\cstW}{C^{\overline{W}}} 
\newcommand{\cstWh}{C^{\overline{W}}} 
\newcommand{\llnW}{R^{\overline{W}}} 
\newcommand{\cstH}{C^{H,B}} 
\newcommand{\cstHhat}{\hat{C}^{H,B}} 
\newcommand{\Finit}{\F_W} 
\newcommand{\varu}{\sigma_u} 
\newcommand{\vpu}{\tilde{\sigma}_u} 
\newcommand{\varv}{\sigma_v} 
\newcommand{\vpv}{\tilde{\sigma}_v} 

\newcommand{\bfS}{\mathbf{S}}
\newcommand{\bfz}{\mathbf{z}}
\newcommand{\bfZ}{\mathbf{Z}}
\newcommand{\bfh}{\mathbf{h}}
\newcommand{\bfH}{\mathbf{H}}
\newcommand{\bfb}{\mathbf{b}}
\newcommand{\bfB}{\mathbf{B}}
\newcommand{\Win}{W_{\mathrm{in}}}
\newcommand{\Wout}{W_{\mathrm{out}}}
\newcommand{\dimin}{d_{\mathrm{in}}}
\newcommand{\dimout}{d_{\mathrm{out}}}
\newcommand{\varin}{\sigma_{\mathrm{in}}}
\newcommand{\varout}{\sigma_{\mathrm{out}}}
\newcommand{\vpin}{\tilde{\sigma}_{\mathrm{in}}}
\newcommand{\vpout}{\tilde{\sigma}_{\mathrm{out}}}
\newcommand{\vpk}{\tilde{\sigma}_{k}}
\newcommand{\bnabla}{\boldsymbol{\nabla}}

\newcommand{\Cov}{\operatorname{Cov}}

\newtheorem{theorem}{Theorem}[section]
\newtheorem{proposition}[theorem]{Proposition}
\newtheorem{lemma}[theorem]{Lemma}
\newtheorem{corollary}[theorem]{Corollary}
\newtheorem{assumption}{Assumption}

\theoremstyle{definition}
\newtheorem{definition}[theorem]{Definition}

\theoremstyle{remark}
\newtheorem{rem}{Remark}

\newcounter{paragraphe} 

\title{ ResNets of All Shapes and Sizes: \\ Convergence of Training Dynamics in the Large-scale Limit}

\author{
Louis-Pierre Chaintron,\quad
Lénaïc Chizat,\quad
Javier Maass\thanks{École Polytechnique Fédérale de Lausanne (EPFL),  Institute of Mathematics, 1015 Lausanne, Switzerland. Authors listed in alphabetical order.}
}

\date{\today}

\begin{document}

\maketitle

\begin{abstract}

We establish convergence of the training dynamics of residual neural networks (ResNets) to their joint infinite depth $L$, hidden width $M$, and embedding dimension $D$ limit. 
Specifically, we consider ResNets with two-layer perceptron blocks in the maximal local feature update (MLU) regime and prove that, after a bounded number of training steps, the error between the ResNet and its large-scale limit is $O\big(\frac{1}{L}+\frac{\sqrt{D}}{\sqrt{LM}}+\frac{1}{\sqrt{D}}\big)$. This error rate is empirically tight when measured in embedding space.
For a budget of $P=\Theta(LMD)$ parameters, this yields a convergence rate $O(P^{-1/6})$  for the scalings of $(L,M,D)$ that minimize the bound.
Our analysis exploits in an essential way the depth-two structure of residual blocks and applies formally to a broad class of state-of-the-art architectures, including Transformers with bounded key-query dimension.

From a technical viewpoint, this work completes the program initiated in the companion paper~\cite{chizat2025hidden} where it is proved that for a fixed embedding dimension $D$, the training dynamics converges to a \emph{Mean ODE} dynamics at rate $O\big(\frac{1}{L}+\frac{\sqrt{D}}{\sqrt{LM}}\big)$. Here, we study the large-$D$ limit of this Mean ODE model and establish convergence at rate $O\big(\frac{1}{\sqrt{D}}\big)$, yielding the above bound by a triangle inequality. 
To handle the rich probabilistic structure of the limit dynamics and obtain one of the first rigorous quantitative convergence for a DMFT-type limit, we combine the cavity method with propagation of chaos arguments at a functional level on so-called \emph{skeleton maps}, which express the weight updates as functions of CLT-type sums from the past.
\end{abstract}

\tableofcontents


\section{Introduction}

\newcommand{\tWin}{\mathtt{W_{in}}}
\newcommand{\tWout}{\mathtt{W_{out}}}
\renewcommand{\th}{\mathtt{h}}
\newcommand{\tU}{\mathtt{U}}
\newcommand{\tV}{\mathtt{V}}
\newcommand{\tf}{\mathtt{f}}
\newcommand{\ty}{\mathtt{y}}

Training artificial neural networks of ever increasing size has been a key driver of progress in artificial intelligence.
Yet, from a theoretical viewpoint, the structure and behavior of training dynamics of very large models remain poorly understood. In particular, beyond the  infinite-width case, it is still unclear whether and when other well-defined infinite-size limits can be shown to exist (especially in light of certain training instabilities arising in large scale settings~\cite{wortsman2023small}), under which conditions on the architecture's shape they arise, and how efficiently finite models approximate them.

 In this work, we focus on residual networks (ResNets) with depth-two blocks --- the backbone of many state-of-the-art architectures --- whose shape is determined by their depth $L$, hidden width $M$ and embedding dimension $D$. We address these questions by building a rigorous  theory of the large-scale limit of their training dynamics, for a fixed training time and number of training samples. In addition to providing the first proof of convergence for ResNets in feature-learning regimes as $L,M,D\to \infty$, our analysis treats the joint limit under general (not necessarily proportional) scalings and yields quantitative error rates that are empirically sharp.

\subsection{Gradient descent on ResNets with depth-two blocks}\label{sec:setting}
\paragraph{ResNets with 2LP blocks.} The model of interest is a residual neural network (ResNet)~\cite{he2015deepresiduallearningimage} with two-layer perceptron (2LP) blocks. For an input $x\in \R^{\dimin}$, the output $\mathtt{y}_\theta(x)\in \R^{\dimout}$ is given by 
\begin{align}\label{eq:fully_finite_resnet}
\left\{
\begin{aligned}
\th^{(0)}_\theta(x) &= \tWin x, \\
\th^{(\ell)}_\theta(x) &= \th^{(\ell-1)}_\theta(x)+ \frac{1}{ML}   \tV^{\ell} \rho\Big(\frac1D (\tU^{\ell})^\top \th_\theta^{(\ell-1)}(x)\Big),&\forall \ell\in [1:L]\\ 
\ty_\theta(x) &= \frac{1}{D}(\tWout)^\top \th_\theta^{(L)}(x) 
\end{aligned}
\right.
\end{align}
where $\rho:\R\to\R$ is a smooth nonlinearity, which applies component-wise to vectors, and the parameters/weights $\theta=((\tU^{\ell},\tV^{\ell})_{\ell \in [1:L]},\tWin,\tWout)\in \mathbb{R}^P$ include the embedding matrix \(\tWin := (\Win^d)_{d=1}^D \in \R^{D \times \dimin}\), the readout (or unembedding) matrix \(\tWout := (\Wout^d)_{d=1}^D \in \R^{D \times \dimout}\), and the input/output weights of each layer $(\tU^{\ell},\tV^{\ell})\in \R^{D\times M}$ for $\ell\in [1:L]$. The \emph{shape} hyper-parameters (HP) of the model are the embedding dimension $D$, the hidden width $M$  and the depth $L$. While prior works have often considered $D=M$, it is crucial in our analysis to distinguish these two shape HPs as they play different, in a sense competing, roles and need to be sent to $+\infty$ separately. The total number of parameters is
\[
P = D\cdot (\dimin+\dimout)+ 2LMD
\]
and scales as $\Theta(LMD)$ when ${\dimin}$ and ${\dimout}$ are constant. We note that our analysis could be extended to the case where the domain and codomain of $\rho$ are of dimension larger than $1$ but fixed, such as attention blocks with bounded key-query dimension $d_k$ in Transformers~\cite{vaswani2017attention}, see~\cite[Section~4.2]{chizat2025hidden} for details on how they fit in this framework with $\rho:\mathbb{R}^{3d_k}\to \mathbb{R}^{d_k}$.

\paragraph{Training dynamics.} We consider a training set of $N$ input samples $(x_i)_{i\in [1:N]}$ and their respective loss functions $\mathrm{loss}_i :\R^{d_{out}}\to \R$, assumed to have a Lipschitz gradient. We define the objective function $\mathrm{Loss}(\theta)=\frac1N \sum_{i=1}^N \mathrm{loss}_i (\mathtt{y}_\theta(x_i))$ and consider the following Gradient Descent (GD) training dynamics with scaled learning rates (LRs) $\eta_u,\eta_v,\eta_{\mathrm{in}}, \eta_{\mathrm{out}}\geq 0$ and scaled standard deviations (STDs) of the random initialization $\sigma_u,\sigma_v,\sigma_{\mathrm{in}}, \sigma_{\mathrm{out}}\geq 0$
\begin{align}\tag{GD}\label{eq:resnet-GD}
\left\{
\begin{aligned}
\tU^{\ell}_{0}[i,j] &\overset{\text{iid}}{\sim} \mathcal{SG}(0,D\sigma_u^2) \\
\tV^{\ell}_{0}[i,j] &\overset{\text{iid}}{\sim} \mathcal{SG}(0,D\sigma_v^2)\\
\tWin_{,0}[i,j] &\overset{\text{iid}}{\sim} \mathcal{SG}(0,\sigma_{\mathrm{in}}^2)\\
\tWout_{,0}[i,j] &\overset{\text{iid}}{\sim} \mathcal{SG}(0,\sigma_{\mathrm{out}}^2)
\end{aligned}
\right.,&&
\text{and}
& &
\left\{
\begin{aligned}
\tU^{\ell}_{k+1}&=\tU^{\ell}_{k} - \eta_u LMD \nabla_{\tU^{\ell}}\mathrm{Loss}(\theta_k)\\
\tV^{\ell}_{k+1}&=\tV^{\ell}_{k} - \eta_v LMD  \nabla_{\tV^{\ell}}\mathrm{Loss}(\theta_k)\\
\tWin_{,k+1}&=\tWin_{,k} - \eta_{\mathrm{in}} D \nabla_{\Win}\mathrm{Loss}(\theta_k)\\
\tWout_{,k+1}&=\tWout_{,k} - \eta_{\mathrm{out}} D \nabla_{\Wout}\mathrm{Loss}(\theta_k)
\end{aligned}
\right.
\end{align}
where $\mathcal{SG}(0,\sigma^2)$ represents a centered subgaussian distribution with variance $\sigma^2$ and variance proxy $\Theta(\sigma^2)$, such as $\mathcal{N}(0,\sigma^2)$ or $\mathrm{Unif}([-\sqrt{3}\sigma,\sqrt{3}\sigma])$. For simplicity, we assumed here that all scalar weights are initialized independently but our analysis allows for certain form of dependencies, for instance $\tWin_{,0}=\tWout_{,0}$ (weight-tying of embeddings) is allowed. Also, our analysis can easily be adapted to more general gradient-based training algorithms beyond GD.

Assuming that the LRs and initialization STDs are $\Theta(1)$, our choices of scalar multipliers and initialization scales are  characteristic of the Maximal Local Updates (MLU) regime~\cite[Section~4]{chizat2025hidden}. This regime is the unique regime where the one-step loss decay is $\Theta(1)$ at initialization, and where the weights $\tU^\ell$ at layer $\ell$ contribute to a non-vanishing fraction of the $\Theta(1)$ updates of the features\footnote{For a given $x$, the features at layer $\ell$ and step $k$ is the vector $\rho\Big(\frac1D (\tU^{\ell}_k)^\top \th_{\theta_k}^{(\ell-1)}(x)\Big) \in \mathbb{R}^M$. It is also known as the activation vector, and the argument of $\rho$ is the pre-activation vector.} at the same layer $\ell$. There exists other regimes with a well-behaved loss decay where the feature updates are non-local (the lazy ODE regime), or even vanishing (the lazy kernel regime), see~\cite[Section~4]{chizat2025hidden} for a discussion of the asymptotic phase diagram in the large $L,M,D$ limit, which builds on many prior works~\cite{dey2026dontlazycompletepenables, yang2023tensorprogramsvifeature, bordelon2024infinitelimitsmultiheadtransformer, bordelon2023depthwisehyperparametertransferresidual}.

\subsection{Main limit theorem}\label{sec:main-limit-theorem}

Our main limit theorem is concerned with a slightly modified version of~\eqref{eq:resnet-GD} which we call (ClippedGD) where certain quantities involved in the computation of the gradient are clipped to a bounded range. These clippings are necessary to stabilize the large-$L$ limit and are defined in~\cite[Rmk.~4.1]{chizat2025hidden}, but they are not needed for the large-$D$ limit developed in the present paper, so we ignore them in the following\footnote{More precisely, in the notation of~\cite[Section.~4.3]{chizat2025hidden}, Theorem~\ref{thm:main_theorem_intro} requires $\mathrm{cl}_u$ and $\mathrm{cl}_b$ to be bounded and to have bounded derivatives up to order $4$. It is immediate to check that the theorems in the present paper still hold under these assumptions, rather than with $\mathrm{cl}_u=\mathrm{cl}_b=\mathrm{id}$ as we implicitly assume throughout here.}.

\begin{theorem}[Quantitative large-scale limit of ResNets]\label{thm:main_theorem_intro}
Let $C_0>0$. Assume that $\rho^{(i)}$ (the $i$-th order derivative of $\rho$) is bounded for $i\in [1:5]$ and that $\nabla \mathrm{loss}_i$ is Lipschitz for $i\in [1:N]$. Consider the dynamics $\mathrm{(ClippedGD)}$ with $\eta_u,\eta_v,\sigma_u,\sigma_v,\sigma_{\mathrm{in}}, \sigma_{\mathrm{out}}\geq 0$ and $\eta_\mathrm{in},\eta_{\mathrm{out}}=0$ (for simplicity).

For all $k\in \mathbb{N}$, there exist deterministic vectors $y_k(x_i)\in \mathbb{R}^{\dimout}$ and stochastic processes $s \in [0,1] \mapsto H_k(s,x_i)\in \mathbb{R}$, $i\in [1:N]$ and there exist $c_1,c_2>0$ such that the following holds. 

Let $\Delta_k^y \coloneqq \max_{i\in [1:n]} \Vert \mathtt{y}_k(x_i)-y_k(x_i)\Vert_{2}$ and let $\Delta_k^h \coloneqq \max_{i\in [1:N]}\max_{\ell\in [1:L]} D^{-1/2} \Vert \th_k^{(\ell)}(x_i) - H_k^{(D)}(\ell/L,x_i) \Vert_2$ where $H_k^{(D)}(s,x_i)\in \mathbb{R}^D$ is a vector of $D$ independent copies of $H_k(s,x_i)$ suitably coupled to the randomness of the ResNet. For all $L,M,D\in \mathbb{N}^*$ such that  $\log(L) \vee D \leq C_0 M$ and for all $\delta \in (0,1) 
$, it holds with probability at least $1-\delta$
\begin{align}
 \Delta_{k}^y \vee \Delta_{k}^h \leq c_1\Big(\frac{1}{L}+\frac{\sqrt{D}}{\sqrt{ML}}+\frac{1+\log(1/\delta)}{\sqrt{D}}\Big)
\end{align}
provided that the right-hand side is smaller than $c_2$. 
\end{theorem}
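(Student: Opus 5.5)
The plan is to obtain the bound by a triangle inequality, passing through the intermediate \emph{Mean ODE} model of the companion paper~\cite{chizat2025hidden} at fixed embedding dimension $D$. That paper proves that, for fixed $D$ and under the stated regularity and clipping assumptions, the (ClippedGD) iterates are close to the Mean ODE iterates with error $O\big(\frac1L+\frac{\sqrt D}{\sqrt{LM}}\big)$, with high probability. Here $D$ may grow provided $\log L\vee D\le C_0M$, and the Mean ODE is driven by the same initial embeddings $\tWin_{,0},\tWout_{,0}$. This furnishes a $D$-dimensional process $\bar h_k(s,x_i)\in\R^D$ and outputs $\bar y_k(x_i)$, with $D^{-1/2}\Vert \th_k^{(\ell)}(x_i)-\bar h_k(\ell/L,x_i)\Vert_2$ and $\Vert \ty_k-\bar y_k\Vert_2$ both bounded by that rate. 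I will take this as the first half of the triangle inequality, with the $\log(1/\delta)$ dependence inherited from the concentration statements there. The remaining task, which is the content of this paper, is to compare the Mean ODE in dimension $D$ to its $D\to\infty$ limit at rate $O\big((1+\log(1/\delta))/\sqrt D\big)$.

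For the large-$D$ step, I first define the limit object by induction on $k$. In the Mean ODE the hidden pair $(u,v)$ of a block lives in $\R^D\times\R^D$, and its dynamics involve the embedding coordinates only through empirical averages $\frac1D\sum_d(\cdot)$. I will write each update through a \emph{skeleton map}: a deterministic function that expresses $u_k,v_k$ and the coordinates $h_k^d(s)$ in terms of the initial coordinate data $(U_0^d,V_0^d,\Win^d,\Wout^d)$ and of finitely many CLT-type sums $\frac{1}{\sqrt D}\sum_d U_0^d(\ldots)$ accumulated over past steps. The $D=\infty$ limit replaces empirical averages by expectations and CLT sums by Gaussian processes whose covariances are defined recursively, in DMFT fashion. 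This yields $H_k(s,x_i)$ as the law of one coordinate together with deterministic $y_k$. The coupling $H_k^{(D)}$ is obtained by evaluating the limit skeleton map on the $D$ actual i.i.d. coordinate initializations of the Mean ODE.

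The comparison then proceeds by induction on $k$, via a cavity argument combined with propagation of chaos. For each coordinate $d$ I remove its own contribution to the CLT sums (the cavity system), so that the remaining sums are independent of coordinate $d$ and close to Gaussian. By Lipschitz and smoothness bounds on the skeleton maps, which use $\rho^{(i)}$ bounded for $i\le5$, the discrepancy between the finite-$D$ and limit coordinates is controlled by three terms: the self-interaction term, which is $O(1/\sqrt D)$ per coordinate; the fluctuation of the empirical averages around their expectations, which is $O(1/\sqrt D)$ by subgaussian concentration at the cost of a $\log(1/\delta)$ factor; and the induction error from step $k-1$. Averaging the squared errors over $d$ gives the $D^{-1/2}\ell^2$ bound on $h$, and a Lipschitz readout gives the bound on $y$. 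Gronwall in $s$ inside each step and a finite induction over $k$ produce constants $c_1$ depending on $k$. The proviso that the right-hand side is at most $c_2$ lets me stay in the region where the smoothness bounds of the skeleton maps apply, and lets me absorb higher-order terms.

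I expect the main obstacle to be the cavity/CLT step. The CLT sums at step $k$ involve $U_0^d$ multiplied by functionals of past iterates that themselves depend weakly on all coordinates, so the Gaussian approximation has to be made quantitative at a functional level and uniformly in $s\in[0,1]$. My first attempt would be a leave-one-out expansion to second order, Taylor-expanding the skeleton map in the removed coordinate's contribution, whose size is $O(1/\sqrt D)$, using the bounded fourth and fifth derivatives. I would combine this with a Lindeberg-type replacement to compare with the Gaussian limit at rate $D^{-1/2}$. Finally, the two halves are combined: a union bound over the two high-probability events and the triangle inequality give the stated estimate.
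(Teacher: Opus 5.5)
Your architecture matches the paper's. You pass through the finite-$D$ Mean ODE, take the first leg from \cite{chizat2025hidden}, and handle the second leg with skeleton maps, a leave-one-out expansion, propagation of chaos, a Lindeberg CLT, and Gr\"onwall plus induction on $k$. However, your large-$D$ step contains a genuine error. You build the limit by replacing empirical averages with expectations and CLT sums with Gaussian processes, and you count the self-interaction of coordinate $d$ as an $O(D^{-1/2})$ error. In this scaling that is false. The drift of $h_k^d$ contains $\E[\rho(P_k)\sqrt{D}V^d\mid\Finit]$, where $\sqrt D$ is the initialization scale $V_0^{(D)}=\sqrt D V^{(D)}$. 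Coordinate $d$ enters the backward CLT sums only through $V^d\mathbf{B}^d_{\wedge k-1}/\sqrt D$. Expand around the cavity sums $\overline{\bfS}^{d,\bfB^D}=\bfS^{\bfB^D}-V^d\mathbf{B}^d_{\wedge k-1}/\sqrt D$. The zeroth-order term vanishes by independence. The first-order term, however, equals $\varv^2\,\E\big[\rho'(P_k)\,\tfrac1D\sum_{d'}H_k^{d'}\bnabla_{\bfz^b}F_k^{d'}\,\big|\,\Finit\big]\cdot\mathbf{B}^d_{\wedge k-1}$, evaluated at $(\bfS^{\bfH^D},\overline{\bfS}^{d,\bfB^D})$, and this is of order one. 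Only the second-order remainder is $O(D^{-1/2})$. These Onsager-type reaction terms must be built into the limit. They are the $\varv^2$ drift in \eqref{eq:LimH} and the $\varu^2$ drifts in \eqref{eq:LimB}. The latter include $\rho''(P_k)Q_kH_k$, because in the backward pass removing $U^d$ also removes $U^dH_k^d/\sqrt D$ from the current sum $S_k^{\bfH^D}$. Under your naive substitution the $\sqrt D V^d$ term is replaced by $0$, so you would be comparing with the wrong process. Once the reaction terms are included, the limit becomes a McKean--Vlasov system involving the response functions $\bnabla F,\bnabla G$ of the skeleton maps. Its well-posedness (Theorem~\ref{thm:well_posedness_main}) has to be proved before any comparison.

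Two further ingredients are missing. First, the same $\sqrt D$ amplification appears in the propagation-of-chaos term $\E[(\rho(P_k^{\bfh^D,\bfb^D})-\rho(P_k^{\bfH^D,\bfB^D}))\sqrt D V^{(D)}\mid\Finit]$. This is how the current-step error $h_k-H_k$ enters Gr\"onwall. Bounding it coordinatewise by Cauchy--Schwarz and then averaging squared errors over $d$ loses a factor $\sqrt D$. You must instead bound the vector directly in RMS norm using orthogonality of the $V^d$ (a Bessel inequality): $\|\E[\sqrt D V^{(D)}Y\mid\Finit]\|_{\ndbar}\le\varv\,\E^{1/2}[Y^2\mid\Finit]$, which is Lemma~\ref{lem:decorrelation_chizat}. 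In the backward pass, $Y$ is a product of correlated factors, so you need the fourth-moment version, Lemma~\ref{lem:fine_decorrelation_lemma}.

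Second, you misassign the role of the proviso. The skeleton bounds hold globally; their exponential growth in the arguments is absorbed by the conditional subgaussianity of the CLT sums, so no smallness is needed there. What the proviso buys is coverage of all $\delta\in(0,1)$. \cite[Theorem~3]{chizat2025hidden} applies only conditionally on $\Finit$, on an event $\{\|\Win^{(D)}\|_{\ndbar}\vee\|\Wout^{(D)}\|_{\ndbar}\le R\}$, and only for $\delta\ge e^{-M}$. Similarly, the random constants of the large-$D$ step depend on $\frac1D\sum_d\|\Win^d\|_{2,\dimin}^p$ and $\frac1D\sum_d\|\Wout^d\|^p$ for $p\le 4$. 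Their subweibull concentration gives $O(1)$ bounds only when $\log(1/\delta)\lesssim D$. Taking $c_2\le c_1/(2C_0)$, the proviso forces $\log(1/\delta)<D/C_0\le M$, so both legs apply. You also need $D\le C_0M$ to absorb the $\log(1/\delta)/\sqrt{LM}$ term of the first leg into $\log(1/\delta)/\sqrt D$.
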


Strictly speaking, the proof is written for $N=1$, but the arguments straightforwardly adapt to $N>1$, see Remark~\ref{rem:skeleton_maps_for_N_ge_1} below for further precisions. 
A key object to prove Theorem~\ref{thm:main_theorem_intro} is the \emph{Mean ODE} dynamics, defined in~\eqref{eq:original_finite_D_system} below, which is used as an intermediate term in a triangle inequality. As shown in~\cite{chizat2025hidden}, this dynamics corresponds to the infinite-depth limit $L\to \infty$ of (ClippedGD). More precisely, \cite[Theorem 3]{chizat2025hidden} gives a high-probability error bound of the form
\begin{equation}\label{eq:hidden-width-informal}
\max_{\ell\in [1:L]}\max_{i\in [1:N]} \frac{1}{\sqrt{D}}\Vert \mathtt{h}^{\ell}_{k}(x_i)-h_{k}^{(D)}(\ell/L,x_i)\Vert_{2} =O\Big(\frac{1}{L}+\frac{\sqrt{D}}{\sqrt{ML}}\Big),
\end{equation} 
where $(h_k^{(D)})_{k\geq 0}$ are the hidden representations in $\mathbb{R}^D$ of the Mean ODE model.
In the present paper, we prove a quantitative convergence result of the Mean ODE dynamics to its high-dimensional $D\to \infty$ limit, with an explicit high-probability error bound in $O(1/\sqrt{D})$. This is the content of Theorem~\ref{thm:quantitative_convergence_main}, the central result of this paper presented in detail in the next section. Put together with a triangle inequality, these two results give the bound on $\Delta^h_k$ in $O(\frac{1}{L}+\frac{\sqrt{D}}{\sqrt{ML}}+\frac{1}{\sqrt{D}})$. Then we get the same bound on $\Delta^y_k$ using that the linear map $\mathtt{h}^{(L)}_k(x_i)\to \mathtt{y}_k(x_i)$, i.e.~the multiplication by $\frac{1}{D}(\tWout)^\top$, and the asymptotic counterpart of this operation, have operator norms in $O(1)$ with high probability. The details of the proof of Theorem~\ref{thm:main_theorem_intro} can be found in Section~\ref{sec:proof_of_thms_quant_cvg}.

We can make the following first remarks:
\begin{itemize}
\item \emph{(Assumptions on $\rho$.)} We do not assume $\rho$ itself to be bounded. This allows to cover the case of linear activations, or smooth approximations of ReLU. It is necessary however to assume that $\rho$ has at most linear growth since otherwise the Mean ODE can explode in finite time. Such explosion issues can be mitigated by systematically projecting $\mathtt{h}^{(\ell)}_k(x_i)$ on the RMS unit ball (LayerNorm), but we do not consider such projections here. 
\item \emph{(Modes of convergence.)} The output of the ResNet $\mathtt{y}_k(x_i)$ converges to a deterministic limit $y_k(x_i)$ while the hidden state $\th_\theta^{(\ell)}(x_i) \in \mathbb{R}^D$ at layer $\ell$ converges towards an infinite-length vector of independent samples of a random variable $H_k(\ell/L,x_i)$. For the sake of readability, Theorem~\ref{thm:main_theorem_intro} bounds the convergence rate for these two objects only, but our analysis leads to convergence rates (in appropriate metrics) for essentially all objects appearing in the dynamics. 
\item \emph{(Shape constraints.)} We deduce from Theorem~\ref{thm:main_theorem_intro} the following sufficient condition to converge to the limit: it must hold $L,M,D\to \infty$ under the shape constraints (i) $D/(ML)\to 0$, (ii) $\log(L) =O(M)$ and (iii) $D=O(M)$. These conditions cover the scaling regimes of standard architectures, see for instance the case of Llama 3 models in~\cite{grattafiori2024llama} where for the MLP blocks $M/D\in [3,4]$ and $D/(ML)$ decreases with the model size. From a theoretical viewpoint, of all these constraints, we believe that only (i) is necessary and we have observed experimentally that the limit dynamics is indeed different when $D/(ML)\to \alpha>0$. We leave it as an open question to determine whether the constraints (ii) and (iii), which come from the analysis in~\cite{chizat2025hidden}, are necessary for convergence. We note that (iii) is the most stringent constraint and in particular it implies (i) when $L,M,D\to \infty$.
\item \emph{(Dependencies of the constants.)} Although we do not track it explicitly, it can be seen that the bound depends on $d_{\mathrm{in}}$ and $d_{\mathrm{out}}$ only via the RMS norm of the embedded vectors $\tWin x_i$ and of the (rescaled) embedded gradients $\tWout \nabla \mathrm{loss}_i(\ty_k(x_i))$. Therefore, under appropriate assumptions on the norm of the input data and of the gradients, our bound holds uniformly over $d_{\mathrm{in}}$  and $d_{\mathrm{out}}$. In contrast, our bounds depend super-exponentially on the training time $k$ and the number of training samples $N$ -- we use $N = 1$ in the proof so in many places dependencies on $k$ should be read as dependencies on $k\cdot N$, see Remark~\ref{rem:skeleton_maps_for_N_ge_1}. It would be an interesting direction to improve these dependencies in $k$ and $N$. In~\cite{chizat2025hidden}, the dependency in $n$ is only logarithmic and in particular we believe that there is ample room for improvement for the $N$ dependency with a slightly more careful treatment.
\item \emph{(Trainable embeddings.)} We assumed $\tWin$ and $\tWout$ to be fixed to their initial random values -- through $\eta_\mathrm{in},\eta_{\mathrm{out}}=0$ -- for simplicity, but it is straightforward to extend our analysis to the case where these matrices are trained as well, and the same result would hold with $\eta_{\mathrm{in}},\eta_{\mathrm{out}}=\Theta(1)$. See details in Appendix~\ref{sec:extensions_of_results}.
\end{itemize}

\begin{figure}
\centering
\begin{subfigure}{0.32\linewidth}
\centering
\includegraphics[scale=0.4]{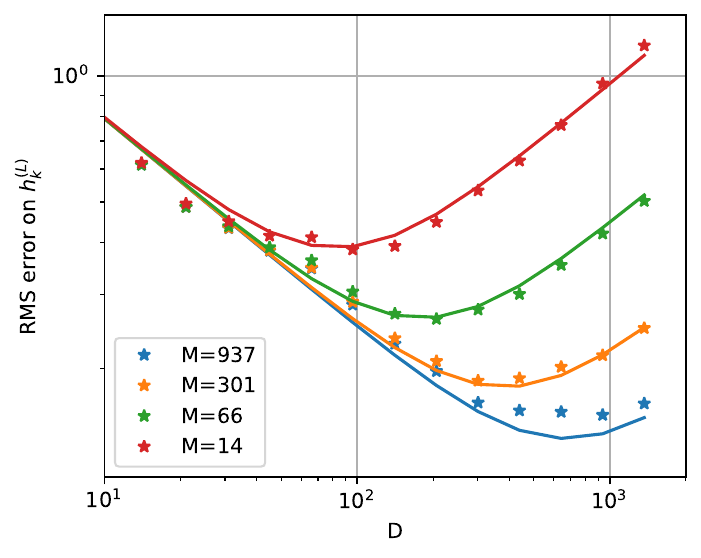}
\caption{Error on $\th^{(L)}_k$ vs $D$ ($L=50$)}
\end{subfigure}%
\begin{subfigure}{0.32\linewidth}
\centering
\includegraphics[scale=0.4]{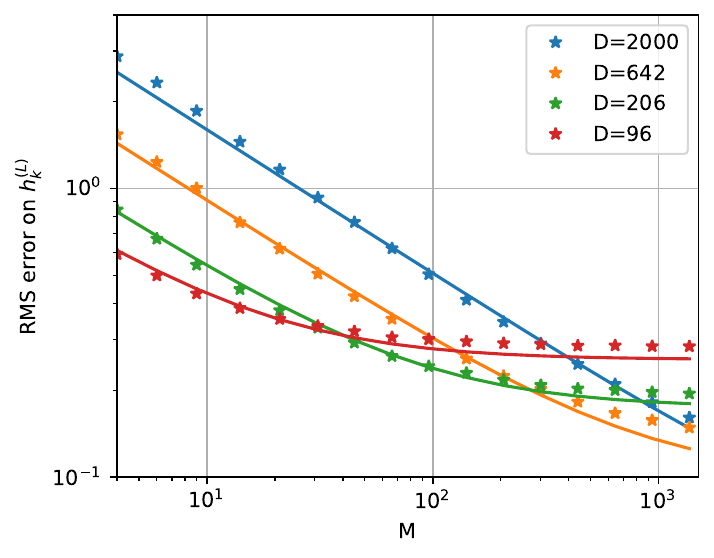}
\caption{Error on $\th^{(L)}_k$ vs $M$ ($L=50$)}
\end{subfigure}
\begin{subfigure}{0.32\linewidth}
\centering
\includegraphics[scale=0.4]{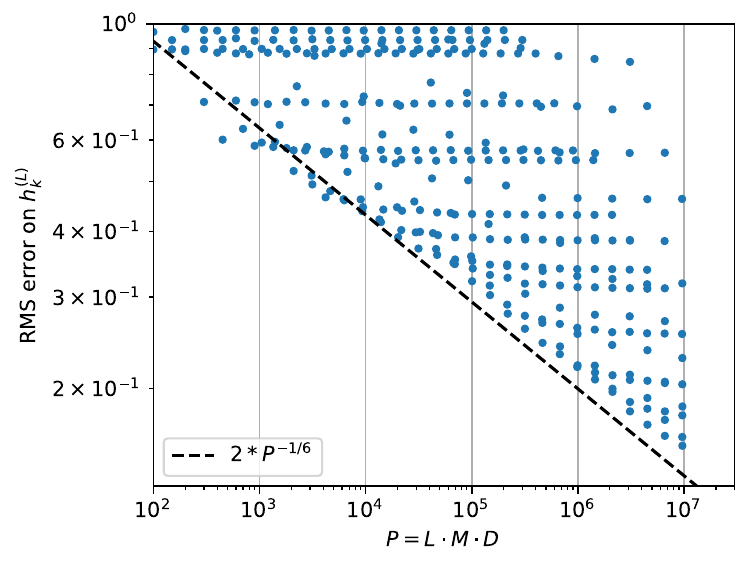}
\caption{Error on $\th^{(L)}_k$ vs $P$}\label{fig:errh_vs_P}
\end{subfigure}
\caption{Comparison of the experimental convergence rate of the hidden representation $h^{L}_k$ with the theoretical upper-bound $\Vert [ \frac{\alpha\sqrt{D}}{\sqrt{ML}}, \frac{\beta}{\sqrt{D}}]\Vert_2$ from Theorem~\ref{thm:main_theorem_intro} with $\alpha=0.8$ and $\beta=2.5$ manually adjusted to fit observations (plain lines). The y-axis shows RMS error (averaged over $5$ random repetitions and over the dataset) on the last hidden state $\th^L_k$ after $k=15$ GD steps. }\label{fig:errH-tanh}
\end{figure}

\subsection{Numerical experiments and tightness of the rate}\label{sec:experiments-tanh}
\paragraph{Rates are tight in embedding space.}
For the hidden representations $\th^{L}_k$, the convergence rate from Theorem~\ref{thm:main_theorem_intro} is empirically tight, as shown by the experimental results in
Figure~\ref{fig:errH-tanh}. There, we plot the RMS error between $\th^{(L)}_k$ for various shapes $(L,M,D)$ (fixing $L=50$)  and $\tilde \th^{(L)}_k$ which is a reference model of size $(L,M,D)=(50,2000,2000)$ acting as a proxy for the limit model. We note that this approximation leads to over-estimating the errors near the threshold $10^{-1}$ which is approximately the error of the reference model; we refer to Section~\ref{ssec:numerical_experiments} for experiments with $\rho(s)=s$ where the limit model can be computed accurately. We compare these observations with the error bound $\Vert [ \frac{\alpha\sqrt{D}}{\sqrt{ML}}, \frac{\beta}{\sqrt{D}}]\Vert_2$ from Theorem~\ref{thm:main_theorem_intro}, with manually adjusted coefficients $(\alpha,\beta)=(0.8,2.5)$. We here ignore the $O(1/L)$ error term, which is not visible in these plots because its prefactor happens to be small in this setting, and refer to~\cite{chizat2025hidden} for experiments showing the tightness of the rate in $L$. We use the $\ell_2$ norm to aggregate the two terms of the bound because, on the one hand, this is strictly equivalent to the bound from Theorem~\ref{thm:main_theorem_intro} and on the other this leads to better fits when the two error terms are asymptotically independent, which appears to be the case here.

The setting used here is $\dimin=\dimout=5$ with $N=10$ input/output drawn from a standard Gaussian,  the mean-square loss, the activation $\rho=\tanh$, and we plot the error after $k=15$ GD iterations\footnote{The code for all the numerical experiments can be found online at \url{https://github.com/lchizat/2026-resnets-of-all-shapes-and-sizes}.}. The randomness of the embedding matrices is coupled between $\th$ and  $\tilde \th$, as required by Theorem~\ref{thm:main_theorem_intro}.

\begin{figure}
\centering
\begin{subfigure}{0.32\linewidth}
\centering
\includegraphics[scale=0.4]{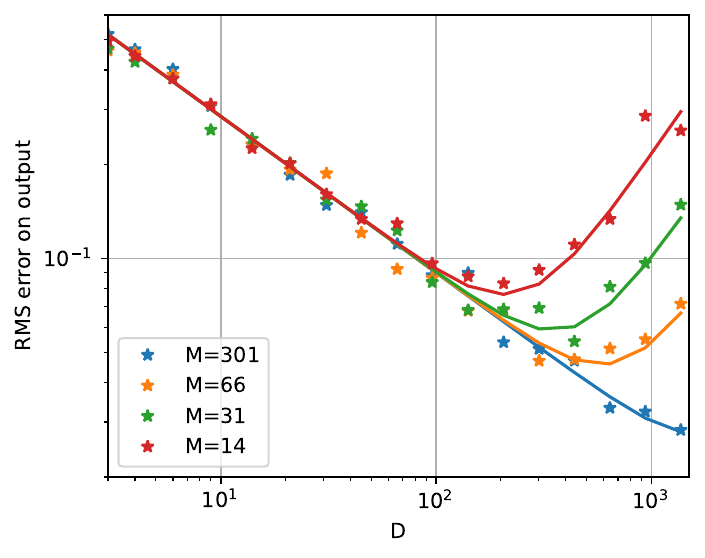}
\caption{Error on $\ty_k$ vs $D$}
\end{subfigure}%
\begin{subfigure}{0.32\linewidth}
\centering
\includegraphics[scale=0.4]{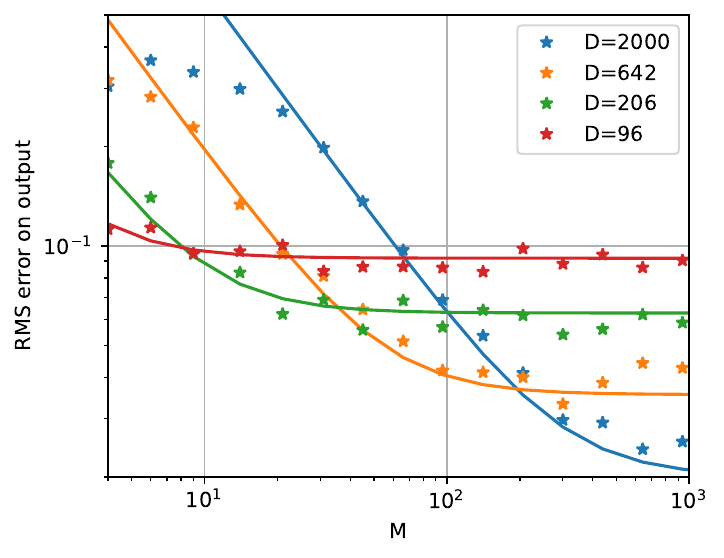}
\caption{Error on $\ty_k$ vs $M$}
\end{subfigure}
\begin{subfigure}{0.32\linewidth}
\centering
\includegraphics[scale=0.4]{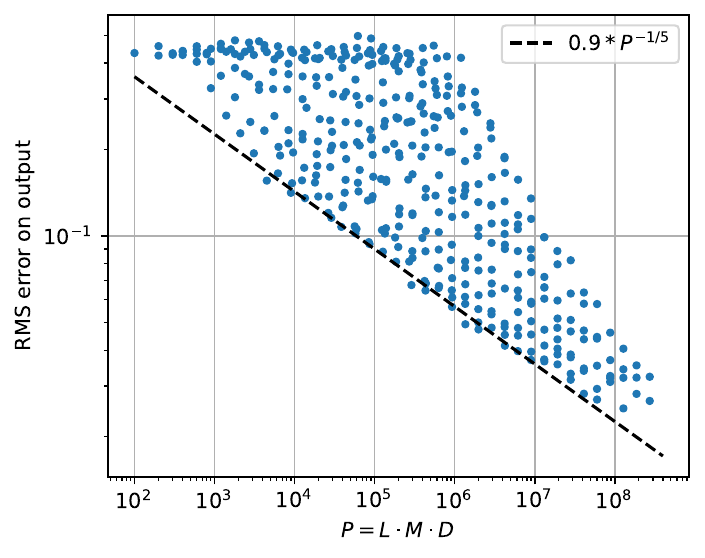}
\caption{Error on $\ty_k$ vs $P$}\label{fig:erry_vs_P}
\end{subfigure}
\caption{Comparison of the experimental convergence rate of the output $\ty_k$ with the conjectured rate $\Vert [ \frac{\alpha D}{ML}, \frac{\beta}{\sqrt{D}}]\Vert_2$ (which is smaller than the rate from Theorem~\ref{thm:main_theorem_intro}) with coefficients $\alpha=0.15$ and $\beta=0.9$ manually adjusted to fit observations (plain lines). The y-axis shows RMS error (averaged over $5$ random repetitions and over the dataset) on the output $\ty_k$ after $k=15$ GD steps.}\label{fig:erry-tanh}
\end{figure}

\paragraph{A conjecture for the rate in output space.} On Figure~\ref{fig:erry-tanh}, we plot the empirical error on the output $\Delta_k^y$. For this quantity, the bound from Theorem~\ref{thm:main_theorem_intro} is loose and we conjecture that the tight rate for $\Delta_k^y$ is instead 
\begin{equation}\label{eq:conjecture-erry}
   \Delta_k^y = \Theta\Big(\frac{1}{L} + \frac{D}{ML} + \frac{1}{\sqrt{D}}\Big ),
\end{equation}
that is, the middle term looses its square-root. The plain lines on Figure~\ref{fig:erry-tanh} show the conjectured rate (without the first term, negligible in our setting) 
$\Vert [ \frac{\alpha D}{ML}, \frac{\beta}{\sqrt{D}}]\Vert_2$, with manually adjusted coefficients $(\alpha,\beta)=(0.15,0.9)$, demonstrating a very good match. The setting is exactly the same as in the previous paragraph, except that we do not couple the initialization's randomness; this is not needed here since the limit $y_k$ is deterministic.

Our heuristic explanation for this conjectured rate is the following: in embedding space, the error term of size $\Theta(\sqrt{D}/\sqrt{ML})$ is due, at leading order, to \emph{centered} fluctuation terms generated at each forward/backward pass by the stochastic approximation of the Mean ODE, see~\cite{chizat2025hidden}. Taken individually, these centered terms become of size $\Theta(1/\sqrt{ML})$ after multiplication by $\frac{1}{D}(\tWout)^\top$ since the latter is akin to taking an average, noting that $\tWout$ is asymptotically independent from these fluctuations. This error term is negligible in front of $1/\sqrt{D}$ under our shape constraints. However, these fluctuation terms are not independent from each other and their pairwise interactions lead to noncentered terms of size $\Theta(D/(ML))$. This argument leads to the conjectured tight bound for $\Delta_k^y$. We leave it to future work to confirm and formalize this discussion.

\paragraph{Optimal shape and convergence rate.}
For a given number of parameters $P=\Theta(LMD)$, the upper bound from Theorem~\ref{thm:main_theorem_intro} is minimized with $LM =\Theta(P^{2/3})$ and $D=\Theta(P^{1/3})$, in which case we obtain the convergence rate $\Theta(P^{-1/6})$ as long as $L=\Omega(P^{1/6})$. This $P^{-1/6}$ rate can be observed on Figure~\ref{fig:errh_vs_P} and it is empirically tight if one is concerned with error in embedding space. 

If one is instead concerned with the error in output space, then one should do this computation with the conjectured rate~\eqref{eq:conjecture-erry} since Theorem~\ref{thm:main_theorem_intro} is not tight. A direct computation shows that the rate~\eqref{eq:conjecture-erry} is minimized with $LM=\Theta(P^{3/5})$ and $D=\Theta(P^{2/5})$, in which case we obtain the (conjectured) convergence rate $\Theta(P^{-1/5})$ as long as $L=\Omega(P^{1/5})$. This $P^{-1/5}$ rate can be observed on Figure~\ref{fig:erry_vs_P}. 

Finally, for a given error level, it is computationally preferable to choose the shape with the smallest depth $L$ so as to maximize parallelism in the computation of the forward or backward pass. This leads to the shape prescription $L=\Theta(P^{1/5})$ and $M,D=\Theta(P^{2/5})$, that is the widths ($M$ and $D$) should scale like the square of the depth $L$. Interestingly, we recover the common practice to scale $M$ and $D$ proportionally and at a faster rate than the depth $L$, see e.g.~\cite{grattafiori2024llama}.

\subsection{Related work}

\paragraph{Bridging Mean-field and Neural ODE analyses.}
The first infinite-dimensional analyses of neural networks training dynamics appeared in three forms: the Neural ODE framework~\cite{weinan2017proposal,chen2019neuralordinarydifferentialequations, lu2020meanfieldanalysisdeepresnet} dealing with $L\to \infty$, the mean-field analysis of two-layer perceptrons~\cite{nitanda2017stochastic,chizat2018globalconvergencegradientdescent, MeiMontanari2018mftwolayernetworks,sirignano2019meanfieldanalysisneural,Rotskoff2022mflimit} dealing with $M\to \infty$, and the Neural Tangent Kernel (NTK)~\cite{du2019gradient,jacot2020neuraltangentkernelconvergence} dealing with more general limits but in a linearized regime. Soon after, it was observed that the infinite-depth ($L \to \infty$) and infinite-width ($M \to \infty$) limits could be combined~\cite{lu2020meanfieldanalysisdeepresnet, ding2022mfresnets}. These works consider the joint limit $L \to \infty$ and $M \to \infty$, with fixed $D$ and, consistently with the above discussion, obtain the \emph{Mean ODE} dynamics in the limit. The tighter analysis in~\cite{chizat2025hidden} shows that it is sufficient to take $L\to \infty$ to converge to the same limit, and demonstrates via the bound~\eqref{eq:hidden-width-informal} that the Mean ODE framework is indeed relevant to describe practical architectures where $M=\Theta(D)$. There is a mathematically rich and growing literature studying properties of the Mean ODE, such as its continuous-time limit and long-time behavior~\cite{jabir2019mean,isobe2023convergence,barboni2023globalconvergenceresnetsfinite, bonnet2023measure,barboni2025understandingtraininginfinitelydeep, daudin2025genericity,gassiat2025gradient}. However, to the best of our knowledge, no prior work has considered the large $D$ behavior of the Mean ODE dynamics, as we do in the present paper.

\paragraph{Large-width analyses for MLPs.}  
The theoretical tractability of the NTK limit stems from a choice of scaling that makes the model asymptotically linear in its parameters. This regime can be studied in many architectures~\cite{allen2019learning,arora2019exact, yang2020tensorii}. The key role of the initialization scale (or explicit scaling factors) in determining the asymptotic training regime was first emphasized in~\cite{chizat2019lazy}, which also argued that this lazy/kernel regime is suboptimal due to the absence of feature learning. The classification of HP scalings was then extended to finite depth MLPs~\cite{geiger2020disentangling, yang2021tensorprogramsIV}, with the latter identifying the maximal (feature) update (MU) regime that enables transfer of optimal HPs between models of different widths~\cite{yang2021tuning}. In the MU regime, the infinite-width limit of MLPs training dynamics was obtained in~\cite{yang2021tensorprogramsIV} using a framework called \emph{Tensor Programs}, see also~\cite{chizat2024infinite} for a non-Gaussian and quantitative approach using the method of moments when $\rho=\mathrm{id}$ and~\cite{bordelon2022selfconsistentdynamicalfieldtheory} for a physics-inspired derivation of the limit model in continuous-time. Other works have also studied the large-depth limit of MLPs at initialization~\cite{pennington2017resurrecting, hanin2018neural, hanin2018start, hayou2019impact,jelassi2023depth}, often noticing that MLPs in the large-depth limit present some fundamental degeneracies. This motivates to turn our attention to ResNets architectures for the large-depth limit.

\paragraph{Large width-and-depth scaling of ResNets.} HP scalings in terms of depth for ResNets have been studied in~\cite{bordelon2023depthwisehyperparametertransferresidual,yang2023tensorprogramsvifeature}, with criteria that singled-out a residual-block scaling of \(\Theta_{M,D}\Big(\frac{1}{\sqrt{L}}\Big)\). In particular,~\cite{bordelon2023depthwisehyperparametertransferresidual} derives a limit model in the  $D=M \to \infty$ and $L\to \infty$ limit with a heuristic discussion on the convergence rates. However, these works deal with depth-one residual blocks, which present a different phenomenology than depth-two blocks used in practice and studied in the present work. In fact, those works noticed that the $\Theta_{M,D}(1/\sqrt{L})$ residual scaling leads to a linearization of residual blocks and experiments in~\cite{dey2026dontlazycompletepenables} suggest that this behavior might be empirically suboptimal when the blocks are depth-two. For such blocks, they proposed instead the residual scale \(\Theta\Big(\frac{1}{L\sqrt{D}}\Big)\) under the assumption \( M = \Theta(D)\), which is the same scaling in $L$ as the Neural Mean ODE literature which uses the residual scale $\Theta_D\Big(\frac{1}{ML}\Big)$. Bridging these various streams of works, \cite{chizat2025hidden} showed that a necessary and sufficient condition for maximal \emph{local} feature updates (MLU) when $D=O(LM)$ is the residual scale \(O\big(\frac{\sqrt{D}}{LM}\big)\). This is the scaling which we adopt here (the factor $1/(ML)$ is inserted in~\eqref{eq:fully_finite_resnet} while the factor $\sqrt{D}$ is absorbed in the STD of the initialization of $\tU^{\ell}$ in~\eqref{eq:resnet-GD}). 

In terms of the object of study, the prior work closest to ours is~\cite{bordelon2024infinitelimitsmultiheadtransformer} which derives the limit training dynamics in a setting that also covers the regime considered here (and additionally studies other residual scalings leading to SDE-type limits). Their analysis relies on a formal DMFT argument based on the path-integral method (see Section~\ref{sec:proof-strategy-cavity} for an alternative heuristic derivation of the limit using the cavity method). A key difference is that the present work provides a fully rigorous analysis and establishes quantitative convergence rates. Moreover, the perspectives of the two works differ:~\cite{bordelon2024infinitelimitsmultiheadtransformer} considers a sequential limit $M=D\to \infty$  followed by $L\to \infty$, whereas our approach treats the joint limit and does not require the proportional scaling $M=\Theta(D)$. When $M=\Theta(D)$, our error bound is $O\Big(\frac{1}{\sqrt{L}}+\frac{1}{\sqrt{D}}\Big)$ which shows that the limits in $D$ and $L$ commute (this extends the commutativity result from~\cite{hayou2023widthdepthlimitscommute} dealing with the first forward pass, albeit in a slightly different setting) and, in particular, that our limit coincides with the $D\to \infty$ then $L\to \infty$ sequential limit.

\paragraph{DMFT, rigorous limits and cavity method.}
Dynamical Mean Field theory (DMFT) was originally introduced in the context of spinglass physics \cite{Sompolinsky1982spinglass,Cugliandolo1993spinglass}. The first rigorous infinite-width limit for fixed-depth neural network in~\cite{yang2021tensorprogramsIV} is expressed in terms of a so-called \emph{Tensor Program} limit which relies on the Gaussian conditioning technique~\cite{bayati2011dynamics} from DMFT. While originally limited to Gaussian initializations, a non-Gaussian extension of the Tensor Program framework was obtained in \cite{golikov2022nongaussiantensorprograms}. Some recent efforts have aimed at making the Tensor Program framework quantitative \cite{golikov2025deep}, but the available results currently cover only the first forward pass. As a consequence, existing rigorous analyses of width–depth limits are limited to sequential limits, first $D\to\infty$ and then $L\to\infty$. For instance, \cite{yang2023tensorprogramsvifeature} proves such a result for depth-one residual blocks with $\rho(s)=s$, which, to the best of our knowledge, remains the only rigorous result addressing the regime $L,D\to\infty$ (in that setting there is no parameter $M$ since the blocks have depth one). Our approach follows a different route: we first analyze the limit $L\to\infty$ and obtain a sharp convergence rate, which subsequently enables us to take joint limits.

Our proof relies on the use of the cavity method, also from DMFT. This technique originates from the statistical physics literature \cite{Mezard1987spinglass}, where it has been widely used (see, e.g., \cite{Mezard2015cavityMP,Braunstein2023cavity} and many others). To the best of our knowledge, our work is among the first to implement this technique in a fully rigorous and quantitative manner. Closely related in spirit is the independent work \cite{dandi2026rigorouscavity}, which also establishes a DMFT limit via the cavity method for the high-dimensional limit of generalized linear model dynamics. Both their and our works rely on a Lindeberg central limit argument, but beyond this methodological similarity the settings and systems under study are substantially different. We also mention several recent works developing rigorous DMFT analyses in machine learning, including \cite{dandi2024benefits,gerbelot2024rigorous, fan2025dynamical,montanari2025dynamicaldecouplinggeneralizationoverfitting}.

\subsection{Notations} \label{sec:basic_notation}

This section contains most notations introduced in this work, and shall serve as a reference for the reader.

\paragraph{General objects:}
\begin{itemize}
    \item For $a,b\in\R$, $[a,b] \subseteq\R$ denotes a real valued interval, while for $a,b\in\mathbb{Z}$, $[a:b] = \{a,a+1,\dots, b\} \subseteq \mathbb{Z}$ is a \textit{discrete} interval. 
    \item Given a sequence $(a_i)_{i=0}^N$, for all $n\in[0:N]$ we set $a_{\wedge n} := (a_i)_{i=0}^n$ and so $\|a_{\wedge n}\|_{\infty} = \max_{i\in[0:n]} |a_i|$. 
    \item We denote by \(\|\cdot\|_{p,n}\) the $p$-norm in $\R^n$ -- for instance, $\|\cdot\|_{2, n}$ denotes the Euclidean norm. In particular, we also denote by $\|\cdot\|_{op, n\times n}$ the operator norm in $\R^{n \times n}$, and $\|\cdot\|_{2, n\times n}$ the Frobenius norm.
    
    \item We denote by $\langle \cdot, \cdot\rangle_{\ndbar}$ the RMS (or \textit{normalized}) inner product in $\R^D$, defined as $\langle x, y\rangle_{\ndbar} := \frac{1}{D}\sum_{d=1}^D x_d y_d$, which yields the RMS norm $\|x\|_{\ndbar} = \sqrt{\langle x,x\rangle_{\ndbar}} = \frac{1}{\sqrt{D}}\|x\|_{2,D}$. 
    
    \item We abuse notation by considering, for matrices $A\in \R^{D\times n}$, 
    \[\|A\|_{\ndbar} := \frac{1}{\sqrt{D}}\|A\|_{2, D\times n} = \sqrt{\frac{1}{D}\sum_{d=1}^D \|A_{d,\cdot}\|_{2,n}^2} = \sqrt{\frac{1}{D}\sum_{d=1}^D \sum_{j=1}^n A_{d,j}^2},\] 
    which is the rescaled Frobenius norm in $\R^{D\times n}$. This norm satisfies $\|Ax\|_{\ndbar} \leq \|A\|_{\ndbar}\|x\|_{2,n}$. An analogous convention is used for matrices $A\in \R^{n\times D}$. We may also write $\|A\|_{\overline{p}, D\times n} := \Big(\frac{1}{D}\sum_{d=1}^D \|A_{d,\cdot}\|_{2, n}^p\Big)^{1/p}$ for higher values of $p \geq 2$.
    \item To avoid ambiguity, we write $h^{(D)} := ( h^d )_{d=1}^D \in \R^D$. That is, $h^d$ denotes a single coordinate of the vector $h^{(D)}\in \R^D$. To alleviate notation, we may omit the parenthesis when writing vectors as superscripts. Namely, $A^{h^{D}}$ should be understood as $A^{h^{(D)}}$ unless stated otherwise.
    \item We consider vectors $\bfz^h = ( z^h_k )_{k=0}^{K-1} \in \R^K$, which are indexed from $0$, consistently with a \textit{path-dependent} view. 
    For $k \in[0:K-1]$, we say that a function $F_k (\bfz^h)$ is \emph{non-anticipative} if it only depends on the first $k$ components of $\bfz^h$. 
    We use the bold notation for vectors in $\R^{K}$; 
    in particular, only the first $k$ components of $\bnabla_{\bfz^h}F_k$ can be non-zero for a non-anticipative $F_k$.
    \item With a slight abuse of notation, for \(\bfz^h, \bfz^b \in \R^K\) and $p\geq 1$ (possibly $p=\infty$), we shall write \(\|(\bfz^h, \bfz^b)\|_{p, 2k} = \big( \sum_{i=0}^{k-1} |z_i^h|^p + |z_i^b|^p\big)^{1/p}\) for the norm taken on the first $k$ coordinates of $(\bfz^h, \bfz^b)$; and similarly for \(\|\bfz^h\|_{p, k}\) or even \(\|\mathbf{M}\|_{p, k\times k}\) for $\mathbf{M}\in\R^{K\times K}$, which is taken on the first $k\times k$ block of the matrix.
    \item We write $\L( X )$ for the law of a random variable $X$.
    \item For $\alpha>0$ and a real-valued random variable $X$, we define the Orlicz norm  
    \[ \|X\|_{\psi_\alpha} := \inf \Big\{ c>0: \E\Big[\exp\Big(\Big(\frac{|X|}{c}\Big)^{\alpha}\Big)\Big] \leq 2 \Big\}.\]
    In particular, for $\alpha\in \{1,2\}$, this corresponds to the \textit{subexponential} and \textit{subgaussian} norms, respectively. We say that a random variable $X$ is $\alpha$-\textit{subweibull} if $\|X\|_{\psi_\alpha} <\infty$.
\end{itemize}

\paragraph{Recurring quantities and bounds:}
\begin{itemize}
    \item We refer to both the \textit{embedding} and \textit{unembedding} matrix as \textit{embedding matrices}. In the finite $D$ setting, we write $\Win^{(D)} = ((\Win^d)^\top)_{d=1}^D \in \R^{D\times \dimin}$, where $\Win^d \in \R^{\dimin}$ represents the $d$-th row of the matrix. Similarly, $\Wout^{(D)} = ((\Wout^d)^\top)_{d=1}^D \in \R^{D\times \dimout}$. When the rows $(\Win^d, \Wout^d)$ are i.i.d.~subgaussian, we denote their variance proxies by $\vpin$ and $\vpout$, respectively.
    
    \item $\Finit = \sigma(\Win,W_{\mathrm{out}})$ denotes the sigma-field generated by the embedding matrices.

    \item The \textit{initialization} of the weights of the ResNet \ref{eq:original_finite_D_system} is based on $U^{(D)}, V^{(D)}$. When the coordinates are i.i.d.~subgaussian, we 
    denote their variance proxies by $\vpu^2$ and $\vpv^2$; and their variances by $\varu^2$ and $\varv^2$, respectively. By standard arguments, we have $\varu \leq \vpu$, $\varv \leq \vpv$.

    \item We denote by $\cstW_k$ a generic $\Finit$-measurable (random) constant that solely depends on $\Vert \Win^{(D)} \Vert_{\overline{D}}$, $\Vert \Wout^{(D)} \Vert_{\overline{D}}$, and possibly $\|\Win^{(D)}\|_{\overline{p}, D\times \dimin}$ and \(\|\Wout^{(D)}\|_{\overline{p}, D\times \dimout}\), for $p \in\{3,4\}$.

    \item For a  $\C([0,1], \R^{2K})$-valued random variable \((\bfH, \bfB)\), 
    we define the covariance kernels
    \[\boldsymbol{\Gamma}^H(s,t) := \E[\bfH(s) \bfH(t)^\top] = (\E[H_i(s) H_j(s)])_{i,j=0}^{K-1} \in \R^{K\times K},\quad \boldsymbol{\Gamma}^B(s,t) := \E[\bfB(s) \bfB(t)^\top].\]
    For any $s\in[0,1]$ and $k\in[0:K-1]$, we set
    \[\boldsymbol{\Gamma}^H_{\wedge k}(s) := \E[\mathbf{H}_{\wedge k}(s) \mathbf{H}_{\wedge k}(s)^\top] = (\E[H_i(s) H_j(s)])_{i,j=0}^{k+1} \in \R^{(k+1)\times(k+1)}, 
    \]
    \[\vec{\boldsymbol{\Gamma}}^H_k(s) := \E[H_k(s) \mathbf{H}_{\wedge k-1}(s)] = (\E[H_k(s) H_i(s)])_{i=0}^{k-1} \in \R^k,\] 
     and similarly for $\boldsymbol{\Gamma}^B_{\wedge k}(s)$ and $\vec{\boldsymbol{\Gamma}}^B_k(s)$. In particular, we have
     \(\boldsymbol{\Gamma}_{\wedge k}^H = \begin{pmatrix} \boldsymbol{\Gamma}_{\wedge k-1}^H &\vec{\boldsymbol{\Gamma}}_k^H\\(\vec{\boldsymbol{\Gamma}}_k^H)^\top & \Gamma^H_{k,k}\end{pmatrix}\).

    \item Similarly, $\cstH_k$ denotes a constant that depends on $k$ and $\boldsymbol{\Gamma}^H_{\wedge k}, \boldsymbol{\Gamma}^B_{\wedge k}$, as well as other fixed parameters of the problem, such as $\varu, \varv, \eta_u,\eta_v$. 
    For $s\in[0,1]$, we write $\llnW_k(s)$ to denote a \textit{subexponential LLN concentration term} as introduced in Definition~\ref{def:lln_terms_RW}. That is,
    \[\llnW_k(s) = \Big\|\frac{1}{D}\sum_{d=1}^D X^d_k(s)\Big\|_{\H},\]
    for $(X^d_k(s))_{d\geq 1}$ an i.i.d.~sequence of centered, subexponential and $\Finit$-measurable random variables defined on a finite-dimensional vector space $\H$
    such that $\|\sup_{s\in[0,1]} \| X^1_k(s)\|_{\H} \|_{\psi_1} \leq c\vpk^2$, for some absolute constant $c>0$ and $\vpk^2$ the centered variance proxy of $\|(\bfH_{\wedge k-1}, \bfB_{\wedge k-1})\|_{\infty}$.

    \item Throughout this work, we make use of the non-anticipative skeleton vectors \(\vec{\mathbf{v}}_k^F\) and \(\vec{\mathbf{v}}_k^G\) that only depend on \(\boldsymbol{\Gamma}^H_{\wedge k-1}(s)\) and \(\boldsymbol{\Gamma}^B_{\wedge k-1}(s)\), and are introduced in Definition~\ref{def:linear_skeleton_meanfield}. These are defined by recurrence: \(\vec{\mathbf{v}}_1^F(s, \bfz^h, \bfz^b) = \rho'(z_0^h)z_0^b\), and \(\vec{\mathbf{v}}_1^G(s, \bfz^h, \bfz^b) = \rho(z_0^h)\) and, for $k> 1$, 
\begin{equation}
  \vec{\mathbf{v}}_{k}^F(s, \bfz^h, \bfz^b) = \begin{pmatrix}
    \vec{\mathbf{v}}_{k-1}^F(s, \bfz^h, \bfz^b)\\
    \rho'\left(p_{k-1}(s, \bfz^h, \bfz^b)\right) q_{k-1}(s, \bfz^h, \bfz^b)
\end{pmatrix}, \quad \vec{\mathbf{v}}_{k}^G(s, \bfz^h, \bfz^b) = \begin{pmatrix}
    \vec{\mathbf{v}}_{k-1}^G(s, \bfz^h, \bfz^b)\\
    \rho\left(p_{k-1}(s, \bfz^h, \bfz^b)\right).
\end{pmatrix},
\end{equation}
where \(p_k(s, \bfz^h, \bfz^b) := z_k^h - \eta_u \vec{\mathbf{v}}_{k}^F(s, \bfz^h, \bfz^b)^\top \vec{\boldsymbol{\Gamma}}^H_k(s)\) and \(q_k(s, \bfz^h, \bfz^b) := z_k^b - \eta_v \vec{\mathbf{v}}_{k}^G(s, \bfz^h, \bfz^b)^\top \vec{\boldsymbol{\Gamma}}^B_k(s)\).
\end{itemize}

\section{Main results on the large-$D$ limit of Mean ODEs} \label{sec:setting_and_main_results}

Let us introduce the dynamics obtained in \cite{chizat2025hidden} in the large-depth $L \rightarrow + \infty$ limit. The Mean ODE model reads, for finite $D\geq 1$, at training time $k \in[0:K-1]$ with $K\geq 1$, for an input $x \in \R^{d_{\mathrm{in}}}$ and some $w\in \R^{\dimout}$,
\begin{multline}\label{eq:original_finite_D_system}
\begin{cases}
h^{(D)}_k (0,x) = \Win^{(D)} x, \\
\partial_s h_k^{(D)} (s,x) = \E \big[ \rho \big( \langle U_k^{(D)} (s), h_k^{(D)} (s,x)\rangle_{\ndbar} \big)  V_k^{(D)} (s)  |\Finit\big], \\
\hat{y}_k^D (x) = \frac{1}{D}(\Wout^{(D)})^\top h_k^{(D)} (1,x),\\
\partial_s b_k^{(D)} (s,x, w) =  - \E \big[ \rho' \big( \langle U_k^{(D)} (s), h_k^{(D)} (s,x)\rangle_{\ndbar} \big) \langle V_k^{(D)} (s), b_k^{(D)} (s,x,w)\rangle_{\ndbar} U_k^{(D)} (s) |\Finit\big], \\
b_k^{(D)} (1,x,w) = \Wout^{(D)}w,
\end{cases}
\end{multline}
where, \(\Win^{(D)} := ((\Win^d)^{\top})_{d=1}^D \in \R^{D \times \dimin}\) and \(\Wout^{(D)} := ((\Wout^d)^\top)_{d=1}^D \in \R^{D \times \dimout}\) are random matrices, which we refer to as the \textit{embedding} matrices and $\Finit = \sigma(\Win, \Wout)$. 

Consider a fixed dataset $(x_i)_{i=1}^N \in (\R^{\dimin})^N$ and a family of loss functions $\ell_i :\R^{d_{\mathrm{out}}} \to \R$ with $i\in[1:N]$, and let \(w_i :=\nabla \ell_i (\hat{y}^D_k(x_i))\). The training dynamics for the Mean ODE model reads
\begin{equation}\label{eq:defn_update_rule_resnet}
    \begin{cases}
        U_{k+1}^{(D)} (s) &= U_k^{(D)} (s) - \frac{\eta_u}{N} \sum_{i=1}^N\rho' \big( \langle U_k^{(D)} (s), h_k^{(D)} (s,x_i)\rangle_{\ndbar}\big) \langle V_k^{(D)} (s),b_k^{(D)} (s,x_i, w_i)\rangle_{\ndbar} h_k^{(D)} (s,x_i), \\
    V_{k+1}^{(D)}(s) &= V_k^{(D)} (s) - \frac{\eta_V}{N} \sum_{i=1}^N\rho \big( \langle U_k^{(D)} (s), h_k^{(D)} (s,x_i)\rangle_{\ndbar} \big) b_k^{(D)} (s,x_i, w_i).
    \end{cases}
\end{equation}
For convenience, we decompose the updates as $(U_k^{(D)}(s),V_k^{(D)}(s)) = (U_0^{(D)} + \Delta U_k^{(D)}(s), V_0^{(D)} + \Delta V_k^{(D)}(s))$, with $U_0^{(D)}$ and $V_0^{(D)}$ being the random vectors in $\R^D$ representing the initialization.

\begin{rem}[Training the embedding matrices]\label{rem:training_embedding_matrices_finiteD}
In practice, the training dynamics also involve updating the embedding matrices. If we write $\big((\hat{W}_{\mathrm{in}}^{(D)})_k, (\hat{W}_{\mathrm{out}}^{(D)})_k\big)_{k\geq 0}$ for the iterates of these matrices over training, then we have, for each $k\in[0:K-1]$,
\[\begin{cases}
    (\hat{W}_{\mathrm{in}}^{(D)})_{k+1} = (\hat{W}_{\mathrm{in}}^{(D)})_{k} - \frac{\eta_{\mathrm{in}}}{N} \sum_{i=1}^N b_{k}^{(D)}(0, x_i, w_i) x_i^\top,\\
    (\hat{W}_{\mathrm{out}}^{(D)})_{k+1} = (\hat{W}_{\mathrm{out}}^{(D)})_{k} - \frac{\eta_{\mathrm{out}}}{N} \sum_{i=1}^N h_{k}^{(D)}(1, x_i)\nabla\ell(\hat{y}^D_{k}(x_i))^\top,
\end{cases}\]
where the dynamics are initialized as \((\hat{W}_{\mathrm{in}}^{(D)})_0\), \((\hat{W}_{\mathrm{out}}^{(D)})_0\) having $D$ independent rows, which are $\vpin^2$ and $\vpout^2$-subgaussian random vectors, respectively. 
In \eqref{eq:original_finite_D_system}, \(\Win^{(D)}\) and \(\Wout^{(D)}\) should then be replaced by \((\hat{W}_{\mathrm{in}}^{(D)})_k\), \((\hat{W}_{\mathrm{out}}^{(D)})_k\) for each $k\in[0:K-1]$.

This additional update rule adds no conceptual difficulty to our analysis, as discussed in Appendix~\ref{sec:extensions_of_results}.
However, for simplicity and brevity, we restrict ourselves to the case of fixed \(\Win^{(D)}\) and \(\Wout^{(D)}\) over training iterations. 
\end{rem}

In the following we restrict ourselves to $N=1$, see Remark~\ref{rem:skeleton_maps_for_N_ge_1} for further insight on the $N>1$ case. In particular, we have a single input $x\in\R^{\dimin}$ and loss $\ell :\R^{d_{\mathrm{out}}} \to \R$, which we consistently omit in our notation. Namely, we write \(h_{k}^{(D)}(s) := h_k^{(D)}(s, x)\) and \(b_{k}^{(D)}(s) := b_k^{(D)}(s, x, \nabla\ell(\hat{y}^D_k(x)))\).

\subsection{Skeleton maps}\label{sec:definition_skeletons}
When sending $D\to \infty$, we will have to deal with the asymptotics of terms such as
\begin{equation} \label{eq:TermDiff}
\langle \sqrt{D}U^{(D)} + \Delta U_k^{(D)} (s), h_k^{(D)} (s)\rangle_{\ndbar} = \frac{1}{\sqrt{D}}\sum_{d=1}^D U^d h_k^d(s)
+ \frac{1}{D} \sum_{d=1}^D \Delta U_k^d(s) h_k^d(s).
\end{equation}
 Remark that, conditional on $\Finit$, the only randomness in the system comes from the initialization $U_0^{(D)}, V_0^{(D)}$. Furthermore, for a solution of the Mean ODE \eqref{eq:original_finite_D_system}, $(\bfh^{(D)}, \bfb^{(D)}) \in \C([0,1], (\R^D)^{2K})$, the random variables
\begin{equation}\label{eq:definition_clt_type_sums}
    ( \bfS^{\mathbf{h}^{D}}(s), \bfS^{\mathbf{b}^{D}}(s)) := \bigg( \bigg( \frac{1}{\sqrt{D}} \sum_{d=1}^D U^{d} h^{d}_i(s) \bigg)_{0 \leq i \leq K-1} , \bigg( \frac{1}{\sqrt{D}} \sum_{d=1}^D V^{d} b^{d}_i(s) \bigg)_{0 \leq i \leq K-1} \bigg) \in \R^{2K}
\end{equation}
are the only quantities through which the initial randomness of $(U^{(D)}_0,V^{(D)}_0)$ affects the dynamics. In particular, at each stage $k\in[0:K-1]$, writing \eqref{eq:defn_update_rule_resnet} in terms of \(\bfS^{\mathbf{h}^{D}}, \bfS^{\mathbf{b}^{D}}\), we notice that
\[ (\Delta U_k^{(D)}(s), \Delta V_k^{(D)}(s) ) = (f_k^{\bfh^{D},\bfb^{D}},g_k^{\bfh^{D},\bfb^{D}})( s, \bfS^{\bfh^{D}}, \bfS^{\bfb^{D}}),\]
for some explicit $\Finit$-measurable functions \(f_k^{\bfh^D,\bfb^D},g_k^{\bfh^D,\bfb^D}\) that do not depend on the randomness of $U_0^{(D)}, V_0^{(D)}$. This motivates the following definition.
\begin{definition}[Finite-dimensional skeleton maps]\label{def:finite_skeleton_maps}
Given some integer $D \geq 1$ and two $\Finit$-measurable 
random variables \((\mathbf{h}^{(D)}, \mathbf{b}^{(D)})\) taking values in $\C([0,1], (\R^D)^{2K})$
we define the skeleton maps associated to $(\mathbf{h}^{(D)}, \mathbf{b}^{(D)})$, as the functions $\mathbf{f}^{\bfh^D,\bfb^D},\mathbf{g}^{\bfh^D,\bfb^D} : [0,1] \times \R^K \times \R^K \rightarrow (\R^D)^K$ that satisfy: $f^{\mathbf{h}^D,\mathbf{b}^D}_{0} = g^{\mathbf{h}^D,\mathbf{b}^D}_{0} \equiv 0$ and, for $k\in[0:K-1]$, $s\in[0,1]$, $\bfz^h, \bfz^b \in \R^K$,
\begin{equation}\label{eq:def_skeleton_finiteD}
\begin{cases}
f^{\mathbf{h}^D,\mathbf{b}^D}_{k+1} = f^{\mathbf{h}^D,\mathbf{b}^D}_{k} - \eta_u \rho' ( z^h_k + \langle h_k^{(D)}, f^{\mathbf{h}^D,\mathbf{b}^D}_{k}\rangle_{\ndbar} ) [ z^b_k + \langle b_k^{(D)}, g^{\mathbf{h}^D,\mathbf{b}^D}_k\rangle_{\ndbar} ] h_k^{(D)}, \\
g^{\mathbf{h}^D,\mathbf{b}^D}_{k+1} = g^{\mathbf{h}^D,\mathbf{b}^D}_{k} - {\eta}_v \rho ( z^h_k + \langle h_k^{(D)},  f^{\mathbf{h}^D,\mathbf{b}^D}_{k} \rangle_{\ndbar} ) b_k^{(D)},
\end{cases}
\end{equation}
where we underly the evaluation of $f^{\mathbf{h}^D,\mathbf{b}^D}_{k}(s,\bfz^h,\bfz^b)$, $g^{\mathbf{h}^D,\mathbf{b}^D}_{k}(s,\bfz^h,\bfz^b)$ and $h_k^{(D)}(s)$, $b_k^{(D)}(s)$.%
\end{definition}

\begin{rem}[Non-anticipative skeleton maps]\label{rem:nonanticipative_skeleton}
    We note that the skeleton maps \(\mathbf{f}^{\bfh^D,\bfb^D}\), \(\mathbf{g}^{\mathbf{h}^D,\mathbf{b}^D}\) are non-anticipative, in the sense that for all $k\in[0:K-1]$, $f^{\bfh^D,\bfb^D}_k ( s , \bfz^h,\bfz^b )$ only depends on the first $k$ components of $\bfz^h, \bfz^b \in\R^{K}$, which represent a \emph{path} along the training time. 
    The \textbf{bold} notation $\bfz^h = ( z^h_j )_{j=0}^{K-1}$ is devoted to vectors of size $K$, which are indexed from $0$ consistently with the path-dependent view.
\end{rem}
\begin{rem}[Fixed-point formulation]
    Using this path-dependent construction, the solution of the Mean ODE \eqref{eq:original_finite_D_system} can be seen as the unique fixed-point of the map $(\bfh^{(D)}, \bfb^{(D)}) \mapsto (\mathbf{U}^{(D)}, \mathbf{V}^{(D)}) := (\mathbf{f}^{\mathbf{h}^D,\mathbf{b}^D}, \mathbf{g}^{\mathbf{h}^D,\mathbf{b}^D})$. 
    This motivates our construction for the limit system in Section~\ref{sec:construction_of_limit_law}. 
\end{rem}

As $D\to \infty$, we will show that the coordinates of $(\bfh^{(D)}, \bfb^{(D)}, \mathbf{f}^{\bfh^D,\bfb^D}, \mathbf{g}^{\bfh^D,\bfb^D})$ behave as independent and $\Finit$-measurable  realizations of a common limit law,  
and the RMS inner products $\langle \cdot, \cdot\rangle_{\ndbar}$ in \eqref{eq:def_skeleton_finiteD} become expectations over the randomness from $\Finit$ in the $D \rightarrow + \infty$ limit. This motivates the following definition.
\begin{definition}[Mean-Field Skeleton Maps]\label{def:mf_skeleton_maps}
Given a $\Finit$-measurable random variable $(\mathbf{H}$, $\mathbf{B})$ taking values in $\C([0,1],\R^{2K})$, we define the associated mean-field skeleton maps as the $\Finit$-measurable random maps $\mathbf{F}^{\bfH,\bfB}, \mathbf{G}^{\bfH,\bfB} : [0,1] \times \R^K \times \R^K \rightarrow \R^K$ given by
\begin{equation}\label{eq:def_skeleton_meanfield}
    \forall k\in[0:K-1],\quad\begin{cases}
F_{k+1}^{\mathbf{H},\mathbf{B}} = F_{k}^{\mathbf{H},\mathbf{B}} - \eta_u \rho' ( z^h_k + \E [ H_k F_k^{\mathbf{H},\mathbf{B}} ] ) [ z^b_k + \E [ B_k G_k^{\mathbf{H},\mathbf{B}} ]] H_k, \\
G_{k+1}^{\mathbf{H},\mathbf{B}} = G_{k}^{\mathbf{H},\mathbf{B}} - {\eta}_v \rho ( z^h_k + \E [ H_k F_{k}^{\mathbf{H},\mathbf{B}}  ] ) B_k,
\end{cases}
\end{equation}
with $F_0^{\mathbf{H},\mathbf{B}} = G_0^{\mathbf{H},\mathbf{B}} \equiv 0$. As in Definition~\ref{def:finite_skeleton_maps}, we consistently underly evaluation at $(s,\bfz^h,\bfz^b)$.    
\end{definition}

We notice that $(\mathbf{F}^{\mathbf{H},\mathbf{B}}, \mathbf{G}^{\mathbf{H},\mathbf{B}})$ are non-anticipative, as detailed in Remark~\ref{rem:nonanticipative_skeleton}. These maps depend on both the law and the realization of $(\mathbf{H},\mathbf{B})$. 
\begin{definition}[Skeleton vectors]\label{def:linear_skeleton_meanfield}
    From $(\bfH, \bfB)$ we define the deterministic skeleton vectors \(\vec{\mathbf{v}}_k^F, \vec{\mathbf{v}}_k^G:[0,1]\times \R^k\times \R^k \to \R^k\) given by the recursion: \(\vec{\mathbf{v}}_1^F(s, \bfz^h, \bfz^b) = \rho'(z_0^h)z_0^b\), and \(\vec{\mathbf{v}}_1^G(s, \bfz^h, \bfz^b) = \rho(z_0^h)\) and, for $k> 1$,
\begin{equation}\label{eq:explicit_definition_vFkvGk}
  \vec{\mathbf{v}}_{k}^F(s, \bfz^h, \bfz^b) = \begin{pmatrix}
    \vec{\mathbf{v}}_{k-1}^F(s, \bfz^h, \bfz^b)\\
    \rho'\left(p_{k-1}(s, \bfz^h, \bfz^b)\right) q_{k-1}(s, \bfz^h, \bfz^b)
\end{pmatrix}, \quad \vec{\mathbf{v}}_{k}^G(s, \bfz^h, \bfz^b) = \begin{pmatrix}
    \vec{\mathbf{v}}_{k-1}^G(s, \bfz^h, \bfz^b)\\
    \rho\left(p_{k-1}(s, \bfz^h, \bfz^b)\right).
\end{pmatrix},
\end{equation}
where \(p_k(s, \bfz^h, \bfz^b) := z_k^h - \eta_u \vec{\mathbf{v}}_{k}^F(s, \bfz^h, \bfz^b)^\top \vec{\boldsymbol{\Gamma}}^H_k(s)\) and \(q_k(s, \bfz^h, \bfz^b) := z_k^b - \eta_v \vec{\mathbf{v}}_{k}^G(s, \bfz^h, \bfz^b)^\top \vec{\boldsymbol{\Gamma}}^B_k(s)\). Note that they are non-anticipative and only depend on \(\boldsymbol{\Gamma}^H_{\wedge k-1}(s)\) and \(\boldsymbol{\Gamma}^B_{\wedge k-1}(s)\).
\end{definition}
Inductively developing \eqref{eq:def_skeleton_meanfield}, we get, for all $k \in[1:K-1]$, $s\in[0,1]$, $\bfz^h, \bfz^b \in\R^K$,
\begin{equation}\label{eq:cleaner_system_Fk_definition}
F_{k}^{\mathbf{H},\mathbf{B}} (s, \bfz^h, \bfz^b)  = -\eta_u \vec{\mathbf{v}}_k^F(s, \bfz^h, \bfz^b)^\top \mathbf{H}_{\wedge k-1}(s) , \;\;
G_{k}^{\mathbf{H},\mathbf{B}}(s, \bfz^h, \bfz^b)  = -\eta_v \vec{\mathbf{v}}_k^G(s, \bfz^h, \bfz^b)^\top \mathbf{B}_{\wedge k-1}(s).
\end{equation}
\begin{rem}[Skeleton Maps for $N>1$]\label{rem:skeleton_maps_for_N_ge_1}
We note that Definition~\ref{def:finite_skeleton_maps} exactly corresponds to the update rule \eqref{eq:defn_update_rule_resnet} for $N=1$ when formally identifying $\Delta U_k$ with $f_k^{\bfh^D, \bfb^D}$.

To handle the case $N>1$, Definition~\ref{def:finite_skeleton_maps} should be adapted for $\mathbf{h}^D, \mathbf{b}^D : [0,1] \rightarrow ( \R^D )^{K\times N}$, by introducing $\mathbf{f}^{\mathbf{h}^D,\mathbf{b}^D},\mathbf{g}^{\mathbf{h}^D,\mathbf{b}^D} : [0,1] \times \R^{K\times N} \times \R^{K\times N} \rightarrow (\R^D)^{K}$ with $f^{\mathbf{h}^D,\mathbf{b}^D}_{0} = g^{\mathbf{h}^D,\mathbf{b}^D}_{0} \equiv 0$, satisfying, for $k\in[0:K-1]$, 
\begin{equation}
\begin{cases}
f^{\mathbf{h}^D,\mathbf{b}^D}_{k+1} = f^{\mathbf{h}^D,\mathbf{b}^D}_{k} - \frac{\eta_u}{N} \sum_{i=1}^N\rho' ( z^h_{k,i} + \langle h_{k,i}^D, f^{\mathbf{h}^D,\mathbf{b}^D}_{k}\rangle_{\ndbar} ) [ z^b_{k,i} + \langle b_{k,i}^D, g^{\mathbf{h}^D,\mathbf{b}^D}_k\rangle_{\ndbar} ] h_{k,i}^D, \\
g^{\mathbf{h}^D,\mathbf{b}^D}_{k+1} = g^{\mathbf{h}^D,\mathbf{b}^D}_{k} - \frac{\eta_v}{N}\sum_{i=1}^N \rho \big( z^h_{k,i} + \langle h_{k,i}^D,  f^{\mathbf{h}^D,\mathbf{b}^D}_{k} \rangle_{\ndbar} \big) b_{k,i}^D,
\end{cases}
\end{equation}
and similarly in Definition~\ref{def:mf_skeleton_maps}. 
This does not conceptually complicate our setting, beyond producing heavier notations and dependencies on $KN$ instead of $K$.
For simplicity and brevity, we stick to the case $N=1$.
\end{rem}

\subsection{Limit dynamics and quantitative convergence}

We recall that we assume $N=1$ throughout the remaining of this article. We now define the limit system and refer to Section~\ref{sec:proof-strategy-cavity} for a quick heuristic derivation. 

\paragraph{Well-posedness.}
Let $(\Omega,\F,\P)$ be a probability space that carries at least two subgaussian random vectors $W_{\mathrm{in}}\in \R^{\dimin}$ and $W_{\mathrm{out}} \in \R^{\dimout}$. We say that  $(\mathbf{H}, \mathbf{B})$ is a solution of the limit system if it is a $\Finit$-measurable random variable in $L^2 ( \Omega, \C ( [0,1], \R^{2 K}) ) $ that a.s. satisfies, for $k \in[0:K-1]$,
\begin{equation} \label{eq:LimH}
\begin{cases}
{H}_k(0) = W_{\mathrm{in}}\cdot x, \\
P_k (s) := Z_k^{h} (s) + \E [ H_k (s) F^{\mathbf{H},\mathbf{B}}_k (s, \mathbf{Z}^{h},\mathbf{Z}^{b}) \vert  \mathbf{Z}^{h},\mathbf{Z}^{b} ], \\
\partial_s {H}_k (s) = \varv^2 \E [ \rho'( P_k (s) ) \E[ H_k (s) \bnabla_{\bfz^b} F_k^{\mathbf{H},\mathbf{B}} (s,  \mathbf{Z}^{h},\mathbf{Z}^{b} ) \vert  \mathbf{Z}^{h},\mathbf{Z}^{b} ] \,]\cdot \mathbf{B}_{\wedge k-1}(s) \\
\qquad\qquad\qquad\qquad\qquad+ \E [ \rho ( P_k (s) ) G^{\mathbf{H},\mathbf{B}}_k (s, \mathbf{Z}^{h},\mathbf{Z}^{b}) \vert \Finit], \qquad s \in [0,1],  \\
y_k := \E [ H_k(1) W_{\mathrm{out}}], 
\end{cases}
\end{equation}
as well as
\begin{multline} \label{eq:LimB}
\begin{cases}
Q_k (s) = Z_k^{b} (s) + \E [ B_k (s) G^{\mathbf{H},\mathbf{B}}_k (s, \mathbf{Z}^{h},\mathbf{Z}^{b}) \vert  \mathbf{Z}^{h},\mathbf{Z}^{b} ],\\
\partial_s {B}_k(s) = - \E \big[ \rho'( P_k(s) ) F_k^{\mathbf{H},\mathbf{B}} (s, \mathbf{Z}^{h},\mathbf{Z}^{b}) Q_k(s)  \vert \Finit\big] \\
\qquad \qquad -\varu^2 \E \big[ \rho' \big( P_k(s) ) \E [ B_k(s) \nabla_{\bfz^h} G_k^{\mathbf{H},\mathbf{B}} ( s, \mathbf{Z}^{h},\mathbf{Z}^{b}) \vert  \mathbf{Z}^{h},\mathbf{Z}^{b} ] \big]\cdot \mathbf{H}_{\wedge k-1}(s) \\
\qquad\qquad- \varu^2 \E \big\{ \rho''( P_k(s) )Q_k(s) \big[ H_k(s) + \E [ H_k(s) \bnabla_{\bfz^h} F_k^{\mathbf{H},\mathbf{B}} (s,  \mathbf{Z}^{h},\mathbf{Z}^{b}) \vert  \mathbf{Z}^{h},\mathbf{Z}^{b} ] \cdot \mathbf{H}_{\wedge k-1}(s) \big] \vert \Finit \big\},\\ 
{B}_k(1) = W_{\mathrm{out}}\cdot \nabla \ell ( y_k ),
\end{cases}
\end{multline}
where $(\mathbf{Z}^{h},\mathbf{Z}^b)$ is a centered Gaussian field in $L^2 ( [0,1],\R^{2K} )$ that has the same covariance operator as $(U \mathbf{H},V \mathbf{B})$, for centered and independent random variables $(U,V)$ with variance $(\varu^2,\varv^2)$ that are further independent of $(W_{\mathrm{in}},W_{\mathrm{out}})$ -- and thus of $(\mathbf{H},\mathbf{B})$.
We abuse notation here by writing $F_k^{\mathbf{H},\mathbf{B}} (s,\mathbf{Z}^h,\mathbf{Z}^b)$ instead of $F_k^{\mathbf{H},\mathbf{B}} (s,\mathbf{Z}^h(s),\mathbf{Z}^b(s))$ and similarly for the rest.

We recall that $F_k^{\mathbf{H},\mathbf{B}} (s,\mathbf{Z}^h,\mathbf{Z}^b)$ and $G_k^{\mathbf{H},\mathbf{B}} (s,\mathbf{Z}^h,\mathbf{Z}^b)$ only depend on $(\mathbf{H}_{\wedge k-1},\mathbf{B}_{\wedge k-1},\mathbf{Z}^h_{\wedge k-1},\mathbf{Z}^b_{\wedge k-1})$.
Yet, \eqref{eq:LimH} still depends on $\L( H_k)$ through $Z^h_k$ and the expectation involving $H_k$, and similarly for \eqref{eq:LimB} on $\L(B_k)$.
System \eqref{eq:LimH}-\eqref{eq:LimB} is thus of McKean-Vlasov type, with the dependence on $\L(\bfH, \bfB)$ only being through the correlation matrices $\boldsymbol{\Gamma}^H$ and \(\boldsymbol{\Gamma}^B\), as the following remark makes explicit.

\begin{rem}[Linear mean-field structure] \label{rem:linStruc}
Using the formulation and notation\footnote{We again abuse notation by writing \(\vec{\mathbf{v}}_{k}^F ( s, \mathbf{Z}^{h},\mathbf{Z}^{b})\) instead of \(\vec{\mathbf{v}}_{k}^F ( s, \mathbf{Z}^{h}(s),\mathbf{Z}^{b}(s))\) and similarly for the rest.} from Definition~\ref{def:linear_skeleton_meanfield}, we can write \eqref{eq:LimH}-\eqref{eq:LimB} explicitly, underlying evaluation at $s\in[0,1]$,
\begin{equation} \label{eq:LimHdecoupledexplicit}
\begin{cases}
{H}_k(0) = W_{\mathrm{in}}\cdot x, \qquad P_k = p_k(\mathbf{Z}^h, \mathbf{Z}^b), \\
\partial_s {H}_k = -\left(\eta_u \varv^2 \E\left[ \rho'(P_k) \bnabla_{\bfz^b} \vec{\mathbf{v}}_{k}^F(\mathbf{Z}^h, \mathbf{Z}^b)\right]\vec{\boldsymbol{\Gamma}}^H_k +\eta_v \E\left[ \rho(P_k) \vec{\mathbf{v}}_{k}^G(\mathbf{Z}^h, \mathbf{Z}^b)\right]  \right) \cdot \mathbf{B}_{\wedge k-1},  \\
y_k := \E [ H_k(1) W_{\mathrm{out}}], 
\end{cases}
\end{equation}
and
\begin{equation} \label{eq:LimBdecoupledexplicit}
\begin{cases}
{B}_k(1) = W_{\mathrm{out}}\cdot \nabla \ell ( y_k ), \qquad Q_k = q_k(\mathbf{Z}^h, \mathbf{Z}^b),\\
\partial_s {B}_k =  \left(\eta_v \varu^2 \E\left[ \rho'(P_k) \bnabla_{\bfz^h} \vec{\mathbf{v}}_{k}^G(\mathbf{Z}^h, \mathbf{Z}^b)\right] \vec{\boldsymbol{\Gamma}}^B_k(s) + \eta_u \E\left[ \rho'(P_k)Q_k\vec{\mathbf{v}}_{k}^F(\mathbf{Z}^h, \mathbf{Z}^b)\right]\right)\cdot \mathbf{H}_{\wedge k-1} \\ \qquad\qquad + \varu^2\left(\eta_u \E\left[\rho''(P_k)Q_k \bnabla_{\bfz^h} \vec{\mathbf{v}}_{k}^F( \mathbf{Z}^h, \mathbf{Z}^b)\right]\vec{\boldsymbol{\Gamma}}^H_k\right) \cdot \mathbf{H}_{\wedge k-1} -\varu^2 \E[\rho''(P_k)Q_k] H_k.
\end{cases}
\end{equation}
By induction \((\bfH, \bfB)\) can thus be written as a linear function of $(\Win, \Wout)$ and so, if the latter is a Gaussian vector, then the former is a Gaussian process. See Proposition~\ref{prop:explicit_dependence_on_WeWu} for details and proofs.
\end{rem}
We now prove the well-posedness of the limit dynamics, under the following assumption.
\begin{assumption}\label{assumption:rho_and_ell} $\phantom{a}$
\begin{enumerate}[label=(\roman*),ref=(\roman*)]
    \item The activation function $\rho:\R \to \R$ satisfies, for a given constant $C_\rho >0$, that $|\rho(0)| \leq C_\rho$, and $\|\rho'\|_\infty \vee\|\rho''\|_\infty \leq C_\rho$. In other words, $\rho$ has bounded derivatives up to second order. 
    \item The loss function $\ell: \R^{\dimout}\to\R$ has a $C_\ell$-Lipschitz gradient $\nabla\ell$, for some constant $C_\ell >0$.
    In particular, \(\|\nabla \ell(y)\|_{2,{\dimout}}\) has at most linear growth.
\end{enumerate}
All the constants involved in our proofs implicitly depend on $C_\rho$ and $C_\ell$.
\end{assumption}
\begin{restatable}[Well-posedness]{theorem}{wellposedness}
\label{thm:well_posedness_main}
There exists a unique solution $(\mathbf{H}, \mathbf{B})$ to the system \eqref{eq:LimH}-\eqref{eq:LimB} in $L^2 ( \Omega, \C ( [0,1], \R^{2 K}) )$.
\end{restatable}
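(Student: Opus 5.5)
The plan is an induction on the training step $k\in[0:K-1]$, using the explicit linear form of Remark~\ref{rem:linStruc}. The system is triangular in $k$: the equations \eqref{eq:LimHdecoupledexplicit}--\eqref{eq:LimBdecoupledexplicit} for $(H_k,B_k)$ involve $(\mathbf{H}_{\wedge k-1},\mathbf{B}_{\wedge k-1})$ and the Gaussian field $(\mathbf{Z}^h_{\wedge k-1},\mathbf{Z}^b_{\wedge k-1})$, which are already constructed. They also involve the law of $(H_k,B_k)$ itself, but only through $Z^h_k,Z^b_k$ and the covariance vectors $\vec{\boldsymbol{\Gamma}}^H_k(s),\vec{\boldsymbol{\Gamma}}^B_k(s)$, together with the scalar $\E[\rho''(P_k)Q_k]$ and $y_k=\E[H_k(1)W_{\mathrm{out}}]$.

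Assume $(\mathbf{H}_{\wedge k-1},\mathbf{B}_{\wedge k-1})$ is constructed, unique, $\Finit$-measurable and in $L^2(\Omega,\C([0,1],\R^{2k}))$. Then $\boldsymbol{\Gamma}^H_{\wedge k-1},\boldsymbol{\Gamma}^B_{\wedge k-1}$ are continuous in $s$, and the skeleton vectors $\vec{\mathbf{v}}_k^F,\vec{\mathbf{v}}_k^G$ of Definition~\ref{def:linear_skeleton_meanfield} are well defined. Under Assumption~\ref{assumption:rho_and_ell} they grow at most polynomially in $(\bfz^h,\bfz^b)$, with polynomially growing $\bfz$-derivatives. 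The complication is that $p_{k-1},q_{k-1}$ already enter these vectors, but only through $\vec{\boldsymbol{\Gamma}}_{k-1}$, which is known. At step $k$ itself, $p_k$ needs $\vec{\boldsymbol{\Gamma}}^H_k$, which is unknown.

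I will therefore set up a fixed point on the unknown step-$k$ statistics. The equation for $H_k$ depends on step-$k$ quantities only through $P_k$ and $\vec{\boldsymbol{\Gamma}}^H_k$. Given a continuous candidate $\gamma^H:[0,1]\to\R^k$ for $\vec{\boldsymbol{\Gamma}}^H_k$, the joint Gaussian law of $(\mathbf{Z}^h_{\wedge k},\dots)$ is not yet fixed, because it needs $\Gamma^H_{k,k}$. However, $P_k$ uses $Z^h_k$ only through $\rho'(P_k)$ inside expectations.

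It is cleaner to view the whole step-$k$ system as a fixed point $\Phi$ acting on $(H_k,B_k)\in L^2(\Omega,\C([0,1],\R^2))$, $\Finit$-measurable. Given a candidate pair, compute its covariances with $(\mathbf{H}_{\wedge k},\mathbf{B}_{\wedge k})$ and build the Gaussian field with that covariance. Then integrate the linear ODE for $H_k$ forward from $W_{\mathrm{in}}\cdot x$. Compute $y_k$ from this new $H_k$. Finally integrate the ODE for $B_k$ backward from $W_{\mathrm{out}}\cdot\nabla\ell(y_k)$. The $B_k$ equation is a linear ODE in $B_k$ with a coefficient $-\varu^2\E[\rho''(P_k)Q_k]H_k$ that is bounded by $C_\rho$ times an $L^2$ norm. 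One gets existence in this form, since the equation for $H_k$ is an explicit integral of $\mathbf{B}_{\wedge k-1}$ with deterministic coefficients.

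The key estimate is Lipschitz continuity of $\Phi$ in a weighted sup-in-time $L^2$ norm, for instance $\sup_s e^{-\lambda s}\|H_k(s)\|_{L^2}$, with a backward weight for $B$. Deterministic coefficients such as $\E[\rho'(P_k)\bnabla_{\bfz^b}\vec{\mathbf{v}}^F_k]$ depend on the candidate only through Gaussian expectations with covariance entries. The map from covariance to Gaussian expectation of a polynomially growing smooth function is locally Lipschitz: write $Z=\Sigma^{1/2}\xi$, or use a Gaussian interpolation/Price formula, which needs the second derivatives of $\rho$ granted by Assumption~\ref{assumption:rho_and_ell}. Covariances are Lipschitz in the candidate's $L^2$ norm by Cauchy--Schwarz.

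I expect this step to be the main obstacle. It only yields local Lipschitz bounds, because of polynomial growth and the product $\vec{\mathbf{v}}^F\cdot\vec{\boldsymbol{\Gamma}}$. I would first derive an a priori bound: Gr\"onwall on the linear ODEs gives $\sup_s\|H_k(s)\|_{L^2}+\|B_k(s)\|_{L^2}\le C$, with $C$ depending only on the step-$(k-1)$ quantities, using linear growth of $\nabla\ell$ and boundedness of $\rho',\rho''$. I would then restrict $\Phi$ to the ball of that radius, where the Lipschitz constants are uniform. Choosing $\lambda$ large, or iterating short intervals, then makes $\Phi$ a contraction, and Banach's fixed point theorem gives existence and uniqueness at step $k$. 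If the forward/backward coupling through $y_k$ obstructs a direct contraction, I would instead treat $y_k\in\R^{\dimout}$ as an auxiliary finite-dimensional unknown and close by a one-dimensional contraction or continuity argument.

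Continuity in $s$ and $\Finit$-measurability are preserved because each new component is an explicit linear combination, with continuous deterministic coefficients, of $\mathbf{H}_{\wedge k-1},\mathbf{B}_{\wedge k-1},W_{\mathrm{in}},W_{\mathrm{out}}$. This also makes Remark~\ref{rem:linStruc} consistent. Induction over $k$ concludes.
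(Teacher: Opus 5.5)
Your overall architecture (induction on $k$, an a priori bound, restriction to a ball, a contraction in a time-weighted norm) is sound. Since the $H_k$-equation does not involve $B_k$ at all, you can solve for $H_k$, then fix $y_k$, then solve for $B_k$, so the forward--backward coupling is no obstacle. Note also that $B_k$ enters its own equation only through its law, via $\vec{\boldsymbol{\Gamma}}^B_k$ and $Z^b_k$; it is not a linear ODE in $B_k$.

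The genuine gap is the key estimate. You factor the dependence on the candidate as ``candidate $\mapsto$ covariance entries $\mapsto$ Gaussian expectation''. Neither tool you name makes the second map Lipschitz under Assumption~\ref{assumption:rho_and_ell}.
\begin{itemize}
\item The factorization $Z=\Sigma^{1/2}\xi$ is only $\tfrac12$-H\"older at singular $\Sigma$, and degeneracy is unavoidable here. Since $H_j(0)=W_{\mathrm{in}}\cdot x$ for every $j$, the conditional variance of $Z^h_k(s)$ given $\mathbf{Z}^h_{\wedge k-1}(s)$ vanishes at $s=0$ even along the true solution. Your ball also contains candidates such as $H_k=H_{k-1}$.
\item Price's formula (Gaussian interpolation) differentiates the integrand twice. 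Since $\partial_{z^h_k}p_k=1$, the entry $\Gamma^H_{k,k}$ produces $\partial^2_{z^h_k}[\rho'(p_k)]=\rho'''(p_k)$. In the $B_k$-step, the cross-covariances of $Z^b_k$ hit $\rho''(P_k)Q_k$ and again produce $\rho'''$. Assumption~\ref{assumption:rho_and_ell} only bounds $\rho'$ and $\rho''$, so your count is off by one derivative.
\end{itemize}
The loss is intrinsic to the covariance parametrization. For $\xi\sim\mathcal{N}(0,1)$, $\tau\mapsto\E[\rho'(a+\tau\xi)]$ is Lipschitz as soon as $\rho'$ is. But if, say, $\rho''(a+u)-\rho''(a)=\mathrm{sign}(u)|u|^{1/2}$ near $u=0$, then $\E[\rho'(a+\tau\xi)]-\rho'(a)$ behaves like $(\tau^2)^{3/4}$, which is not Lipschitz in the variance $\tau^2$. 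A H\"older-type Volterra bound yields neither a contraction nor uniqueness. Your route would go through if $\rho'''$ were bounded, as under Assumption~\ref{assumption:quantitative_convergence}, but the theorem is stated under Assumption~\ref{assumption:rho_and_ell}.

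The missing idea, which is how the paper proves Lemma~\ref{lem:LimlocLip}, is to compare the two Gaussian fields through a coupling driven by the candidates themselves, not through their covariances. For $H_k,H_k^\star$ with the same past, realize $(\mathbf{Z}^h_{\wedge k-1},Z^h_k,Z^{h\star}_k,\mathbf{Z}^b_{\wedge k-1})$ as one centered Gaussian vector with the covariance of $(U\mathbf{H}_{\wedge k-1},UH_k,UH^\star_k,V\mathbf{B}_{\wedge k-1})$; this is the mechanism behind Corollary~\ref{cor:GaussianW2inequality}. Then:
\begin{itemize}
\item $\E|Z^h_k-Z^{h\star}_k|^2=\varu^2\E|H_k-H^\star_k|^2$;
\item the past coordinates coincide, so $\vec{\mathbf{v}}^F_k,\vec{\mathbf{v}}^G_k$ and their gradients are evaluated at the same point in both systems;
\item $|P_k-P^\star_k|\le|Z^h_k-Z^{h\star}_k|+\eta_u\|\vec{\mathbf{v}}^F_k\|_{2,k}\|\vec{\boldsymbol{\Gamma}}^H_k-\vec{\boldsymbol{\Gamma}}^{H\star}_k\|_{2,k}$.
\end{itemize}
Only the Lipschitz continuity of $\rho$ and $\rho'$ is then needed, and you obtain
\[
\E\Big[\sup_{r\le s}|\overline{H}_k(r)-\overline{H}^\star_k(r)|^2\Big]\le C_k\Big\{1+\E\Big[\sup_{r\le s}|H_k(r)|^2\Big]\Big\}\int_0^s\E\Big[\sup_{r\le t}|H_k(r)-H^\star_k(r)|^2\Big]\,\d t .
\]
The backward analogue for $B_k$ uses $Z^b_k$: there $P_k$ is common to both systems because $H_k$ is already fixed, and $Q_k$ enters linearly. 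Combined with your a priori bound, this closes the Picard iteration (or your weighted-norm contraction), and the rest of your plan goes through.
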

\begin{rem}[Training embedding matrices]\label{rem:training_embedding_matrices_meanfield}
    To cover the training of the embedding matrices, as mentioned in Remark~\ref{rem:training_embedding_matrices_finiteD}, system \eqref{eq:LimH}-\eqref{eq:LimB}, should also involve the update rules
    \begin{equation*}
    (W_{\mathrm{in}})_{k+1} = (W_{\mathrm{in}})_{k} - \eta_{\mathrm{in}}B_k(0)x,\quad\text{and}\quad
    (W_{\mathrm{out}})_{k+1} = (W_{\mathrm{out}})_{k} - \eta_{\mathrm{out}}H_k(1)\nabla\ell(y_k).
\end{equation*}
In particular, as explained in Appendix~\ref{sec:extensions_of_results}, a slight modification of the proof of Theorem~\ref{thm:well_posedness_main}  allows us to prove that the modified system admits a unique solution as well.
\end{rem}

\begin{rem}[DMFT-type closure]
    Equations \eqref{eq:LimH} and \eqref{eq:LimB} only require us to evaluate $F^{\mathbf{H},\mathbf{B}}_k$, $G^{\mathbf{H},\mathbf{B}}_k$ and their gradient at $(  \mathbf{Z}^{h}(s),\mathbf{Z}^{b}(s))$.
We may therefore get rid of these functions by introducing the random variables
\[
\begin{cases}
(F_k,G_k) := (F^{\mathbf{H},\mathbf{B}}_k ( \mathbf{Z}^{h},\mathbf{Z}^{b}), G^{\mathbf{H},\mathbf{B}}_k( \mathbf{Z}^{h},\mathbf{Z}^{b})), \\ 
(\bnabla_{\bfz^\mathfrak{j}} F_k,\bnabla_{\bfz^\mathfrak{j}} G_k) := (\bnabla_{\bfz^\mathfrak{j}} F^{\mathbf{H},\mathbf{B}}_k ( \mathbf{Z}^{h},\mathbf{Z}^{b}), \bnabla_{\bfz^\mathfrak{j}}G^{\mathbf{H},\mathbf{B}}_k( \mathbf{Z}^{h},\mathbf{Z}^{b})),\quad \mathfrak{j}\in\{h,b\},
\end{cases}
\]
together with the update rules \({F}_{k+1} =  {F}_{k}  - \eta_u \rho' ( P_k) Q_k H_k\), \(
 {G}_{k+1} =  {G}_{k} - {\eta}_v \rho ( P_k) B_k\) and
\begin{align*}
    \bnabla_{\bfz^\mathfrak{j}} F_{k+1}& = \begin{pmatrix}
        \bnabla_{\bfz^\mathfrak{j}} F_{k}\\ 0
    \end{pmatrix} - \eta_u\bigg( \rho''(P_k)Q_k\begin{pmatrix}
        \E[H_k\bnabla_{\bfz^\mathfrak{j}} F_{k}|\bfZ^h, \bfZ^b]\\ \mathbbm{1}_{\mathfrak{j}=h}
    \end{pmatrix} + \rho'(P_k)\begin{pmatrix}
        \E[B_k \bnabla_{\bfz^\mathfrak{j}} G_{k}|\bfZ^h, \bfZ^b]\\ \mathbbm{1}_{\mathfrak{j}=b}
    \end{pmatrix}\bigg)H_k,\\
    \bnabla_{\bfz^\mathfrak{j}} G_{k+1}& = \begin{pmatrix}
        \bnabla_{\bfz^\mathfrak{j}} G_{k}\\ 0
    \end{pmatrix} - \eta_v \rho'(P_k)\begin{pmatrix}
        \E[H_k\bnabla_{\bfz^\mathfrak{j}} F_{k}|\bfZ^h, \bfZ^b]\\ \mathbbm{1}_{\mathfrak{j}=h}
    \end{pmatrix}B_k,\qquad\text{for}\;\; \mathfrak{j}\in\{h,b\},
\end{align*}
where we systematically underly evaluation at $s\in[0,1]$ and we posed \(P_k := Z^{h}_k +  \E [ H_k F_k \vert  \mathbf{Z}^{h},\mathbf{Z}^{b} ]\) and \(Q_k := Z^{b}_k +  \E [ B_k G_k \vert  \mathbf{Z}^{h},\mathbf{Z}^{b} ]\).
The variables $(W_{\mathrm{in}},W_{\mathrm{out}},{\mathbf{H}},{\mathbf{B}},{\mathbf{F}},{\mathbf{G}},\bnabla {\mathbf{F}},\bnabla {\mathbf{G}})$
allow for a concise re-writing of \eqref{eq:LimH}-\eqref{eq:LimB}, which no more needs the skeleton maps.
We notice that the variables $(\bnabla {\mathbf{F}},\bnabla {\mathbf{G}})$ describe how $({\mathbf{F}},{\mathbf{G}})$ is affected by perturbations at previous training times. In that sense, these functions are similar to the \emph{linear response functions} that are customary in the setting of DMFT~\cite{bordelon2022selfconsistentdynamicalfieldtheory, montanari2025dynamicaldecouplinggeneralizationoverfitting}.
\end{rem}

\paragraph{Quantitative convergence.}
We now state the central technical theorem of this paper, which proves quantitative convergence of the solution of the Mean ODE \eqref{eq:original_finite_D_system} towards the limit system at a rate $O(D^{-1/2})$ as $D\to \infty$. Consider an i.i.d.~sequence of random vectors $(W^d_e,W^d_u)_{d \geq 1}$ and let \((\mathbf{H}^d,\mathbf{B}^d)_{d\geq 1}\) be an i.i.d.~sequence of copies of the (unique) solution of \eqref{eq:LimH}-\eqref{eq:LimB} with initial conditions $(W_{\mathrm{in}}^d, W_{\mathrm{out}}^d)$ for each $d\geq 1$.
For every $D\geq 1$, consider the solution \(\mathbf{h}^{(D)},\mathbf{b}^{(D)}\) of the Mean ODE \eqref{eq:original_finite_D_system}, initialized using the coupled matrices $W_{\mathrm{in}}^{(D)}$ and $W_{\mathrm{out}}^{(D)}$, of rows $(W_{\mathrm{in}}^d)_{d=1}^D$ and $(W_{\mathrm{out}}^d)_{d=1}^D$, respectively; and with parameter initialization $U_0^{(D)}$, $V_0^{(D)}$. 

In order to prove the convergence result, we complement Assumption~\ref{assumption:rho_and_ell} with the following assumption.
\begin{assumption}\label{assumption:quantitative_convergence} $\phantom{a}$
    \begin{enumerate}[label=(\roman*),ref=(\roman*)]
     \item The activation function $\rho$ further has $\|\rho^{(3)}\|_{\infty}\vee\|\rho^{(4)}\|_{\infty}\vee \|\rho^{(5)}\|_{\infty} \leq C_\rho$, i.e. bounded derivatives up to fifth order.
     \item The rows $(\Win^d)_{d\geq 1}$ are i.i.d.~$\vpin^2$-subgaussian random vectors; and similarly $(\Wout^d)_{d\geq 1}$ are $\vpout^2$-subgaussian.  
    In general, we do not assume $\Win$ and $\Wout$ to be independent from each other, nor their coordinates to be independent within each row.
    \item The \textit{parameter initializations} $U_0^{(D)}$ and $V_0^{(D)}$ are given by $U_0^{(D)} := \sqrt{D}U^{(D)}$ and $V_0^{(D)} := \sqrt{D}V^{(D)}$, for $(U^{d})_{d\geq 1}, (V^{d})_{d\geq 1}$ i.i.d.~families of centered subgaussian random variables, independent from $\Finit:=\sigma(\Win, \Wout)$ and from each other, with variance proxies $\vpu^2$ and $\vpv^2$, respectively; and variances $\varu^2$ and $\varv^2$, respectively.
    \end{enumerate}
    All the constants involved in our proofs implicitly depend on $\eta_u$, $\eta_v$ $\vpu$, $\vpv$, $\varu$, $\varv$, $\vpin$ and $\vpout$ unless explicitly stated.
\end{assumption}

Under Assumptions~\ref{assumption:rho_and_ell} and \ref{assumption:quantitative_convergence} we obtain the desired quantitative convergence result. This is the central technical result of this paper from which Theorem~\ref{thm:main_theorem_intro} follows, as we show in Section~\ref{sec:proof_of_thms_quant_cvg}..
\begin{restatable}[Quantitative convergence]{theorem}{quantitativeconvergencethm}\label{thm:quantitative_convergence_main}
    There exists a constant $\cstH_K>0$ that only depends on $K$ and $\boldsymbol{\Gamma}^H$, $\boldsymbol{\Gamma}^B$, such that, for any $D\geq 1$ and any $\delta\in(e^{-D},1)$, with probability at least $1-\delta$ it holds
    \[ \sup_{\substack{s\in[0,1]\\ k\in[0:K-1]}}\| h^{(D)}_k (s) - H^{(D)}_k (s) \|_{\ndbar}\vee \| b^{(D)}_k (s) - B^{(D)}_k (s) \|_{\ndbar} \leq \cstH_K\bigg(\frac{1 + \log(1/\delta)}{\sqrt{D}}\bigg), \]
    \[\max_{k\in[0:K-1]}\|\hat{y}_k^D - y_k\|_{2, \dimout} \leq \cstH_K \Big(\frac{1 + \log(1/\delta)}{\sqrt{D}}\Big)\]
If $(\Win, \Wout)$ are a.s. bounded, the above holds for any $\delta\in(0,1)$.
\end{restatable}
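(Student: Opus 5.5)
We argue by induction on the training step $k\in[0:K-1]$, working conditionally on $\Finit$ throughout. The induction hypothesis at step $k$ is that, with high probability, the finite-$D$ paths $(\bfh^{(D)}_{\wedge k-1},\bfb^{(D)}_{\wedge k-1})$ are within $O((1+\log(1/\delta))/\sqrt{D})$ in RMS norm, uniformly in $s$, of the i.i.d.\ copies $(\bfH^{(D)}_{\wedge k-1},\bfB^{(D)}_{\wedge k-1})$. It also includes closeness of the skeleton maps: $f_k^{\bfh^D,\bfb^D}$ should be close to $(F_k^{\bfH^d,\bfB^d})_{d}$ coordinatewise, and likewise for $g$, in a $C^2$-type norm on compact sets of $(\bfz^h,\bfz^b)$, including the gradients $\bnabla_{\bfz}$.

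Given this, we compare the two skeleton recursions \eqref{eq:def_skeleton_finiteD} and \eqref{eq:def_skeleton_meanfield}. By \eqref{eq:cleaner_system_Fk_definition}, $F_k$ is linear in $\bfH_{\wedge k-1}$ with coefficients $\vec{\mathbf{v}}^F_k$, which depend only on the covariances. The finite-$D$ maps have the same structure, with $\boldsymbol{\Gamma}^H$ replaced by empirical Gram matrices $\langle h_i^{(D)},h_j^{(D)}\rangle_{\ndbar}$. Hence the skeleton error reduces to (a) the path error from the induction hypothesis, and (b) the LLN fluctuations $\llnW_k(s)$ of empirical covariances of the i.i.d.\ limit copies. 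Quantity (b) is subexponential and of order $(1+\log(1/\delta))/\sqrt{D}$ uniformly in $s$, via Bernstein plus a chaining/Lipschitz-in-$s$ argument. Here the condition $\delta>e^{-D}$ keeps Bernstein in its subgaussian regime.

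The core step is closing the forward ODE for $h_k$ and the backward ODE for $b_k$. The drift $\E[\rho(\langle U_k,h_k\rangle_{\ndbar})V_k\mid\Finit]$ involves $U_k=\sqrt D U+f_k(\bfS^{\bfh},\bfS^{\bfb})$ and $V_k=\sqrt D V+g_k(\bfS^{\bfh},\bfS^{\bfb})$, and we use the cavity method on it. Coordinate $d$ of the drift contains $\sqrt D\,\E[\rho(\cdot)V^d]$. We remove coordinate $d$ from the CLT sums $\bfS$, writing $\bfS=\bfS^{(-d)}+D^{-1/2}(U^dh^d,V^db^d)$, and Taylor-expand $\rho(\cdot)$ and $g_k$ to second order in this $O(D^{-1/2})$ perturbation. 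The zeroth-order term vanishes because $V^d$ is centered and independent of the cavity sums. The first-order term produces exactly the Onsager-type reaction terms $\varv^2\E[\rho'(P_k)\bnabla_{\bfz^b}F_k]\cdot\bfB_{\wedge k-1}$ and their analogues in \eqref{eq:LimH}–\eqref{eq:LimB}. The remainders are $O(D^{-1/2})$ thanks to bounded derivatives of $\rho$ up to order five, which are needed because the backward equation already contains $\rho''$ and gradients of skeletons.

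After this expansion, the expectations over $\bfS^{(-d)}$ must be replaced by Gaussian expectations over $(\bfZ^h,\bfZ^b)$. For this we use a quantitative Lindeberg replacement for smooth test functions, costing $O(D^{-1/2})$ times third moments controlled by $\|\cdot\|_{\overline{p}}$ with $p\in\{3,4\}$ (hence $\cstW_k$). We then replace the empirical covariance of $(U\bfh,V\bfb)$ by $\boldsymbol{\Gamma}$, which is controlled by the induction hypothesis and the LLN terms. Together these yield, coordinatewise, $\partial_s(h_k-H_k)=\mathcal{A}(h_k-H_k)+\mathcal{E}$, where $\mathcal{A}$ is a linear operator bounded in RMS norm and $\mathcal{E}$ has RMS norm $O((1+\log(1/\delta))/\sqrt D)$.

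Grönwall then closes the forward estimate, and the bound on $\hat y_k^D-y_k$ follows from Cauchy–Schwarz with $\|\Wout\|_{\ndbar}$ plus an LLN term for $\langle H_k(1),\Wout\rangle_{\ndbar}-\E[H_kW_{\mathrm{out}}]$. The terminal condition $b_k(1)=\Wout\nabla\ell(\hat y_k^D)$ is then close to $\Wout\nabla\ell(y_k)$ by Lipschitzness of $\nabla\ell$, and a backward Grönwall handles $b_k$. A union bound over the $K$ steps and finitely many LLN events finishes the argument. All constants depend only on $K$, $\boldsymbol{\Gamma}$, and $\cstW$, and the latter is bounded with the same probability, or deterministically if the embeddings are bounded, which gives the final remark of the theorem.

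The main obstacle is the cavity/Lindeberg step. It has to be carried out at the functional level on skeleton maps that are themselves random, depending on $\bfh^{(D)}$ and hence weakly on every $U^d$. One must therefore also control the dependence of $h^{(D)}$ on the removed coordinate, a leave-one-out stability estimate of order $D^{-1}$ per coordinate, and keep all constants uniform in $s$.
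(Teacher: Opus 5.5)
Your outline is viable, but it is organised differently from the paper, and the difference determines where the work goes.

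\textbf{Where the cavity and Lindeberg steps are applied.} You run both directly on the finite-$D$ drift $\H^d_k(s,\bfh^{(D)})$. The paper never does this. It inserts the pivot $\H^{(D)}_k(s,\bfH^{(D)})$, built from the i.i.d.\ limit copies but with the \emph{same} $(U^d,V^d)$, so that $\E[|S^{\bfh^D}_j-S^{\bfH^D}_j|^2\mid\Finit]=\varu^2\|h^{(D)}_j-H^{(D)}_j\|^2_{\ndbar}$. The finite-to-pivot comparison (Lemma~\ref{lem:proof_cvg_prop_caos_H}) is then purely at the RMS level. The dangerous term $\E[(\rho(P_k^{\bfh^D,\bfb^D})-\rho(P_k^{\bfH^D,\bfB^D}))\sqrt{D}V^{(D)}\mid\Finit]$ is bounded by $\varv\,\E^{1/2}[|\rho(P_k^{\bfh^D,\bfb^D})-\rho(P_k^{\bfH^D,\bfB^D})|^2\mid\Finit]$ via the Bessel-type Lemma~\ref{lem:decorrelation_chizat}, with no expansion at all. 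Cavity and CLT are applied only to the pivot. There $\bfH^d$ is a deterministic linear function of $(\Win^d,\Wout^d)$ (Proposition~\ref{prop:explicit_dependence_on_WeWu}), so coordinate moments, Hessian bounds and LLN terms come for free.

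\textbf{What your route must supply instead.}
\begin{enumerate}
\item The paper needs only $C^0$ (RMS) propagation of chaos for the skeletons (Proposition~\ref{prop:propcaos_Dbar_whp}). You also need it for $\bnabla_{\bfz}f_k$ against $\bnabla_{\bfz}F_k$. This must hold with exponential-growth weights rather than on compact sets, since $\bfS$ is unbounded, and you additionally need Hessian bounds for the finite skeletons.
\item Your Lindeberg errors and cavity remainders involve $\frac1D\sum_d\|\bfh^d\|^3$ and $\frac1D\sum_d\|\bfb^d\|^4$. These are \emph{not} controlled by $\cstW_k$ as you assert, because Lemma~\ref{lem:embeddingcontrol} gives only RMS bounds for the finite system. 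They can be recovered from $\frac1D\sum_d|x_d|^p\le D^{p/2-1}\|x\|^p_{\ndbar}$ with $x=h-H$ and the induction hypothesis. At the current step the resulting $\|h^{(D)}_k-H^{(D)}_k\|^3_{\ndbar}$ must be linearised with the a priori RMS bound before Gr\"onwall, and you should say so.
\end{enumerate}
Your step-by-step high-probability induction with uniform-in-$s$ LLN control is fine. By linearity, every LLN term is a continuous-in-$s$ linear image of the deviation of the empirical second-moment matrix of $(\Win^d,\Wout^d)$. The paper instead carries an almost-sure bound in terms of $\int_0^1\llnW_j(s)\,\d s$ and concentrates once at the end.

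\textbf{The obstacle you single out does not exist.} In the Mean ODE the drifts are conditional expectations given $\Finit$. Hence $\bfh^{(D)},\bfb^{(D)}$, and the finite skeleton maps, are $\Finit$-measurable and do not depend on any $U^d$ or $V^d$. The initialization enters only through $\bfS$, which is exactly why your zeroth-order cavity term vanishes identically. No leave-one-out stability estimate is needed.

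Similarly, $\delta>e^{-D}$ is not needed for Bernstein, whose $\log(1/\delta)/D$ term is always at most $\log(1/\delta)/\sqrt{D}$. The paper uses it to control the subweibull empirical moments $\frac1D\sum_d\|\Win^d\|^p_{2,\dimin}$, $p\in\{3,4\}$ (and likewise for $\Wout$), which enter the random constants.
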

Integrating the tails, we get the following corollary.
\begin{corollary}[$L^2$ convergence for bounded inputs]\label{cor:quantitative_convergence_L2_main}
    If $(W_{\mathrm{in}}, W_{\mathrm{out}})$ are a.s. bounded, then there exists a constant $\cstH_K>0$ depending solely on $K$ and $\boldsymbol{\Gamma}^H$, $\boldsymbol{\Gamma}^B$, such that, for any $D\geq 1$,
\[\E \bigg[ \sup_{\substack{s\in[0,1]\\ k\in[0:K-1]}}\| h^{(D)}_k (s) - H^{(D)}_k (s) \|^2_{\ndbar} \vee \| b^{(D)}_k (s) - B^{(D)}_k (s) \|^2_{\ndbar}\vee \| \hat{y}^{D}_k - y_k \|^2_{2,\dimout} \bigg] \leq \frac{\cstH_K}{D}.\]
\end{corollary}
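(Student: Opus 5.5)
The corollary follows from Theorem~\ref{thm:quantitative_convergence_main} by integrating the tail bound. Since $(\Win,\Wout)$ are a.s.\ bounded, the last sentence of that theorem applies, and the high-probability bound holds for every $\delta\in(0,1)$. Write
\[
X_D := \sup_{s\in[0,1],\,k\in[0:K-1]}\| h^{(D)}_k (s) - H^{(D)}_k (s) \|_{\ndbar}\vee \| b^{(D)}_k (s) - B^{(D)}_k (s) \|_{\ndbar}\vee \max_k\| \hat{y}^{D}_k - y_k \|_{2,\dimout}.
\]
A union of the two events in the theorem, each taken at level $\delta/2$, gives for every $\delta\in(0,1)$
\[
\P\Big(X_D > c\,\tfrac{1+\log(2/\delta)}{\sqrt{D}}\Big)\le \delta ,
\]
where $c=\cstH_K$. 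Absorbing $\log 2$ into the constant, this reads $\P(X_D> c'(1+\log(1/\delta))/\sqrt D)\le\delta$.

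Next I change variables. For $t\ge c'/\sqrt D$, set $\delta = \exp(1-t\sqrt D/c')\in(0,1]$. This gives the subexponential tail
\[
\P(X_D>t)\le e\cdot\exp\big(-t\sqrt D/c'\big).
\]
For $t<c'/\sqrt D$, I use the trivial bound $1$.

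Then I integrate:
\[
\E[X_D^2]=\int_0^\infty 2t\,\P(X_D>t)\,\d t \le \Big(\tfrac{c'}{\sqrt D}\Big)^2 + 2e\int_0^\infty t\,e^{-t\sqrt D/c'}\,\d t = \tfrac{(c')^2}{D}\,(1+2e).
\]
This is of the form $\cstH_K/D$ after renaming the constant. Since the supremum of the maximum of squares equals the square of $X_D$, the stated expectation is at most $\E[X_D^2]$, which concludes the proof.

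The only point needing care is that the constant in Theorem~\ref{thm:quantitative_convergence_main} is deterministic (it depends only on $K,\boldsymbol{\Gamma}^H,\boldsymbol{\Gamma}^B$), so no randomness from $\Finit$ enters it. Also, $\delta$ must be allowed to range over all of $(0,1)$, not just $(e^{-D},1)$, so that the tail integral covers large $t$. Both are guaranteed by the boundedness assumption together with the final sentence of the theorem. Had only $\delta>e^{-D}$ been available, one would instead need an a priori bound on $X_D$ (e.g.\ polynomial moments via Grönwall) to control the remaining event of probability $e^{-D}$. In the bounded case this is unnecessary.
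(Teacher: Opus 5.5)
Your proposal is correct and is exactly the paper's argument: the paper obtains the corollary simply by integrating the tails of the bound in Theorem~\ref{thm:quantitative_convergence_main}. In the bounded case that bound holds for every $\delta\in(0,1)$ with a deterministic constant, and your change of variables and second-moment integral carry out this integration correctly.
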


\subsection{Explicit limit in the linear case and numerical illustrations}\label{ssec:numerical_experiments}
In the case of a \textit{linear} network where $\rho(x) = ax$ for some fixed $a\in \R$, the dynamics greatly simplifies. 
The recurrence from Definition~\ref{def:linear_skeleton_meanfield} simply reads \(\vec{\mathbf{v}}_1^F(s, \bfz^h, \bfz^b) = a\,z_0^b\), \(\vec{\mathbf{v}}_1^G(s, \bfz^h, \bfz^b) = a\,z_0^h\) and for $k\in[0:K-1]$,
\[\vec{\mathbf{v}}_{k+1}^F(s, \bfz^h, \bfz^b) = \begin{pmatrix}
    \vec{\mathbf{v}}_{k}^F(s, \bfz^h, \bfz^b)\\
    a\, q_k(s, \bfz^h, \bfz^b)
\end{pmatrix},\; \;\;\vec{\mathbf{v}}_{k+1}^G(s, \bfz^h, \bfz^b) = \begin{pmatrix}
    \vec{\mathbf{v}}_{k}^G(s, \bfz^h, \bfz^b)\\
    a\,p_k(s, \bfz^h, \bfz^b)
\end{pmatrix}\]
with $p_k$, $q_k$ unchanged. It follows that \(\vec{\mathbf{v}}_{k}^F(s, \mathbf{0}, \mathbf{0}) = 0\) and \(\vec{\mathbf{v}}_{k}^G(s, \mathbf{0}, \mathbf{0}) = 0\); and that for $\mathfrak{i} \in\{h,b\}$,
    $\bnabla_{\bfz^{\mathfrak{i}}} \vec{\mathbf{v}}_{k}^F(s, \bfz^h, \bfz^b)$ and $\bnabla_{\bfz^{\mathfrak{i}}} \vec{\mathbf{v}}_{k}^G(s, \bfz^h, \bfz^b)$ do not depend on $\bfz^h, \bfz^b$. In particular,
    \(\vec{\mathbf{v}}_{k}^F(s, \bfz^h, \bfz^b)\) and \(\vec{\mathbf{v}}_{k}^G(s, \bfz^h, \bfz^b)\) are linear functions of $\bfz^h, \bfz^b$, given by \(\vec{\mathbf{v}}_{k}^F(s, \bfz^h, \bfz^b) = \bnabla_{\bfz^{h}} \vec{\mathbf{v}}_{k}^F(s)^\top \bfz^h + \bnabla_{\bfz^{b}} \vec{\mathbf{v}}_{k}^F(s)^\top\bfz^b\),
    and similarly for \(\vec{\mathbf{v}}_{k}^G(s, \bfz^h, \bfz^b)\).
Using the linearity of $\vec{\mathbf{v}}_{k}^F$, $\vec{\mathbf{v}}_{k}^G$, \eqref{eq:cleaner_system_Fk_definition} and the explicit expression of the covariance of \((\bfZ^h, \bfZ^b)\), 
we can write \eqref{eq:LimH}-\eqref{eq:LimB} explicitly as
\begin{equation}\label{eq:linear_system_explicit}
\begin{cases}
H_k(0) &= W_{\mathrm{in}}\cdot x, \\
\partial_s H_k (s) &= \vec{\boldsymbol{\Gamma}}_k^H(s)^{\top}\mathbf{M}^{\mathbf{H},\mathbf{B}}_k(s)\mathbf{B}_{\wedge k-1}(s),\\
y_k &= \E [ H_k(1) W_{\mathrm{out}}], 
\end{cases} \quad  \begin{cases}
B_k(1) &= W_{\mathrm{out}}\cdot \nabla\ell(y_k), \\
\partial_s B_k (s) 
&= -\vec{\boldsymbol{\Gamma}}_k^B(s)^{\top}\mathbf{M}^{\mathbf{H},\mathbf{B}}_k(s)^\top\mathbf{H}_{\wedge k-1}(s),
\end{cases}
\end{equation}
\vspace{-8mm}
\begin{multline}\label{eq:linear_case_matrixM_definition}
    \text{where} \quad\mathbf{M}^{\mathbf{H},\mathbf{B}}_k(s) = -a\big[\eta_u\varv^2 \bnabla_{\bfz^{b}} \vec{\mathbf{v}}_{k}^F(s)^{\top}\big( \mathrm{Id}_k -a\eta_v \boldsymbol{\Gamma}_{\wedge k-1}^B (s)\bnabla_{\bfz^{b}} \vec{\mathbf{v}}_{k}^G(s) \big) \\+ \eta_v\varu^2 \big(\mathrm{Id}_k -a\eta_u\bnabla_{\bfz^{h}} \vec{\mathbf{v}}_{k}^F(s)^{\top}\boldsymbol{\Gamma}_{\wedge k-1}^H (s)\big) \bnabla_{\bfz^{h}} \vec{\mathbf{v}}_{k}^G(s)\big].
\end{multline}
All terms in \eqref{eq:linear_system_explicit} are expressed explicitly in terms of $\boldsymbol{\Gamma}_{\wedge k}^H$, and $\boldsymbol{\Gamma}_{\wedge k}^B$, without involving the Gaussian field $(\bfZ^h, \bfZ^b)$. See Appendix~\ref{sec:linear_case_explicit_system} for additional precisions.

\paragraph{Numerical Illustrations.}
From Remark~\ref{rem:linStruc}, we know that for centered Gaussian embeddings $(W_{\mathrm{in}}, W_{\mathrm{out}})$, \((\bfH, \bfB)\) is a centered Gaussian process, and is therefore characterized by its covariance structure. 

In the linear case, starting from \eqref{eq:linear_system_explicit}, we can explicitly describe the ODEs which govern $(\boldsymbol{\Gamma}^H(s), \boldsymbol{\Gamma}^B(s))$, as is done in Appendix~\ref{sec:linear_case_explicit_system} for the case of $(W_{\mathrm{in}}, W_{\mathrm{out}})$ centered, independent and with i.i.d.~coordinates. Using numerical ODE solvers (e.g.~4th-order Runge-Kutta method), we can also compute them with very high accuracy. 
Figure~\ref{fig:linear_forwardbackward_histograms} shows that the resulting covariance structure accurately describes the behaviour of a large finite ResNet over training iterations. 
Across layers $\ell\in[0:L]$, the coordinates of the preactivations $((\mathtt{h}_k^{(\ell)})^{d})_{d=1}^D$ 
approximately behave as independent samples of the Gaussian process $\bfH$ described in \eqref{eq:linear_system_explicit}.

Since we have access to the true limit system in the linear case, we can test the tightness of the convergence rates from Theorem~\ref{thm:main_theorem_intro} without resorting to a proxy for the limit as done in Section~\ref{sec:experiments-tanh}.
We compute $\Delta_k^h$ for finite ResNets of various shapes and sizes, and obtain the results in Figure~\ref{fig:tight_cvg_rate_linear}, from where we see that the rates are indeed tight in embedding space.

In all of our experiments for the linear setting, we have $\dimin = \dimout=3$ and consider a single randomly sampled datapoint $(x, y^*)\in\R^{\dimin}\times \R^{\dimout}$, with $\ell(\cdot) := \frac{1}{2}\|\cdot-y^*\|_{2,\dimout}^2$.

\begin{figure}[h]
        \centering
        \begin{subfigure}{0.67\linewidth}
        \centering
        \includegraphics[width=\linewidth]{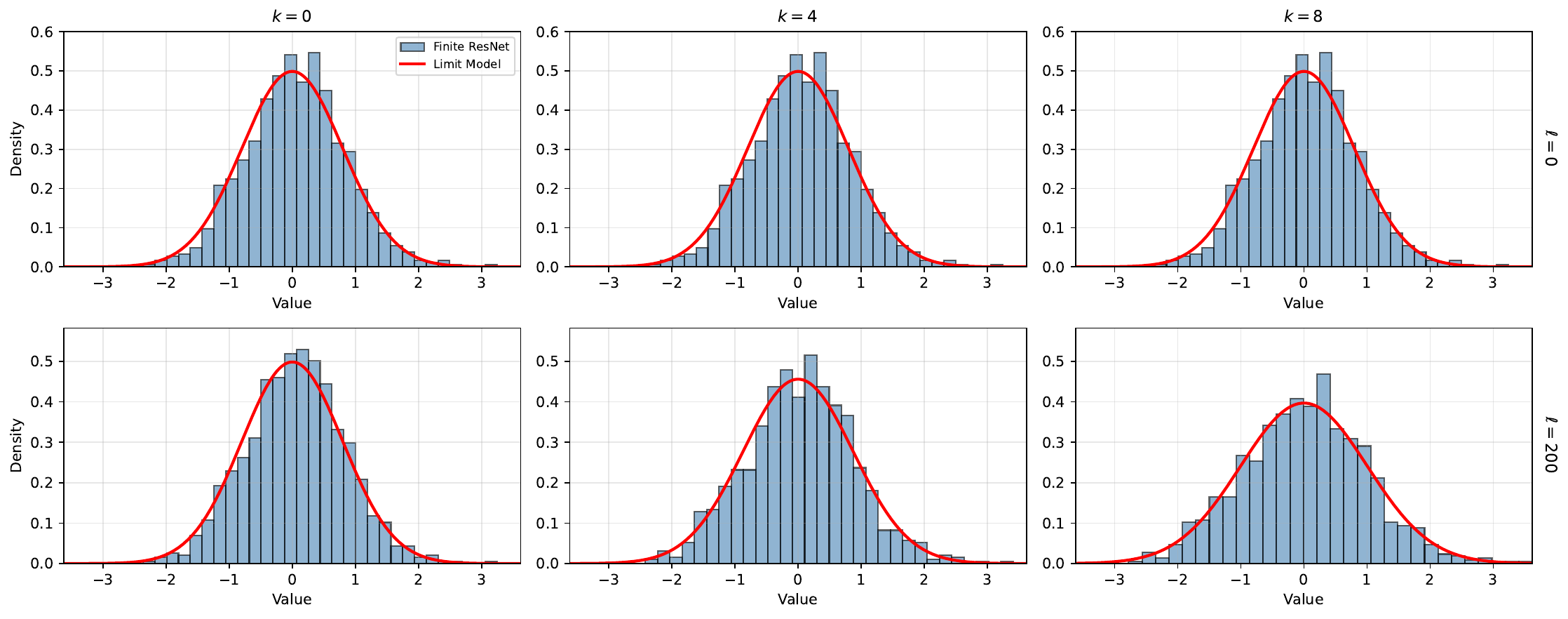}
        \caption{Histogram of coordinate values for $((\th^{(\ell)}_k)^d)_{d=1}^D$}
        \end{subfigure}%
        \begin{subfigure}{0.33\linewidth}
        \centering
        \includegraphics[width=\textwidth]{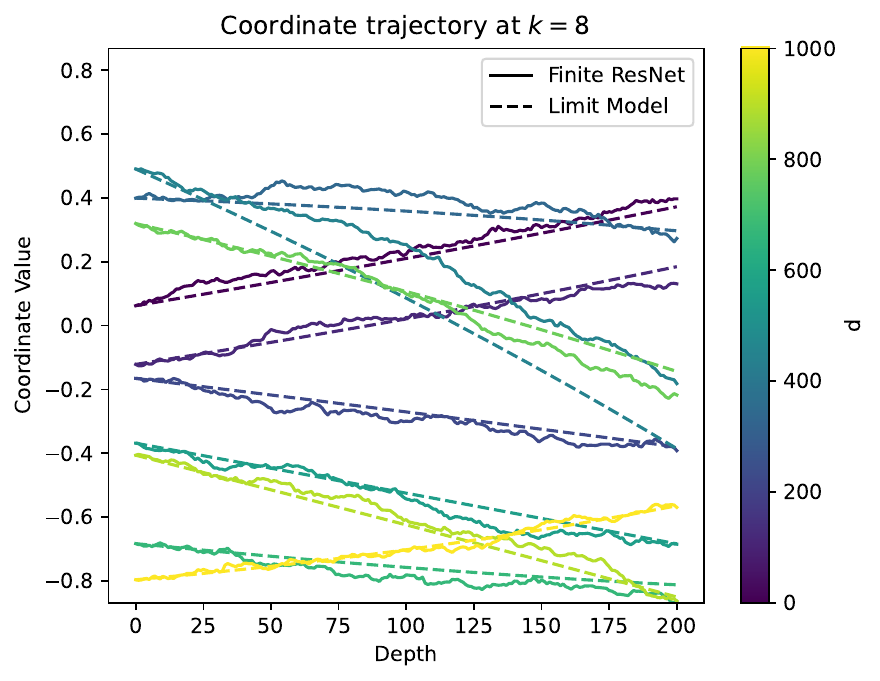}
        \caption{Some trajectories of $(\th^{(\cdot)}_k)^d$}
        \end{subfigure}

        \caption{Histograms for the coordinates of the forward pass 
        in a finite linear Resnet ($L=200$, $M=D=1000$), compared to the limit model, at training iterations $k \in\{0,4,8\}$.
        The red lines represent the pdf of the Gaussian random variable $H_k(\ell/L)$. The right plot shows values trajectories across depth of a few embedding coordinates $d$, comparing them to the coupled samples of the limit stochastic process $\ell\mapsto H_k^d(\ell/L)$.}
        \label{fig:linear_forwardbackward_histograms}
\end{figure}
    
\begin{figure}[h]
\centering
\begin{subfigure}{0.71\linewidth}
\centering
\includegraphics[width=\textwidth]{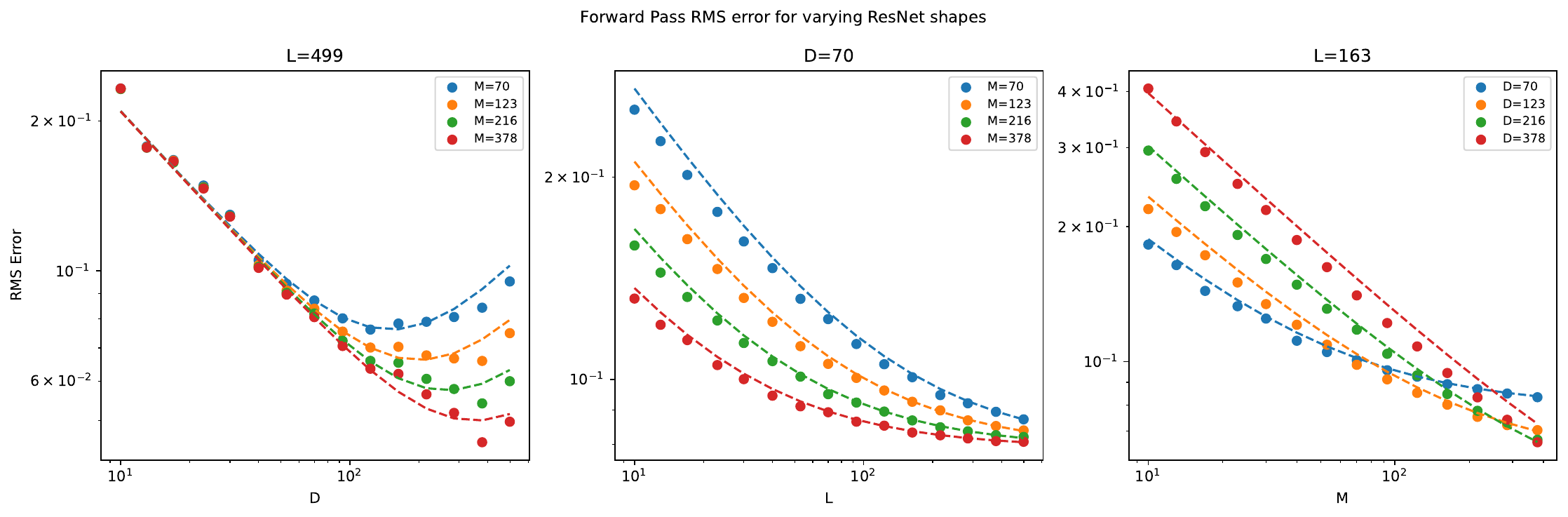}
\caption{Error on $\th^{L}_k$ vs $D$}
\end{subfigure}%
\begin{subfigure}{0.29\linewidth}
\centering
\includegraphics[width=\textwidth]{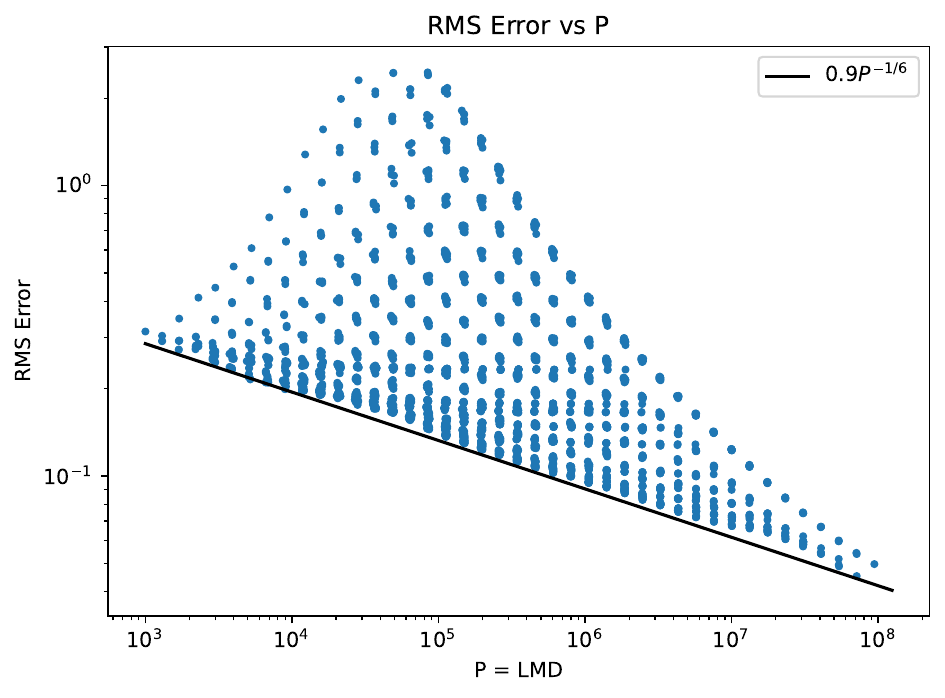}
\caption{Error on $\th^{L}_k$ vs $P$}
\end{subfigure}
\caption{Forward Pass RMS error after $k=5$ GD steps, $\Delta_5^h$, in the linear case $\rho(s)=s$, between finite ResNets of variable size, and the true limit model. Dotted lines display the function $\|[0.67\sqrt{\frac{D}{ML}}, 0.44\frac{1}{\sqrt{D}}]\|_2$ obtained from fitting our error rate to the data. The error term $O(\frac{1}{L})$ is negligible in our regime. The y-axis shows RMS error (averaged over $60$ random repetitions with a single datapoint). In the last plot, we overlay the predicted scaling of $\Theta(P^{-1/6})$}  
\label{fig:tight_cvg_rate_linear}
\end{figure}

\subsection{Heuristic derivation of the limit model via the cavity method}\label{sec:proof-strategy-cavity}

A key technical challenge is controlling the convergence of terms like \eqref{eq:TermDiff}, which exhibit a CLT-type scaling\footnote{There are very few instances in the literature of quantitative mean-field limits for particle systems that involve interactions with CLT-type scaling; we refer to \cite{erny2023strong} for a quite specific example of such a result.}.
Specifically, closing the $D \rightarrow +\infty$ dynamics of the $d$-th coordinate $h^{d}_k$ from \eqref{eq:original_finite_D_system} requires computing the limit of $\E [ \rho ( \langle U_k^{(D)}, h^{(D)}_k\rangle_{\ndbar} ) \sqrt{D} V^{d} |\Finit ]$ -- a similar difficulty arises for $b^{d}_k$.
Because of the $\sqrt{D}$ factor, obtaining the asymptotic expression of this quantity, or even just showing that it does not blow up, is not immediate. Indeed, while $V^d$ is centered, correlations with $\langle U_k^{(D)}, h^{(D)}_k\rangle_{\ndbar}$ are inherited from the past and require an explicit computation.

As $D \rightarrow + \infty$, the mean-field structure suggests that $V^d$ becomes independent of $h^{(D)}_k$.
However, we still have to keep track of the dependence of $\Delta U_k^{(D)}$ on $V^d$. The skeleton structure $\Delta U_k^{(D)} = f^{\mathbf{h}^D,\mathbf{b}^D}_k ( \bfS^{\bfh^D} , {\bfS}^{\bfb^D} )$ developed in Section~\ref{sec:definition_skeletons} shows that $\Delta U_k^{(D)}$ only depends on $V^d$ through the rescaled sums $( \bfS^{\bfh^D} , {\bfS}^{\bfb^D} )$ from \eqref{eq:definition_clt_type_sums}.
In particular, the order of magnitude for this dependence is $1/\sqrt{D}$, allowing for a first-order expansion with respect to $D$ that will precisely compensate the $\sqrt{D}$ factor in $\E [ \rho ( \langle U_k^{(D)}, h^{(D)}_k\rangle_{\ndbar} ) \sqrt{D} V^{d} |\Finit ]$.
Since the work \cite{Mezard1987spinglass} from statistical physics, this idea of capturing correlations by expanding with respect to a single fixed coordinate is well-known as the \emph{cavity method}. 
In our case, this expansion formally reads
\begin{align*}
    \E [ \rho ( \langle &U_k^D, h^D_k\rangle_{\ndbar} ) \sqrt{D} V^{d} {|\Finit} ] = \E [ \rho ( S^{\bfh^D}_k + \langle f^{\mathbf{h}^D,\mathbf{b}^D}_k ( \bfS^{\bfh^D} , \bfS^{\bfb^D} ), h^{D}_k \rangle_{\ndbar} ) \sqrt{D} V^{d} {|\Finit}]\\ &\simeq
\E [ \rho ( S^{\bfh^D}_k + \langle f^{\mathbf{h}^D,\mathbf{b}^D}_k ( \bfS^{\bfh^D} , \overline{\bfS}^{\bfb^D} ), h^D_k\rangle_{\ndbar} ) \sqrt{D} V^{d} {|\Finit}] \\
&\quad + \E [ \rho' ( S^{\bfh^D}_k + \langle f^{\mathbf{h}^D,\mathbf{b}^D}_k ( \bfS^{\bfh^D} , \overline{\bfS}^{\bfb^D} ), h^D_k\rangle_{\ndbar} ) \langle \bnabla_{\bfz^b} f^{\mathbf{h}^D,\mathbf{b}^D}_k ( \bfS^{\bfh^D} , \overline{\bfS}^{\bfb^D} ) [ V^d \mathbf{b}^{d} ], h^D_k\rangle_{\ndbar} V^d {|\Finit} ] + 
O (D^{-1/2}),
\end{align*}
where we introduced $\overline{\bfS}^{\bfb^D} := \bfS^{\mathbf{b}^D} - V^d \mathbf{b}^{d} / \sqrt{D}$.
Since $( \bfS^{\bfh^D} , \overline{\bfS}^{\bfb^D} )$ is independent of $V^{d}$ conditionally on $\Finit$, the first term vanishes, whereas the second reads 
\[\varv^2 \, \E \bigg[ \rho' ( S^{\bfh^D}_k + \langle f^{\mathbf{h}^D,\mathbf{b}^D}_k ( \bfS^{\bfh^D} , \overline{\bfS}^{\bfb^D} ), h^D_k\rangle_{\ndbar} ) \frac{1}{D}\sum_{d'=1}^D \bnabla_{\bfz^b} (f^{d'})^{\mathbf{h}^D,\mathbf{b}^D}_k ( \bfS^{\bfh^D} , \overline{\bfS}^{\bfb^D} ) h^{d'}_k \bigg\vert \Finit \bigg]\bfb^d. \]
As $D \rightarrow + \infty$, we expect $\bfb^d \rightarrow \bfB$, $\bfh^d \rightarrow \bfH$, $(\mathbf{f}^d)^{\bfh^D,\bfb^D} \rightarrow \mathbf{F}^{\bfH,\bfB}$, and the above term to converge towards
\[ \varv^2 \, \E [ \rho'( P_k ) \E[ H_k \bnabla_{\bfz^b} F_k^{\mathbf{H},\mathbf{B}} ( \mathbf{Z}^{\mathbf{H}} , \mathbf{Z}^{\mathbf{B}} ) \vert \bfZ^H, \bfZ^B ] {|\Finit} \,] \mathbf{B}. \]
The Gaussian fields $(\bfZ^H, \bfZ^B)$ arise as the limit in law of $(\bfS^{\bfh^D} , \overline{\bfS}^{\bfb^D})$, as expected from the CLT. 
A quantitative version of this cavity-type expansion for the forward pass is established in Lemma~\ref{lem:proof_cvg_cavity_TCL_H} below, using a quantitative CLT result proved in Appendix~\ref{sec:quantitative_clt}.
The analogous procedure for the backward pass is done in Lemma~\ref{lem:proof_cvg_cavity_TCL_B}.

\paragraph{Organisation of the proofs.}
Section~\ref{sec:Skel} is devoted to the construction of the skeleton maps introduced in Section~\ref{sec:definition_skeletons}, as well as establishing a priori bounds on them.
Section~\ref{sec:construction_of_limit_law} rigorously builds the limit dynamics \eqref{eq:LimH}-\eqref{eq:LimB}, proving Theorem~\ref{thm:well_posedness_main}.
Section~\ref{sec:CV} is finally devoted to the proof of our main result, Theorem~\ref{thm:quantitative_convergence_main}, establishing the quantitative convergence of \eqref{eq:original_finite_D_system} towards \eqref{eq:LimH}-\eqref{eq:LimB}.
In particular, our quantitative cavity-type method is developed therein.

\section{Estimates on skeleton maps} \label{sec:Skel}

Throughout this section, we provide a priori estimates on the skeleton maps, which shall be very useful in our proofs of Theorems~\ref{thm:well_posedness_main} and \ref{thm:quantitative_convergence_main}.

\subsection{Finite-dimensional setting}\label{subsec:finiteDSkeleton}
We have the following key result.

\begin{lemma}[Finite-dimensional skeleton bounds]\label{lem:Reg_skel_finiteD}
    Let $\mathbf{h}^D, \mathbf{b}^D$ and $\mathbf{f}^{\mathbf{h}^D,\mathbf{b}^D},\mathbf{g}^{\mathbf{h}^D,\mathbf{b}^D}$ be as in Definition~\ref{def:finite_skeleton_maps}. Let $m^D_k := \max_{s\in[0,1]} \|h_k^D(s)\|_{\ndbar} \vee \|b_k^D(s) \|_{\ndbar}$.
    There exist explicit constants $C^{(i)}_k, C^{(ii)}_k, \hat{C}_k > 0$
    that only depend on  $k(\eta_u\vee\eta_v)$ and  $\lVert m^D_{\wedge k-1} \rVert_\infty$, such that for every $\bfz^h,\bfz^b \in \R^K$ with 
    $\max_{j\in[0: k-1]} \vert z^h_j \vert \vee \vert z^b_j \vert \leq M$, 
    for some $M \geq 1$,
\begin{enumerate}[label=(\roman*),ref=(\roman*)]
    \item\label{lemRegSkelfiniteD:L2} $\max_{s\in[0,1]}\|f^{\mathbf{h}^D,\mathbf{b}^D}_k(s,\bfz^h, \bfz^b)\|_{\ndbar} \vee \|g^{\mathbf{h}^D,\mathbf{b}^D}_k(s,\bfz^h, \bfz^b)\|_{\ndbar} \leq C_k^{(i)} M$.
    \item\label{lemRegSkelfiniteD:as} For every $d\in [1:D]$,\[\max_{\substack{j\in[0:k]\\s\in[0,1]}} |(f_j^d)^{\mathbf{h}^D,\mathbf{b}^D}(s,\bfz^h, \bfz^b)| \leq C_k^{(ii)}M\max_{\substack{j\in[0:k-1]\\s\in[0,1]}}|h_k^{d,D}(s)|,\] \[\max_{\substack{j\in[0:k]\\s\in[0,1]}} |(g_j^d)^{\mathbf{h}^D,\mathbf{b}^D}(s,\bfz^h, \bfz^b)| \leq C_k^{(ii)}M\max_{\substack{j\in[0:k-1]\\s\in[0,1]}}|b_k^{d,D}(s)|. \]
    \item\label{lemRegSkelfiniteD:grad} It holds that
    \[\max_{\substack{j\in[0:k]\\s\in[0,1]}} \| \bnabla_{s^h} (f_j^d)^{\mathbf{h}^D,\mathbf{b}^D} ( s , \bfz^h, \bfz^b ) \|_{2,j} \vee \| \bnabla_{s^b} (f_j^d)^{\mathbf{h}^D,\mathbf{b}^D} ( s , \bfz^h, \bfz^b ) \|_{2,j}  \leq e^{\hat{C}_k M}\max_{\substack{j\in[0:k-1]\\s\in[0,1]}}\vert h^{d,D}_j ( s )  \vert, \]
   \[\max_{\substack{j\in[0:k]\\s\in[0,1]}} \| \bnabla_{s^h} (g_j^d)^{\mathbf{h}^D,\mathbf{b}^D} ( s , \bfz^h, \bfz^b ) \|_{2,j} \vee \| \bnabla_{s^b} (g_j^d)^{\mathbf{h}^D,\mathbf{b}^D} ( s , \bfz^h, \bfz^b ) \|_{2,j} \leq e^{\hat{C}_kM} \max_{\substack{j\in[0:k-1]\\s\in[0,1]}}\vert b^{d,D}_j ( s )  \vert. \]
\end{enumerate}
\end{lemma}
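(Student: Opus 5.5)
The plan is an induction on $k$, using only the recursion \eqref{eq:def_skeleton_finiteD} and Assumption~\ref{assumption:rho_and_ell}(i). Write $f_k,g_k$ for $f^{\mathbf{h}^D,\mathbf{b}^D}_k,g^{\mathbf{h}^D,\mathbf{b}^D}_k$, fix $s$, and use that $\rho$ and $\rho'$ satisfy $|\rho(x)|\le C_\rho(1+|x|)$ and $|\rho'|\le C_\rho$.

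For \ref{lemRegSkelfiniteD:L2}, set $a_k:=\|f_k\|_{\ndbar}\vee\|g_k\|_{\ndbar}$. By Cauchy--Schwarz, $|\langle h_k^{(D)},f_k\rangle_{\ndbar}|\le m^D_k a_k$, and similarly for the $b$ term. Taking the RMS norm in the recursion gives
\begin{equation}
a_{k+1}\le a_k+(\eta_u\vee\eta_v)\,C_\rho\,(1+M+m_k^Da_k)(M+m_k^Da_k)\,m^D_k .
\end{equation}
This is quadratic in $a_k$, so it does not close linearly. Since $a_0=0$, however, a finite induction over $j\le k$ still yields $a_j\le C_j^{(i)}M$, where $C^{(i)}_j$ depends on $\|m^D_{\wedge j-1}\|_\infty$ and $k(\eta_u\vee\eta_v)$. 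The point is that at each step the product $(M+m a_j)^2$ is at most $C M^2$. Getting a bound linear in $M$ rather than $M^2$ requires more care. The $f$-update carries $\rho'$, which is bounded, so the increment is at most $\eta_u C_\rho(M+m a_k)m$, which is linear. The $g$-update carries $\rho(\cdot)$, whose linear growth gives $\lesssim \eta_v(1+M+m a_k)m$; this is again linear and uses $M\ge1$ to absorb the constant. So the bound is in fact linear, and the induction closes with $C^{(i)}_{k+1}=C^{(i)}_k+c(1+m)(1+mC^{(i)}_k)$.

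Part \ref{lemRegSkelfiniteD:as} uses the same recursion coordinatewise. Every increment of $f_{j+1}^d$ equals a scalar times $h_j^{d}$, and that scalar was bounded above by $CM$ using \ref{lemRegSkelfiniteD:L2}. Summing over $j\le k$ gives the claim; the $g$ case is analogous with $b_j^d$.

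For \ref{lemRegSkelfiniteD:grad}, differentiate the recursion in $\bfz^h$ and $\bfz^b$. Let $e_k:=\|\bnabla f_k\|_{\ndbar}\vee\|\bnabla g_k\|_{\ndbar}$, where the norms are aggregated over the gradient directions. Then $\bnabla f_{k+1}=\bnabla f_k-\eta_u\big[\rho''(\cdot)(\mathbf e_k+\langle h_k,\bnabla f_k\rangle)(\dots)+\rho'(\cdot)(\mathbf e_k+\langle b_k,\bnabla g_k\rangle)\big]h_k$. The bracket $[z^b_k+\langle b_k,g_k\rangle]$ is $O(M)$ by \ref{lemRegSkelfiniteD:L2}, and the $\rho'$ term in the $g$-update is bounded. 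This gives $e_{k+1}\le (1+CM)e_k+C$, and iterating yields $e_k\le (1+CM)^k C\le e^{\hat C_k M}$. The coordinatewise bound then follows exactly as in \ref{lemRegSkelfiniteD:as}, since each increment of $\bnabla f^d_{j+1}$ is a vector times $h^d_j$ whose coefficient is bounded by $(1+CM)e_j+C$.

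The main obstacle is organising the induction so that the constants depend only on $\|m^D_{\wedge k-1}\|_\infty$ and $k(\eta_u\vee\eta_v)$, and so that the exponential in $M$ appears only in \ref{lemRegSkelfiniteD:grad}, where the multiplicative factor $(1+CM)$ is unavoidable. The other delicate point is \ref{lemRegSkelfiniteD:L2}: one must use the linear growth of $\rho$ and the boundedness of $\rho'$ separately, so that the two recursions stay linear in $M$.
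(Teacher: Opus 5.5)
Your proposal is correct and follows the paper's proof: linear recursions for the RMS norms, using $|\rho'|\le C_\rho$ in the $f$-update and the linear growth of $\rho$ in the $g$-update, closed by a discrete Gr\"onwall step; then coordinatewise summation for (ii), and the differentiated recursion $e_{k+1}\le(1+CM)e_k+CM$ followed by the same coordinatewise argument for (iii). Drop the opening quadratic display in (i): iterating it only yields bounds growing like $M^{2^j}$, not $C_j^{(i)}M$, and the linear recursion you give afterwards is all that is needed.
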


\begin{proof}[Proof]
To ease notations, we write $f^{\mathbf{h},\mathbf{b}}_{k} =: f_k = (f_k^d)_{d=1}^D$ and similarly for $g$, $h$ and $b$. 
   For~\ref{lemRegSkelfiniteD:L2}, we take the $\|\cdot\|_{\ndbar}$ norm directly in the definition of the finite-width skeleton maps \eqref{def:finite_skeleton_maps}, underlying evaluation at $(s, \bfz^h, \bfz^b)$:
    \[\begin{cases}
\|f_{k+1}\|_{\ndbar} \leq \|f_{k}\|_{\ndbar} + \eta_u |\rho' ( z^h_k + \langle h_k , f_{k}\rangle_{\ndbar} )|  |z^b_k + \langle b_k, g_k\rangle_{\ndbar} | \|h_k\|_{\ndbar}, \\
\|g_{k+1}\|_{\ndbar} \leq \|g_{k}\|_{\ndbar} +{\eta}_v |\rho ( z^h_k + \langle h_k, f_{k}\rangle_{\ndbar} ) \|b_k\|_{\ndbar}.
\end{cases}\]

Now, using the linear growth of $\rho$, the boundedness of its derivative, the Cauchy-Schwarz inequality (C-S), and the boundedness of $\bfz^h$, $\bfz^b$
, we get
\[\begin{cases}
\|f_{k+1}\|_{\ndbar} \leq \|f_{k}\|_{\ndbar} + \eta_u C_\rho (M + \|b_k\|_{\ndbar}\|g_k\|_{\ndbar})\|h_k\|_{\ndbar}, \\
\|g_{k+1}\|_{\ndbar} \leq \|g_{k}\|_{\ndbar} +{\eta}_v C_\rho (1 + M + \|h_k\|_{\ndbar}\|f_k\|_{\ndbar}) \|b_k\|_{\ndbar}.
\end{cases}\]

Setting 
$u_k := \max_{s\in[0,1]} \|f_k(s)\|_{\ndbar} \vee \|g_k(s) \|_{\ndbar}$, taking maximum over $s \in [0,1]$ and $j \in [0: k]$, we can join both inequalities to obtain:
\begin{align*}
u_{k+1} 
&\leq (1+ (\eta_u\vee \eta_v) C_\rho \lVert m_{\wedge k} \rVert_\infty^2) u_k + (\eta_u\vee \eta_v) C_\rho 2M \lVert m_{\wedge k} \rVert_\infty\\
&\leq (1+ (\eta_u\vee \eta_v) C_\rho (1\vee \lVert m_{\wedge k} \rVert_\infty^2)) u_k + (\eta_u\vee \eta_v) C_\rho 2M(1\vee \lVert m_{\wedge k} \rVert_\infty),
\end{align*}
so that applying the discrete Grönwall lemma\footnote{A slight generalization: if \( u_{k+1} \le (1 + \alpha_k)u_k + \beta_k \) for \( k \ge 0 \) and \( (\alpha_k)_k, (\beta_k)_k > 0 \) increasing sequences, then $\forall k\in\mathbb{N},$  
\(
u_k \le (1+\alpha_{k-1})^k\left(u_0 + \frac{\beta_{k-1}}{\alpha_{k-1}}\right) - \frac{\beta_{k-1}}{\alpha_{k-1}}\le e^{k\alpha_{k-1}}\left(u_0 + \frac{\beta_{k-1}}{\alpha_{k-1}}\right) - \frac{\beta_{k-1}}{\alpha_{k-1}}.
\)} -- noting that $u_0=0$ -- we obtain
\[\max_{s\in[0,1]} \|f_k(s)\|_{\ndbar} \vee \|g_k(s) \|_{\ndbar} =u_{k} \leq C_k^{(i)} M, \]
where
\[ C_k^{(i)} := \frac{2}{1\vee \lVert m_{\wedge k-1} \rVert_\infty} [ \exp [ C_\rho (1\vee \lVert m_{\wedge k-1} \rVert_\infty^2) k(\eta_u\vee \eta_v)]-1 ]. \]
For any $d\in [1:D]$, consider the skeleton map from Definition~\ref{def:finite_skeleton_maps} and apply the properties of $\rho$ similarly to obtain
\[
\begin{cases}
|f^{d}_{k+1}| \leq |f^{d}_{k}| + \eta_u C_\rho  (M + \|b_k\|_{\ndbar} \|g_k\|_{\ndbar}) |h_k^d|, \\
|g^{d}_{k+1}| \leq |g^{d}_{k}| + {\eta}_v C_\rho (1 + M + \|h_k\|_{\ndbar} \|f_k\|_{\ndbar}) |b_k^d|.
\end{cases}
\]
We can plug in our previous estimate to get
\[
\max_{s\in[0,1]}|f^{d}_{k+1}(s)| \leq \max_{s\in[0,1]}|f^{d}_{k}(s)| + \eta_u C_\rho  M(1 + \lVert m_{\wedge k} \rVert_\infty C_k^{(i)}) \max_{\substack{j\in[0:k-1]\\s\in[0,1]}}|h_j^d(s)|.
\] And thus, since $f_0 \equiv 0$ and $C_k^{(i)}$ is increasing in $k$, by developing the recursion we obtain
\[\max_{s\in[0,1]}|f^{d}_{k+1}(s)| \leq (k+1)\eta_u C_\rho  (1 + \lVert m_{\wedge k} \rVert_\infty C_k^{(i)})  M\max_{\substack{j\in[0:k-1]\\s\in[0,1]}}|h_j^d(s)| \leq C_k^{(ii)} M,\]
where
\[ C_k^{(ii)} :=   C_\rho k(\eta_u\vee\eta_v)   [ 2 + \lVert m_{\wedge k-1} \rVert_\infty C_{k-1}^{(i)} ]. \]
The same follows for $g$ via analogous arguments. 

To prove~\ref{lemRegSkelfiniteD:grad}, we reason as before, noticing the following recurrence that defines the Jacobian matrices \(\bnabla_{\bfz^\mathfrak{j}} f_{k}(s, \bfz^h, \bfz^b), \bnabla_{\bfz^\mathfrak{j}} g_{k}(s, \bfz^h, \bfz^b)\in \R^{k\times D} \). Let $\hat{p}_k := z_k^h + \langle h_k, f_k\rangle_{\ndbar}$ and $\hat{q}_k := z_k^b + \langle b_k, g_k\rangle_{\ndbar}$. 
For $\mathfrak{j}\in\{h,b\}$, we have
\begin{align*}
    \bnabla_{\bfz^\mathfrak{j}} f_{k+1}& = \begin{pmatrix}
        \bnabla_{\bfz^\mathfrak{j}} f_{k}\\ \vec{0}^\top
    \end{pmatrix} - \eta_u\left( \rho''(\hat{p}_k)\hat{q}_k\begin{pmatrix}
        \frac{1}{D}\bnabla_{\bfz^\mathfrak{j}} f_{k}\cdot h_k\\ \mathbbm{1}_{\mathfrak{j}=h}
    \end{pmatrix} + \rho'(\hat{p}_k)\begin{pmatrix}
        \frac{1}{D}\bnabla_{\bfz^\mathfrak{j}} g_{k}\cdot b_k\\ \mathbbm{1}_{\mathfrak{j}=b}
    \end{pmatrix}\right)h_k^\top,\\
    \bnabla_{\bfz^\mathfrak{j}} g_{k+1}& = \begin{pmatrix}
        \bnabla_{\bfz^\mathfrak{j}} g_{k}\\ \vec{0}^\top
    \end{pmatrix} - \eta_v \rho'(\hat{p}_k)\begin{pmatrix}
        \frac{1}{D}\bnabla_{\bfz^\mathfrak{j}} f_{k}\cdot h_k\\ \mathbbm{1}_{\mathfrak{j}=h}
    \end{pmatrix}b_k^\top.
\end{align*}
From this expression, we find the following estimate for $v_k^\mathfrak{j} := \max_{s\in[0,1]}\|\bnabla_{\bfz^\mathfrak{j}} f_{k}(s)\|_{\ndbar} \vee \|\bnabla_{\bfz^\mathfrak{j}} g_{k}(s)\|_{\ndbar}$,
\begin{align*}
    v_{k+1}^\mathfrak{j} &\leq v_k^\mathfrak{j} + (\eta_u\vee \eta_v)C_\rho \lVert m_{\wedge k} \rVert_\infty 2 M \lVert m_{\wedge k} \rVert_\infty C_k^{(i)}(1 + \lVert m_{\wedge k} \rVert_\infty v_k^\mathfrak{j}) =: (1+\alpha_k M)v_k^\mathfrak{j} + \beta_k M,
\end{align*}
Up to redefining them, we can assume that $(\alpha_k)_{k \geq 0}, (\beta_k)_{k \geq 0}$ are increasing, so that the discrete Grönwall lemma yields
\[
v_{k}^\mathfrak{j} \leq (1 + 2C_\rho C_k^{(i)}(1\vee \lVert m_{\wedge k-1} \rVert_\infty^3) (k(\eta_u\vee \eta_v))M)^k \leq \exp(\hat{C}^{h,b}_k M),
\]
where
\[ \hat{C}^{h,b}_k := 2C_\rho C_k^{(i)}(1\vee \lVert m_{\wedge k-1} \rVert_\infty^3) k(\eta_u\vee \eta_v)M. \]

From our expression for the gradients, we can now  study the corresponding $\| \cdot \|_{2, k+1}$ norm. We can observe that for any $d\in [1:D]$,
\begin{align*}
    \|\nabla_{\bfz^\mathfrak{j}} f_{k+1}^d \|_2& = \|\nabla_{\bfz^\mathfrak{j}} f_{k}^d\|_2 \\&\;\;\;\;+ C_\rho\eta_u\left((M + \|b_k\|_{\ndbar}\|g_k\|_{\ndbar}) (1 + \|\nabla_{\bfz^j} f_{k}\|_{\ndbar} \|h_k\|_{\ndbar}) + (1 + \|\nabla_{\bfz^\mathfrak{j}} g_{k}\|_{\ndbar} \|b_k\|_{\ndbar})\right)|h_k^d|, \\
    \|\nabla_{\bfz^\mathfrak{j}} g_{k+1}^d \|_2& = \|\nabla_{\bfz^\mathfrak{j}} g_{k}^d \|_2 + \eta_v C_\rho (1 + \|\nabla_{\bfz^\mathfrak{j}} f_{k}\|_{\ndbar} \|h_k\|_{\ndbar}) |b_k^d|.
\end{align*}
Taking maximum over $s \in [0,1]$, $j \in [0: k]$ and rearranging terms, a direct induction yields
\[\max_{s\in[0,1]}  \| \nabla_{s^\mathfrak{j}} f^{d}_k ( s , \bfz^h, \bfz^b ) \|_{2,k} \leq e^{\hat{C}_k^{h,b}M} \max_{\substack{j\in[0:k-1]\\s\in[0,1]}}\vert h^d_k ( s )  \vert, \]
where again $\hat{C}_k^{h,b}$ is a constant depending solely on $k(\eta_u\vee\eta_v)$ and $\lVert m_{\wedge k-1} \rVert_\infty$. Proceeding analogously for $g$ concludes.
\end{proof}

\begin{rem}[$k$-dependent polynomial growth] \label{rem:polynomial_skeleton_derivative_finiteD}
    In the proof of Lemma~\ref{lem:Reg_skel_finiteD}, we could replace the 
    exponential constants $\exp(\hat{C}_kM)$  by $C_k^{(iii)} M^{k+1}$ for a constant $C_k^{(iii)}$ satisfying the same requirements. 
\end{rem}

\subsection{Mean-field skeleton}\label{sec:mf_skeleton}

As in the finite-$D$ setting, we can prove the following estimates for the mean-field skeletons.

\begin{lemma}[Mean-field skeleton bounds] \label{lem:RegSkel}
Let $(\mathbf{H},\mathbf{B}) = ( H_k, B_k )_{0 \leq k \leq K-1}$ be a random variable in $\C([0,1];\R^{2K})$ 
such that $M_k:=\max_{s\in[0,1]}\E^{1/2}[| H_k(s)|^2]\vee\E^{1/2}[|B_k(s)|^2]$ are finite for all $k$. Then, there exist constants $C^{(i)}_k, C^{(ii)}_k, \hat{C}_k > 0$ that only depend on $k(\eta_u\vee \eta_v)$ and $\lVert M_{\wedge k-1} \rVert_\infty$, such that for every $\bfz^h,\bfz^b \in \R^K$ with $\max_{j\in[0:k-1]} \vert z^h_j \vert \vee \vert z^b_j \vert \leq M$, $M \geq 1$, 
\begin{enumerate}[label=(\roman*),ref=(\roman*)]
    \item\label{lemRegSkel:L2} $\max_{s\in[0,1]} \E^{1/2}[|F^{\mathbf{H},\mathbf{B}}_k ( s , \bfz^h, \bfz^b )|^2] \vee \E^{1/2}[|G^{\mathbf{H},\mathbf{B}}_k ( s , \bfz^h, \bfz^b )|^2] \leq C_k^{(i)} M$.
    \medskip
    
    The following bounds hold almost surely:
    \item\label{lemRegSkel:as}  $\max_{\substack{j\in[0:k]\\s\in[0,1]}} \vert F^{\mathbf{H},\mathbf{B}}_j ( s , \bfz^h, \bfz^b ) \vert \leq C_k^{(ii)} M \max_{\substack{j\in[0:k-1]\\s\in[0,1]}} \vert H_j ( s )  \vert $, \\
    $\max_{\substack{j\in[0:k]\\s\in[0,1]}} \vert G^{\mathbf{H},\mathbf{B}}_j ( s , \bfz^h, \bfz^b ) \vert \leq C_k^{(ii)} M \max_{\substack{j\in[0:k-1]\\s\in[0,1]}} \vert B_j ( s )  \vert $.
    \item\label{lemRegSkel:Grad} $\max_{\substack{j\in[0:k]\\s\in[0,1]}}  \| \bnabla_{s^h} F^{\mathbf{H},\mathbf{B}}_j ( s , \bfz^h, \bfz^b ) \|_{2,j} \vee \| \bnabla_{s^b} F^{\mathbf{H},\mathbf{B}}_j ( s , \bfz^h, \bfz^b ) \|_{2,j} \leq e^{\cstHhat_kM} \max_{\substack{j\in[0:k-1]\\s\in[0,1]}} \vert H_j ( s )  \vert $, \\
    $\max_{\substack{j\in[0:k]\\s\in[0,1]}} \| \bnabla_{s^h} G^{\mathbf{H},\mathbf{B}}_j ( s , \bfz^h, \bfz^b ) \|_{2,j} \vee \| \bnabla_{s^b} G^{\mathbf{H},\mathbf{B}}_j ( s , \bfz^h, \bfz^b ) \|_{2,j}  \leq e^{\cstHhat_k M} \max_{\substack{j\in[0:k-1]\\s\in[0,1]}} \vert B_j ( s )  \vert $.
    \item\label{lemRegSkel:Hess} $\max_{\substack{j\in[0:k]\\s\in[0,1]\\ \mathfrak{i}, \mathfrak{j}\in\{h,b\}} } \| \bnabla_{s^{\mathfrak{i}}, s^{\mathfrak{j}}}^2 F^{\mathbf{H},\mathbf{B}}_j ( s , \bfz^h, \bfz^b ) \|_{2,j\times j} \leq e^{\cstHhat_k M} \max_{\substack{j\in[0:k-1]\\s\in[0,1]}} \vert H_j ( s )  \vert $, \\
    $\max_{\substack{j\in[0:k]\\s\in[0,1]\\ \mathfrak{i}, \mathfrak{j}\in\{h,b\}} } \| \bnabla_{s^{\mathfrak{i}}, s^{\mathfrak{j}}}^2 G^{\mathbf{H},\mathbf{B}}_j ( s , \bfz^h, \bfz^b ) \|_{2,j\times j} \leq e^{\cstHhat_kM} \max_{\substack{j\in[0:k-1]\\s\in[0,1]}} \vert B_j ( s )  \vert $, \\
    where $\|\cdot\|_{2,j\times j}$ denotes the Frobenius norm in $\R^{j\times j}$. 
    \item If we assume that the derivatives of $\rho$ up to the $(m+1)$-th order are bounded by $C_\rho$, then analogous bounds hold for the $m$-th order derivatives of $F^{\mathbf{H},\mathbf{B}}$ and $G^{\mathbf{H},\mathbf{B}}$:
    \[\max_{\substack{j\in[0:k]\\s\in[0,1]\\ \mathfrak{i}_1,..., \mathfrak{i}_m\in\{h,b\}} }  \| \bnabla_{s^{\mathfrak{i}_1}, ..., s^{\mathfrak{i}_m}}^m F^{\mathbf{H},\mathbf{B}}_j ( s , \bfz^h, \bfz^b ) \|_{2,j^m} \leq e^{\cstHhat_kM} \max_{\substack{j\in[0:k-1]\\s\in[0,1]}} \vert H_j ( s )  \vert,\]
    where $\|\cdot\|_{2,j^m}$ denotes the Frobenius norm in $(\R^{j})^m$.
\end{enumerate}
\end{lemma}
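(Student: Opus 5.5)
We follow the proof of Lemma~\ref{lem:Reg_skel_finiteD} line by line, replacing the RMS inner product $\langle\cdot,\cdot\rangle_{\ndbar}$ by the expectation $\E[\cdot\,\cdot]$ and $\|\cdot\|_{\ndbar}$ by $\E^{1/2}[|\cdot|^2]$; the Cauchy--Schwarz inequality holds in both settings. For~\ref{lemRegSkel:L2}, we take $L^2(\Omega)$ norms in \eqref{eq:def_skeleton_meanfield}, with $(s,\bfz^h,\bfz^b)$ fixed. Using the linear growth of $\rho$, $\|\rho'\|_\infty\le C_\rho$ and $|\E[H_kF_k]|\le M_k\E^{1/2}[F_k^2]$, we get for $u_k:=\max_s \E^{1/2}[F_k^2]\vee\E^{1/2}[G_k^2]$ the recursion $u_{k+1}\le (1+(\eta_u\vee\eta_v)C_\rho(1\vee\|M_{\wedge k}\|_\infty^2))u_k+2(\eta_u\vee\eta_v)C_\rho M(1\vee\|M_{\wedge k}\|_\infty)$ with $u_0=0$. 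The discrete Grönwall lemma then gives $u_k\le C^{(i)}_kM$. For the a.s.\ bound~\ref{lemRegSkel:as}, the scalar prefactors $\rho'(z^h_k+\E[H_kF_k])[z^b_k+\E[B_kG_k]]$ and $\rho(\cdots)$ are deterministic. By~\ref{lemRegSkel:L2} they are bounded by $C_\rho M(1+\|M_{\wedge k}\|_\infty C^{(i)}_k)$. Telescoping from $F_0=0$ gives $|F_j|\le C^{(ii)}_kM\max_{j'<k,s}|H_{j'}(s)|$, and the same argument with $B$ handles $G$. Alternatively, \eqref{eq:cleaner_system_Fk_definition} gives $F_k=-\eta_u\vec{\mathbf v}^F_k{}^\top\mathbf H_{\wedge k-1}$ directly, which reduces~\ref{lemRegSkel:as} to a deterministic bound on the skeleton vectors.

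For the derivative bounds~\ref{lemRegSkel:Grad}--(v), the cleanest route is the representation \eqref{eq:cleaner_system_Fk_definition}. All $\bfz$-dependence sits in the deterministic vectors $\vec{\mathbf v}^F_k,\vec{\mathbf v}^G_k$, so $\bnabla^m_{\bfz}F_k=-\eta_u(\bnabla^m_{\bfz}\vec{\mathbf v}^F_k)^\top\mathbf H_{\wedge k-1}$. This factorizes the a.s.\ factor $\max_{j<k,s}|H_j(s)|$ (up to $\sqrt k$) and reduces everything to bounding $\|\bnabla^m\vec{\mathbf v}^F_k\|\le e^{\hat C_kM}$ deterministically. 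These bounds come from the recursion of Definition~\ref{def:linear_skeleton_meanfield}. The new entry is $\rho'(p_{k-1})q_{k-1}$ (respectively $\rho(p_{k-1})$), where $p_{k-1}=z^h_{k-1}-\eta_u\vec{\mathbf v}^F_{k-1}{}^\top\vec{\boldsymbol\Gamma}^H_{k-1}$ and $|\vec{\boldsymbol\Gamma}^H_{k-1}|\le\sqrt k\|M_{\wedge k-1}\|_\infty^2$. We proceed by induction on $k$, with an inner induction on the derivative order $m$. We first show that $|\vec{\mathbf v}_k|\le C M$, which is consistent with~\ref{lemRegSkel:L2}. Next, by the chain rule, $\bnabla p_{k-1}$ and $\bnabla q_{k-1}$ are bounded by $1+C\|\bnabla\vec{\mathbf v}_{k-1}\|$. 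Then $\bnabla(\rho'(p)q)=\rho''(p)q\bnabla p+\rho'(p)\bnabla q$, whose norm is at most $C M(1+\|\bnabla\vec{\mathbf v}_{k-1}\|)$. This gives the recursion $v_k\le(1+\alpha_kM)v_{k-1}+\beta_kM$ and hence $v_k\le e^{\hat C_kM}$, exactly as in Lemma~\ref{lem:Reg_skel_finiteD}.

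The main obstacle is the higher-order case (v) and the Hessian~\ref{lemRegSkel:Hess}. Differentiating $\rho^{(i)}(p_{k-1})q_{k-1}$ $m$ times with Faà di Bruno produces products of lower-order derivatives of $\vec{\mathbf v}_{k-1}$ and uses $\rho^{(m+1)}$; this is why $m+1$ bounded derivatives are required. The plan is to bound each such product, by the inner induction on $m$, by $\prod_i e^{\hat C M}\cdot M$. Since the number of factors is at most $m+1$, the whole expression is bounded by $e^{(m+1)\hat CM}$, and the factor $m+1$ is absorbed into a redefined $\hat C_k$ that depends on $k$ and $m$ only. The $M$ prefactors are absorbed into the exponential using $M\ge1$. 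Care is needed to keep the constants depending only on $k(\eta_u\vee\eta_v)$ and $\|M_{\wedge k-1}\|_\infty$ (and $C_\rho$); this holds because the covariances $\vec{\boldsymbol\Gamma}$ enter only through Cauchy--Schwarz bounds by $M_j^2$.
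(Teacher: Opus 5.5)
Your (i) and (ii) coincide with the paper's argument, which transposes Lemma~\ref{lem:Reg_skel_finiteD}: the same $L^2(\Omega)$ Gr\"onwall recursion, followed by telescoping with deterministic prefactors.

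For (iii)--(v) you take a genuinely different route. The paper differentiates the recursion \eqref{eq:def_skeleton_meanfield} directly and argues in two stages. It first proves an $L^2(\Omega)$ bound on $\bnabla F_k$ (resp.\ $\bnabla^2 F_k$), closed by discrete Gr\"onwall. It then obtains the a.s.\ bound by reinjecting that bound into the recursion. Only afterwards does it deduce Lemma~\ref{lem:explicit_regskel} ``by the same proof''.

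You reverse this order. Via \eqref{eq:cleaner_system_Fk_definition} you write $\bnabla^m F_k=-\eta_u(\bnabla^m\vec{\mathbf v}^F_k)^\top\mathbf H_{\wedge k-1}$. This is legitimate because $\mathbf H_{\wedge k-1}$ does not depend on $\bfz$. The randomness therefore factors out, and everything reduces to a single deterministic recursion on the skeleton vectors. Your route gives a one-stage argument and a uniform treatment of all derivative orders via Fa\`a di Bruno. It also makes explicit why $\rho^{(m+1)}$ is needed, which the paper only asserts (``completely similar, and thus omitted'').

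What your route loses is the constant bookkeeping, so your last sentence is not justified by the bounds you indicate. The skeleton vectors carry no $\eta$ prefactor. For instance, with $\eta_u=\eta_v=0$, $\rho'\equiv 1$ and $z^b_j=M$, one has $\|\vec{\mathbf v}^F_k\|_{2,k}=\sqrt{k}\,M$. The bound $|\vec{\boldsymbol\Gamma}^H_{k-1}|\le\sqrt{k}\,\|M_{\wedge k-1}\|_\infty^2$ adds further factors of $\sqrt{k}$. As a result, your Gr\"onwall exponent comes out of order $k^{3/2}(\eta_u\vee\eta_v)\|M_{\wedge k-1}\|_\infty^2 M$, which is not a function of $k(\eta_u\vee\eta_v)$.

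To recover the stated dependence, rewrite $\eta_u(\bnabla\vec{\mathbf v}^F_{k-1})^\top\vec{\boldsymbol\Gamma}^H_{k-1}=-\E[H_{k-1}\bnabla F_{k-1}]$. Then bound it by $M_{k-1}\E^{1/2}[\|\bnabla F_{k-1}\|^2]$. This amounts to running the $L^2(\Omega)$ recursion, which is precisely the mechanism of the paper's route. (The same example shows that the dependence claimed in Lemma~\ref{lem:explicit_regskel} for the skeleton vectors themselves can only be meant loosely.)

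This is harmless downstream, since $K$ is fixed there and the constants are only used as depending on $k$ and the covariances. You should simply state your constants as depending on $k$, $\eta_u\vee\eta_v$ and $\|M_{\wedge k-1}\|_\infty$.
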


The analogous of Remark~\ref{rem:polynomial_skeleton_derivative_finiteD} also holds for these estimates: the exponential dependence could be improved to polynomial if we allow dependences other than $K \eta$ for $\cstHhat_k$.

\begin{proof}
    We notice that the definition of $F_{k}^{\mathbf{H}, \mathbf{B}}$ and $G_{k}^{\mathbf{H}, \mathbf{B}}$ exactly correspond to those of $f_{k}^{\mathbf{h}^{(D)}, \mathbf{b}^{(D)}}$ and $g_{k}^{\mathbf{h}^{(D)}, \mathbf{b}^{(D)}}$, up to replacing inner products in $\R^D$ $\langle\cdot, \cdot\rangle_{\ndbar}$ by inner products in $L^2(\L(W_{\mathrm{in}}, W_{\mathrm{out}}))$. Thus, we follow the exact same reasoning as in Lemma~\ref{lem:Reg_skel_finiteD}, replacing $\frac{1}{D}\sum_{d=1}^D [\cdot]$ by $\E[\cdot]$ to obtain the desired results.

    The only part that differs from the proof of Lemma~\ref{lem:Reg_skel_finiteD} is the estimate for the Hessian and higher order derivatives for the skeleton maps. 
    For $\mathfrak{i}, \mathfrak{j} \in \{h,b\}$, underlying evaluation at $(s, \bfz^h, \bfz^b)$, we can write the following recursion for the Hessians in $\R^{(k+1)\times(k+1)}$:
    \begin{align*}
        \bnabla^2_{\bfz^\mathfrak{i}, \bfz^\mathfrak{j}} &F_{k+1} =\begin{pmatrix}
            \bnabla^2_{\bfz^\mathfrak{i}, \bfz^\mathfrak{j}} F_{k} &0\\
            0&0
        \end{pmatrix} \\&-\eta_u H_k\bigg[\rho''(p_k)\Big( \nabla_{\bfz^{\mathfrak{i}}} q_k(\nabla_{\bfz^{\mathfrak{j}}} p_k)^{\top}  +\nabla_{\bfz^{\mathfrak{i}}} p_k(\nabla_{\bfz^{\mathfrak{j}}} q_k)^{\top}\Big) +\rho'''(p_k)q_k \nabla_{\bfz^{\mathfrak{i}}} p_k(\nabla_{\bfz^{\mathfrak{j}}} p_k)^{\top} \\
        &\qquad\qquad\qquad+\rho'(p_k)\begin{pmatrix}
            \E[B_k \bnabla^2_{\bfz^\mathfrak{i}, \bfz^\mathfrak{j}} G_{k} ]&0\\
            0&0
        \end{pmatrix}  + \rho''(p_k)q_k\begin{pmatrix}
            \E[H_k \bnabla^2_{\bfz^\mathfrak{i}, \bfz^\mathfrak{j}} F_{k} ]&0\\
            0&0
        \end{pmatrix}\bigg],
    \end{align*}
    \begin{align*}
        \bnabla^2_{\bfz^\mathfrak{i}, \bfz^\mathfrak{j}} &G_{k+1} =\begin{pmatrix}
            \bnabla^2_{\bfz^\mathfrak{i}, \bfz^\mathfrak{j}} G_{k} &0\\
            0&0
        \end{pmatrix} -\eta_v B_k\bigg[\rho''(p_k)\nabla_{\bfz^{\mathfrak{i}}} p_k(\nabla_{\bfz^{\mathfrak{j}}} p_k)^{\top} +\rho'(p_k)\begin{pmatrix}
            \E[H_k \bnabla^2_{\bfz^\mathfrak{i}, \bfz^\mathfrak{j}} F_{k} ]&0\\
            0&0
        \end{pmatrix}\bigg],
    \end{align*}
    where $p_k(s, \bfz^h, \bfz^b) = z_k^h + \E[H_k(s)F_k(s, \bfz^h, \bfz^b)]$, $q_k(s, \bfz^h, \bfz^b) = z_k^b + \E[B_k(s)G_k(s, \bfz^h, \bfz^b)]$ so that, for $\mathfrak{i}\in\{h,b\}$, 
    \[\nabla_{\bfz^{\mathfrak{i}}} p_k(s, \bfz^h, \bfz^b) = \begin{pmatrix}
        \E[H_k(s)\nabla_{\bfz^{\mathfrak{i}}}F_k(s,\bfz^h, \bfz^b)]\\
        \mathbbm{1}_{\mathfrak{i} = h}
    \end{pmatrix},\;\; \nabla_{\bfz^{\mathfrak{i}}} q_k(s, \bfz^h, \bfz^b) = \begin{pmatrix}
        \E[B_k(s)\nabla_{\bfz^{\mathfrak{i}}}G_k(s,\bfz^h, \bfz^b)]\\
        \mathbbm{1}_{\mathfrak{i} = b}
    \end{pmatrix}. \]
    We now define $u_k := \max_{s\in[0,1]} \E^{1/2}[\|\bnabla^2_{\bfz^\mathfrak{i}, \bfz^\mathfrak{j}} F_{k}\|_{2,k\times k}^2] \vee \E^{1/2}[\|\bnabla^2_{\bfz^\mathfrak{i}, \bfz^\mathfrak{j}} G_{k}\|_{2,k\times k}^2]$ and observe that, for a constant $\hat{C}_k$ that may change from line to line but keeping the required dependencies, 
    \[u_{k+1} \leq u_k + (\eta_u\vee \eta_v)\hat{C}_kM(e^{2\hat{C}_kM} + u_k).\]
    The discrete Grönwall lemma then yields $u_k \leq \exp(\hat{C}_kM)$.

    The bound on $u_k$ allows us to bound $\|\bnabla^2_{\bfz^\mathfrak{i}, \bfz^\mathfrak{j}} F_{k+1}\|_{2,(k+1)^2}$ and $\|\bnabla^2_{\bfz^\mathfrak{i}, \bfz^\mathfrak{j}} G_{k+1}\|_{2,(k+1)^2}$ via
    \[\|\bnabla^2_{\bfz^\mathfrak{i}, \bfz^\mathfrak{j}} F_{k+1}\|_{2,(k+1)^2} \leq \|\bnabla^2_{\bfz^\mathfrak{i}, \bfz^\mathfrak{j}} F_{k}\|_{2,k^2} + \eta_u \max_{\substack{j\in[0:k]\\ t\in[0,1]}} |H_j(t)| \hat{C}_kM(e^{2\hat{C}_kM} + u_k), \]
    which precisely yields the desired bound by arguing as in~\ref{lemRegSkel:Grad}. 
    The arguments for the higher derivatives of $F_k$ and $G_k$ are completely similar, and thus omitted.
\end{proof}

Following the same proof as that of Lemma~\ref{lem:RegSkel}, this result translates to a priori bounds on \(\vec{\mathbf{v}}_{k}^F\), \(\vec{\mathbf{v}}_{k}^G\) and their gradients.
\begin{lemma}\label{lem:explicit_regskel}
    Let $k \geq 1$, and $M_k$ as in Lemma~\ref{lem:RegSkel}. There exists a constant $C^{H,B}_k$ depending only on $k(\eta_u\vee \eta_v)$ and $\|M_{\wedge k-1}\|_{\infty}$ such that, for any $\bfz^h,\bfz^b \in \R^K$ with $\max_{i\in[0:k-1]} \vert z^h_i \vert \vee \vert z^b_i \vert \leq M$, $M \geq 1$, it holds that
    \begin{enumerate}[label=(\roman*),ref=(\roman*)]
        \item \(\|\vec{\mathbf{v}}_{k}^F(s, \bfz^h, \bfz^b)\|_{2, k} \vee \|\vec{\mathbf{v}}_{k}^G(s, \bfz^h, \bfz^b)\|_{2, k} \leq C_k^{H,B} M\), 
        \item \(\max_{\mathfrak{i}\in\{h,b\}}\|\bnabla_{\bfz^{\mathfrak{i}}}\vec{\mathbf{v}}_{k}^F(s, \bfz^h, \bfz^b)\|_{2, k\times k} \vee \|\bnabla_{\bfz^{\mathfrak{i}}}\vec{\mathbf{v}}_{k}^G(s, \bfz^h, \bfz^b)\|_{2, k\times k} \leq e^{C_k^{H,B}M}.\)
    \end{enumerate}
\end{lemma}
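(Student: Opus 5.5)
We argue by induction on $k$, working directly with the recursion \eqref{eq:explicit_definition_vFkvGk} and mirroring the Grönwall-type arguments of Lemma~\ref{lem:RegSkel}. First note that $|\vec{\Gamma}^H_k(s)|$ and $|\vec{\Gamma}^B_k(s)|$ are controlled by Cauchy--Schwarz: each entry satisfies $|\E[H_k(s)H_i(s)]|\le M_k M_i$, so $\|\vec{\boldsymbol{\Gamma}}^H_k(s)\|_{2,k}\le \sqrt{k}\,\|M_{\wedge k}\|_\infty^2$, and similarly for $B$. Since $p_{k-1},q_{k-1}$ only involve $\vec{\boldsymbol\Gamma}_{k-1}$, only $\|M_{\wedge k-1}\|_\infty$ enters. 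Alternatively, and more economically, we can invoke \eqref{eq:cleaner_system_Fk_definition} together with Lemma~\ref{lem:RegSkel}: the identity $F_k=-\eta_u\vec{\mathbf v}_k^{F\top}\mathbf H_{\wedge k-1}$ expresses $\vec{\mathbf v}^F_k$ as coefficients, but extracting them would require nondegeneracy of $\boldsymbol\Gamma^H$, so we prefer the direct recursion.

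For (i): set $a_k:=\sup_s\|\vec{\mathbf v}^F_k\|_{2,k}\vee\|\vec{\mathbf v}^G_k\|_{2,k}$. The base case gives $a_1\le C_\rho M+C_\rho(1+M)\le 3C_\rho M$ using $|\rho(x)|\le C_\rho(1+|x|)$ and $\|\rho'\|_\infty\le C_\rho$. For the step, $|p_{k-1}|\le M+\eta_u a_{k-1}\sqrt{k}\|M_{\wedge k-1}\|_\infty^2$ and likewise for $q_{k-1}$, whence the new entries satisfy $|\rho'(p_{k-1})q_{k-1}|\vee|\rho(p_{k-1})|\le C_\rho(1+M+(\eta_u\vee\eta_v)\sqrt k\|M_{\wedge k-1}\|_\infty^2 a_{k-1})$. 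Thus $a_k\le a_{k-1}+\alpha_k a_{k-1}+\beta_k M$ with increasing $\alpha_k,\beta_k$ depending only on $k(\eta_u\vee\eta_v)$ and $\|M_{\wedge k-1}\|_\infty$ (using $M\ge1$), and the discrete Grönwall lemma from the proof of Lemma~\ref{lem:Reg_skel_finiteD} yields $a_k\le C_kM$.

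For (ii): differentiate the recursion. With $\mathfrak i\in\{h,b\}$, $\nabla_{\bfz^{\mathfrak i}}p_{k-1}=e_{k-1}\mathbbm 1_{\mathfrak i=h}-\eta_u\bnabla_{\bfz^{\mathfrak i}}\vec{\mathbf v}^{F}_{k-1}{}^{\top}\vec{\boldsymbol\Gamma}^H_{k-1}$, and the new row of $\bnabla\vec{\mathbf v}^F_k$ is $\rho''(p_{k-1})q_{k-1}\nabla p_{k-1}+\rho'(p_{k-1})\nabla q_{k-1}$. Setting $b_k$ the sup of the Frobenius norms of the Jacobians, we get $b_k\le b_{k-1}+C(1+a_{k-1})(1+\|\vec{\boldsymbol\Gamma}\|b_{k-1})\le (1+\alpha'_kM)b_{k-1}+\beta'_kM$ after inserting (i). Grönwall gives $b_k\le(1+\alpha'_kM)^k(\cdots)\le e^{C_kM}$, exactly as in Lemma~\ref{lem:Reg_skel_finiteD}\ref{lemRegSkelfiniteD:grad}. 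The only delicate point is the bookkeeping that the $q_{k-1}$ factor (of size $O(M)$ by (i)) multiplies $b_{k-1}$, producing the multiplicative $(1+CM)$ per step and hence the exponential rather than polynomial bound; taking a common constant $C^{H,B}_k$ as the maximum of those obtained in (i) and (ii) concludes.
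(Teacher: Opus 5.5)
Your proposal is correct and is essentially the paper's argument. The paper simply reruns the Grönwall induction of Lemma~\ref{lem:RegSkel} (itself modeled on Lemma~\ref{lem:Reg_skel_finiteD}) on the recursion \eqref{eq:explicit_definition_vFkvGk}, with $\vec{\boldsymbol{\Gamma}}^H_{k-1},\vec{\boldsymbol{\Gamma}}^B_{k-1}$ controlled by Cauchy--Schwarz through $\|M_{\wedge k-1}\|_\infty$, and you are right that the shortcut via \eqref{eq:cleaner_system_Fk_definition} would require nondegeneracy of $\boldsymbol{\Gamma}^H$.

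Two small remarks, neither affecting validity. First, the new entries $\rho'(p_{k-1})q_{k-1}$ and $\rho(p_{k-1})$ carry no $\eta$ prefactor, so the constant genuinely depends on $k$ and $C_\rho$, not only on $k(\eta_u\vee\eta_v)$ (take $\eta_u=\eta_v=0$); the paper's statement has the same looseness. Second, $(1+\alpha'_kM)^k$ is in fact polynomial in $M$, and the exponential is only a convenient envelope.
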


\subsection{A priori controls on the Mean ODE} \label{ssec:Dfixed}
From the setting in Section~\ref{sec:setting_and_main_results}, recall that $U^{(D)} := ({U}^d)_{d \geq 1}$, $V^{(D)} :=({V}^d)_{d \geq 1}$ are sequences of i.i.d.~centered subgaussian variables with respective variance proxys $\vpu^2$, and $\vpv^2$. Similarly, the embedding matrices $W_{\mathrm{in}}^{(D)} \in \R^{D\times \dimin}$ and $W_{\mathrm{out}}^{(D)} \in \R^{D\times \dimout}$ have i.i.d.~rows which are subgaussian vectors of variance proxy $\vpin^2$ and $\vpout^2$, respectively. Also, recall $\F_W = \sigma ( W_{\mathrm{in}},W_{\mathrm{out}})$.
Consider a solution to the Mean ODE \eqref{eq:original_finite_D_system}, $(\bfh^{(D)}, \bfb^{(D)}) \in \C([0,1], (\R^D)^{2K})$, and define
\[\begin{cases}
    P_k^{\mathbf{h}^{(D)}, \mathbf{b}^{(D)}}(s):= \langle \sqrt{D}U^{(D)} + \Delta U_k^{(D)}(s), h_k^{(D)}(s)\rangle_{\ndbar},\\
    Q_k^{\mathbf{h}^{(D)}, \mathbf{b}^{(D)}}(s):= \langle \sqrt{D}V^{(D)} + \Delta V_k^{(D)}(s), b_k^{(D)}(s)\rangle_{\ndbar},
\end{cases}\]
recalling that we decompose the weight updates as $U_k^{(D)}(s) = \sqrt{D}U^{(D)} + \Delta U^{(D)}_k(s)$ and similarly for $V_k^{(D)}(s)$.
We now provide a priori estimates for the solution of the Mean ODE, following \cite[Proposition 6.1]{chizat2025hidden} while keeping track of the influence of $\Win$ and $\Wout$.
\begin{lemma}[A priori bounds Mean ODE]\label{lem:embeddingcontrol}
    For all $k \in[0:K-1]$, there exists a $\Finit$-measurable random constant $\cstW_k$ that only depends on $k$, $(\eta_u\vee\eta_v)$, 
    $\varu$,
    $\varv$, $\|W_{\mathrm{in}}^D\|_{\ndbar}$ and $\|W_{\mathrm{out}}^D\|_{\ndbar}$ such that
    \[\sup_{s\in[0,1]}\|h_k^{(D)} (s)\|_{\ndbar}  \leq \cstW_k, \; \sup_{s\in[0,1]}\|b_k^{(D)} (s)\|_{\ndbar}  \leq \cstW_k, \]
    \[\sup_{s\in[0,1]}\E^{1/2}[P^{\mathbf{h}^{(D)}, \mathbf{b}^{(D)}}_k(s)^2| \F_W ]  \leq \cstW_k, \; \sup_{s\in[0,1]}\E^{1/2}[Q^{\mathbf{h}^{(D)}, \mathbf{b}^{(D)}}_k(s)^2| \F_W ]  \leq \cstW_k, \]
    \[\sup_{s\in[0,1]} \E^{1/2}[\|\Delta U_k(s)\|_{\ndbar}^2| \F_W ]  \leq \cstW_k, \; \sup_{s\in[0,1]}\E^{1/2}[\|\Delta V_k(s)\|_{\ndbar}^2| \F_W ]  \leq \cstW_k. \]
    In particular, if $\|W_{\mathrm{in}}^{(D)}\|_{\ndbar}$ and $\|W_{\mathrm{out}}^{(D)}\|_{\ndbar}$ are bounded, then all the above quantities are bounded as well.
\end{lemma}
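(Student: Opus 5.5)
We argue by induction on $k\in[0:K-1]$ and, within each $k$, by a Grönwall argument in $s$, treating the forward variable $h_k$ first and then the backward variable $b_k$. The quantities $P_k^{\mathbf{h}^{(D)},\mathbf{b}^{(D)}}$, $Q_k^{\mathbf{h}^{(D)},\mathbf{b}^{(D)}}$ and the updates $\Delta U_k,\Delta V_k$ will be controlled along the way. For $k=0$ we have $\Delta U_0=\Delta V_0=0$. Since $U^d,V^d$ are independent of $\Finit$, centered and of unit-order variance, for any $\Finit$-measurable vector $a\in\R^D$ we get $\E[\langle \sqrt{D}U^{(D)},a\rangle_{\ndbar}^2|\Finit]=\varu^2\|a\|^2_{\ndbar}$. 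We will use this identity repeatedly.

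For the forward pass, take the $\|\cdot\|_{\ndbar}$ norm in the ODE of \eqref{eq:original_finite_D_system}. Conditional Jensen, the linear growth $|\rho(x)|\le C_\rho(1+|x|)$ and Cauchy--Schwarz give
\[\|\partial_s h_k\|_{\ndbar}\le C_\rho\,\E^{1/2}\big[(1+|P_k|)^2|\Finit\big]\,\E^{1/2}\big[\|V_k\|^2_{\ndbar}|\Finit\big].\]
We then split $P_k=\langle\sqrt D U,h_k\rangle_{\ndbar}+\langle \Delta U_k,h_k\rangle_{\ndbar}$. The first term has conditional $L^2$ norm $\varu\|h_k\|_{\ndbar}$. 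The second is at most $\E^{1/2}[\|\Delta U_k\|^2_{\ndbar}|\Finit]\,\|h_k\|_{\ndbar}$, which is controlled by the induction hypothesis. Likewise $\E[\|V_k\|^2_{\ndbar}|\Finit]\le 2\varv^2+2\E[\|\Delta V_k\|^2_{\ndbar}|\Finit]$. This yields a linear differential inequality $\partial_s\|h_k\|_{\ndbar}\le A_k(1+\|h_k\|_{\ndbar})$ with $h_k(0)=\Win^{(D)}x$, so $\|h_k(0)\|_{\ndbar}\le\|\Win^{(D)}\|_{\ndbar}\|x\|$. Grönwall then bounds $\sup_s\|h_k\|_{\ndbar}$ by a $\Finit$-measurable constant.

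For the backward pass, we first compute $\hat y_k$, which satisfies $|\hat y_k|\le\|\Wout^{(D)}\|_{\ndbar}\|h_k(1)\|_{\ndbar}$. This bounds $w=\nabla\ell(\hat y_k)$ by linear growth, and hence the terminal value $\|b_k(1)\|_{\ndbar}\le\|\Wout^{(D)}\|_{\ndbar}|w|$. The $b$-equation is linear in $b_k$ with coefficient $\E[|\rho'|\,\|V_k\|_{\ndbar}\|U_k\|_{\ndbar}|\Finit]\le C_\rho\E^{1/2}[\|V_k\|^2]\E^{1/2}[\|U_k\|^2]$. Backward Grönwall then gives $\sup_s\|b_k\|_{\ndbar}\le\cstW_k$. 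The bound on $Q_k$ follows exactly as for $P_k$.

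Finally, we propagate to $k+1$ through \eqref{eq:defn_update_rule_resnet}:
\[\|\Delta U_{k+1}\|_{\ndbar}\le\|\Delta U_k\|_{\ndbar}+\eta_uC_\rho|Q_k|\,\|h_k\|_{\ndbar},\qquad \|\Delta V_{k+1}\|_{\ndbar}\le\|\Delta V_k\|_{\ndbar}+\eta_vC_\rho(1+|P_k|)\|b_k\|_{\ndbar},\]
where we used $\langle V_k,b_k\rangle_{\ndbar}=Q_k$. Taking conditional $L^2$ norms and using the bounds already obtained closes the induction. The main delicate point is to avoid circularity within step $k$. The bounds on $P_k$ and $Q_k$ require $\|h_k\|$ and $\|b_k\|$, which in turn require the $L^2$ bounds on $\Delta U_k,\Delta V_k$. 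These come only from step $k-1$, so the ordering above (updates at step $k$, then $h_k$, then $b_k$, then $P_k,Q_k$) is consistent. Every constant produced depends only on $k$, $\eta_u\vee\eta_v$, $\varu,\varv$, $\|\Win^{(D)}\|_{\ndbar}$ and $\|\Wout^{(D)}\|_{\ndbar}$ (and on $x$, $C_\rho$, $C_\ell$). The final claim about bounded embeddings is then immediate.
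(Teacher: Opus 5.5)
\textbf{There is a genuine gap.} Your overall structure matches the paper's proof: induction on $k$; within step $k$, forward Grönwall for $h_k$, then backward Grönwall for $b_k$, then $P_k,Q_k$, then the updates from \eqref{eq:defn_update_rule_resnet}. Your bounds on $P_k$, $Q_k$ and on $\Delta U_{k+1},\Delta V_{k+1}$ are also fine. The two ODE estimates, however, lose the $D$-uniformity that the lemma is about.

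The initialization is $\sqrt{D}V^{(D)}$, so $\|\sqrt{D}V^{(D)}\|_{\ndbar}^2=\sum_{d=1}^D (V^d)^2$ has conditional mean $D\varv^2$. Your inequality $\E[\|V_k\|^2_{\ndbar}|\Finit]\le 2\varv^2+2\E[\|\Delta V_k\|^2_{\ndbar}|\Finit]$ is therefore false; the first term should be $2D\varv^2$. The identity you correctly use for $\langle \sqrt{D}U^{(D)},a\rangle_{\ndbar}$ is $O(1)$ only because of the averaging in the inner product, while the norm $\|\sqrt{D}U^{(D)}\|_{\ndbar}$ itself is of order $\sqrt{D}$. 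As a consequence:
\begin{itemize}
\item In the forward pass, your Grönwall rate $A_k$ is of order $\sqrt{D}$.
\item In the backward pass you also apply Cauchy--Schwarz to $\langle V_k,b_k\rangle_{\ndbar}$, so the rate $C_\rho\E^{1/2}[\|V_k\|^2_{\ndbar}|\Finit]\,\E^{1/2}[\|U_k\|^2_{\ndbar}|\Finit]$ is of order $D\varu\varv$, and you end up with bounds like $e^{cD}$.
\end{itemize}
The constants $\cstW_k$ must not depend on $D$, so the argument does not prove the statement. This is exactly the difficulty flagged in Section~\ref{sec:proof-strategy-cavity}: $\rho(P_k)$ is correlated with $V^{(D)}$, and the factor $\sqrt{D}$ cannot be absorbed by moving the norm inside the conditional expectation.

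\textbf{The missing idea} is the decorrelation estimate of Lemma~\ref{lem:decorrelation_chizat}-\ref{lemdecchizat:indep}: keep the conditional expectation outside the norm on the initialization part.
\begin{itemize}
\item Split $V_k=\sqrt{D}V^{(D)}+\Delta V_k$ and handle the $\Delta V_k$ part by Cauchy--Schwarz, as you did.
\item For the other part, argue by duality. For every unit vector $a\in\R^D$, $\E[\rho(P_k)\,a^\top V^{(D)}|\Finit]\le \E^{1/2}[\rho(P_k)^2|\Finit]\,\E^{1/2}[(a^\top V^{(D)})^2|\Finit]=\varv\,\E^{1/2}[\rho(P_k)^2|\Finit]$. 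This uses only that the coordinates of $V^{(D)}$ are centered, independent, and independent of $\Finit$; no independence between $P_k$ and $V^{(D)}$ is needed.
\item Hence $\|\E[\rho(P_k)\sqrt{D}V^{(D)}|\Finit]\|_{\ndbar}=\|\E[\rho(P_k)V^{(D)}|\Finit]\|_{2,D}\le\varv\,\E^{1/2}[\rho(P_k)^2|\Finit]$, which is $O(1)$.
\item In the backward pass, do not bound $|\langle V_k,b_k\rangle_{\ndbar}|$ by $\|V_k\|_{\ndbar}\|b_k\|_{\ndbar}$. Write it as $Q_k$, whose conditional $L^2$ norm is at most $(\varv+\mathfrak{v}_k)\|b_k\|_{\ndbar}$, where $\mathfrak{v}_k:=\sup_s\E^{1/2}[\|\Delta V_k(s)\|_{\ndbar}^2|\Finit]$ (and $\mathfrak{u}_k$ is defined analogously). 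Then apply the same duality bound to $\E[\rho'(P_k)Q_k\sqrt{D}U^{(D)}|\Finit]$.
\end{itemize}
This gives $D$-free rates $C_\rho(\varu+\mathfrak{u}_k)(\varv+\mathfrak{v}_k)$ in both Grönwall arguments, as in the paper. With that correction, the rest of your induction, including your ordering to avoid circularity, goes through.
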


\begin{rem}[Super-exponential growth of the constants] \label{rem:SuperExpGrowth}
    The constants involved in this lemma depend on $k$, as they are built by induction over the training steps.
    Each induction step $k$ involves closing a Grönwall argument to bound $h_k^{(D)}$, which produces an additional exponential dependence on $C_{k-1}^{\overline{W}}$.
    Iterating this, the resulting constant $C_{k}^{\overline{W}}$ grows like the composition of $k$ exponentials. 
    This rough bound is sufficient for our purpose where the number $K$ of training steps is fixed and finite, and its improvement is beyond the scope of the present article.
\end{rem}

\begin{proof}
    We closely follow the proof of \cite[Proposition 6.1] {chizat2025hidden}, but carefully keeping track of the dependencies on $\|W_{\mathrm{in}}^{(D)}\|_{\ndbar}$, $\|W_{\mathrm{out}}^{(D)}\|_{\ndbar}$.
    We proceed by induction.

    \paragraph{Initial.} For $k=0$, we see that for all $s\in[0,1]$, we have $h_0^{(D)} (s) = W_{\mathrm{in}}^{(D)} x$ so that \(\sup_{s\in[0,1]}\|h_0^{(D)} (s)\|_{\ndbar} \leq \|W_{\mathrm{in}}^{(D)}\|_{\ndbar} \|x\|_{2,{\dimin}}\). Similarly, since $b_0^{(D)} (s) = W_{\mathrm{out}}^{(D)}\nabla\ell(\hat{y}_0)$ for all $s\in[0,1]$, we can use the linear growth of $\nabla \ell$ to obtain \[\sup_{s\in[0,1]}\|b_0^{(D)} (s)\|_{\ndbar} \leq \|W_{\mathrm{out}}^{(D)}\|_{\ndbar} \|\nabla\ell(\hat{y}_0)\|_{2,{\dimout}} \leq \|W_{\mathrm{out}}^{(D)}\|_{\ndbar}C_\ell [ 1+\|W_{\mathrm{out}}^{(D)}\|_{\ndbar}\|W_{\mathrm{in}}^{(D)}\|_{\ndbar}\|x\|_{2,\dimin} ]. \] 
    As in the proof of \cite[Proposition 6.1] {chizat2025hidden}, we have that 
    \[ \begin{cases}
    \sup_{s\in[0,1]} \E^{1/2} [ \vert P^{\mathbf{h}^{(D)}, \mathbf{b}^{(D)}}_0(s)|^2 \vert \F_W ]  \leq \varu\|W_{\mathrm{in}}^{(D)}\|_{\ndbar} \|x\|_{2,\dimin}, \\
    \sup_{s\in[0,1]} \E^{1/2} [ \vert Q^{\mathbf{h}^{(D)}, \mathbf{b}^{(D)}}_0(s)\vert^2 \vert \F_W ]  \leq \varv\|W_{\mathrm{out}}^{(D)}\|_{\ndbar} \|\nabla\ell(\hat{y}_0)\|_{2,\dimout},
     \end{cases}
     \] 
    which we bound as above. Since $\Delta U_0(s) = \Delta V_0 (s) = 0$ 
    for all $s\in[0,1]$, we conclude the initial step by defining $\cstW_0$ as the maximum among the above constants, which only depends on the required variables.

    \paragraph{Inductive step.} We introduce the following quantities
    \[\mathfrak{h}_k:=\sup_{s\in[0,1]}\|h_k^{(D)} \|_{\ndbar}  , \quad \mathfrak{p}_k := \sup_{s\in[0,1]} \E^{1/2} [ \vert P^{\mathbf{h}^{(D)}, \mathbf{b}^{(D)}}_k(s) \vert^2 \vert \F_W ], \quad \mathfrak{u}_k:=\sup_{s\in[0,1]}\E [ \|\Delta U_k (s) \|_{\ndbar}^2 \vert \F_W ], 
    \] and similarly for $\mathfrak{b}_k, \mathfrak{q}_k, \mathfrak{v}_k$. 
    With this, following the steps in \cite[Proposition 6.1]{chizat2025hidden}, we get the following estimates
    \[ \mathfrak{u}_k \leq \mathfrak{u}_{k-1} + C_\rho \eta_u \mathfrak{h}_{k-1} \mathfrak{b}_{k-1} (\varv + \mathfrak{v}_{k-1}), \quad \mathfrak{v}_k \leq \mathfrak{v}_{k-1} + C_\rho \eta_v(1\vee\mathfrak{h}_{k-1}) \mathfrak{b}_{k-1} (1 + \varu + \mathfrak{u}_{k-1}).
    \]
    Using our induction hypothesis to bound $1\vee \lVert \mathfrak{h}_{\wedge k-1} \rVert_\infty \vee \lVert \mathfrak{b}_{\wedge k-1} \rVert_\infty \leq C_{k-1}^{\overline{W}}$, we can further fuse the bounds on $\mathfrak{u}_k$ and $\mathfrak{v}_k$ to obtain the recurrence
    \begin{align*}
    \mathfrak{u}_{k} \vee \mathfrak{v}_{k} 
    &\leq (1+C_\rho (\eta_u\vee\eta_v) (C_{k-1}^{\overline{W}})^2)(\mathfrak{u}_{k-1} \vee \mathfrak{v}_{k-1}) + (1+(\varu\vee\varv))C_\rho (\eta_u\vee\eta_v) (C_{k-1}^{\overline{W}})^2).
    \end{align*}
    Using the discrete Grönwall inequality, this finally yields
    \[\mathfrak{u}_{k} \vee \mathfrak{v}_{k} \leq (1+ \varu \vee \varv)\exp \big[ (C_\rho(C_{k-1}^{\overline{W}})^2 (k(\eta_u\vee\eta_v))\big] =: \cstW_k. \]
    
    Following \cite[Proposition 6.1]{chizat2025hidden}, we now control $\mathfrak{h}_{k}$, $\mathfrak{b}_{k}$ using Grönwall's lemma on the differential equations that define $h_k$ and $b_k$, yielding
    \begin{align*}
        \mathfrak{h}_k &\leq \left(\|W_{\mathrm{in}}^{(D)}\|_{\ndbar}\|x\|_{2,\dimin} + C_\rho(\varv + \mathfrak{v}_{k})\right)\exp[C_\rho(\varu + \mathfrak{u}_k)(\varv + \mathfrak{v}_k)], \\
        \mathfrak{b}_k &\leq \left(\|W_{\mathrm{out}}^{(D)}\|_{\ndbar}\|\nabla \ell(\hat{y}_k)\|_{2, \dimout}\right)\exp[C_\rho(\varu + \mathfrak{u}_k)(\varv + \mathfrak{v}_k)] \\
        &\leq \left(\|W_{\mathrm{out}}^{(D)}\|_{\ndbar}C_\ell(1+\|W_{\mathrm{out}}^{(D)}\|_{\ndbar}\mathfrak{h}_k)\right)\exp[C_\rho(\varu + \mathfrak{u}_k)(\varv + \mathfrak{v}_k)],
    \end{align*}
    so that
    $\mathfrak{p}_k \leq (\varu + \mathfrak{u}_k)\mathfrak{h}_k$ and $\mathfrak{q}_k \leq (\varv + \mathfrak{v}_k)\mathfrak{b}_k$. From our bound on $\mathfrak{u}_{k}\vee \mathfrak{v}_{k}$, this implies the desired bounds on $\mathfrak{h}_k, \mathfrak{b}_k, \mathfrak{p}_k, \mathfrak{q}_k$, which have the required dependencies.
\end{proof}

These estimates on the Mean ODE lead to estimates on the associated skeleton maps, as shown by the following result.
\begin{lemma}\label{lem:finite_skeleton_bounded_ndbar}
Let $(\bfh^{(D)}, \bfb^{(D)})$ be the solution to the Mean ODE \eqref{eq:original_finite_D_system}, and let \(\mathbf{f}^{\bfh^D,\bfb^D},\mathbf{g}^{\bfh^D,\bfb^D}\) be the skeleton maps associated to $(\bfh^{(D)}, \bfb^{(D)})$. Then,
for any $k\in[0:K-1]$, there exists a $\Finit$-measurable random constant $\cstW_k$ as in Lemma~\ref{lem:embeddingcontrol}, such that for any $\bfz^h, \bfz^b \in \R^K$ and $p\geq 1$,
    \[\big\| f^{(D)}_k (s, \bfz^h, \bfz^b)\big\|_{\ndbar}\vee \big\| g^{(D)}_k (s, \bfz^h, \bfz^b)\big\|_{\ndbar} \leq \cstW_k\big(1+ \|(\bfz^h, \bfz^b)\|_{p, 2k}\big), \]
    where the norm is taken on the first $k$ coordinates: \(\|(\bfz^h, \bfz^b)\|_{p, 2k} = \big( \sum_{i=0}^{k-1} |z_i^h|^p + |z_i^b|^p\big)^{1/p}\).
\end{lemma}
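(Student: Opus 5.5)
The plan is to rerun the Grönwall argument from part~\ref{lemRegSkelfiniteD:L2} of Lemma~\ref{lem:Reg_skel_finiteD}, but without the assumption $\max_j|z^h_j|\vee|z^b_j|\le M$, and with the data-dependent sizes of $h_k^{(D)}, b_k^{(D)}$ controlled by Lemma~\ref{lem:embeddingcontrol}. First, Lemma~\ref{lem:embeddingcontrol} gives $m^D_j := \sup_{s}\|h_j^{(D)}(s)\|_{\ndbar}\vee\|b_j^{(D)}(s)\|_{\ndbar}\le \cstW_j$ for every $j\le k-1$. These are $\Finit$-measurable and depend only on the allowed quantities. Hence $\lVert m^D_{\wedge k-1}\rVert_\infty \le \max_{j<k}\cstW_j$, and we may take this maximum as the new constant.

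Next, set $u_j := \sup_s \|f_j(s,\bfz^h,\bfz^b)\|_{\ndbar}\vee\|g_j(s,\bfz^h,\bfz^b)\|_{\ndbar}$. We take the $\|\cdot\|_{\ndbar}$ norm in \eqref{eq:def_skeleton_finiteD} and use $|\rho'|\le C_\rho$, $|\rho(x)|\le C_\rho(1+|x|)$ and Cauchy--Schwarz, exactly as in Lemma~\ref{lem:Reg_skel_finiteD}. Instead of bounding $|z^h_j|,|z^b_j|$ by $M$, we keep them explicit. This gives
\[
u_{j+1}\le \bigl(1+(\eta_u\vee\eta_v)C_\rho (1\vee (m^D_j)^2)\bigr)u_j + (\eta_u\vee\eta_v)C_\rho(1\vee m^D_j)\bigl(1+|z^h_j|+|z^b_j|\bigr).
\]
Since $u_0=0$, unrolling this recursion up to $j=k-1$ yields
\[
u_k\le e^{kC}\, C\sum_{j=0}^{k-1}\bigl(1+|z^h_j|+|z^b_j|\bigr),
\]
with $C$ depending only on $\eta_u\vee\eta_v$ and $\max_{j<k}\cstW_j$.

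Finally, $\sum_{j<k}(|z^h_j|+|z^b_j|)=\|(\bfz^h,\bfz^b)\|_{1,2k}$. By Hölder, $\|\cdot\|_{1,2k}\le (2k)^{1-1/p}\|\cdot\|_{p,2k}\le 2k\,\|\cdot\|_{p,2k}$ for any $p\ge1$, including $p=\infty$. The factor $2k$ is absorbed into $\cstW_k$, which is allowed to depend on $k$. This gives $\|f_k\|_{\ndbar}\vee\|g_k\|_{\ndbar}\le \cstW_k(1+\|(\bfz^h,\bfz^b)\|_{p,2k})$ uniformly in $s$, and the resulting constant is still $\Finit$-measurable with the dependencies required in Lemma~\ref{lem:embeddingcontrol}.

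The only point requiring care is keeping the $z$-dependence additive rather than multiplicative. In the $f$-recursion the term $|z^b_j|$ multiplies $|\rho'|\,\|h_j\|_{\ndbar}$, which is bounded, so it contributes only additively. In the $g$-recursion, $|z^h_j|$ enters through the linear growth of $\rho$ and is multiplied only by $\|b_j\|_{\ndbar}$, which is again bounded. Products $|z_j|\,u_j$ never appear, so the final bound is linear in $\|(\bfz^h,\bfz^b)\|$, as required. Non-anticipativity (Remark~\ref{rem:nonanticipative_skeleton}) ensures that only the first $k$ coordinates of $\bfz^h,\bfz^b$ enter.
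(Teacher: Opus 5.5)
Your proof is correct and is essentially the paper's argument. The paper simply applies Lemma~\ref{lem:Reg_skel_finiteD}-\ref{lemRegSkelfiniteD:L2} with $M := 1\vee\max_{j\in[0:k-1]}|z^h_j|\vee|z^b_j|$, whose proof is exactly the Gr\"onwall recursion you re-derive, and then uses Lemma~\ref{lem:embeddingcontrol} to control the constant. On the resulting $\ell^\infty$ bound it applies $\|\cdot\|_{\infty}\le\|\cdot\|_{p}$, which spares the H\"older step and its factor $2k$; your $\ell^1$ bookkeeping is equally valid.
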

\begin{proof}
    This follows from the skeleton bounds from Lemma~\ref{lem:Reg_skel_finiteD}-\ref{lemRegSkelfiniteD:L2}, namely
\[\big\| f^{(D)}_k (s, \bfz^h, \bfz^b)\big\|_{\ndbar} \leq C_k^{(ii)} ( 1+\max_{j\in[0:k-1]} |z_j^{h}|\vee |z_j^{b}|)\leq C_k^{(ii)} ( 1+\|(\bfz^h, \bfz^b)\|_{p, 2k}),\] where we used that $\|\cdot\|_{\infty} \leq \|\cdot\|_p$ for any $p\geq 1$, and where $C_k^{(ii)}$ is an increasing function of $\max_{i\in[0:k-1]} \max_{s\in[0,1]} \|h_i^{(D)}(s)\|_{\ndbar}\vee\|b_i^{(D)}(s)\|_{\ndbar}$. Using Lemma~\ref{lem:embeddingcontrol}, it is thus an increasing function of $\|W_{\mathrm{in}}^{(D)}\|_{\ndbar}\vee\|W_{\mathrm{out}}^{(D)}\|_{\ndbar}$, and so we denote it by $\cstW_k$. The same argument applies to $g^{(D)}_k$. Joining both bounds concludes the proof.
\end{proof}

\section{Construction of the limit dynamics}\label{sec:construction_of_limit_law}

Let us now rigorously build the limit system.
Let $(\Omega,\F,\P)$ be a probability space that carries two subgaussian random vectors $W_{\mathrm{in}}\in \R^{\dimin}$ and $W_{\mathrm{out}} \in \R^{\dimout}$.
We want to construct a $\F_W$-measurable random variable $(\mathbf{H}, \mathbf{B})$ in $L^2 ( \Omega, \C ( [0,1], \R^{2 K}) ) $ that a.s. satisfies, the system \eqref{eq:LimH}-\eqref{eq:LimB}. That is, for $k \in[0:K-1]$,
\begin{equation} 
\begin{cases}
{H}_k(0) = W_{\mathrm{in}}\cdot x, \\
P_k (s) := Z_k^{h} (s) + \E [ H_k (s) F^{\mathbf{H},\mathbf{B}}_k (s, \mathbf{Z}^{h},\mathbf{Z}^{b}) \vert  \mathbf{Z}^{h},\mathbf{Z}^{b} ], \\
\partial_s {H}_k (s) = \varv^2 \E [ \rho'( P_k (s) ) \E[ H_k (s) \bnabla_{\bfz^b} F_k^{\mathbf{H},\mathbf{B}} (s,  \mathbf{Z}^{h},\mathbf{Z}^{b} ) \vert  \mathbf{Z}^{h},\mathbf{Z}^{b} ] \,] \cdot\mathbf{B}_{\wedge k-1}(s) \\
\qquad\qquad\qquad\qquad\qquad+ \E [ \rho ( P_k (s) ) G^{\mathbf{H},\mathbf{B}}_k (s, \mathbf{Z}^{h},\mathbf{Z}^{b}) \vert \Finit], \qquad s \in [0,1],  \\
y_k := \E [ H_k(1) W_{\mathrm{out}}], 
\end{cases}
\end{equation}
as well as
\begin{multline} 
\begin{cases}
Q_k (s) = Z_k^{b} (s) + \E [ B_k (s) G^{\mathbf{H},\mathbf{B}}_k (s, \mathbf{Z}^{h},\mathbf{Z}^{b}) \vert  \mathbf{Z}^{h},\mathbf{Z}^{b} ],\\
\partial_s {B}_k(s) = - \E \big[ \rho'( P_k(s) ) F_k^{\mathbf{H},\mathbf{B}} (s, \mathbf{Z}^{h},\mathbf{Z}^{b}) Q_k(s)  \vert \Finit\big] \\
\qquad \qquad -\varu^2 \E \big[ \rho' \big( P_k(s) ) \E [ B_k(s) \nabla_{\bfz^h} G_k^{\mathbf{H},\mathbf{B}} ( s, \mathbf{Z}^{h},\mathbf{Z}^{b}) \vert  \mathbf{Z}^{h},\mathbf{Z}^{b} ] \big] \cdot \mathbf{H}_{\wedge k-1}(s) \\
\qquad\qquad- \varu^2 \E \big\{ \rho''( P_k(s) )Q_k(s) \big[ H_k(s) + \E [ H_k(s) \bnabla_{\bfz^h} F_k^{\mathbf{H},\mathbf{B}} (s,  \mathbf{Z}^{h},\mathbf{Z}^{b}) \vert  \mathbf{Z}^{h},\mathbf{Z}^{b} ]\cdot \mathbf{H}_{\wedge k-1}(s) \big] \vert \Finit \big\},\\ 
{B}_k(1) = W_{\mathrm{out}}\cdot \nabla \ell ( y_k ),
\end{cases}
\end{multline}
where $(\mathbf{Z}^{h},\mathbf{Z}^b)$ is a centered Gaussian field in $L^2 ( [0,1],\R^{2K} )$ that has the same covariance operator as $(U \mathbf{H},V \mathbf{B})$, for centered and independent random variables $(U,V)$ of variances $\varu^2, \varv^2$, independent of $(W_{\mathrm{in}},W_{\mathrm{out}})$ -- and thus of $(\mathbf{H},\mathbf{B})$.
We also recall that $F_k^{\mathbf{H},\mathbf{B}}$ and $G_k^{\mathbf{H},\mathbf{B}} (s,\mathbf{Z}^h,\mathbf{Z}^b)$ are non-anticipative, and that \eqref{eq:LimH}-\eqref{eq:LimB} is of McKean-Vlasov type.

In the sections that follow, we rigorously prove Theorem~\ref{thm:well_posedness_main}, which we recall here.
\wellposedness*
For this, we first study the properties of the Gaussian Field $(\mathbf{Z}^{h},\mathbf{Z}^b)$, in Section~\ref{ssec:GaussianFields}. In Section~\ref{ssec:main_proof_well_posedness} we define a map whose fixed-point solves \eqref{eq:LimH}-\eqref{eq:LimB}, we provide a priori estimates for it and we conclude by proving Theorem~\ref{thm:well_posedness_main}. Finally, in Section~\ref{ssec:properties_of_mf_limit} we study some useful properties of the limit system.

\subsection{Gaussian fields}\label{ssec:GaussianFields}

In this section, we recall some useful properties of Gaussian  fields in infinite-dimensional spaces.
The proof of the following results can be found in the classical textbook \cite{bogachev1998gaussian}, or in the more concise introduction \cite[Section 3]{hairer2023introductionstochasticpdes}.

Given a random variable $(\mathbf{H}, \mathbf{B})$ in $L^2 ( \Omega, \C ( [0,1], \R^{2 K}) ) $ and some independent centered random real-valued variables $U,V$ independent of $(\mathbf{H}, \mathbf{B})$ with variances $\varu^2, \varv^2$, we define the covariance kernel
\begin{equation}\label{eq:covariance_kernel_gaussian}
    K_{U\mathbf{H}, V\mathbf{B}}(s,t) := \E[(U\mathbf{H}, V\mathbf{B})(s)(U\mathbf{H}, V\mathbf{B})(t)^\top] = \begin{pmatrix}
        \varu^2\boldsymbol{\Gamma}^H(s,t) & 0\\ 0 &\varv^2\boldsymbol{\Gamma}^B(s,t)
    \end{pmatrix}, \quad (s,t) \in[0,1]^2,
\end{equation}
where we recall that \(\boldsymbol{\Gamma}^H(s,t) =\E[\mathbf{H}(s)\mathbf{H}(t)^\top]\) and similarly for \(\boldsymbol{\Gamma}^B(s,t)\), and we used the independence and centeredness of $U$, $V$.
This continuous function defines the following covariance operator
\begin{align*}
    C_{U\mathbf{H}, V\mathbf{B}}:
    \begin{cases}
    L^2([0,1],\R^{2K}) &\to L^2([0,1],\R^{2K}), \\
   \phantom{abcdef}\varphi &\mapsto \Big[s \mapsto \int_{0}^1 K_{U\mathbf{H}, V\mathbf{B}}(s,t)\varphi(t) \d t\Big].
   \end{cases}
\end{align*}
The operator $C_{U\mathbf{H}, V\mathbf{B}}$ is bounded, self-adjoint and {trace-class}, and only depends on the random variables $U$ and $V$ through their variances $\varu^2$ and $\varv^2$.
We first recall a classical existence result.
\begin{proposition}\label{prop:construct_Gaussian_field_L2}
    There exists a {unique} centered Gaussian measure over $L^2([0,1],\R^{2K})$ with covariance operator $C_{U\mathbf{H}, V\mathbf{B}}$.
\end{proposition}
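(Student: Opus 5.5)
The plan is to realise the Gaussian measure as the law of an explicit Karhunen--Loève series. This uses the spectral theory of the covariance operator $C := C_{U\mathbf{H}, V\mathbf{B}}$ on the separable Hilbert space $\mathcal{H} := L^2([0,1],\R^{2K})$. Uniqueness will come from characteristic functionals.

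\emph{Step 1: properties of $C$.} Since $(\mathbf{H},\mathbf{B}) \in L^2(\Omega, \C([0,1],\R^{2K}))$, dominated convergence shows that the kernel $K_{U\mathbf{H}, V\mathbf{B}}$ in \eqref{eq:covariance_kernel_gaussian} is continuous on $[0,1]^2$. Hence $C$ is a Hilbert--Schmidt, in particular compact, operator.

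It is self-adjoint because $K(s,t)^\top = K(t,s)$. It is positive because, for $\varphi \in \mathcal{H}$,
\[
\langle C\varphi,\varphi\rangle = \E\Big[\Big(\int_0^1 (U\mathbf{H},V\mathbf{B})(t)\cdot\varphi(t)\,\d t\Big)^2\Big] \geq 0 ,
\]
where Fubini is justified by the square integrability of $\sup_t |(\mathbf{H},\mathbf{B})(t)|$.

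It is trace-class, with
\[
\mathrm{tr}\, C = \int_0^1 \mathrm{tr}\, K(t,t)\,\d t = \varu^2\int_0^1 \E|\mathbf{H}(t)|^2\,\d t + \varv^2\int_0^1\E|\mathbf{B}(t)|^2\,\d t < \infty .
\]
To justify this trace identity without invoking Mercer's theorem, I would instead write $\sum_n \langle C e_n, e_n\rangle = \E\big[\sum_n \langle X, e_n\rangle^2\big] = \E\|X\|_{\mathcal{H}}^2$ for $X := (U\mathbf{H},V\mathbf{B})$ and any orthonormal basis $(e_n)$, using monotone convergence.

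\emph{Step 2: existence.} By the spectral theorem for compact self-adjoint positive operators, there are an orthonormal basis $(e_n)$ of $\mathcal{H}$ and eigenvalues $\lambda_n \geq 0$ with $\sum_n \lambda_n < \infty$. Take i.i.d.\ standard Gaussians $(\xi_n)$ and set $Z := \sum_n \sqrt{\lambda_n}\,\xi_n e_n$.

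The partial sums form a Cauchy sequence in $L^2(\Omega,\mathcal{H})$, since the tail $\E\|\sum_{n>N}\cdots\|^2 = \sum_{n>N}\lambda_n$ tends to $0$. They also converge almost surely, by the martingale convergence theorem or by independence. Each pairing $\langle Z,\varphi\rangle$ is an $L^2$-limit of centered Gaussians, hence centered Gaussian with variance $\sum_n \lambda_n\langle e_n,\varphi\rangle^2 = \langle C\varphi,\varphi\rangle$. So $\mathcal{L}(Z)$ is a centered Gaussian measure on $\mathcal{H}$ with covariance operator $C$.

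\emph{Step 3: uniqueness.} A centered Gaussian measure $\mu$ on $\mathcal{H}$ with covariance $C$ has characteristic functional
\[
\hat\mu(\varphi) = \exp\big(-\tfrac12\langle C\varphi,\varphi\rangle\big),
\]
because each $\langle\cdot,\varphi\rangle$ is a one-dimensional centered Gaussian with that variance. Borel probability measures on a separable Hilbert space are determined by their characteristic functionals: cylinder sets generate the Borel $\sigma$-field, and finite-dimensional marginals are determined by Cramér--Wold. This gives uniqueness.

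The only delicate point is Step 1, namely justifying trace-class and the Fubini exchanges from the $L^2(\Omega,\C)$ assumption. The rest is the standard construction, which I would simply cite from \cite{bogachev1998gaussian}.
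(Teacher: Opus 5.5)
Your proposal is correct and takes essentially the paper's route: the paper only asserts that $C_{U\mathbf{H},V\mathbf{B}}$ is bounded, self-adjoint and trace-class, then cites the classical existence and uniqueness theorem for Gaussian measures on Hilbert spaces \cite{bogachev1998gaussian}, \cite[Section 3]{hairer2023introductionstochasticpdes}, whose standard proof is your Karhunen--Lo\`eve construction combined with uniqueness via characteristic functionals. Your Step 1 (continuity of the kernel by dominated convergence, positivity via Fubini, and $\mathrm{tr}\,C = \E\|X\|_{\mathcal{H}}^2$ by monotone convergence) supplies the verification of these hypotheses that the paper leaves implicit.
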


This allows us to build the Gaussian field $(\mathbf{Z}^h, \mathbf{Z}^b)$ $L^2 ( \Omega, L^2 ( [0,1], \R^{2 K}) ) $. 
To ensure that $(\mathbf{Z}^h, \mathbf{Z}^b)$ has a {continuous} modification, we have to construct a Gaussian measure over $\C([0,1],\R^{2K})$ with the given covariance structure. 

\begin{theorem}[Kolmogorov-Chentsov]\label{thm:Kolmogorov-Chentsov}
Let 
\(
K : [0,1] \times [0,1]\to \R^{k\times k}
\)
be a symmetric positive semidefinite covariance operator. 
We assume that there exist \(\alpha, C > 0\) such that
\[ \forall (s,t) \in [0,1]^2, \qquad
\mathrm{Tr} [ K(s, s) + K(t, t) - 2 K(s, t) ] \le C |s - t|^{2\alpha}.
\]
Then, there exists a unique centred Gaussian measure \(\mu\) on \(\C([0,1], \mathbb{R}^k)\) such that
\[ \forall (s,t) \in [0,1]^2, \qquad
\int_{\C([0,1], \mathbb{R}^k)} f(s) f(t)^{\top} \, \mu(df) = K(s, t),
\]
Furthermore,
\(
\mu\big(\C^\beta([0,1], \mathbb{R}^k)\big) = 1
\) for every $\beta \in (0,\alpha)$, where \(\C^\beta([0,1], \mathbb{R}^k)\) denotes the space of \(\beta\)-Hölder continuous functions.
\end{theorem}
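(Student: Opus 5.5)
The plan is the classical route through the Kolmogorov extension theorem followed by the Kolmogorov continuity criterion. First, for any finite set of times $t_1,\dots,t_n\in[0,1]$, the block matrix $(K(t_i,t_j))_{i,j}\in\R^{nk\times nk}$ is symmetric positive semidefinite, because $K$ is a covariance kernel. So it defines a centred Gaussian law $\mu_{t_1,\dots,t_n}$ on $(\R^k)^n$. These laws are consistent under permutations and marginalisation, since marginals of Gaussians are Gaussian with the corresponding sub-covariance. Kolmogorov's extension theorem then gives a centred Gaussian process $(X_t)_{t\in[0,1]}$ on $(\R^k)^{[0,1]}$ with $\E[X_sX_t^\top]=K(s,t)$.

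Next comes the Hölder regularity. The increment $X_s-X_t$ is a centred Gaussian vector, and
\[
\E\|X_s-X_t\|^2=\mathrm{Tr}[K(s,s)+K(t,t)-2K(s,t)]\le C|s-t|^{2\alpha}.
\]
Gaussian hypercontractivity (equivalence of moments for Gaussian vectors) yields, for every $p\ge 2$,
\[
\E\|X_s-X_t\|^p\le C_p\big(\E\|X_s-X_t\|^2\big)^{p/2}\le C_pC^{p/2}|s-t|^{p\alpha}.
\]
The Kolmogorov--Chentsov continuity theorem, in its form with exponent $1+(p\alpha-1)$, then provides a modification $\tilde X$ whose paths are $\beta$-Hölder for every $\beta<(p\alpha-1)/p=\alpha-1/p$. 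Letting $p\to\infty$ along a countable sequence and intersecting the resulting full-measure events gives a single modification that lies in $\C^\beta$ for all $\beta<\alpha$ almost surely. Define $\mu$ as the law of $\tilde X$ on $\C([0,1],\R^k)$, with its Borel $\sigma$-field, which is generated by the evaluation maps. Since $\tilde X$ is a modification, $\mu$ has the prescribed covariance. It is centred Gaussian because every continuous linear functional, i.e.\ every finite signed vector measure $\nu$, satisfies $\int f\,d\nu=\lim$ of Riemann sums of point evaluations; these are Gaussian and converge in $L^2$, so the limit is Gaussian.

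Uniqueness holds because a Borel measure on $\C([0,1],\R^k)$ is determined by its finite-dimensional marginals: cylinder sets form a $\pi$-system generating the Borel $\sigma$-field of this separable Banach space. For a centred Gaussian measure these marginals are fixed by $K$.

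I expect the only delicate point to be the passage from moment bounds to a modification that is simultaneously Hölder for every $\beta<\alpha$. The detail to handle is the countable intersection over $p$, together with the measurability of $\C^\beta$ as a subset of $\C$. The latter I would settle by writing the Hölder seminorm as a supremum over rational pairs $(s,t)$. Everything else is standard, as in the references \cite{bogachev1998gaussian,hairer2023introductionstochasticpdes}, and I would cite it rather than reprove it.
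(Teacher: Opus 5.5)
Your proposal is correct and follows the same route as the paper. The paper gives no proof of its own and cites Bogachev and Hairer's notes, where the result is obtained exactly as you do: Kolmogorov extension, then the continuity criterion combined with Gaussian equivalence of moments with $p\to\infty$ (the intersection over $p$ is justified because continuous modifications are indistinguishable), and uniqueness because cylinder sets generate the Borel $\sigma$-field of $\C([0,1],\R^k)$.
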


To apply this result in our case, it is sufficient that $(U\mathbf{H}(s), V\mathbf{B}(s))_{s \in [0,1]} \in \C^\alpha ( [0,1], L^2 (\Omega, \R^{2K})) $. 

\begin{proposition}\label{prop:construct_Gaussian_field}
    If $(\mathbf{H(s)}, \mathbf{B}(s))_{s \in [0,1]} \in \C^\alpha ( [0,1], L^2 (\Omega, \R^{2K}))$ for some $\alpha >0$, then there exists a Gaussian random variable $(\tilde{\mathbf{Z}}^h, \tilde{\mathbf{Z}}^b)$ on $\C([0,1];\R^{2K})$ such that $\forall s,t\in[0,1]$, 
    \[\E[(\tilde{\mathbf{Z}}^h, \tilde{\mathbf{Z}}^b)(s)(\tilde{\mathbf{Z}}^h, \tilde{\mathbf{Z}}^b)(t)^\top] = K_{U\mathbf{H}, V\mathbf{B}}(s,t). \]
\end{proposition}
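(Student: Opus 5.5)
The plan is to apply Theorem~\ref{thm:Kolmogorov-Chentsov} with $k=2K$ and the kernel $K:=K_{U\mathbf{H},V\mathbf{B}}$ from \eqref{eq:covariance_kernel_gaussian}, and to take $(\tilde{\mathbf{Z}}^h,\tilde{\mathbf{Z}}^b)$ to be a random variable with law the resulting measure $\mu$. By construction $\mu$ is centered Gaussian on $\C([0,1];\R^{2K})$ and reproduces $K$ as its covariance, which is exactly the claim. So the only work is to check the hypotheses of the theorem.

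First, $K$ is symmetric in the sense $K(s,t)=K(t,s)^\top$, and it is positive semidefinite as a kernel: for any finite family $(s_i,a_i)$ we have $\sum_{i,j} a_i^\top K(s_i,s_j)a_j = \E[|\sum_i a_i^\top (U\mathbf{H},V\mathbf{B})(s_i)|^2]\ge 0$. Second, we need the increment condition. Writing $X(s):=(U\mathbf{H}(s),V\mathbf{B}(s))\in L^2(\Omega,\R^{2K})$, bilinearity gives
\[
\mathrm{Tr}[K(s,s)+K(t,t)-2K(s,t)] = \E\big[\|X(s)-X(t)\|_{2,2K}^2\big].
\]
Using the independence of $(U,V)$ from $(\mathbf{H},\mathbf{B})$, this equals
\[
\varu^2\,\E[\|\mathbf{H}(s)-\mathbf{H}(t)\|^2_{2,K}]+\varv^2\,\E[\|\mathbf{B}(s)-\mathbf{B}(t)\|^2_{2,K}].
\]
By the Hölder assumption on $s\mapsto(\mathbf{H}(s),\mathbf{B}(s))$ in $L^2(\Omega,\R^{2K})$, this is at most $(\varu^2\vee\varv^2)\,C\,|s-t|^{2\alpha}$, which is the required bound with the same $\alpha$.

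Theorem~\ref{thm:Kolmogorov-Chentsov} then yields the measure $\mu$ on $\C([0,1],\R^{2K})$, and realizing it on a suitable (possibly enlarged) probability space, independently of everything else if needed, gives the field. Its covariance equals $K_{U\mathbf{H},V\mathbf{B}}$ pointwise, and the paths are moreover $\beta$-Hölder for every $\beta<\alpha$. Nothing here is a real obstacle. The only points needing care are the identity between the trace of the increment kernel and the $L^2$ increment, where the vanishing of the cross terms $\E[UV]=0$ is what produces the block-diagonal structure, and the remark that the field is consistent with the $L^2$-valued Gaussian field of Proposition~\ref{prop:construct_Gaussian_field_L2} by uniqueness of the covariance operator.
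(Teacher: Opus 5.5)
Your proposal is correct and follows the paper's argument: identify $\mathrm{Tr}[K(s,s)+K(t,t)-2K(s,t)]$ with $\varu^2\,\E[\|\mathbf{H}(s)-\mathbf{H}(t)\|^2]+\varv^2\,\E[\|\mathbf{B}(s)-\mathbf{B}(t)\|^2]$, bound it by $C|s-t|^{2\alpha}$ via the Hölder assumption, and invoke Theorem~\ref{thm:Kolmogorov-Chentsov}; your explicit positive-semidefiniteness check is a useful step that the paper leaves implicit. One small correction: the trace identity does not need $\E[UV]=0$, since off-diagonal blocks never enter a trace. The only thing used there is the independence of $(U,V)$ from $(\mathbf{H},\mathbf{B})$, which lets you factor out $\varu^2$ and $\varv^2$.
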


In the following, we only use the law of $(\mathbf{Z}^h, \mathbf{Z}^b)$.
Without loss of generality, we shall thus assume that $(\mathbf{Z}^h, \mathbf{Z}^b)$ has a.s. continuous trajectories.

\begin{proof} From 
    \(
        K_{U\mathbf{H}, V\mathbf{B}}(s,t) 
        = \begin{pmatrix}
        \varu^2\boldsymbol{\Gamma}^H(s,t) & 0\\ 0 &\varv^2\boldsymbol{\Gamma}^B(s,t)
    \end{pmatrix},
    \)
    it follows that, \[ \mathrm{Tr}[K_{U\mathbf{H}, V\mathbf{B}}(s,t))] = \varu^2\mathrm{Tr}[\boldsymbol{\Gamma}^H(s,t)] + \varv^2 \mathrm{Tr}[\boldsymbol{\Gamma}^B(s,t)]. \] We notice that $\mathrm{Tr}[\E[\mathbf{H}(s)\mathbf{H}(t)^\top]] = \E[\langle \mathbf{H}(s), \mathbf{H}(t)\rangle]$ and similarly for $\mathbf{B}$, and so
    \begin{multline*}
        \mathrm{Tr}[K_{U\mathbf{H}, V\mathbf{B}}(s,s) + K_{U\mathbf{H}, V\mathbf{B}}(t,t) -2K_{U\mathbf{H}, V\mathbf{B}}(s,t)] \\= \varu^2\mathrm{Tr}[\boldsymbol{\Gamma}^H(s,s) + \boldsymbol{\Gamma}^H(t,t) -2\boldsymbol{\Gamma}^H(s,t)] + \varv^2\mathrm{Tr}[\boldsymbol{\Gamma}^B(s,s) + \boldsymbol{\Gamma}^B(t,t) -2\boldsymbol{\Gamma}^B(s,t)]\\=
        \E[\|\mathbf{H}(s) - \mathbf{H}(t)\|^2_{2,k}] + \E[\|\mathbf{B}(s) - \mathbf{B}(t)\|^2_{2,k}] \leq C|t-s|^{2\alpha},
    \end{multline*}
    where we used our assumption for the last inequality. 
    Applying Theorem~\ref{thm:Kolmogorov-Chentsov} now constructs the desired Gaussian measure. 
\end{proof}

\begin{rem}[Independence structure]\label{rem:gaussian_independence}
    From the covariance structure defined by $C_{U\bfH, V\bfB}$, we get that $\bfZ^h$ is independent from $\bfZ^b$, even if $\bfH$ and $\bfB$ are not themselves independent.
\end{rem}

\begin{rem}[Exponential Moments]\label{rem:GaussianHasExponentialMoments}
From Fernique's theorem, the constructed Gaussian measure $\mu := \L((\mathbf{Z}^h, \mathbf{Z}^b)) \in\mathcal{P}(\C([0,1],\R^{2K}))$ has exponential moments of all orders. 
More precisely, for every $t\in \R$,
\[ \int\exp(t\|z\|_{\infty})d\mu(z) < C_t, \] 
for some $C_t > 0 $ that only depends on $t$ and the covariance operator of $\mu$. 
\end{rem}

\begin{rem}[Non-anticipative bounds with respect to $k$]\label{rem:progressively_build_Gaussian_field}
In the following, we require bounds on exponential moments of the Gaussian field that only depend on the past.
To get this property, we notice that the first $k$ components $(\mathbf{Z}^h_{\wedge k-1}, \mathbf{Z}^b_{\wedge k-1})$ of the Gaussian field $(\mathbf{Z}^h, \mathbf{Z}^b)$ themselves form a Gaussian field.
If further $(\mathbf{H}_{\wedge k-1}, \mathbf{B}_{\wedge k-1})_{s \in [0,1]} \in \C^\alpha([0,1],L^2(\Omega,\R^{2k}))$, then we can apply Remark~\ref{rem:GaussianHasExponentialMoments} to bound $\E[\exp(t\|(\mathbf{Z}^h_{\wedge k-1}, \mathbf{Z}^b_{\wedge k-1})\|_{\infty})]$ by a constant that only depends on $t$ and $C_{U\mathbf{H}_{\wedge k-1}, V\mathbf{B}_{\wedge k-1}}$, which is precisely the covariance associated to $(\mathbf{Z}^h_{\wedge k-1}, \mathbf{Z}^b_{\wedge k-1})$.
\end{rem}

We close this section by recalling the Lipschitz-regularity property of the {Gaussian projection}, see e.g. \cite[Theorem 2.2]{cuestaalbertos1996L2Wasserstein}.

\begin{theorem}\label{teo:GaussianW2control}
    Let $\mu, \nu$ be Radon probability measures of defined on a Hilbert space $\H$, such that $\int_\H\|x\|_{\H}^2 d\mu(x)$ and $\int_\H\|x\|_{\H}^2 d\nu(x)$ are finite. Let $G_\mu$ be the Gaussian probability measure defined on $\H$ with the same mean and covariance operator as $\mu$. Then, the following holds:
    \[W_2(\mu, \nu) \geq W_{2}(G_\mu, G_\nu) = \|m_\mu - m_\nu\|^2  + \mathrm{tr}(C_\mu + C_\nu - 2(C_\mu^{1/2}C_\nu C_\mu^{1/2})^{1/2})\]
\end{theorem}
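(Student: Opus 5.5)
The statement should be read with $W_2^2$ on both sides of the identity. The plan is to prove the classical Gelbrich lower bound, valid for arbitrary square-integrable laws, and then show that it is attained when both laws are Gaussian. Combining the two gives $W_2^2(\mu,\nu) \geq \text{Gelbrich}(\mu,\nu) = W_2^2(G_\mu,G_\nu)$. The Gelbrich quantity depends only on means and covariances, so it is the same for $(\mu,\nu)$ and $(G_\mu,G_\nu)$.

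\emph{Step 1 (lower bound).} Fix any coupling $\pi$ of $(\mu,\nu)$ with $(X,Y)\sim\pi$, and let $R := \E[(X-m_\mu)\otimes(Y-m_\nu)]$ be the cross-covariance operator. Expanding gives
\[
\E\|X-Y\|_\H^2 = \|m_\mu-m_\nu\|^2+\mathrm{tr}(C_\mu)+\mathrm{tr}(C_\nu)-2\,\mathrm{tr}(R).
\]
The joint covariance $\begin{pmatrix} C_\mu & R\\ R^* & C_\nu\end{pmatrix}$ on $\H\times\H$ is positive. By the standard factorisation lemma for positive block operators, this forces $R=C_\mu^{1/2}TC_\nu^{1/2}$ for some contraction $T$ with $\|T\|_{op}\le 1$. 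Since $C_\mu^{1/2}$ and $C_\nu^{1/2}$ are Hilbert--Schmidt, their product is trace class. Cyclicity and trace-norm duality then give
\[
\mathrm{tr}(R)=\mathrm{tr}\big(T\,C_\nu^{1/2}C_\mu^{1/2}\big)\le \|C_\nu^{1/2}C_\mu^{1/2}\|_{1}=\mathrm{tr}\,\big|C_\nu^{1/2}C_\mu^{1/2}\big| =\mathrm{tr}\big((C_\mu^{1/2}C_\nu C_\mu^{1/2})^{1/2}\big).
\]
Taking the infimum over $\pi$ yields $W_2^2(\mu,\nu)$ bounded below by the right-hand side of the stated formula.

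\emph{Step 2 (attainment for Gaussians).} Write the polar decomposition $C_\nu^{1/2}C_\mu^{1/2}=V\,|C_\nu^{1/2}C_\mu^{1/2}|$ with $V$ a partial isometry, and set $T:=V^*$. Then $\mathrm{tr}(TC_\nu^{1/2}C_\mu^{1/2})=\mathrm{tr}|C_\nu^{1/2}C_\mu^{1/2}|$, so equality holds in Step 1. Because $\|T\|\le1$, the block operator built from $R=C_\mu^{1/2}TC_\nu^{1/2}$ is positive and trace class. Proposition~\ref{prop:construct_Gaussian_field_L2} in its general Hilbert-space form, i.e.\ existence of a centred Gaussian measure for a positive trace-class covariance, then provides a Gaussian law on $\H\times\H$ with this covariance. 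Shifting it by $(m_\mu,m_\nu)$, its marginals are exactly $G_\mu$ and $G_\nu$. This coupling achieves the lower bound of Step 1, so $W_2^2(G_\mu,G_\nu)$ equals the Gelbrich quantity.

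\emph{Main obstacle.} I expect the delicate part to be the infinite-dimensional operator theory: the factorisation $R=C_\mu^{1/2}TC_\nu^{1/2}$ when the covariances are not invertible, and the trace-class and duality manipulations. For the factorisation I would first try the Douglas range-inclusion lemma, defining $T$ on $\overline{\mathrm{ran}}\,C_\nu^{1/2}$ and extending it by zero. If that becomes messy, the fallback is to prove everything on finite-dimensional projections $P_n\H$ and pass to the limit. Both $W_2$ and the trace functional are continuous under $P_n\to\mathrm{Id}$ for laws with finite second moments, the latter because $C_\mu^{1/2},C_\nu^{1/2}$ are Hilbert--Schmidt.
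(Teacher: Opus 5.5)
Your proof is correct. You are also right that the displayed identity must be read with $W_2^2$; the inequality $W_2(\mu,\nu)\ge W_2(G_\mu,G_\nu)$ itself is unaffected by squaring.

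The paper gives no argument of its own here. It simply imports the result from \cite[Theorem 2.2]{cuestaalbertos1996L2Wasserstein}, so your proposal is a genuinely self-contained alternative. You prove the Gelbrich lower bound for arbitrary square-integrable laws via the factorisation $R=C_\mu^{1/2}TC_\nu^{1/2}$ and trace-norm duality. You then show it is attained by an explicit jointly Gaussian coupling built from the polar decomposition of $C_\nu^{1/2}C_\mu^{1/2}$. This yields both the inequality and the closed form for $W_2^2(G_\mu,G_\nu)$ at once, and it exhibits an optimal Gaussian coupling.

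If one only wanted the inequality, a shorter route avoids all operator theory. For any coupling $\pi$ of $(\mu,\nu)$, take the Gaussian measure on $\H\times\H$ with the same mean and covariance as $\pi$. Its marginals are $G_\mu$ and $G_\nu$, and its cost equals $\int\|x-y\|^2\,d\pi$, because that cost depends only on first and second moments. The closed-form expression would then still require the Gaussian computation, which is exactly what your Steps 1--2 supply.

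Your ``main obstacle'' is milder than you fear. Cauchy--Schwarz gives $|\langle Rk,h\rangle|\le\|C_\mu^{1/2}h\|\,\|C_\nu^{1/2}k\|$. This directly defines a well-posed contraction $T$ on $\overline{\mathrm{ran}}\,C_\nu^{1/2}$, extended by zero, so the finite-dimensional projection fallback is unnecessary.

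If you did use that fallback, its justification as written is incomplete. Since $(P_nC_\mu P_n)^{1/2}\neq P_nC_\mu^{1/2}P_n$, continuity of the trace term does not follow just from $C_\mu^{1/2}$ being Hilbert--Schmidt. You would need something like the Powers--St\o{}rmer inequality $\|A^{1/2}-B^{1/2}\|_2^2\le\|A-B\|_1$.
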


This gives us the following estimate.

\begin{corollary}\label{cor:GaussianW2inequality}
    Let $(\mathbf{Z}^h, \mathbf{Z}^b)$, $(\mathbf{Z}^{\star,h}, \mathbf{Z}^{\star,b})$ be Gaussian in $L^2([0,1]; \R^{2K})$ with covariance structure given by $C_{U\mathbf{H}, V\mathbf{B}}$, $C_{U\mathbf{H}^{\star}, V\mathbf{B}^{\star}}$ respectively. Then, for every $s\in[0,1]$, 
    \[ W_2^2 ( (\L(\mathbf{Z}^h \vert_{[0,s]}),\L(\mathbf{Z}^{\star,h}\vert_{[0,s]})) \leq W_2^2 ( \L(U \mathbf{H} \vert_{[0,s]}), \L(U \mathbf{H}^\star \vert_{[0,s]} )) \leq \varu^2 \int_0^s \E [ \| \mathbf{H} (r) - \mathbf{H}^{\star} (r) \|^2 ] \d r, \]
    \[ W_2^2 ( (\L(\mathbf{Z}^b \vert_{[s,1]}),\L(\mathbf{Z}^{\star,b}\vert_{[s,1]})) \leq W_2^2 ( \L(V \mathbf{B} \vert_{[s,1]}), \L(V\mathbf{B}^\star \vert_{[s,1]}) ) \leq \varv^2 \int_s^1 \E [ \| \mathbf{B} (r) - \mathbf{B}^{\star} (r) \|^2 ] \d r, \]
\end{corollary}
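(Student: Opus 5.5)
The plan is to deduce Corollary~\ref{cor:GaussianW2inequality} from Theorem~\ref{teo:GaussianW2control} applied in the Hilbert space $\H_s := L^2([0,s],\R^K)$, followed by an explicit coupling bound. The argument is written for $\mathbf{Z}^h$ on $[0,s]$; the case of $\mathbf{Z}^b$ on $[s,1]$ is identical, replacing $(U,\mathbf{H},\varu)$ by $(V,\mathbf{B},\varv)$ and $[0,s]$ by $[s,1]$.

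\textbf{First inequality.} Restriction to $[0,s]$ is a bounded linear map $L^2([0,1],\R^{2K})\to\H_s$. By Remark~\ref{rem:gaussian_independence}, the $h$-block of $(\mathbf{Z}^h,\mathbf{Z}^b)$ is the marginal $\mathbf{Z}^h$. Hence $\L(\mathbf{Z}^h|_{[0,s]})$ is the centered Gaussian measure on $\H_s$ whose covariance operator has kernel $\varu^2\boldsymbol{\Gamma}^H(r,r')$, $r,r'\in[0,s]$. This coincides with the mean and covariance of $\mu:=\L(U\mathbf{H}|_{[0,s]})$. Indeed, $\E[U\mathbf{H}]=\E[U]\E[\mathbf{H}]=0$ by independence and centering of $U$, and $\E[U^2\mathbf{H}(r)\mathbf{H}(r')^\top]=\varu^2\boldsymbol{\Gamma}^H(r,r')$. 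Both $\mu$ and $\nu:=\L(U\mathbf{H}^\star|_{[0,s]})$ have finite second moment, since $\E\|U\mathbf{H}\|_{\H_s}^2=\varu^2\int_0^s\E\|\mathbf{H}(r)\|^2\,\d r<\infty$ because $\mathbf{H}\in L^2(\Omega,\C([0,1],\R^{K}))$. They are Radon because $\H_s$ is separable. So $G_\mu=\L(\mathbf{Z}^h|_{[0,s]})$ and $G_\nu=\L(\mathbf{Z}^{\star,h}|_{[0,s]})$, and Theorem~\ref{teo:GaussianW2control} gives $W_2(G_\mu,G_\nu)\le W_2(\mu,\nu)$. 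Squaring gives the first inequality. The statement of that theorem mixes squared and unsquared quantities, so I would read it as $W_2^2(G_\mu,G_\nu)=\|m_\mu-m_\nu\|^2+\mathrm{tr}(\cdots)\le W_2^2(\mu,\nu)$, which is the only fact used.

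\textbf{Second inequality.} I would exhibit an explicit coupling. Both laws only depend on the joint law of each process with an independent copy of $U$, so one may take the same $U$ independent of the pair $(\mathbf{H},\mathbf{H}^\star)$, under any coupling of $\mathbf{H}$ and $\mathbf{H}^\star$. If $\mathbf{H}$ and $\mathbf{H}^\star$ live on a common space, as they will in the fixed-point argument, where both are $\Finit$-measurable functions of $(W_{\mathrm{in}},W_{\mathrm{out}})$, use that coupling. Then $(U\mathbf{H}|_{[0,s]},U\mathbf{H}^\star|_{[0,s]})$ is a coupling of $(\mu,\nu)$. Its cost is
\[
\E\int_0^s U^2\|\mathbf{H}(r)-\mathbf{H}^\star(r)\|^2\,\d r=\varu^2\int_0^s\E\|\mathbf{H}(r)-\mathbf{H}^\star(r)\|^2\,\d r,
\]
by Fubini and the independence of $U$ from $(\mathbf{H},\mathbf{H}^\star)$. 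Taking the infimum over couplings concludes.

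\textbf{Main obstacle.} The only delicate point is the identification of $\L(\mathbf{Z}^h|_{[0,s]})$ with the Gaussian projection $G_\mu$ in $\H_s$. This requires checking that the covariance operator of the restriction is the integral operator with kernel $\varu^2\boldsymbol{\Gamma}^H$ on $[0,s]^2$, which is immediate from Proposition~\ref{prop:construct_Gaussian_field_L2} and Fubini. It also requires that a centered Gaussian measure on a Hilbert space is determined by its covariance operator.
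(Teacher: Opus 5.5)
Your proposal is correct and is essentially the paper's proof. You identify $\L(\mathbf{Z}^h|_{[0,s]})$ as the Gaussian measure on $L^2([0,s],\R^K)$ with the same mean and covariance as $\L(U\mathbf{H}|_{[0,s]})$ to apply Theorem~\ref{teo:GaussianW2control}, and then bound the remaining $W_2^2$ by the trivial coupling $(U\mathbf{H}|_{[0,s]},U\mathbf{H}^\star|_{[0,s]})$ with $U$ independent of $(\mathbf{H},\mathbf{H}^\star)$, exactly as the paper does; the appeal to Remark~\ref{rem:gaussian_independence} is unnecessary, since any marginal of a Gaussian field is Gaussian with the corresponding covariance block, but it does no harm.
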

\begin{proof}
    We notice see that $(\mathbf{Z}^h|_{[0,s]}, \mathbf{Z}^b|_{[0,s]})$ is still a Gaussian random variable, with covariance structure given by that of $(U\mathbf{H}|_{[0,s]}, V\mathbf{B}|_{[0,s]})$, and similarly for $\mathbf{Z}^\star$. 
    The first inequality follows from Theorem~\ref{teo:GaussianW2control}, and the second by choosing the trivial coupling to get
    \[W_2^2 ( \L(U \mathbf{H} \vert_{[0,s]}), \L(U \mathbf{H}^\star \vert_{[0,s]}) ) \leq \E [ \| U\mathbf{H} (r) - U\mathbf{H}^{\star} (r) \|_{L^2[0,s]}^2 ]. \] 
    The independence of $U$ from $(\mathbf{H}, \mathbf{H}^\star)$ yields the result.
\end{proof}

\subsection{Well-posedness for the limit system}\label{ssec:main_proof_well_posedness}

Leveraging the dependence of $(F_k^{\mathbf{H},\mathbf{B}},G_k^{\mathbf{H},\mathbf{B}})$ on the past, we are going to build $(\mathbf{H}_{\wedge k},\mathbf{B}_{\wedge k})$ by finite induction on $k$.
A fixed-point argument with respect to $s$ is needed at each step to handle the non-linearity in $\L(H_k,B_k)$.
Let us first establish a few preliminary estimates.

Given $k \in [1:K-1]$, and a $\F_W$-measurable random variable $(\mathbf{H},\mathbf{B}) \in L^2 ( \Omega, \C([0,1],\R^{2(k+1)}))$, we define a $\C([0,1],\R^{2(k+1)})$-valued random variable $(\overline{\mathbf{H}},\overline{\mathbf{B}})$ by $(\overline{\mathbf{H}}_{\wedge k-1},\overline{\mathbf{B}}_{\wedge k-1}) := ({\mathbf{H}}_{\wedge k-1},{\mathbf{B}}_{\wedge k-1})$ and $s \in [0,1] \mapsto (\overline{H}_{k}(s),\overline{B}_{k}(s))$ is defined by
\begin{multline} \label{eq:lemLimH}
\overline{H}_{k} (s) := W_{\mathrm{in}}\cdot x + \int_0^s \varv^2 \E [ \rho'( P_k (r) ) \E[ H_k (r) \bnabla_{\bfz^b} F_k^{\mathbf{H},\mathbf{B}} (r,  \mathbf{Z}^{h},\mathbf{Z}^{b} ) \vert  \mathbf{Z}^{h},\mathbf{Z}^{b} ] \,] \cdot\mathbf{B}_{\wedge k-1}(r) \\
\qquad\qquad\qquad\qquad\qquad+ \E [ \rho ( P_k (r) ) G^{\mathbf{H},\mathbf{B}}_k (r, \mathbf{Z}^{h},\mathbf{Z}^{b}) \vert \F_W] \d r, 
\end{multline}\vspace{-1cm}
\begin{multline} \label{eq:lemLimB}
\overline{B}_{k} (s) := W_{\mathrm{out}}\cdot \nabla \ell ( y_k ) - \int_s^1 \big\{ \E \big[ \rho'( P_k(r) ) F_k^{\mathbf{H},\mathbf{B}} (r, \mathbf{Z}^{h},\mathbf{Z}^{b}) Q_k(r) \vert \F_W \big] \\
\qquad\qquad+ \varu^2 \E \big[ \rho''( P_k (r))Q_k(r) ( H_k(r) + \E [ H_k(r) \bnabla_{\bfz^h} F_k^{\mathbf{H},\mathbf{B}} ( r, \mathbf{Z}^{h},\mathbf{Z}^{b}) \vert  \mathbf{Z}^{h},\mathbf{Z}^{b} ] \cdot\mathbf{H}_{\wedge k-1}(r) )  \vert \F_W\big] \\
\qquad \qquad \qquad +\varu^2 \E \big[ \rho' \big( P_k(r) ) \E [ B_k(r) \bnabla_{\bfz^h} G_k^{\mathbf{H},\mathbf{B}} ( r, \mathbf{Z}^{h},\mathbf{Z}^{b}) \vert  \mathbf{Z}^{h},\mathbf{Z}^{b} ] \big] \cdot\mathbf{H}_{\wedge k-1}(r) \big\} \d r,
\end{multline}
where
\[
\begin{cases}
P_k (s) := Z_k^{h} (s) + \E [ H_k (s) F^{\mathbf{H},\mathbf{B}}_k (s, \mathbf{Z}^{h},\mathbf{Z}^{b}) \vert  \mathbf{Z}^{h},\mathbf{Z}^{b} ], \\
Q_k (s) := Z_k^{b} (s) + \E [ B_k (s) G^{\mathbf{H},\mathbf{B}}_k (s, \mathbf{Z}^{h},\mathbf{Z}^{b}) \vert  \mathbf{Z}^{h},\mathbf{Z}^{b} ], \\
y_k := \E[H_k(1) W_{\mathrm{out}}].
\end{cases}
\]
In the above definitions, $(\mathbf{Z}^{h},\mathbf{Z}^{b})$ is a centred Gaussian field in $L^2([0,1],\R^{2k})$ with the same covariance operator as $(U \mathbf{B},V \mathbf{H})$, for some independent centered $U,V$ with variance $\varu^2$, $\varv^2$ and independent of $\F_W$, using Propositions~\ref{prop:construct_Gaussian_field_L2}-\ref{prop:construct_Gaussian_field}.  
Up to replacing $\Omega$ by some larger product space $\Omega \times \tilde\Omega$, we can assume that $(\mathbf{Z}^{h},\mathbf{Z}^{b})$ is built on the same probability space $\Omega$.
We assume it to be defined on $\Omega$ and independent of $(\mathbf{H},\mathbf{B})$.
We notice that this definition only depends on $\mathbf{H},\mathbf{B}$ through $\L(\mathbf{H},\mathbf{B})$. 
To alleviate notations, we write \(F_k^{\mathbf{H},\mathbf{B}} ( s, \mathbf{Z}^{h},\mathbf{Z}^{b})\) instead of \(F_k^{\mathbf{H},\mathbf{B}} ( s, \mathbf{Z}^{h}(s),\mathbf{Z}^{b}(s))\) and so on. We also omit the dependence on $k$ for $(\mathbf{H},\mathbf{B},\overline{\mathbf{H}},\overline{\mathbf{B}})$ -- this will be clear from context.

In the next subsection we provide key a priori estimates for the map $(\bfH, \bfB) \mapsto (\overline{\bfH}, \overline{\bfB})$.
\subsubsection{A priori estimates}

\begin{lemma}[A priori bounds] \label{lem:LimApriori}
If $s \mapsto ( \mathbf{H}_{\wedge k -1}(s),\mathbf{B}_{\wedge k -1}(s)) \in \C^{1/2}([0,1], L^2(\Omega,\R^{2k}))$, then:
\begin{enumerate}[label=(\roman*),ref=(\roman*)]
\item\label{lemLim:boundH} There exists $C_k > 0$ that only depends on $\L( \mathbf{H}_{\wedge k -1},\mathbf{B}_{\wedge k -1})$ such that
\[ \forall s \in [0,1], \qquad \E\big[\sup_{r\in[0,s]}\overline{H}^2_k(r)\big] \leq C_k + C_k\int_0^s \E\big[\sup_{r\in[0,t]}H_k^2(r)\big] \d t. \]
\item\label{lemLim:boundB} 
There exists $C^H_k >0$ that only depends on $\L( \mathbf{H},\mathbf{B}_{\wedge k -1})$ -- now including $H_k$ -- such that
\[ \forall s \in [0,1], \qquad \E\big[\sup_{r\in[s,1]}\overline{B}_k^2(r)\big] \leq C_k^H + C_k^H\int_s^1 \E\big[\sup_{r\in[t,1]}B_k^2(r)\big] \d t. \]
\end{enumerate}
\end{lemma}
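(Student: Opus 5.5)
Both bounds follow the same scheme: write $\overline{H}_k$ (resp.\ $\overline{B}_k$) as its boundary term plus an integral, bound the integrand pointwise in $r$ by a deterministic coefficient times either a past-dependent quantity or $|H_k(r)|$ (resp.\ $|B_k(r)|$), and then apply Cauchy--Schwarz in $r$. The first step is to control the Gaussian field. Since $(\mathbf{H}_{\wedge k-1},\mathbf{B}_{\wedge k-1})$ is $1/2$-Hölder in $L^2$, Proposition~\ref{prop:construct_Gaussian_field} and Remark~\ref{rem:progressively_build_Gaussian_field} show that $\Lambda:=\|(\mathbf{Z}^h_{\wedge k-1},\mathbf{Z}^b_{\wedge k-1})\|_\infty$ has exponential moments of all orders. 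These moments depend only on $\L(\mathbf{H}_{\wedge k-1},\mathbf{B}_{\wedge k-1})$. We then apply Lemma~\ref{lem:RegSkel} with $M:=1\vee\Lambda$. This gives, almost surely, $|F_k|\le C(1\vee\Lambda)\,\|\mathbf{H}_{\wedge k-1}\|_\infty$ and $\|\bnabla F_k\|\le e^{\hat C(1\vee\Lambda)}\|\mathbf{H}_{\wedge k-1}\|_\infty$, with the analogous bounds for $G_k$ in terms of $\mathbf{B}_{\wedge k-1}$. The constants depend only on $M_{\wedge k-1}$.

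For \ref{lemLim:boundH}, we work on the two terms of the integrand in \eqref{eq:lemLimH} separately.

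\begin{itemize}
\item \textbf{First term.} We write $\mathbf{A}(r):=\E[H_k(r)\bnabla_{\bfz^b}F_k\mid \mathbf{Z}]$. Since $H_k$ is independent of $\mathbf{Z}$ and $F_k$ is $\sigma(\mathbf{H}_{\wedge k-1},\mathbf{Z})$-measurable, Cauchy--Schwarz gives $|\mathbf{A}(r)|\le \E^{1/2}[H_k(r)^2]\,e^{\hat C(1\vee\Lambda)}\,\E^{1/2}[\|\mathbf{H}_{\wedge k-1}\|_\infty^2]$. Taking the outer expectation and using $|\rho'|\le C_\rho$ together with the exponential moments of $\Lambda$, the deterministic coefficient multiplying $\mathbf{B}_{\wedge k-1}(r)$ is at most $C_k\,\E^{1/2}[H_k(r)^2]$.
\item \textbf{Second term.} We use the linear growth $|\rho(P_k)|\le C(1+|Z^h_k|+|\E[H_kF_k\mid\mathbf{Z}]|)$. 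The conditional expectation is again bounded by $\E^{1/2}[H_k^2]\,C(1\vee\Lambda)\,\E^{1/2}[\|\mathbf{H}_{\wedge k-1}\|^2_\infty]$. The factor $|Z^h_k(r)|$ has second moment $\varu^2\E[H_k(r)^2]$. We then apply Cauchy--Schwarz conditionally on $\F_W$, using that $\mathbf{Z}$ is independent of $\F_W$; the only $\F_W$-measurable factor left is $\|\mathbf{B}_{\wedge k-1}\|_\infty$. This bounds the term by $C_k(1+\E^{1/2}[H_k(r)^2])\,\|\mathbf{B}_{\wedge k-1}\|_\infty$.
\end{itemize}

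Hence $\sup_{r\le s}|\overline H_k(r)|\le |W_{\mathrm{in}}\cdot x|+C_k\|\mathbf{B}_{\wedge k-1}\|_\infty\int_0^s(1+\E^{1/2}[H_k(r)^2])\,\d r$. Squaring, taking expectations and applying Jensen's inequality to the time integral gives the claim. Here we use $\E[H_k(r)^2]\le \E[\sup_{r'\le t}H_k^2(r')]$ for $r\le t$, and that $\E[\|\mathbf{B}_{\wedge k-1}\|_\infty^2]<\infty$, which holds because $(\mathbf{H},\mathbf{B})\in L^2(\Omega,\C)$.

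For \ref{lemLim:boundB}, the argument is symmetric and runs backward from $s=1$. The boundary term satisfies $|W_{\mathrm{out}}\cdot\nabla\ell(y_k)|\le |W_{\mathrm{out}}|\,C_\ell(1+|y_k|)$, and $|y_k|\le \E^{1/2}[H_k(1)^2]\,\E^{1/2}[|W_{\mathrm{out}}|^2]$ depends on $\L(H_k)$. This is why the constant $C^H_k$ involves $H_k$. In the integrand of \eqref{eq:lemLimB} there are three kinds of terms.

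\begin{itemize}
\item Terms containing $Q_k$ are affine in $\E^{1/2}[B_k(r)^2]$, with coefficients controlled by moments of $\Lambda$ and of $H_k$.
\item The term $\rho''(P_k)Q_k\big(H_k+\E[H_k\bnabla_{\bfz^h}F_k\mid\mathbf{Z}]\cdot\mathbf{H}_{\wedge k-1}\big)$ is bounded by $|Q_k|$ times an $\F_W$-measurable factor $|H_k(r)|+\|\mathbf{H}_{\wedge k-1}\|_\infty$ times moments.
\item The last term is bounded like the first term in \ref{lemLim:boundH}, with $\E^{1/2}[B_k^2]$ in place of $\E^{1/2}[H_k^2]$.
\end{itemize}

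This yields $\sup_{r\ge s}|\overline B_k(r)|\le X+Y\int_s^1(1+\E^{1/2}[B_k(r)^2])\,\d r$, where $X$ and $Y$ are $\F_W$-measurable with finite second moments. Here $Y\lesssim |H_k|_\infty+\|\mathbf{H}_{\wedge k-1}\|_\infty$. We conclude exactly as in \ref{lemLim:boundH}.

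The main obstacle I expect is keeping track of measurability and independence when splitting expectations. The conditional expectations given $\mathbf{Z}$ integrate over $\F_W$, while the outer $\E[\cdot\mid\F_W]$ integrates over $\mathbf{Z}$. The exponential factor $e^{\hat C\Lambda}$ must always end up inside a $\mathbf{Z}$-expectation, where Fernique's theorem (Remark~\ref{rem:GaussianHasExponentialMoments}) controls it, and never multiply an $\F_W$-random quantity. This requires using that $P_k$, $Q_k$ and the skeleton gradients are functions of $\mathbf{Z}$ alone once the $\F_W$-expectations have been taken.
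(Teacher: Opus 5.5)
Your proposal is correct and follows the paper's proof essentially step by step. You bound each integrand, via Lemma~\ref{lem:RegSkel}, the exponential moments of the past Gaussian field and the independence of $(\mathbf{Z}^h,\mathbf{Z}^b)$ from $\Finit$, by an $\Finit$-measurable factor (built from $\mathbf{B}_{\wedge k-1}$, resp.\ from $\mathbf{H}$) times a deterministic quantity affine in $\E^{1/2}[H_k(r)^2]$ (resp.\ $\E^{1/2}[B_k(r)^2]$); you then square, apply Jensen in time and take expectations, exactly as the paper does. The differences are cosmetic: you use the time-supremum $\Lambda$ of the past field instead of the pointwise $M_Z(r)$, and your identity $\E[|Z^h_k(r)|^2]=\varu^2\E[H_k(r)^2]$ should be read for a.e.\ $r$, since the $k$-th component of the field need not be continuous. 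This is why the paper states it in the integrated form $\int_0^s\E[|Z^h_k(r)|^2]\,\d r=\varu^2\int_0^s\E[H_k(r)^2]\,\d r$, which is all your argument needs.
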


\begin{proof}
In the following, the constants $C_k$ and $C^H_k$ may change from line to line, but always keeping the required dependence. We consider  $M_Z(r) := 1\vee\max_{i \in [0,k-1]} \vert Z^h_i (r) \vert \vee \vert Z^b_i (r) \vert$, which is a random variable that only depends on $(\mathbf{H}_{\wedge k-1}, \mathbf{B}_{\wedge k-1})$ through its law. 
From Remark~\ref{rem:progressively_build_Gaussian_field}, $\E[e^{tM_Z(r)}] \leq C_t$ for constant $C_t > 0$ that only depends on $\L(\mathbf{H}_{\wedge k-1}, \mathbf{B}_{\wedge k-1})$ and $t \in \R$. 

\medskip

\ref{lemLim:boundH}
We bound the derivative of $\overline{H}_k$, which is 
well-defined from \eqref{eq:lemLimB}. For $r \in [0,1]$,
\begin{align*}
    \big|\E [ \rho'(P_k(r)) \E[ H_k &(r) \bnabla_{\bfz^b} F_k^{\mathbf{H},\mathbf{B}} (r,  \mathbf{Z}^{h},\mathbf{Z}^{b} ) \vert  \mathbf{Z}^{h},\mathbf{Z}^{b} ] ] \cdot\mathbf{B}_{\wedge k-1}(r)\big| \\&\leq C_k \E [ |H_k (r)| \|\bnabla_{\bfz^b} F_k^{\mathbf{H},\mathbf{B}} (r,  \mathbf{Z}^{h},\mathbf{Z}^{b} ) \|_{2,k} ] \sup_{t\in[0,1]}\|\mathbf{B}_{\wedge k-1}(t)\|_{2,k} \\
    &\leq C_k \E \big[ |H_k (r)| e^{\cstHhat_k  M_Z(r)}\max_{\substack{j\in[0:k-1]\\t\in[0,1]}} |H_j(t)| \big] \sup_{t\in[0,1]}\|\mathbf{B}_{\wedge k-1}(t)\|_{2,k}\\
    &\leq C_k \E \big[ |H_k (r)| \max_{\substack{j\in[0:k-1]\\t\in[0,1]}} |H_j(t)| \big] \E\big[e^{\cstHhat_k  M_Z(r)}\Big]\sup_{t\in[0,1]}\|\mathbf{B}_{\wedge k-1}(t)\|_{2,k}\\
    &\leq C_k \E^{1/2} [ |H_k (r)|^2] \E^{1/2}\big[\max_{\substack{j\in[0:k-1]\\t\in[0,1]}} |H_j(t)|^2 \big] \E[e^{\cstHhat_k  M_Z(r)}]\sup_{t\in[0,1]}\|\mathbf{B}_{\wedge k-1}(t)\|_{2,k}\\
    &\leq C_k \sup_{t\in[0,1]}\|\mathbf{B}_{\wedge k-1}(t)\|_k \E^{1/2} [ |H_k (r)|^2],
\end{align*}
where we used the uniform boundedness of $\rho'$, the estimates from Lemma~\ref{lem:RegSkel}-\ref{lemRegSkel:Grad}, the independence of $\F_W$ and the Gaussian field $(\mathbf{Z}^h,\mathbf{Z}^b)$, the Cauchy-Schwarz inequality (C-S), and we absorbed terms from the past into the constant $C_k$.

Similarly, using the linear growth of $\rho$ and Lemma~\ref{lem:RegSkel}-\ref{lemRegSkel:as},
\begin{align*}
    \big|\E [ \rho ( P_k (r) ) G_k^{\mathbf{H},\mathbf{B}} (r,  \mathbf{Z}^{h},\mathbf{Z}^{b} ) \vert  \Finit ]\big| &\leq C_k\E [ (1 +|P_k (r)| )M_Z(r) \max_{\substack{j\in[0:k-1]\\t\in[0,1]}}|B_j(t)| \vert \F_W]\\
    &\leq C_k \max_{\substack{j\in[0:k-1]\\t\in[0,1]}}|B_j(t)| \{ \E [ M_Z(r) ] + \E [|P_k (r)|M_Z(r) ] \},
\end{align*}
where we used the independence of $\F_W$ and $(\mathbf{Z}^h,\mathbf{Z}^b)$ for the last line. From the definition of $P_k$ and the estimates from Lemma~\ref{lem:RegSkel}-\ref{lemRegSkel:as}, we see that $|P_k(r)| \leq C_k[ |Z_k^h(r)| + M_Z(r)\E^{1/2}[H_k^2(r)] ]$, so that 
\begin{multline*}
    \big|\E [ \rho ( P_k (r) ) G_k^{\mathbf{H},\mathbf{B}} (r,  \mathbf{Z}^{h},\mathbf{Z}^{b} ) \vert  \Finit ]\big|\\\leq C_k \max_{\substack{j\in[0:k-1]\\t\in[0,1]}}|B_j(t)| \{ 1 + \E [|Z^h_k (r)|M_Z(r) ] + \E^{1/2}[H_k^2(r)]\E [M_Z^2(r) ] \}\\
    \leq C_k \max_{\substack{j\in[0:k-1]\\t\in[0,1]}}|B_j(t)| \{ 1 + \E^{1/2} [|Z^h_k (r)|^2] + \E^{1/2}[H_k^2(r)] \},
\end{multline*}
where the last line used C-S and absorbed terms from the past into the constant $C_k$.

Taking squares and integrating over $s \in [0,1]$, 
\begin{align*}
\int_{0}^s |\partial_r {\overline{H}}_k (r)|^2\d r & \leq  C_k\sup_{t\in[0,1]}\|\mathbf{B}_{\wedge k-1}(t)\|_{2,k}^2\int_0^s\E [ |H_k (r)|^2]\d r \\&\qquad+ C_k \max_{\substack{j\in[0:k-1]\\t\in[0,1]}}|B_j(t)|^2 \left( 1 + \int_0^s\E [|Z^h_k (r)|^2] + \E[H_k^2(r)] \d r\right)\\
&\leq C_k\big(\sup_{t\in[0,1]}\|\mathbf{B}_{\wedge k-1}(t)\|_{2,k}^2 + \max_{\substack{j\in[0:k-1]\\t\in[0,1]}}|B_j(t)|^2 \big)\left( 1 + \int_0^s\E [|Z^h_k (r)|^2] + \E[H_k^2(r)] \d r\right).
\end{align*}
We now use the fact that $\int_0^s\E [|Z^h_k (r)|^2]\d r = \E[\|Z_k^h\|_{L^2(0,s)}^2] = \varu^2\E[\|H_k\|_{L^2(0,s)}^2]$ to obtain
\[\int_{0}^s |\partial_r {\overline{H}}_k(r)|^2\d r \leq C_k\bigg(\sup_{t\in[0,1]}\|\mathbf{B}_{\wedge k-1}(t)\|_{2,k}^2 + \max_{\substack{j\in[0:k-1]\\t\in[0,1]}}|B_j(t)|^2 \bigg)\left( 1 + \int_0^s\E[H_k^2(r)] \d r\right). \]
Taking squares in \eqref{eq:lemLimH}, Jensen's inequality now yields, for every $s\in[0,1]$,
\[|\overline{H}_k(s)|^2  \leq \|W_{\mathrm{in}}\|_{\dimin}^2\|x\|_{\dimin}^2  + C_k\bigg(\sup_{t\in[0,1]}\|\mathbf{B}_{\wedge k-1}(t)\|_{2,k}^2 + \max_{\substack{j\in[0:k-1]\\t\in[0,1]}}|B_j(t)|^2 \bigg)\left( 1 + \int_0^s \E[H_k^2(r)] \d r\right).\]
This bound is {increasing} in $s\in[0,1]$. Taking supremum over $s$ and expectations thus yields
\begin{align*}
\E\left[\sup_{r\in[0,s]}\overline{H}^2_k(r)\right]  
& \leq C_k \Big(1 + \int_0^s \E[H_k^2(r)] \d r\Big) \leq C_k \left(1 + \int_0^s \E\left[\sup_{r\in[0,t]}H_k^2(r)\right] \d t\right),
\end{align*}
where we absorbed terms from the {past} into $C_k$. This is the desired bound.
\medskip

\ref{lemLim:boundB} 
Recalling that $\rho'$ and $\rho''$ are uniformly bounded, the following terms from \eqref{eq:lemLimB},
\[\big|\E \big[ \rho'( P_k(r) ) Q_k(r)F_k^{\mathbf{H},\mathbf{B}} (r, \mathbf{Z}^{h},\mathbf{Z}^{b}) \vert \Finit \big], \]
\[ \big|\varu^2 \E \big[ \rho' \big( P_k(r) ) \E [ B_k(r) \nabla_{\bfz^h} G_k^{\mathbf{H},\mathbf{B}} ( r, \mathbf{Z}^{h},\mathbf{Z}^{b}) \vert  \mathbf{Z}^{h},\mathbf{Z}^{b} ] \big]\cdot \mathbf{H}_{\wedge k-1}(r) \big|, \]
can be handled as in \ref{lemLim:boundH},
by bounding their $\lVert \cdot \rVert^2_{L^2(0,s)}$-norm by a term of type 
\[ C_k\Big(\sup_{t\in[0,1]}\|\mathbf{H}_{\wedge k-1}(t)\|_{2,k}^2 + \max_{\substack{j\in[0:k-1]\\t\in[0,1]}}|H_j(t)|^2 \Big)\left( 1 + \int_s^1 \E[B_k^2(r)] \d r\right). \]

We now bound the remaining term
\begin{align*}
    \big|\varu^2& \E \big[ \rho''( P_k (r))Q_k(r) ( H_k(r) + \E [ H_k(r) \bnabla_{\bfz^h} F_k^{\mathbf{H},\mathbf{B}} ( r, \mathbf{Z}^{h},\mathbf{Z}^{b}) \vert  \mathbf{Z}^{h},\mathbf{Z}^{b} ] \cdot\mathbf{H}_{\wedge k-1}(r) )  \vert \Finit \big] \big|\\
    & \leq C_k \E[|Q_k(r)|]|H_k(r)| + C_k \E\Big[|Q_k(r)| \E[|H_k(r)| \|\bnabla_{\bfz^h}F_k^{\mathbf{H},\mathbf{B}} ( r, \mathbf{Z}^{h},\mathbf{Z}^{b})\|_{2,k}|\mathbf{Z}^h, \mathbf{Z}^b]\Big] \|\mathbf{H}_{\wedge k-1}(r)\|_{2,k}\\
    & \leq C_k \E[|Q_k(r)|]|H_k(r)| + C_k \E\big[|H_k(r)|\max_{\substack{j\in[0:k-1]\\t\in[0,1]}}|H_j(t)|\big]\E\big[|Q_k(r)|e^{\cstHhat_k  M_Z(r)}\big] \|\mathbf{H}_{\wedge k-1}(r)\|_{2,k},
\end{align*}
where we used the triangle and C-S inequalities, as well as the independence of $\F_W$ and $(\mathbf{Z}^h,\mathbf{Z}^b)$ and the estimates from Lemma~\ref{lem:RegSkel}-\ref{lemRegSkel:Grad}.
From the definition of $Q_k$ and Lemma~\ref{lem:RegSkel}-\ref{lemRegSkel:as}, we further have $|Q_k(r)| \leq C_k(|Z_k^b(r)| + M_Z(r)\E^{1/2}[B_k^2(r)])$, so that 
\[ \E[|Q_k(r)|] \leq C_k(\E[|Z_k^b(r)|] + \E[M_Z(r)]\E^{1/2}[B_k^2(r)]) \leq C_k\{ \E^{1/2}[|Z_k^b(r)|^2] + \E^{1/2}[B_k^2(r)] \},\] 
as well as
\begin{align*}
    \E[|Q_k(r)|e^{\cstHhat_k  M_Z(r)}] &\leq C_k \E\big[|Z_k^b(r)|e^{\cstHhat_k  M_Z(r)}\big] + C_k \E[M_Z(r)e^{\cstHhat_k  M_Z(r)}]\E^{1/2}[B_k^2(r)]\\
    &\leq C_k \E^{1/2}\big[|Z_k^b(r)|^2\big] + C_k \E^{1/2}[B_k^2(r)].
\end{align*}
Inserting this in our previous estimate,
\begin{align}
\big|\varu^2 \E \big[ \rho''&( P_k (r))Q_k(r)( H_k(r) + \E [ H_k(r) \bnabla_{\bfz^h} F_k^{\mathbf{H},\mathbf{B}} ( r, \mathbf{Z}^{h},\mathbf{Z}^{b}) \vert  \mathbf{Z}^{h},\mathbf{Z}^{b} ]\cdot \mathbf{H}_{\wedge k-1}(r) )  \vert \Finit \big] \big|\\
    \qquad &\leq C_k \{ \E^{1/2}[|Z_k^b(r)|^2] + \E^{1/2}[B_k^2(r)] \} |H_k(r)| \\&\qquad+ C_k \E^{1/2}[|H_k(r)|^2]\Big\{\E^{1/2}\big[|Z_k^b(r)|^2\big] + \E^{1/2}[B_k^2(r)]\big\} \|\mathbf{H}_{\wedge k-1}(r)\|_{2,k}\\
    &\leq C_k \big\{ \E^{1/2}\big[|Z_k^b(r)|^2\big] + \E^{1/2}[B_k^2(r)] \big\} \sup_{t\in[0,1]} \big\{ |H_k(t)| + \|\mathbf{H}_{\wedge k-1}(t)\|_{2,k}\E^{1/2}[|H_k(t)|^2] \big\}.
\end{align}
Taking squares in \eqref{eq:lemLimB} and integrating over $s\in[0,1]$, we get
\begin{align*}
    \int_s^1|\partial_r {\overline{B}}_k(r)|^2 \d r  &\leq C_k\Bigg(\sup_{t\in[0,1]}\|\mathbf{H}_{\wedge k-1}(t)\|_{2,k}^2 + \max_{\substack{j\in[0:k-1]\\t\in[0,1]}}|H_j(t)|^2 \Bigg)\left( 1 + \int_s^1 \E[B_k^2(r)] \d r\right) \\&+ C_k\sup_{t\in[0,1]} \Big(|H_k(t)|^2 + \|\mathbf{H}_{\wedge k-1}(t)\|^2_{2,k}\E[|H_k(t)|^2]\Big)\int_s^1  \Big(\E\big[|Z_k^b(r)|^2\big] + \E[B_k^2(r)]\Big)\d r\\
    &\leq C_k\Big((1+ \sup_{t\in[0,1]} \E[|H_k(t)|^2])\sup_{t\in[0,1]}\|\mathbf{H}_{\wedge k-1}(t)\|_{2,k}^2 \\& \qquad\qquad\qquad+ \max_{\substack{j\in[0:k-1]\\t\in[0,1]}}|H_j(t)|^2 +\sup_{t\in[0,1]} |H_k(t)|^2 \Big)\left( 1 + \int_s^1 \E[B_k^2(r)] \d r\right),
\end{align*}
where we used previous bounds and $\E[\|Z_k^b\|_{L^2([s,1])}^2] = \varv^2\E[\|B_k\|_{L^2([s,1])}^2]$.
From Jensen's inequality, 
\[|\overline{B}_k(s)|^2 \leq \|W_{\mathrm{out}} \|^2_{\dimout}\|\nabla \ell(y_k)\|^2_{\dimout} + \int_s^1|\partial_r {\overline{B}}_k(r)|^2 \d r. \] 
Taking supremum over $s$ and expectations, we eventually get
\[\E\big[\sup_{r\in[s,1]}\overline{B}_k^2(r)\big] \leq C_k^H + C_k^H\int_s^1 \E \big [B_k^2(r) \big] \d r \leq C_k^H + C_k^H\int_s^1 \E\big[\sup_{r\in[t,1]}B_k^2(r)\big] \d t, \]
where we used 
\[ \|y_k\|_{\dimout} \leq \E[|H_k(1)|\|W_{\mathrm{out}}\|_{\dimout}] \leq \E^{1/2}[|H_k(1)|^2]\E^{1/2}[\|W_{\mathrm{out}}|_{\dimout}^2] \leq C_k^H, \]
and we absorbed the relevant quantities -- which may now depend on $H_k$ -- into the constant $C_k^H$. This concludes the proof.
\end{proof}

\begin{lemma}[Local Lipschitz estimates] \label{lem:LimlocLip}
Let $(\mathbf{H},\mathbf{B})$, $(\mathbf{H}^\star,\mathbf{B}^\star)$ be $\F_W$-measurable random variables in $L^2 ( \Omega, \C( [0,1], \R^{2(k+1)}))$ with $(\mathbf{H}_{\wedge k-1},\mathbf{B}_{\wedge k-1}) =(\mathbf{H}^\star_{\wedge k-1},\mathbf{B}^\star_{\wedge k-1})$. 
Assume that $s \mapsto ( \mathbf{H}_{\wedge k -1}(s),\mathbf{B}_{\wedge k -1}(s))$ lies in $\C^{1/2}([0,1], L^2(\Omega,\R^{2k}))$, and similarly for $(\mathbf{H}^{\star}_{\wedge k -1},\mathbf{B}^{\star}_{\wedge k -1})$.
Let $(\overline{\mathbf{H}},\overline{\mathbf{B}})$, $(\overline{\mathbf{H}}^\star,\overline{\mathbf{B}}^\star)$ be defined from them as in \eqref{eq:lemLimH}-\eqref{eq:lemLimB}. Then: 
\begin{enumerate}[label=(\roman*),ref=(\roman*)]
\item\label{lemLim:LipH} There exists $C_k > 0$ that only depends on $\L( \mathbf{H}_{\wedge k -1},\mathbf{B}_{\wedge k -1})$, such that for every $s \in [0,1]$,
\[\E\big[ \sup_{r\in[0,s]}\vert \overline{H}_k (r) - \overline{H}^\star_k (r) \vert^2 \big] \leq C_k \big\{ 1+\E\big[\sup_{t\in[0,s]} |H_k (t)|^2\big] \big\} \int_0^s \E\big[\sup_{r\in[0,t]}|H_k(r) - H_k^\star(r)|^2\big] \d t.\]
\item\label{lemLim:LipB} If further $H_k = H_k^\star$ -- i.e. $\mathbf{H} = \mathbf{H}^\star$ --, then there exists $C^{H}_k >0$ that only depends on $\L( \mathbf{H},\mathbf{B}_{\wedge k -1})$ such that
\[\E\big[\sup_{r\in[s,1]}\vert \overline{B}_k (r) - \overline{B}^\star_k (r) \vert^2\big] \leq  C_k^H\int_s^1 \E\big[ \sup_{r\in[t,1]}|B_k(r) - B_k^\star(r)|^2\big] \d t. \]
\end{enumerate}
\end{lemma}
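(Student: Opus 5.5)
The plan is to mirror the proof of Lemma~\ref{lem:LimApriori}, now on differences. The key simplification is that $(\mathbf{H}_{\wedge k-1},\mathbf{B}_{\wedge k-1})=(\mathbf{H}^\star_{\wedge k-1},\mathbf{B}^\star_{\wedge k-1})$. By \eqref{eq:cleaner_system_Fk_definition}, the skeleton maps $F_k,G_k$ and their gradients depend only on the past, so they coincide for both inputs: $F_k^{\mathbf{H},\mathbf{B}}=F_k^{\mathbf{H}^\star,\mathbf{B}^\star}$, and likewise for $G_k$. The only differences therefore come from the new components $H_k$ versus $H_k^\star$, entering in two places. The first is directly, through the conditional expectations $\E[H_k F_k\mid \mathbf{Z}]$ and $\E[H_k\bnabla F_k\mid\mathbf{Z}]$. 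The second is through the law of the Gaussian component $Z^h_k$, whose covariance involves $H_k$. I would couple the two Gaussian fields so that $(\mathbf{Z}^h_{\wedge k-1},\mathbf{Z}^b)$ is shared. Conditionally on that shared part, $Z^h_k$ and $Z^{\star,h}_k$ are optimally coupled, so that
\[
\E[|Z^h_k(r)-Z^{\star,h}_k(r)|^2]\lesssim \varu^2\,\E[|H_k(r)-H^\star_k(r)|^2].
\]
The simplest way to get this is the trivial coupling $Z^h_k=U H_k$ in the spirit of Corollary~\ref{cor:GaussianW2inequality}, realized on a common Gaussian construction. A cleaner alternative is a joint Gaussian field with covariance that of $(U\mathbf{H},U\mathbf{H}^\star,V\mathbf{B})$. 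This field has the right marginals and gives the bound in $L^2$ pointwise in $r$.

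For \ref{lemLim:LipH}, I would write
\[
\overline H_k(s)-\overline H_k^\star(s)=\int_0^s(\text{drift}-\text{drift}^\star)\,\d r
\]
and split the integrand into three pieces. The first is the term $\varv^2\E[\rho'(P_k)\E[H_k\bnabla_{\bfz^b}F_k\mid\mathbf Z]]\cdot\mathbf{B}_{\wedge k-1}$. Here I would add and subtract to isolate the difference $\rho'(P_k)-\rho'(P_k^\star)$, bounded by $C_\rho|P_k-P_k^\star|$, and the difference $\E[(H_k-H_k^\star)\bnabla F_k]$. The latter is handled with Lemma~\ref{lem:RegSkel}\ref{lemRegSkel:Grad}, the exponential moments of $M_Z$ (Remark~\ref{rem:progressively_build_Gaussian_field}) and Cauchy--Schwarz, exactly as before. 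The second piece is $\rho(P_k)G_k$, which uses $|\rho(P_k)-\rho(P_k^\star)|\le C_\rho|P_k-P_k^\star|$ and Lemma~\ref{lem:RegSkel}\ref{lemRegSkel:as}. The third piece is the difference $P_k-P_k^\star$ itself, which satisfies
\[
|P_k-P^\star_k|\le |Z^h_k-Z^{\star,h}_k|+C_kM_Z\,\E^{1/2}[|H_k-H_k^\star|^2].
\]
The factor $1+\E[\sup|H_k|^2]$ in the statement arises from the cross term in the first piece: the product $\rho'(P_k)\E[H_k\bnabla F_k]$ produces, when $\rho'$ is differenced, a factor $\E^{1/2}[H_k^2]$ multiplying $|P_k-P_k^\star|$. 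Putting these together and squaring, then applying Jensen in $r$, taking the supremum in $s$ and the expectation over $\F_W$, I expect to obtain
\[
C_k(1+\E\sup|H_k|^2)\int_0^s\E[|H_k-H_k^\star|^2]\,\d r,
\]
which is bounded by the claimed right-hand side.

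For \ref{lemLim:LipB}, with $\mathbf H=\mathbf H^\star$, the quantities $P_k$, $\mathbf Z^h$ and $y_k$ all coincide, so the terminal values $B_k(1)$ agree. The drift is affine in $Q_k$ and in $\E[B_k\bnabla G_k\mid\mathbf Z]$. Coupling $Z^b_k$ analogously gives
\[
\E|Q_k-Q_k^\star|^2\lesssim \E|B_k-B_k^\star|^2,
\]
and the same Cauchy--Schwarz and exponential-moment bounds yield the estimate backward in time, integrating over $[s,1]$. The constants then depend on $\L(\mathbf H,\mathbf B_{\wedge k-1})$ through $\sup\E H_k^2$, as in Lemma~\ref{lem:LimApriori}\ref{lemLim:boundB}.

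The main obstacle is the coupling of the Gaussian components. The definition uses an arbitrary Gaussian field with the prescribed law, but $\overline{\mathbf H}$ depends only on that law, so I am free to choose a convenient coupling. I must check, however, that the joint Gaussian family preserves independence from $\F_W$ and the shared past components, so that the conditional expectations given $\mathbf Z$ are unchanged. The pointwise-in-$r$ $L^2$ bound above then replaces the Wasserstein estimate of Corollary~\ref{cor:GaussianW2inequality}.
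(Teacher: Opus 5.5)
Your proposal is correct, and its skeleton matches the paper's proof. The identification $F_k^{\mathbf{H},\mathbf{B}}=F_k^{\mathbf{H}^\star,\mathbf{B}^\star}$ (and likewise for $G_k$ and the gradients) is the same. So are the three-way splitting of the drift difference, the bound $|P_k-P_k^\star|\le|Z^h_k-Z^{\star,h}_k|+C_kM_Z\,\E^{1/2}[|H_k-H_k^\star|^2]$, the origin of the factor $1+\E[\sup_{t\in[0,s]}|H_k(t)|^2]$, and, in (ii), the coincidence of $P_k$, $y_k$ and the terminal values together with the affine dependence on $Q_k$.

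The genuine difference is how you treat the Gaussian fields. The paper fixes $s$, chooses an optimal coupling of $Z^h_k|_{[0,s]}$ and $Z^{h^\star}_k|_{[0,s]}$, and invokes the Gaussian $W_2$ projection bound (Theorem~\ref{teo:GaussianW2control} via Corollary~\ref{cor:GaussianW2inequality}), which yields only an inequality. Your joint Gaussian field with the covariance of $(U\mathbf{H},U\mathbf{H}^\star,V\mathbf{B})$ can even be built finite-dimensionally for each $r$ separately, since the drift at time $r$ involves only the law of $(\mathbf{Z}^h(r),\mathbf{Z}^b(r))$. It gives the identity $\E[|Z^h_k(r)-Z^{\star,h}_k(r)|^2]=\varu^2\,\E[|H_k(r)-H^\star_k(r)|^2]$. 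Because the relevant covariance blocks coincide, it also forces $\mathbf{Z}^h_{\wedge k-1}=\mathbf{Z}^{\star,h}_{\wedge k-1}$ and $\mathbf{Z}^b_{\wedge k-1}=\mathbf{Z}^{\star,b}_{\wedge k-1}$ almost surely.

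That shared past is exactly what lets one evaluate the same $F_k(r,\mathbf{Z}^h,\mathbf{Z}^b)$ in both drifts. The paper's appeal to an optimal coupling of the $k$-th components leaves this implicit: an optimal coupling of the full vectors need not share the past components. Your route therefore avoids the Wasserstein machinery, removes the $s$-dependence of the coupling, and is slightly more rigorous; the paper's is the more generic McKean--Vlasov-style argument.

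Three small wording points:
\begin{itemize}
\item In (i) only $\mathbf{Z}^b_{\wedge k-1}$, not all of $\mathbf{Z}^b$, can be shared. This is harmless, since $Z^b_k$ does not enter $\overline{H}_k$.
\item The literal ``trivial coupling $Z^h_k=UH_k$'' is not admissible, because $UH_k$ is not Gaussian; the joint Gaussian field is the correct formulation.
\item For (ii), the analogous field should have the covariance of $(U\mathbf{H},V\mathbf{B},V\mathbf{B}^\star)$.
\end{itemize}
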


\begin{proof}
In the following, $C_k$ and $C^H_k$ may change from line to line, and we define $M_Z(r) := 1\vee\max_{i \in [0,k-1]} \vert Z^h_i (r) \vert \vee \vert Z^b_i (r) \vert$ as previously, which satisfies $\E[e^{tM_Z(r)}] \leq C_t$ for $C_t >0$ that only depends on $t$ and $\L(\mathbf{H}_{\wedge k-1}, \mathbf{B}_{\wedge k-1})$. 

Since $(\mathbf{H}_{\wedge k-1},\mathbf{B}_{\wedge k-1}) =(\mathbf{H}^\star_{\wedge k-1},\mathbf{B}^\star_{\wedge k-1})$, we recall that $F_k^{\mathbf{H}, \mathbf{B}} = F_k^{\mathbf{H}^\star, \mathbf{B}^{\star}}$ and $G_k^{\mathbf{H}, \mathbf{B}} = G_k^{\mathbf{H}^\star, \mathbf{B}^\star}$, and so on for their gradients. 

\medskip

\ref{lemLim:LipH} For every $s \in [0,1]$, we write
\begin{align*}
    \vert \overline{H}_k (s) &- \overline{H}^\star_k (s) \vert = \bigg|\int_0^s \varv^2 \E [ \rho'( P_k (r) ) \E[ H_k (r) \bnabla_{\bfz^b} F_k^{\mathbf{H},\mathbf{B}} (r,  \mathbf{Z}^{h},\mathbf{Z}^{b} ) \vert  \mathbf{Z}^{h},\mathbf{Z}^{b} ] \,] \mathbf{B}(r) \\
    &\qquad\qquad\qquad\qquad -\varv^2 \E [ \rho'( P^{\star}_k (r) ) \E[ H^{\star}_k (r) \bnabla_{\bfz^b} F_k^{\mathbf{H},\mathbf{B}} (r,  \mathbf{Z}^{h},\mathbf{Z}^{b} ) \vert  \mathbf{Z}^{h},\mathbf{Z}^{b} ] \,] \cdot \mathbf{B}_{\wedge k-1}(r)\\
&\qquad\qquad\qquad+ \E [ \rho ( P_k (r) ) G^{\mathbf{H},\mathbf{B}}_k (r, \mathbf{Z}^{h},\mathbf{Z}^{b}) \vert \Finit ] - \E [ \rho ( P^\star_k (r) ) G^{\mathbf{H},\mathbf{B}}_k (r, \mathbf{Z}^{h},\mathbf{Z}^{b}) \vert \Finit ] \d r\bigg|\\
&= \bigg|\int_0^s \varv^2 (\E [  \E[ (\rho'( P_k (r) )H_k (r) - \rho'( P_k^\star (r) )H_k^\star (r)) \bnabla_{\bfz^b} F_k^{\mathbf{H},\mathbf{B}} (r,  \mathbf{Z}^{h},\mathbf{Z}^{b} ) \vert  \mathbf{Z}^{h},\mathbf{Z}^{b} ] \,] )\cdot \mathbf{B}_{\wedge k-1}(r)\\
&\qquad+ \E [ (\rho ( P_k (r) ) - \rho ( P_k^\star (r) ))G^{\mathbf{H},\mathbf{B}}_k (r, \mathbf{Z}^{h},\mathbf{Z}^{b}) \vert \Finit ]\d r\bigg|. 
\end{align*}
Using that $\rho$ and $\rho'$ are $C_\rho$-Lipschitz,
\begin{align*}
\vert \overline{H}_k (s) &- \overline{H}^\star_k (s) \vert \leq C_\rho  \int_0^s \E [ \vert P_k (r) - P^\star_k (r) \vert \vert G^{\mathbf{H},\mathbf{B}}_k (r, \mathbf{Z}^{h},\mathbf{Z}^{b})\vert  \vert \Finit ] \d r \\
&+ \varv^2 C_\rho \int_0^s \E [ \vert \E [ | H_k (r) - H^\star_k (r) | \|\boldsymbol\nabla_{\bfz^b} F_k^{\mathbf{H},\mathbf{B}}(r,\mathbf{Z^h,\mathbf{Z}^b})\|_{2,k} \vert \mathbf{Z}^h,\mathbf{Z}^b ] \vert ] \|\mathbf{B}_{\wedge k-1}(r)\|_{2,k} \d r \\
&+ \varv^2  C_\rho \int_0^s \E [ \vert P_k (r) - P^\star_k (r) \vert  \E [ |H_k (r)| \|\boldsymbol\nabla_{\bfz^b} F_k^{\mathbf{H},\mathbf{B}}(r,\mathbf{Z^h,\mathbf{Z}^b})\|_{2,k} \vert \mathbf{Z}^h,\mathbf{Z}^b ] \vert ] \|\mathbf{B}_{\wedge k-1}(r) \|_{2,k} \d r.
\end{align*} 
We then use Lemma~\ref{lem:RegSkel}-\ref{lemRegSkel:as},\ref{lemRegSkel:Grad} to bound
\begin{align*}
    \vert \overline{H}_k (s) - &\overline{H}^\star_k (s) \vert \leq  C_k \max_{\substack{j\in[0:k-1]\\t\in[0,1]}}|B_j(t)|\int_0^s \E [ \vert P_k (r) - P^\star_k (r) \vert M_Z(r) ] \d r \\
&+ C_k \sup_{t\in[0,1]}\|\mathbf{B}_{\wedge k-1}(t)\|_{2,k} \int_0^s \E \big[ | H_k (r) - H^\star_k (r) | \max_{\substack{j\in[0:k-1]\\t\in[0,1]}}|H_j(t)|\big] \d r \\
&+ C_k \sup_{t\in[0,1]}\|\mathbf{B}_{\wedge k-1}(t) \|_{2,k}  \int_0^s \E \big[ |H_k (r)| \max_{\substack{j\in[0:k-1]\\t\in[0,1]}}|H_j(t)|\big]\E [ \vert |P_k (r) - P^\star_k (r)| e^{C_k M_Z(r)} ]\d r,
\end{align*}
where we absorbed terms from the {past} 
into the constant $C_k$.
We further apply C-S twice and rearrange terms, absorbing constants in $C_k$, to obtain
\begin{align*}
    \vert \overline{H}_k (s) - \overline{H}^\star_k (s) \vert &\leq  C_k \max_{\substack{j\in[0:k-1]\\t\in[0,1]}}|B_j(t)|\int_0^s \E^{1/2} [ \vert P_k (r) - P^\star_k (r) \vert^2] \d r \\
&+ C_k \sup_{t\in[0,1]}\|\mathbf{B}_{\wedge k-1}(t)\|_{2,k} \int_0^s \E^{1/2} [ | H_k (r) - H^\star_k (r) |^2 ] \d r \\
&+ C_k \sup_{t\in[0,1]}\|\mathbf{B}_{\wedge k-1}(t) \|_{2,k}  \E^{1/2}\big[\sup_{t\in[0,s]} |H_k (t)|^2\big] \int_0^s \E^{1/2}[ |P_k (r) - P^\star_k (r)|^2]\d r.
\end{align*}
Let us now bound
\begin{align*}
    |P_k (r) - P^\star_k (r)| & \leq |Z_k^h(r) - Z_k^{h^\star}(r)| + \E[|H_k(r) - H_k^\star(r)| |F_k^{\mathbf{H}, \mathbf{B}}(r, \mathbf{Z}^h, \mathbf{Z}^b)| |\mathbf{Z}^h, \mathbf{Z}^b]\\
    & \leq |Z_k^h(r) - Z_k^{h^\star}(r)| + C_k\E\bigg[|H_k(r) - H_k^\star(r)| \max_{\substack{j\in[0:k-1]\\t\in[0,1]}} |H_j(t)|\bigg] M_Z(r)\\
    & \leq |Z_k^h(r) - Z_k^{h^\star}(r)| + C_k\E^{1/2}[|H_k(r) - H_k^\star(r)|^2] M_Z(r),
\end{align*}
where we used again Lemma~\ref{lem:RegSkel}-\ref{lemRegSkel:as} and C-S.
We thus have 
\[\E^{1/2}[|P_k (r) - P^\star_k (r)|^2] \leq \E^{1/2}[|Z_k^h(r) - Z_k^{h^\star}|^2] + C_k \E^{1/2}[|H_k(r) - H_k^\star(r)|^2].\] 
Taking squares in our bound on \(\vert \overline{H}_k (s) - \overline{H}^\star_k (s) \vert\), this implies
\begin{align*}
   \vert \overline{H}_k (s) - \overline{H}^\star_k (s) \vert^2 &\leq  C_k \big\{ \max_{\substack{j\in[0:k-1]\\t\in[0,1]}}|B_j(t)|^2 +\sup_{t\in[0,1]}\|\mathbf{B}_{\wedge k-1}(t) \|^2_{2,k} \big\}\\&\qquad\cdot\big\{ 1+\E\big[\sup_{t\in[0,s]} |H_k (t)|^2\big]\big\} \int_0^s \E[|Z_k^h(r) - Z_k^{h^\star}|^2] + \E[|H_k(r) - H_k^\star(r)|^2] \d r.
\end{align*}
Taking supremum over $s$, expectations, and absorbing constants in $C_k$,
\begin{multline*}
   \E\big[\sup_{r\in[0,s]}\vert \overline{H}_k (r) - \overline{H}^\star_k (r) \vert^2\big] \leq  C_k \big\{ 1+\E\big[\sup_{t\in[0,s]} |H_k (t)|^2\big] \big\} \\ \cdot \int_0^s \big\{ \E[|Z_k^h(r) - Z_k^{h^\star}|^2] + \E[|H_k(r) - H_k^\star(r)|^2]\big\} \d r.
\end{multline*}
We now use that the definition of $(\mathbf{\overline{H}},\mathbf{\overline{H}}^\star) \in L^2([0,1],\R^{2k})$ only depends on $(\mathbf{Z}^h,\mathbf{Z}^{h^\star})$ through $(\L(\mathbf{Z}^h),\L(\mathbf{Z}^{h^\star}))$.
We thus have a degree of freedom in the choice of the coupling $(\mathbf{Z}^h,\mathbf{Z}^{h^\star})$, and we can minimise over it. 
By choosing an optimal coupling for $(Z^h_k, Z^{h^\star}_k)$, we get
\[ \E\Big[\int_0^s|Z^h_k(r) -Z^{h^\star}_k(r)|^2\d r\Big] = W_2^2 ( (\L(Z_k^h \vert_{[0,s]}),\L(Z_k^{h^\star}\vert_{[0,s]})) \le \varu^2 \int_0^s \E [ \vert H_k (r) - H^{\star}_k (r) \vert^2 ] \d r, \]
where the last inequality results from Corollary~\ref{cor:GaussianW2inequality}. This eventually yields
\begin{align*}
   \E\big[\sup_{r\in[0,s]}\vert \overline{H}_k (r) - \overline{H}^\star_k (r) \vert^2\big] &\leq  C_k \big\{ 1+\E\big[\sup_{t\in[0,s]} |H_k (t)|^2\big] \big\}\int_0^s \E[|H_k(r) - H_k^\star(r)|^2] \d r\\
   &\leq  C_k \big(1+\E\big[\sup_{t\in[0,s]} |H_k (t)|^2\big]\big)\int_0^s \E\big[\sup_{r\in[0,t]}|H_k(r) - H_k^\star(r)|^2\big] \d t,
\end{align*}
which is the desired bound.

\medskip

\ref{lemLim:LipB} By assumption, we have $\mathbf{H} = \mathbf{H}^\star$. This implies that $\mathbf{Z}^h$ and $\mathbf{Z}^{\star, h}$ have the same law, and so do \(P_k\) and \(P_k^\star\). 
Since $P_k(r)$ and $P_k^\star(r)$ are only involved within expectations, we can thus replace $P_k^\star(r)$ by $P_k(r)$ in the definition of \(\overline{B}^\star_k\). For $s\in[0,1]$, this yields
\begin{align*}
    \vert \overline{B}_k (s)& - \overline{B}^\star_k (s) \vert = \bigg|\int_s^1 \E \big[ \rho'( P_k(r) ) Q_k(r)F_k^{\mathbf{H},\mathbf{B}} (r, \mathbf{Z}^{h},\mathbf{Z}^{b}) \vert \Finit \big]\\
    &\qquad\qquad\qquad\qquad-\E \big[ \rho'( P_k(r) ) Q^\star_k(r)F_k^{\mathbf{H},\mathbf{B}} (r, \mathbf{Z}^{h},\mathbf{Z}^{b}) \vert \Finit \big]\\
    & \qquad\qquad\qquad+\varu^2 \E \big[ \rho' \big( P_k(r) ) \E [ B_k(r) \nabla_{\bfz^h} G_k^{\mathbf{H},\mathbf{B}} ( r, \mathbf{Z}^{h},\mathbf{Z}^{b}) \vert  \mathbf{Z}^{h},\mathbf{Z}^{b} ] \big]\cdot \mathbf{H}_{\wedge k-1}(r)\\
    &\qquad\qquad\qquad -\varu^2 \E \big[ \rho' \big( P_k(r) ) \E [ B^\star_k(r) \nabla_{\bfz^h} G_k^{\mathbf{H},\mathbf{B}} ( r, \mathbf{Z}^{h},\mathbf{Z}^{b}) \vert  \mathbf{Z}^{h},\mathbf{Z}^{b} ] \big] \cdot \mathbf{H}_{\wedge k-1}(r)\\
    &+\varu^2 \E \big[ \rho''( P_k (r))Q_k(r) {(} H_k(r) + \E [ H_k(r) \bnabla_{\bfz^h} F_k^{\mathbf{H},\mathbf{B}} ( r, \mathbf{Z}^{h},\mathbf{Z}^{b}) \vert  \mathbf{Z}^{h},\mathbf{Z}^{b} ] \cdot \mathbf{H}_{\wedge k-1}(r) {)}  \vert \Finit \big]\\
    &-\varu^2 \E \big[ \rho''( P_k (r))Q^\star_k(r) ( H_k(r) + \E [ H_k(r) \bnabla_{\bfz^h} F_k^{\mathbf{H},\mathbf{B}} ( r, \mathbf{Z}^{h},\mathbf{Z}^{b}) \vert  \mathbf{Z}^{h},\mathbf{Z}^{b} ] \cdot\mathbf{H}_{\wedge k-1}(r) )  \vert \Finit \big]
    \d r \bigg|.
\end{align*}
We use the triangle inequality and the boundedness of $\rho$, $\rho'$ to obtain
\begin{align*}
    \vert &\overline{B}_k (s) - \overline{B}^\star_k (s) \vert \leq C_k\int_s^1 \E \big[ |F_k^{\mathbf{H},\mathbf{B}} (r, \mathbf{Z}^{h},\mathbf{Z}^{b})| |Q_k(r) - Q_k^\star(r)| \vert \Finit \big]\\
    & \qquad\qquad+ \E \big[  \E [ |B_k(r) - B_k^\star(r)| \|\nabla_{\bfz^h} G_k^{\mathbf{H},\mathbf{B}} ( r, \mathbf{Z}^{h},\mathbf{Z}^{b})\|_{2,k} \vert  \mathbf{Z}^{h},\mathbf{Z}^{b} ] \big] \|\mathbf{H}_{\wedge k-1}(r)\|_{2,k}\\
    &+ \E \big[( |H_k(r)| + \E [ |H_k(r)| \|\bnabla_{\bfz^h} F_k^{\mathbf{H},\mathbf{B}} ( r, \mathbf{Z}^{h},\mathbf{Z}^{b})\|_{2,k} \vert  \mathbf{Z}^{h},\mathbf{Z}^{b} ] \|\mathbf{H}_{\wedge k-1}(r)\|_{2,k} ) |Q_k(r) - Q_k^\star(r)| \vert \Finit \big]
    \d r
\end{align*}
We now use our bounds from Lemma~\ref{lem:RegSkel}-\ref{lemRegSkel:as}-\ref{lemRegSkel:Grad}, the $\Finit$-measurability of $(\mathbf{H},\mathbf{B})$, the independence of $(\mathbf{Z}^h,\mathbf{Z}^{b})$ from $\F_W$, and the C-S inequality to get
\begin{align*}
    \vert &\overline{B}_k (s) - \overline{B}^\star_k (s) \vert \leq C_k\int_s^1 \E \big[ M_Z(r)  |Q_k(r) - Q_k^\star(r)|\big]\max_{\substack{j\in[0:k-1]\\t\in[0,1]}}|H_j(t)|\\
    & \qquad\qquad+ \E \big[ |B_k(r) - B_k^\star(r)| \max_{\substack{j\in[0:k-1]\\t\in[0,1]}}|B_j(t)| \big] \|\mathbf{H}_{\wedge k-1}(r)\|_{2,k}\\
    &\qquad\qquad+\big\{ |H_k(r)| + \E \big[ |H_k(r)| \max_{\substack{j\in[0:k-1]\\t\in[0,1]}}|H_j(t)| \big]  \|\mathbf{H}_{\wedge k-1}(r)\|_{2,k} \big\} \E \big[ e^{C_k M_Z(r)}|Q_k(r) - Q_k^\star(r)| \big]
    \d r\\
    &\leq C_k \max_{\substack{j\in[0:k-1]\\t\in[0,1]}}|H_j(t)|\int_s^1 \E^{1/2} \big[ |Q_k(r) - Q_k^\star(r)|^2\big]\d r \\&\qquad\qquad\qquad+ C_k \sup_{t\in[0,1]}\|\mathbf{H}_{\wedge k-1}(t)\|_{2,k} \int_s^1\E^{1/2} [ |B_k(r) - B_k^\star(r)|^2] \d r \\
    &\qquad\qquad\qquad\qquad+C_k \sup_{t\in[s,1]} |H_k(t)| + \E^{1/2} [ |H_k(t)|^2] \|\mathbf{H}_{\wedge k-1}(t)\|_{2,k} \int_s^1\E^{1/2} \big[|Q_k(r) - Q_k^\star(r)|^2 \big]
    \d r.
\end{align*}
As in the proof of \ref{lemLim:LipH}, we have
\[\E^{1/2}[|Q_k (r) - Q^\star_k (r)|^2] \leq \E^{1/2}[|Z_k^b(r) - Z_k^{b^\star}|^2] + C_k \E^{1/2}[|B_k(r) - B_k^\star(r)|^2],\] and thus, so that taking squares and Jensen's inequality yield
\begin{align*}
    \vert &\overline{B}_k (s) - \overline{B}^\star_k (s) \vert^2 \leq  C_k \max_{\substack{j\in[0:k-1]\\t\in[0,1]}}|H_j(t)|^2\int_s^1 \E \big[ |Q_k(r) - Q_k^\star(r)|^2\big]\d r \\&+ C_k\sup_{t\in[0,1]}\|\mathbf{H}_{\wedge k-1}(t)\|_{2,k} ^2\int_s^1\E[ |B_k(r) - B_k^\star(r)|^2] \d r \\
    &\qquad\qquad+C_k\sup_{t\in[s,1]}\Big(|H_k(t)|^2 + \E[ |H_k(t)|^2] \|\mathbf{H}_{\wedge k-1}(t)\|_{2,k}^2 \Big)\int_s^1\E \big[|Q_k(r) - Q_k^\star(r)|^2 \big]
    \d r]\\
    &\leq  C_k \bigg\{ \max_{\substack{j\in[0:k-1]\\t\in[0,1]}}|H_j(t)|^2 + \sup_{t\in[0,1]}\|\mathbf{H}_{\wedge k-1}(t)\|_{2,k} ^2 + \sup_{t\in[s,1]} |H_k(t)|^2 + \E [ |H_k(t)|^2] \|\mathbf{H}_{\wedge k-1}(t)\|_{2,k}^2 \bigg\} \\&\qquad\qquad\qquad\cdot\int_s^1 \E \big[ |Z^b_k(r) - Z_k^{b^\star}(r)|^2\big] + \E[ |B_k(r) - B_k^\star(r)|^2] \d r.
\end{align*}
Taking supremum over $s$ and expectations,
\[\E\big[\sup_{r\in[s,1]}\vert \overline{B}_k (r) - \overline{B}^\star_k (r) \vert^2\big] \leq  C_k^H\int_s^1 \E [ |Z^b_k(r) - Z_k^{b^\star}(r)|^2 ] + \E[ |B_k(r) - B_k^\star(r)|^2] \d r,\]
where relevant terms were absorbed into $C_k^H$. As previously in \ref{lemLim:LipH}, we replace the square $L^2$-norm of  $Z^b_k - Z^{b^\star}_k$ in $L^2 ( \Omega, L^2 ( [s,1],\R^2))$ by $W_2^2 ( (\L(Z_k^b \vert_{[s,1]}),\L(Z_b^{h^\star}\vert_{[s,1]}))$, we use Corollary~\ref{cor:GaussianW2inequality}, and finally obtain that
\[\E\big[\sup_{r\in[s,1]}\vert \overline{B}_k (r) - \overline{B}^\star_k (r) \vert^2\big] \leq  C_k^H\int_s^1 \E[ |B_k(r) - B_k^\star(r)|^2] \d r \leq  C_k^H\int_s^1 \E\big[ \sup_{r\in[t,1]}|B_k(r) - B_k^\star(r)|^2\big] \d t, \]
which concludes the proof.
\end{proof}

\subsubsection{Proof of Theorem~\ref{thm:well_posedness_main}}
\begin{proof}[Proof of Theorem~\ref{thm:well_posedness_main}]
We show by induction on $k \in [0:K-1]$ that $(\mathbf{H}_{\wedge k},\mathbf{B}_{\wedge k})$ is uniquely defined in $L^2 (\Omega,\C ( [0,1],\R^{2(k+1)} ))$.
Since the proof of Lemma~\ref{lem:LimApriori} shows that $(\mathbf{H}_{\wedge k},\mathbf{B}_{\wedge k})$ actually belongs to the Sobolev space $L^2 (\Omega, H^1 ( [0,1],\R^{2(k+1)} ))$, we perform the induction directly in this space.

\paragraph{Base case.} Using that $F^{\mathbf{H},\mathbf{B}}_0 = G^{\mathbf{H},\mathbf{B}}_0 \equiv 0$, we get that $H_0 (s) = W_{\mathrm{in}}\cdot x$.
Similarly, we get \begin{multline*}
    \partial_s B_0(s) = \E[\rho''(Z_0^h(s)) Z_0^b H_0(s)|\Finit] = \E[\rho''(Z_0^h(s)) Z_0^b]H_0(s) = \E[\rho''(Z_0^h(s)) ]\E[Z_0^b]H_0(s) = 0,
\end{multline*}
where we used the $\Finit$-measurability of $H_0$, and that the centered variables $(Z_0^h, Z_0^b)$ are independent from each other and from $\Finit$ using Remark~\ref{rem:gaussian_independence}. That is, for all $s\in[0,1]$, $B_0(s) = \Wout\cdot \nabla \ell(y_0)$, with $y_0 = \E[\Wout \Win^\top]x$. We thus have a unique solution $(H_0, B_0)$ in $L^2(\Omega, H^1([0,1], \R^{2}))$.

\paragraph{Inductive step.}
For $k \in [1: K-1]$, let us assume that $(\mathbf{H}_{\wedge k -1},\mathbf{B}_{\wedge k-1})$ is uniquely well-defined in $L^2(\Omega, H^1([0,1]; \R^{2k}))$. In particular, 
\[\E^{1/2}[\|(\mathbf{H}_{\wedge k-1}, \mathbf{B}_{\wedge k-1})(s) - (\mathbf{H}_{\wedge k-1}, \mathbf{B}_{\wedge k-1})(t)\|_{2,k}^2] \leq |t-s|^{1/2} \E^{1/2} [ \|(\mathbf{H}_{\wedge k-1}, \mathbf{B}_{\wedge k-1})\|^2_{ H^1([0,1], \R^{2k}) } ],\] 
and so that $s \mapsto ( \mathbf{H}_{\wedge k -1}(s),\mathbf{B}_{\wedge k -1}(s)) \in \C^{1/2}([0,1], L^2(\Omega,\R^{2k}))$, as required by Lemmas~\ref{lem:LimApriori}-\ref{lem:LimlocLip}.
To build $H_k$, we use a fixed-point argument on the map
\[ \Phi_k : H \in L^2 (\Omega,L^2([0,1],\R)) \mapsto \overline{H}_k \in L^2 (\Omega,L^2([0,1],\R)), \]
where $\overline{\mathbf{H}}$ is defined by \eqref{eq:lemLimH} from $\mathbf{H} := (\mathbf{H}_{\wedge k-1},H)$ and any $\mathbf{B} \in L^2 ( \Omega, \C^{2(k+1)})$ such that $\mathbf{B}_{\wedge k-1}$ coincides with the induction hypothesis -- indeed, \eqref{eq:lemLimH} does not involve $B_k$.
The solutions of \eqref{eq:lemLimH} are exactly the fixed-points of $\Phi_k$.

We consider an arbitrary starting point $H^{(0)}$, and we inductively define $H^{(n+1)} = \Phi_k( H^{(n)})$.
Setting $v_n(s) :=\E [\sup_{r\in[0,s]} \vert H^{(n)}(r) \vert^2 ]$, Lemma~\ref{lem:LimApriori} yields
\[ \forall s \in [0,1], \qquad v_{n+1}(s) \leq C_k + C_k\int_0^s v_{n}(r) \d r. \] 
Iterating this argument classically yields
\[v_{n+1}(s) \leq \underbrace{C_k \sum_{\ell=0}^{n}  \frac{(C_ks)^\ell}{\ell!}}_{\xrightarrow[n\to\infty]{}\,C_k\exp(C_ks)} + \underbrace{\frac{(C_ks)^{n+1}}{(n+1)!}\|H^{(0)}\|^2_{L^2(\Omega, \C ([0,1],\R))}}_{\xrightarrow[n\to\infty]{}\,0}. \]
As a consequence, there exists $n_0\in\mathbb{N}$ depending only on $C_k$ and $\|H^{(0)}\|^2_{L^2}$ such that $ \|H^{(n)} \|^2_{L^2(\Omega, \C([0,1],\R))} \leq R_k^2 :=2C_k\exp(C_k)$ for $n \geq n_0$.
Setting $u_m(s) := \E[\sup_{r\in[0,s]} |H^{(n_0+m+1)}-H^{(n_0+m)}|^2(r)]$, Lemma~\ref{lem:LimlocLip} yields
\[u_m(s)\leq C_k \big[ 1+ \|H_{m+n_0}\|^2_{L^2(\Omega, \C([0,1],\R))} \big] \int_0^s u_{m-1}(r) \d r \leq C_k [ 1+ R_k^2 ] \int_0^s u_{m-1}(r)\d r. \]
We then iterate this argument, and we bound the initial value by
\[u_0(s) \leq 2 \E[\sup_{r\in[0,s]} |H^{(n_0+1)}(r)|^2]+ 2 \E[\sup_{r\in[0,s]}|H^{(n_0)}(r)|^2]) \leq 4R_k^2.\] 
We deduce the bound
\(u_m(s)\leq 4R_k^2C_k^m(1+ R_k^2)^m\frac{s^m}{m!}\), which implies that $\sum_{m \in \mathbb{N}} u_m(1)$ is finite.
This proves that $(H^{(n)} )_{n\geq 0}$ is a Cauchy sequence in the complete space $L^2(\Omega, L^2 ([0,1],\R))$, and thus converges, showing existence of a fixed-point $H_k$.
If $H_k$ and $\tilde{H}_k$ are two fixed-points of $\Phi_k$, then the iterates $\Phi^{(n)}_k$ of $\Phi_k$ on $H_k$ and $\tilde{H}_k$ are uniformly bounded by the above reasoning, and we get that
\[ \lVert H_k - \tilde{H}_k \Vert_{L^2} = \lVert \Phi^{(n)}_k ( H_k ) -  \Phi^{(n)}_k ( \tilde{H}_k ) \Vert_{L^2} < \lVert H_k - \tilde{H}_k \Vert_{L^2}, \]
for large enough $n$, proving uniqueness. 
Thus, $\mathbf{H}_{\wedge k} := (\mathbf{H}_{\wedge k-1},H_k)$ is uniquely defined in $ L^2(\Omega, L^2([0,1]; \R^{2(k+1)}))$. One shall also immediately see that, from the proof of Lemma~\ref{lem:LimApriori}-\ref{lemLim:boundH}, $H_k$ satisfies $H_k \in L^2(\Omega; H^1([0,1]; \R))$. 

Now knowing that $(\mathbf{H}_{\wedge k},\mathbf{B}_{\wedge k-1})$ is uniquely defined, we construct $B_k$ similarly by leveraging Lemma~\ref{lem:LimApriori}-\ref{lemLim:boundB} and Lemma~\ref{lem:LimlocLip}-\ref{lemLim:LipB} to prove existence and uniqueness for the fixed-point of
\[ B \in L^2 (\Omega,L^2([0,1],\R))) \mapsto \overline{B}_k \in L^2 (\Omega,L^2([0,1],\R))), \]
where $\overline{\mathbf{B}}$ is defined by \eqref{eq:lemLimB} from $\mathbf{H} = \mathbf{H}_{\wedge k}$ and $\mathbf{B} := (\mathbf{B}_{\wedge k-1},B)$. Using the previous arguments and the bounds from Lemma~\ref{lem:LimApriori}-\ref{lemLim:boundB}, we further deduce that $\mathbf{H}_{\wedge k}$ actually lies in $L^2 (\Omega, H^1 ( [0,1],\R^{2(k+1)} ))$ as required to complete the induction.
\end{proof}

\subsection{Properties of the limit dynamics}\label{ssec:properties_of_mf_limit}

From Theorem~\ref{thm:well_posedness_main}, we know that the system \eqref{eq:LimH}-\eqref{eq:LimB} admits a unique solution in $L^2(\Omega, \C([0,1], \R^{2K}))$. 
From Remark~\ref{rem:linStruc}, we can explicitly rewrite the system as follows, 
underlying evaluation at $s\in[0,1]$,
\begin{equation}
\begin{cases}
{H}_k(0) = W_{\mathrm{in}}\cdot x, \qquad P_k = p_k(\mathbf{Z}^h, \mathbf{Z}^b), \\
\partial_s {H}_k = -\big[ \eta_u \varv^2 \E [ \rho'(P_k) \bnabla_{\bfz^b} \vec{\mathbf{v}}_{k}^F(\mathbf{Z}^h, \mathbf{Z}^b) ] \vec{\boldsymbol{\Gamma}}^H_k +\eta_v \E [ \rho(P_k) \vec{\mathbf{v}}_{k}^G(\mathbf{Z}^h, \mathbf{Z}^b) ]  \big] \cdot \mathbf{B}_{\wedge k-1},  \\
y_k := \E [ H_k(1) W_{\mathrm{out}}], 
\end{cases}
\end{equation}
\begin{equation}
\begin{cases}
{B}_k(1) = W_{\mathrm{out}}\cdot \nabla \ell ( y_k ), \qquad Q_k = q_k(\mathbf{Z}^h, \mathbf{Z}^b),\\
\partial_s {B}_k = \big[ \eta_v \varu^2 \E [ \rho'(P_k) \bnabla_{\bfz^h} \vec{\mathbf{v}}_{k}^G(\mathbf{Z}^h, \mathbf{Z}^b) ] \vec{\boldsymbol{\Gamma}}^B_k(s) + \eta_u \E [ \rho'(P_k)Q_k\vec{\mathbf{v}}_{k}^F(\mathbf{Z}^h, \mathbf{Z}^b) ] \big] \cdot \mathbf{H}_{\wedge k-1} \\ \qquad\qquad + \varu^2 \big[ \eta_u \E [\rho''(P_k)Q_k \bnabla_{\bfz^h} \vec{\mathbf{v}}_{k}^F( \mathbf{Z}^h, \mathbf{Z}^b) ]\vec{\boldsymbol{\Gamma}}^H_k \big] \cdot \mathbf{H}_{\wedge k-1} -\varu^2 \E[\rho''(P_k)Q_k] H_k.
\end{cases}
\end{equation}

The system is of McKean-Vlasov type, because it depends on its own law.

\begin{proposition}\label{prop:explicit_dependence_on_WeWu}
For any $k\in[0:K-1]$, we can write:
\[\mathbf{H}_{\wedge k}(s) = \boldsymbol{\Phi}_k(s).\begin{pmatrix}
    W_{\mathrm{in}}\\W_{\mathrm{out}}
\end{pmatrix},\;\;\;\mathbf{B}_{\wedge k}(s) = \boldsymbol{\Psi}_k(s).\begin{pmatrix}
    W_{\mathrm{in}}\\W_{\mathrm{out}}
\end{pmatrix},\] for some deterministic (and continuous) functions $\boldsymbol{\Phi}_k, \boldsymbol{\Psi}_k : [0,1]\to \R^{(k+1)\times (\dimin + \dimout)}$, which only depend on \(\boldsymbol{\Gamma}_{\wedge k}^H, \boldsymbol{\Gamma}_{\wedge k}^B\). In particular:
\begin{enumerate}[label=(\roman*),ref=(\roman*)]
    \item If $W_{\mathrm{in}}, W_{\mathrm{out}}$ are centered, then so is $(\mathbf{H}, \mathbf{B})$.
    \item If $(W_{\mathrm{in}}, W_{\mathrm{out}})^\top$ is a Gaussian vector, then $(\mathbf{H}, \mathbf{B})$ is a Gaussian process.
    \item Almost surely,
    \[\|(\mathbf{H}, \mathbf{B})\|_\infty \leq \sup_{s\in[0,1]}[ \|\boldsymbol{\Phi}_{K-1}(s)\|_{op}+ \|\boldsymbol{\Psi}_{K-1}(s)\|_{op} ] [ \|W_{\mathrm{in}} \|_{2,\dimin} + \|W_{\mathrm{out}} \|_{2,\dimout} ]. \]
    Namely, if $W_{\mathrm{in}}, W_{\mathrm{out}}$ are {a.s. bounded} random vectors, then $(\mathbf{H}, \mathbf{B})$ is a.s. bounded as well.
\end{enumerate}
    In any of these cases, we get that $(\mathbf{H}, \mathbf{B})$ has moments of all orders as a random variable on $L^2(\Omega, \C([0,1], \R^{2K}))$.
\end{proposition}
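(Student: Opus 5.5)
We argue by induction on $k$, using the explicit linear form of the limit system recalled just above (Remark~\ref{rem:linStruc}). The key observation is that, once the covariances $\boldsymbol{\Gamma}^H_{\wedge k},\boldsymbol{\Gamma}^B_{\wedge k}$ are fixed, all coefficients multiplying $\mathbf{B}_{\wedge k-1}$, $\mathbf{H}_{\wedge k-1}$ and $H_k$ in the ODEs for $H_k$ and $B_k$ are deterministic. Indeed, they are expectations over the Gaussian field $(\mathbf{Z}^h,\mathbf{Z}^b)$ of the skeleton vectors $\vec{\mathbf{v}}^F_k,\vec{\mathbf{v}}^G_k$, of their gradients, and of $\rho(P_k),\rho'(P_k),\rho''(P_k)$. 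These depend only on the covariance of $(\mathbf{Z}^h,\mathbf{Z}^b)$, hence only on $\boldsymbol{\Gamma}^H,\boldsymbol{\Gamma}^B$. They are finite by Lemma~\ref{lem:explicit_regskel} combined with the exponential moments of Remark~\ref{rem:GaussianHasExponentialMoments}, and continuous in $s$ because the covariances are continuous (the solution lies in $H^1$).

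\emph{Base case.} $H_0(s)=W_{\mathrm{in}}\cdot x$, and the proof of Theorem~\ref{thm:well_posedness_main} gives $B_0(s)=W_{\mathrm{out}}\cdot\nabla\ell(y_0)$. Both are linear in $(W_{\mathrm{in}},W_{\mathrm{out}})$ with deterministic constant coefficients. Note that $y_0=\E[W_{\mathrm{out}}W_{\mathrm{in}}^\top]x$ is itself a second-moment quantity expressible through $\boldsymbol{\Gamma}$-type data.

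\emph{Inductive step, forward.} Assume $\mathbf{H}_{\wedge k-1}=\boldsymbol{\Phi}_{k-1}(W_{\mathrm{in}},W_{\mathrm{out}})^\top$ and $\mathbf{B}_{\wedge k-1}=\boldsymbol{\Psi}_{k-1}(W_{\mathrm{in}},W_{\mathrm{out}})^\top$. The ODE for $H_k$ has the form $\partial_s H_k=a_k(s)^\top\mathbf{B}_{\wedge k-1}(s)$ with $a_k$ deterministic and continuous, and initial value $H_k(0)=W_{\mathrm{in}}\cdot x$. Integrating gives
\[
H_k(s)=x^\top W_{\mathrm{in}}+\int_0^s a_k(r)^\top\boldsymbol{\Psi}_{k-1}(r)\,\d r\,(W_{\mathrm{in}},W_{\mathrm{out}})^\top ,
\]
which defines the new row of $\boldsymbol{\Phi}_k$.

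\emph{Inductive step, backward.} The ODE for $B_k$ is linear:
\[
\partial_s B_k=c_k(s)^\top\mathbf{H}_{\wedge k-1}(s)+e_k(s)H_k(s)+\alpha_k(s)B_k(s),
\qquad \alpha_k(s)=-\varu^2\E[\rho''(P_k)Q_k]\ \text{(deterministic)} .
\]
Here $B_k$ itself appears only through $Q_k$ inside expectations, so the coefficient is deterministic; the $\rho''(P_k)Q_kH_k$ term has been rewritten in this form by Remark~\ref{rem:linStruc}. I will double-check whether the displayed form really contains no $B_k$ coefficient. If it does, that is harmless, since a scalar linear ODE with a deterministic coefficient is solved by an integrating factor. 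The terminal condition $B_k(1)=W_{\mathrm{out}}\cdot\nabla\ell(y_k)$ is linear in $W_{\mathrm{out}}$ because $y_k=\E[H_k(1)W_{\mathrm{out}}]$ is deterministic. Duhamel's formula then writes $B_k(s)$ as a deterministic continuous matrix times $(W_{\mathrm{in}},W_{\mathrm{out}})^\top$, which defines the new row of $\boldsymbol{\Psi}_k$.

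Dependence only on $\boldsymbol{\Gamma}_{\wedge k}$ follows by tracking the induction. The constant $y_k=\E[H_k(1)W_{\mathrm{out}}]=\boldsymbol{\Phi}$-row $\times\,\E[(W_{\mathrm{in}},W_{\mathrm{out}})W_{\mathrm{out}}^\top]$ involves, besides $\boldsymbol{\Gamma}$, the fixed second moments of the embeddings, which I treat as problem data. This is the one point I expect needs care in matching the statement's wording.

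\emph{Consequences.} Each consequence follows directly from linearity with deterministic coefficients. Centered inputs give centered outputs. A Gaussian vector gives a Gaussian process, since all finite-dimensional marginals are linear images of the same Gaussian vector. The almost-sure bound follows from the operator-norm inequality, taking the supremum over $s$ of the continuous $\boldsymbol{\Phi}_{K-1},\boldsymbol{\Psi}_{K-1}$, which are finite on the compact interval $[0,1]$. Moments of all orders then follow from this bound, together with subgaussianity (or boundedness) of $(W_{\mathrm{in}},W_{\mathrm{out}})$.

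The main obstacle is justifying that the coefficients are genuinely deterministic and continuous. This requires that the $\F_W$-conditional expectations in \eqref{eq:LimH}-\eqref{eq:LimB} reduce, via the independence of $(\mathbf{Z}^h,\mathbf{Z}^b)$ from $\F_W$ and the representation \eqref{eq:cleaner_system_Fk_definition}, to plain expectations over $\mathbf{Z}$ multiplied by $\F_W$-measurable factors $\mathbf{H}_{\wedge k-1},\mathbf{B}_{\wedge k-1}$ that pull out linearly.
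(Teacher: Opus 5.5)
Your proposal is correct and is essentially the paper's own proof, which inductively integrates \eqref{eq:LimHdecoupledexplicit}--\eqref{eq:LimBdecoupledexplicit} starting from $H_0=W_{\mathrm{in}}\cdot x$ and $B_0=W_{\mathrm{out}}\cdot\nabla\ell(y_0)$, and reads off (i)--(iii) from linearity with deterministic coefficients. The only slip is in the backward step: the deterministic coefficient $-\varu^2\E[\rho''(P_k)Q_k]$ multiplies $H_k$, not $B_k$, and $B_k$ enters \eqref{eq:LimBdecoupledexplicit} only through its law, so no integrating factor is needed and $B_k(s)$ follows by direct integration from the terminal value at $s=1$.
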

\begin{proof}
    This is immediate by induction from
    \(H_0(s) = W_{\mathrm{in}}\cdot x,\;\; B_0(s) = W_{\mathrm{out}}\cdot \nabla \ell(f_0)\) and integrating equations \eqref{eq:LimHdecoupledexplicit}-\eqref{eq:LimBdecoupledexplicit}. All the consequences follow directly from the resulting expression. Namely:
    \begin{enumerate}[label=(\roman*),ref=(\roman*)]
    \item Immediate.
        \item For any $n\in\mathbb{N}$, $a\in\R^n$ and $t\in[0,1]^n$ we have that \(\sum_{i=1}^n a_i (\mathbf{H}(t_i), \mathbf{B}(t_i))\) is a linear combination of the coordinates of the Gaussian vector $(W_{\mathrm{in}}, W_{\mathrm{out}})^\top$, hence Gaussian. It follows that $(\mathbf{H}, \mathbf{B})$ is a Gaussian process.
        \item Follows directly from the fact that \(\|(\mathbf{H}, \mathbf{B})\|_\infty = \sup_{s\in[0,1]} \|(\mathbf{H}, \mathbf{B})(s)\|_{\R^{2K}}\) and from our explicit expressions for $\mathbf{H}(s)$ and $\mathbf{B}(s)$.
    \end{enumerate}
\end{proof}

We also get the following subgaussianity properties.

\begin{proposition}\label{prop:skeleton_subgaussianity}
    Assume that $W_{\mathrm{in}}$ and $W_{\mathrm{out}}$ are subgaussian vectors.
    Then, for all $k\in [0:K-1]$, there exists $\vpk >0$ depending only on \(\boldsymbol{\Gamma}_{\wedge k}^H, \boldsymbol{\Gamma}_{\wedge k}^B\), 
    $\|W_{\mathrm{in}}\|_{vp}$, $\|W_{\mathrm{out}}\|_{vp}$, $\dimin$ and $\dimout$, such that:
    \begin{enumerate}[label=(\roman*),ref=(\roman*)]
        \item \(\| (\bfH_{\wedge k}, \bfB_{\wedge k})\|_{\infty}\) is $\vpk^2$-subgaussian.
        \item There is an absolute constant $c>0$, such that
        \[\Big\|\sup_{s\in[0,1]}\|\bfH_{\wedge k}(s) \bfH_{\wedge k}(s)^\top - \E[\bfH_{\wedge k}(s) \bfH_{\wedge k}(s)^\top]\|_{2, (k+1)\times (k+1)}\Big\|_{\psi_1} \leq c\vpk^2,\]
        and similarly for other combinations of $\bfH_{\wedge k}$ and $\bfB_{\wedge k}$.
    \end{enumerate}
\end{proposition}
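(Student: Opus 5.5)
The plan is to derive everything from the linear representation of Proposition~\ref{prop:explicit_dependence_on_WeWu}. There, $\mathbf{H}_{\wedge k}(s) = \boldsymbol{\Phi}_k(s)\cdot(W_{\mathrm{in}},W_{\mathrm{out}})^\top$ and $\mathbf{B}_{\wedge k}(s) = \boldsymbol{\Psi}_k(s)\cdot(W_{\mathrm{in}},W_{\mathrm{out}})^\top$, with deterministic continuous matrices depending only on $\boldsymbol{\Gamma}^H_{\wedge k},\boldsymbol{\Gamma}^B_{\wedge k}$. Set
\[A_k := \sup_{s\in[0,1]}\|\boldsymbol{\Phi}_k(s)\|_{op}+\|\boldsymbol{\Psi}_k(s)\|_{op}.\]
This is finite because it is a sup of continuous functions on a compact set, and it depends only on the covariances.

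By item (iii) of Proposition~\ref{prop:explicit_dependence_on_WeWu}, applied at level $k$ rather than $K-1$, we have almost surely
\[\|(\bfH_{\wedge k},\bfB_{\wedge k})\|_\infty \le A_k\big(\|W_{\mathrm{in}}\|_{2,\dimin}+\|W_{\mathrm{out}}\|_{2,\dimout}\big).\]
A subgaussian vector in $\R^{n}$ with variance proxy $\tilde\sigma^2$ has a Euclidean norm that is subgaussian with $\psi_2$-norm at most $C\sqrt{n}\,\tilde\sigma$. This follows by bounding the norm by $\sqrt n$ times the max over coordinates, or by using a $1/2$-net on the sphere, which gives a $\sqrt{n}$-type dependence. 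Hence the right-hand side is subgaussian with $\psi_2$-norm bounded by $C A_k(\sqrt{\dimin}\,\vpin+\sqrt{\dimout}\,\vpout)$. Since the left-hand side is dominated almost surely by a nonnegative subgaussian variable, it is subgaussian itself with no larger $\psi_2$-norm. Item (i) then follows by defining $\vpk$ as a multiple of this quantity. It has the required dependencies.

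For item (ii), use $\|\bfH_{\wedge k}(s)\bfH_{\wedge k}(s)^\top\|_{2,(k+1)\times(k+1)} = \|\bfH_{\wedge k}(s)\|_2^2 \le (k+1)\|\bfH_{\wedge k}\|_\infty^2$. Together with (i), this gives
\[\big\|\sup_s\|\bfH_{\wedge k}\bfH_{\wedge k}^\top\|\big\|_{\psi_1}\le \big\|\|\bfH_{\wedge k}\|_\infty\big\|_{\psi_2}^2\lesssim \vpk^2 ,\]
using the standard fact $\|X^2\|_{\psi_1}=\|X\|_{\psi_2}^2$. The factor $k+1$ is absorbed into $\vpk$.

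The deterministic term satisfies
\[\sup_s\|\E[\bfH_{\wedge k}(s)\bfH_{\wedge k}(s)^\top]\|\le \E\big[\sup_s\|\bfH_{\wedge k}(s)\|_2^2\big]\lesssim\vpk^2,\]
and the $\psi_1$-norm of a constant $a$ is $|a|/\log 2$. The triangle inequality for $\|\cdot\|_{\psi_1}$ after bounding $\sup_s\|X(s)-\E X(s)\|\le\sup_s\|X(s)\|+\sup_s\|\E X(s)\|$ then gives the claim with an absolute constant $c$. Mixed products $\bfH\bfB^\top$ and $\bfB\bfB^\top$ are handled identically, using $\|ab^\top\|_2\le\frac12(\|a\|^2+\|b\|^2)$.

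The only mildly delicate point is item (i). One has to make sure the subgaussian bound is uniform in $s$, and that the constant depends only on the allowed quantities, which is why $\dimin$ and $\dimout$ appear. The almost-sure linear domination by $\|W\|$ with a deterministic uniform-in-$s$ factor handles both issues at once, so I expect no real obstacle beyond tracking constants.
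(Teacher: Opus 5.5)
Your proposal is correct and follows essentially the same route as the paper: the almost-sure domination $\|(\bfH_{\wedge k},\bfB_{\wedge k})\|_\infty\le C_k\big(\|W_{\mathrm{in}}\|_{2,\dimin}+\|W_{\mathrm{out}}\|_{2,\dimout}\big)$ from Proposition~\ref{prop:explicit_dependence_on_WeWu}, then subgaussianity of these Euclidean norms, then $\|X^2\|_{\psi_1}=\|X\|_{\psi_2}^2$ for (ii). The paper gets the subgaussianity from $\big\|\|W_{\mathrm{in}}\|_{2,\dimin}^2\big\|_{\psi_1}\le\sum_j\|(W_{\mathrm{in}})_j\|_{\psi_2}^2$ rather than a net argument, and your explicit centering via the triangle inequality and the bound on the deterministic term is a spelled-out version of the paper's ``modulo modifying $\vpk$ by an absolute constant'' step.
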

\begin{proof}
We note that, since $\Win, \Wout$ are subgaussian vectors, each of their coordinates are subgaussian as well.
\begin{enumerate}[label=(\roman*),ref=(\roman*)]
    \item From Proposition~\ref{prop:explicit_dependence_on_WeWu}, there exists $\cstH_k > 0$ depending only on \(\boldsymbol{\Gamma}_{\wedge k}^H, \boldsymbol{\Gamma}_{\wedge k}^B\) 
    such that 
    \[\|(\mathbf{H}_{\wedge k}, \mathbf{B}_{\wedge k})\|_{\infty} \leq \cstH_k [ \|W_{\mathrm{in}}\|_{2,\dimin} + \|W_{\mathrm{out}}\|_{2,\dimout} ]. \]
    From usual properties of subgaussian vectors, we have
    \[\|\|W_{\mathrm{in}}\|_{2, \dimin}\|_{\psi_2}^2 = \|\|W_{\mathrm{in}}\|_{2, \dimin}^2\|_{\psi_1} \leq \sum_{j=1}^{\dimin} \|(W_{\mathrm{in}})_j^2\|_{\psi_1} = \sum_{j=1}^{\dimin} \|(W_{\mathrm{in}})_j\|_{\psi_2}^2<+\infty,  \]
    and similarly for $W_{\mathrm{out}}$. That is, both \(\|W_{\mathrm{in}}\|_{2, \dimin}\) and \(\|W_{\mathrm{out}}\|_{2, \dimin}\) are subgaussian random variables whose variance proxy depends solely on $\|\Win\|_{vp}$, $\|\Wout\|_{vp}$, $\dimin$ and $\dimout$. Using the monotonicity of the Orlicz norm,
    \[
        \|\|(\mathbf{H}_{\wedge k}, \mathbf{B}_{\wedge k})\|_{\infty}\|_{\psi_2} \leq \cstH_k [ \|\|W_{\mathrm{in}}\|_{2,\dimin}\|_{\psi_2} + \|\|W_{\mathrm{out}}\|_{2,\dimout}\|_{\psi_2} ] < +\infty,
    \]
    so that \(\|(\mathbf{H}_{\wedge k}, \mathbf{B}_{\wedge k})\|_{\infty}\) is subgaussian. Since \[\|\|(\mathbf{H}_{\wedge k}, \mathbf{B}_{\wedge k})\|_{\infty}-\E[\|(\mathbf{H}_{\wedge k}, \mathbf{B}_{\wedge k})\|_{\infty}]\|_{\psi_2} \leq c \|\|(\mathbf{H}_{\wedge k}, \mathbf{B}_{\wedge k})\|_{\infty}\|_{\psi_2}\] by the centering lemma \cite[Lemma 2.7.8]{vershynin2018highdimensionalproba}, it follows that \(\|(\mathbf{H}_{\wedge k}, \mathbf{B}_{\wedge k})\|_{\infty}\) is $\vpk^2$-subgaussian with a variance proxy $\vpk$ that only depends on $\cstH_k$, $\|W_{\mathrm{in}}\|_{vp}$, $\|W_{\mathrm{out}}\|_{vp}$, $\dimin$ and $\dimout$.

    \item We note that
    \[
       \sup_{s\in[0,1]}\|\bfH_{\wedge k}(s) \bfH_{\wedge k}(s)^\top\|_{2, (k+1)\times (k+1)} \leq \sup_{s\in[0,1]} \|\bfH_{\wedge k}(s)\|_{2,k+1}^2 \leq  \|(\mathbf{H}_{\wedge k}, \mathbf{B}_{\wedge k})\|_{\infty}^2,
    \]
    so that
    \[\|\sup_{s\in[0,1]}\|\bfH_{\wedge k}(s) \bfH_{\wedge k}(s)^\top\|_{2, (k+1)\times (k+1)}\|_{\psi_1} \leq  \|\|(\mathbf{H}_{\wedge k}, \mathbf{B}_{\wedge k})\|_{\infty}^2\|_{\psi_1} = \|\|(\mathbf{H}_{\wedge k}, \mathbf{B}_{\wedge k})\|_{\infty}\|_{\psi_2}^2.\]
    Thus, \(\sup_{s\in[0,1]}\|\bfH_{\wedge k}(s) \bfH_{\wedge k}(s)^\top\|_{2, (k+1)\times (k+1)}\) is subexponential, with norm depending solely on $\vpk^2$. Modulo modifying $\vpk$ by an absolute constant, the same holds for \[\sup_{s\in[0,1]}\|\bfH_{\wedge k}(s) \bfH_{\wedge k}(s)^\top - \E[\bfH_{\wedge k}(s) \bfH_{\wedge k}(s)^\top]\|_{2, (k+1)\times (k+1)}\] and also for analogous terms combining $\bfH_{\wedge k}$ and $\bfB_{\wedge k}$.
\end{enumerate}
\end{proof}

\section{Quantitative convergence} \label{sec:CV}

In this section, we rigorously prove Theorem~\ref{thm:quantitative_convergence_main}, which we recall here.
\quantitativeconvergencethm*

In Theorem~\ref{thm:well_posedness_main} we established that the mean-field system \eqref{eq:LimH}-\eqref{eq:LimB} has a unique solution in $L^2(\Omega, H^1([0,1],\R^{2K}))$. 
We now consider an i.i.d.~sequence of random subgaussian vectors $(W^d_e,W^d_u)_{d \geq 1}$ of variance proxies $\vpin^2$ and $\vpout^2$ respectively. Let \((\mathbf{H}^d,\mathbf{B}^d)_{d\geq 1}\) be an i.i.d.~sequence of copies of the (unique) solution of \eqref{eq:LimH}-\eqref{eq:LimB} with initial conditions $(W_{\mathrm{in}}^d, W_{\mathrm{out}}^d)$ for each $d\geq 1$.
That is, for $d\geq 1$, we have
\[
\begin{cases}
\partial_s H^{d}_k (s) =  \overline\H^{d}_k ( s), \\
H^{d}_k (0) = W_{\mathrm{in}}^d \cdot x,\\
y_k :=\E[H_k^{d}(1)W_{\mathrm{out}}^d],
\end{cases}
\qquad
\begin{cases}
\partial_s B^{d}_k (s) =  \overline\B^{d}_k ( s), \\
B^{d}_k (1) = W_{\mathrm{out}}^d \cdot \nabla\ell(y^{\infty}_k),
\end{cases}
\]
where \(\overline\H^{d}_k ( s)\) and \(\overline\B^{d}_k ( s)\) correspond to the right hand side of \eqref{eq:LimH} and \eqref{eq:LimB} respectively, with the added marker $d\geq 1$.
We notice that $y_k$ is unambiguously defined independently of $d$, because the variables  $(\bfH^d, \bfB^d)_{d\geq 1}$ are i.i.d.~Also, let $\mathbf{F}^{\bfH^d, \bfB^d}$, $\mathbf{G}^{\bfH^d, \bfB^d}$ be the skeleton maps associated to $(\bfH^d, \bfB^d)$ as in Definition~\ref{def:mf_skeleton_maps}. For simplicity throughout this section, for $D\geq 1$, we use the notation $\mathbf{F}^{(D)} := (\mathbf{F}^d := \mathbf{F}^{\bfH^d, \bfB^d})_{d=1}^D$ and similarly for \(\mathbf{G}^{(D)}\).

We couple these copies of the limit system to the finite-dimensional one, by sharing the same initial conditions. Let $D\geq 1$, and consider the solution \(\mathbf{h}^{(D)},\mathbf{b}^{(D)}\) of the Mean ODE \eqref{eq:original_finite_D_system}, initialized using the matrices $W_{\mathrm{in}}^{(D)}$ and $W_{\mathrm{out}}^{(D)}$, of rows $(W_{\mathrm{in}}^d)_{d=1}^D$ and $(W_{\mathrm{out}}^d)_{d=1}^D$, respectively; and with parameter initialization given by the i.i.d.~sequence $(U^d,V^d)_{d\geq 1}$ of independent, centered, subgaussian random variables, of variance proxies $\vpu^2$, $\vpv^2$ and variances \(\varu^2, \varv^2\), also independent from $\Finit$.
Also, let $\mathbf{f}^{\mathbf{h}^D,\mathbf{b}^D}, \mathbf{g}^{\mathbf{h}^D,\mathbf{b}^D}$ be the skeleton maps associated to \(\mathbf{h}^{(D)},\mathbf{b}^{(D)}\), as in Definition~\ref{def:finite_skeleton_maps}, which we here denote by \(\mathbf{f}^{(D)}, \mathbf{g}^{(D)}\) for simplicity. 

\medskip
In other words, consider the finite-dimensional system, for $D\geq 1$ and $d\in[1:D]$,
\[
\begin{cases}
\partial_s h^{d}_k (s) =  \H^{d}_k ( s, \mathbf{h}^{(D)}), \\
h^{d}_k (0) = W_{\mathrm{in}}^d \cdot x,\\
\hat{y}_k^{D} := \frac{1}{D} \sum_{d=1}^D h_k^{d}(1)W_{\mathrm{out}}^d,
\end{cases}
\qquad
\begin{cases}
\partial_s b^{d}_k (s) =  \B^{d}_k ( s, \mathbf{b}^{(D)}), \\
b^{d,D}_k (1) = W_{\mathrm{out}}^d \cdot \nabla \ell(\hat{y}_k^{D}),
\end{cases}
\]
where we define
\begin{align*}
\H^{d}_k ( s, \bfh^{(D)} ) 
&:= \E [ \rho ( P_k^{\bfh^D, \mathbf{b}^D}) [ \sqrt{D} V^d + g^{d}_k (s, \bfS^{\bfh^D}, \bfS^{\mathbf{b}^D} ) ] \vert \Finit],\\
\B^{d}_k ( s, \bfb^{(D)} ) 
&:= -\E [ \rho' (P_k^{\mathbf{h}^D, \mathbf{b}^D})Q_k^{\mathbf{h}^D, \mathbf{b}^D} [ \sqrt{D} U^d + f^{d}_k (s, \bfS^{\mathbf{h}^D}, \bfS^{\mathbf{b}^D} ) ] \vert \Finit ],
\end{align*}
for
\[\begin{cases}
    P_k^{\bfh^D, \mathbf{b}^D} :=S^{\bfh^D}_k + \langle f^{(D)}_k (s, \bfS^{\bfh^D}, \bfS^{\mathbf{b}^D}  ) , h^{(D)}_k (s)\rangle_{\ndbar}, \\
    Q_k^{\mathbf{h}^D, \mathbf{b}^D} := S^{\mathbf{b}^D}_k + \langle g^{(D)}_k (s, \bfS^{\mathbf{h}^D}, \bfS^{\mathbf{b}^D}  ), b^{(D)}_k (s)\rangle_{\ndbar},
\end{cases}\]
and, for $s\in[0,1]$,
\[ ( \bfS^{\mathbf{h}^D}(s), \bfS^{\mathbf{b}^D}(s)) := \bigg( \bigg( \frac{1}{\sqrt{D}} \sum_{d=1}^D U^{d} h^{d}_i(s) \bigg)_{0 \leq i \leq K-1} , \bigg( \frac{1}{\sqrt{D}} \sum_{d=1}^D V^{d} b^{d}_i(s) \bigg)_{0 \leq i \leq K-1} \bigg). \]
We similarly define $(\bfS^{\mathbf{H}^D}(s),\bfS^{\mathbf{B}^D}(s))$. As explained in Section~\ref{sec:basic_notation}, we write \(\H^{(D)}_k ( s, \bfh^{(D)} ) = (\H^{d}_k ( s, \bfh^{(D)} ))_{d=1}^D\), and similarly for $\B^{(D)}_k ( s, \bfb^{(D)} )$.

\subsection{A priori estimates}

\subsubsection{Conditional exponential moments}

Only for this subsection, consider $A^{(D)} = (A^d)_{d=1}^D$ a generic $\Finit$-measurable random vector on $\R^{D}$. Also, consider $(U^d)_{d\geq 1}$ an i.i.d.~sequence of independent, centered, subgaussian random variables, of variance proxy $\vpu^2$ and variance $\varu^2$; independent from $\Finit$. We now study the properties of the CLT-type sum: \[S^{A^{(D)}} := \frac{1}{\sqrt{D}}\sum_{d=1}^D U^dA^d. \]

\begin{lemma} \label{lem:Magic}
It holds that:
\[
\E [ \vert S^{A^{(D)}}\vert^2 \vert \Finit ] = \varu^2 \| A^{(D)} \|^2_{\ndbar}
\]
\end{lemma}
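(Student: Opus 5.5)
The identity is a direct second-moment computation: expand the square and use the independence of $(U^d)_{d\geq 1}$ from $\Finit$ to pull the $\Finit$-measurable coefficients $A^d$ out of the conditional expectation. Writing
\[
\vert S^{A^{(D)}}\vert^2 = \frac{1}{D}\sum_{d,d'=1}^D U^d U^{d'} A^d A^{d'},
\]
we take $\E[\cdot\,\vert\,\Finit]$ term by term. Since $A^d A^{d'}$ is $\Finit$-measurable, it factors out, so $\E[U^dU^{d'}A^dA^{d'}\vert\Finit] = A^dA^{d'}\,\E[U^dU^{d'}\vert\Finit]$. Because $(U^d)_d$ is independent of $\Finit$, this conditional expectation equals the unconditional one, $\E[U^dU^{d'}]$.

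For $d\neq d'$, independence and centering give $\E[U^dU^{d'}]=\E[U^d]\E[U^{d'}]=0$. For $d=d'$, we get $\E[(U^d)^2]=\varu^2$. Only the diagonal survives, which gives
\[
\E[\vert S^{A^{(D)}}\vert^2\vert\Finit]=\frac{\varu^2}{D}\sum_{d=1}^D (A^d)^2=\varu^2\|A^{(D)}\|_{\ndbar}^2 .
\]

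The only point needing care is integrability, so that the conditional expectations are well defined and linearity over the finite double sum is justified. I would first argue for nonnegative $A^d$-truncations, or simply note that everything is nonnegative after conditioning. Concretely, condition on $\Finit$ by regarding $A^{(D)}$ as a fixed deterministic vector: by independence, the regular conditional law of $(U^d)_d$ given $\Finit$ is its unconditional law. Each $U^d$ is subgaussian, hence has finite second moment, so for fixed $A^{(D)}$ the sum $S^{A^{(D)}}$ is square-integrable, and the computation above holds pointwise in $\omega$.
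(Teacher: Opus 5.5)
Your proof is correct and is essentially the paper's computation: expand the square, pull the $\Finit$-measurable products $A^dA^{d'}$ out of the conditional expectation, and use independence and centering of the $U^d$ to kill the off-diagonal terms, which leaves $\varu^2\|A^{(D)}\|_{\ndbar}^2$. Your extra justification, freezing $A^{(D)}$ and applying the unconditional law of $(U^d)_d$ to the nonnegative quantity $|S^{A^{(D)}}|^2$, is a valid refinement that the paper omits, and it makes the aside about truncations unnecessary.
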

\begin{proof}
    It suffices to see that
    \[
\E [ \vert S^{A^{(D)}}\vert^2 \vert \Finit] = \frac{1}{D}\sum_{d, d'=1}^D\E [ U^dA^d U^{d'}A^{d'} \vert \Finit] =\frac{\E[ (U^1)^2]}{D} \sum_{d=1}^D \vert A^d \vert^2 = \varu^2 \| A^{(D)} \|_{\ndbar}^2,
\]
where we used the independence and centeredness of $(U^d, V^d)_{d\geq 1}$. 
\end{proof}
We further prove a conditional subgaussianity property for our CLT sums.
\begin{lemma}\label{lem:exp_moment_magic}
Conditionally on $\Finit$, $S^{A^{(D)}}$ is a centered \(\vpu^2 \|A^{(D)}\|_{\ndbar}^2\)-subgaussian random variable. As a consequence,
    \[ \forall t \in \R, \qquad
\E[ \exp(t|S^{A^{(D)}}|) \vert \Finit] \leq 2\exp\bigg(\frac{t^2\vpu^2}{2}\|A^{(D)}\|^2_{\ndbar}\bigg). \]
\end{lemma}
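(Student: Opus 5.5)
The plan is to compute the conditional moment generating function of $S^{A^{(D)}}$ directly, using that $A^{(D)}$ is $\Finit$-measurable while the $U^d$ are i.i.d., centered and independent of $\Finit$. Conditionally on $\Finit$, the coefficients $A^d/\sqrt{D}$ act as deterministic weights. For any $\lambda\in\R$, independence and the freezing property of conditional expectation give
\[
\E[\exp(\lambda S^{A^{(D)}})\,\vert\,\Finit] = \prod_{d=1}^D \E[\exp(\lambda A^d U^d/\sqrt{D})\,\vert\,\Finit] = \prod_{d=1}^D \varphi_U(\lambda A^d/\sqrt{D}),
\]
where $\varphi_U(t):=\E[e^{tU^1}]$. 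Since $U^1$ is centered with variance proxy $\vpu^2$, we have $\varphi_U(t)\le \exp(t^2\vpu^2/2)$. The product is therefore bounded by $\exp\big(\frac{\lambda^2\vpu^2}{2}\cdot\frac1D\sum_d |A^d|^2\big)=\exp\big(\frac{\lambda^2\vpu^2}{2}\|A^{(D)}\|_{\ndbar}^2\big)$. This is precisely the statement that, conditionally on $\Finit$, $S^{A^{(D)}}$ is centered and $\vpu^2\|A^{(D)}\|_{\ndbar}^2$-subgaussian in the MGF sense. Centering itself follows from $\E[U^d]=0$ and $\Finit$-measurability of $A^d$.

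For the exponential-moment bound, I would use the elementary inequality $e^{t|x|}\le e^{tx}+e^{-tx}$ for $t\ge 0$. Applying the MGF bound at $\lambda=t$ and $\lambda=-t$ then gives
\[
\E[e^{t|S^{A^{(D)}}|}\,\vert\,\Finit]\le 2\exp\big(\tfrac{t^2\vpu^2}{2}\|A^{(D)}\|^2_{\ndbar}\big).
\]
For $t<0$ the left-hand side is at most $1$, so the bound holds trivially, and the right-hand side is even in $t$ anyway.

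The only point needing care is the convention for "variance proxy". If $\vpu^2$ is meant in the MGF sense $\E[e^{tU}]\le e^{t^2\vpu^2/2}$, the argument above is immediate. If it is instead defined via the $\psi_2$ norm, one first converts using the standard equivalence for centered subgaussian variables, e.g. \cite[Prop.~2.5.2]{vershynin2018highdimensionalproba}, at the cost of an absolute constant absorbed into $\vpu$. I would adopt the MGF convention, consistent with the stated constant. The conditioning step needs no regular conditional probabilities: the $U^d$ are independent of $\Finit$, so the conditional expectation equals the function $a\mapsto\prod_d\varphi_U(\lambda a_d/\sqrt D)$ evaluated at $A^{(D)}$.
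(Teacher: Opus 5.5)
Your proof is correct and takes the paper's approach. The paper simply cites the standard fact that a weighted sum of independent centered subgaussians (conditionally on $\Finit$, with frozen weights $A^d/\sqrt{D}$) has variance proxy $\vpu^2\|A^{(D)}\|_{\ndbar}^2$, which is exactly your MGF product computation; it leaves implicit the step $e^{t|x|}\le e^{tx}+e^{-tx}$ that you make explicit, and your MGF convention for the variance proxy matches the definition of $\|\cdot\|_{vp}$ in the paper's appendix.
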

\begin{proof}
    Let $t\in \R$. Since $U:=(U^d)_{d\geq 1}$ is independent from $\Finit$, conditionally on $\Finit$ each variable is still  $\vpu^2$-subgaussian.
    Conditionally on $\Finit$, $S^{A^{(D)}} = \frac{1}{\sqrt{D}}\sum_{d=1}^D A^dU^d$ is a weighted sum of independent and centered subgaussian random variables. 
By standard properties of subgaussian random variables, $S^{A^{(D)}}$ is a centered subgaussian random variable, with variance proxy $\tau^2 := \vpu^2 \frac{1}{D}\sum_{d=1}^D |A^d|^2 =\vpu^2 \|A^{(D)}\|_{\ndbar}^2$. The conclusion follows.
\end{proof}
As a consequence of conditional subgaussianity, we get conditional moments estimates.
\begin{lemma}\label{lem:moment_magic}
    For every $p\geq 2$, there exists $C_p >0$ that only depends on $p$, such that
    \[
\E^{1/p} [ \vert S^{A^{(D)}}\vert^p \vert \Finit] \leq C_p \vpu\| A^{(D)} \|_{\ndbar}. \]
\end{lemma}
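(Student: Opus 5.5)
The bound follows directly from the conditional subgaussianity established in Lemma~\ref{lem:exp_moment_magic}, combined with the standard equivalence between subgaussian tails and $L^p$ growth. Throughout we work conditionally on $\Finit$. Because $A^{(D)}$ is $\Finit$-measurable and $(U^d)_{d\geq1}$ is independent of $\Finit$, the quantity $\tau := \vpu\|A^{(D)}\|_{\ndbar}$ acts as a deterministic scale under the conditional law. If $\tau=0$, then $S^{A^{(D)}}=0$ almost surely and there is nothing to prove, so we assume $\tau>0$.

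\textbf{Step 1 (tail bound).} Lemma~\ref{lem:exp_moment_magic} gives, conditionally on $\Finit$, $\E[e^{\lambda S^{A^{(D)}}}\mid\Finit]\le e^{\lambda^2\tau^2/2}$ for every $\lambda\in\R$. We apply Chernoff's bound to $S^{A^{(D)}}$ and to $-S^{A^{(D)}}$, choosing $\lambda=u/\tau^2$. This yields
\[\P(|S^{A^{(D)}}|>u\mid\Finit)\le 2e^{-u^2/(2\tau^2)}.\]

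\textbf{Step 2 (moments from tails).} We use the layer-cake formula:
\[\E[|S^{A^{(D)}}|^p\mid\Finit]=\int_0^\infty p u^{p-1}\P(|S^{A^{(D)}}|>u\mid\Finit)\,\d u\le 2p\int_0^\infty u^{p-1}e^{-u^2/(2\tau^2)}\d u.\]
The substitution $u=\tau\sqrt{2v}$ evaluates the right-hand side as $p\,2^{p/2}\Gamma(p/2)\,\tau^p$.

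\textbf{Step 3 (conclusion).} Taking $p$-th roots gives the claim with $C_p:=(p\,2^{p/2}\Gamma(p/2))^{1/p}$, which depends only on $p$. Stirling's formula shows in addition that $C_p=O(\sqrt p)$, although this is not needed here.

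The only point that needs care is that all probabilities are conditional on $\Finit$. This is harmless because the conditional law of $(U^d)_{d\geq1}$ equals its unconditional law, which is exactly what Lemma~\ref{lem:exp_moment_magic} already uses. Alternatively, one can start from the bound $\E[e^{t|S|}\mid\Finit]\le 2e^{t^2\tau^2/2}$ stated in that lemma, use $u^p\le (p/(et))^p e^{tu}$ for $u\ge 0$, and optimise over $t=\sqrt p/\tau$. This route gives the same $O(\sqrt p)\,\tau$ bound.
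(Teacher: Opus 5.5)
Your proof is correct and follows the paper's approach. The paper gives no written argument beyond noting that the moment bound follows from the conditional subgaussianity of Lemma~\ref{lem:exp_moment_magic}; your Chernoff-plus-layer-cake computation, with explicit constant $C_p=(p\,2^{p/2}\Gamma(p/2))^{1/p}$, is exactly that standard step written out.
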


\begin{rem}[Conditional moments of all orders]\label{rem:conditional_moments_of_all_orders}
The proof in the following sections
extensively uses Lemmas~\ref{lem:Magic},  \ref{lem:exp_moment_magic} and \ref{lem:moment_magic}, applied to 
\[ A^{(D)} \in \{ h_k^{(D)}(s), H_k^{(D)}(s), h_k^{(D)}(s)-H_k^{(D)}(s), b_k^{(D)}(s),B_k^{(D)}(s), b_k^{(D)}(s)-B_k^{(D)}(s) \}, \] 
for $(k,s) \in[0:K-1] \times [0,1]$.
These allow us to control arbitrary moments of $S_k^{\bfh^D}$, $S_k^{\bfH^D}$, $(S_k^{\bfh^D} -S_k^{\bfH^D})$, $S_k^{\bfb^D}$, $S_k^{\bfB^D}$ and $(S_k^{\bfb^D} -S_k^{\bfB^D})$.
\end{rem}

\subsubsection{Bounds in RMS norm for the limit system} 

Similarly to Lemmas~\ref{lem:embeddingcontrol} and \ref{lem:finite_skeleton_bounded_ndbar} we can control the RMS norm of the vectors $(H_k^{(D)}(s), B_k^{(D)}(s))$, for all $k\in[0:K-1]$ and $s\in[0,1]$.

\begin{lemma}\label{lem:MF_skeleton_bounded_ndbar} There exists a constant $\cstH_k$ depending only on $k$ and $\boldsymbol{\Gamma}_{\wedge k}^H, \boldsymbol{\Gamma}_{\wedge k}^B$, such that
    \[\bigg\| \sup_{\substack{s\in[0,1]\\j\in[0:k-1]}}|H_j^{(D)}(s)|\vee|B_j^{(D)}(s)|\bigg\|_{\ndbar} \leq \cstH_k [ \|W_{\mathrm{in}}^{(D)}\|_{\ndbar} + \|W_{\mathrm{out}}^{(D)}\|_{\ndbar} ] =: \cstW_k \;\; \text{a.s.}, \]
    and for $\bfz^h,\bfz^b\in\R^K$ such that $\max_{j\in[0:k-1]} \vert z^h_j \vert \vee \vert z^b_j \vert \leq M$, $M \geq 1$, we have
    \[\bigg\| \sup_{\substack{s\in[0,1]\\j\in[0:k]}}|F^{(D)}_k(s, \bfz^h, \bfz^b)|\vee |G_k^{(D)}(s, \bfz^h, \bfz^b)| \bigg\|_{\ndbar}  \leq \cstH_k M[\|W_{\mathrm{in}}^{(D)}\|_{\ndbar} + \|W_{\mathrm{out}}^{(D)}\|_{\ndbar} ] =\cstW_k M \;\; \text{a.s.}, \]
    where we denote by $\cstW_k$ a generic $\Finit$-measurable random constant, depending only on $\cstH_k$ and $\|W_{\mathrm{in}}^{(D)}\|_{\ndbar}$, $\|W_{\mathrm{in}}^{(D)}\|_{\ndbar}$, independently of $D$.
\end{lemma}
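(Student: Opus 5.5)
The plan is to reduce both bounds to the pointwise (a.s.) linear representation of Proposition~\ref{prop:explicit_dependence_on_WeWu} and to the a.s. skeleton bound of Lemma~\ref{lem:RegSkel}-\ref{lemRegSkel:as}. Then we average over coordinates $d\in[1:D]$ in RMS norm.

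First bound. By Proposition~\ref{prop:explicit_dependence_on_WeWu}, each copy $(\bfH^d,\bfB^d)$ is the same deterministic linear map $(\boldsymbol{\Phi}_{k},\boldsymbol{\Psi}_{k})$ applied to $(W_{\mathrm{in}}^d,W_{\mathrm{out}}^d)$, since the copies are i.i.d.\ and the maps depend only on $\boldsymbol{\Gamma}^H_{\wedge k},\boldsymbol{\Gamma}^B_{\wedge k}$. Item (iii) of that proposition gives, almost surely and for every $d$,
\[
\sup_{\substack{s\in[0,1]\\ j\in[0:k-1]}}|H_j^{d}(s)|\vee|B_j^{d}(s)| \leq \cstH_k\big(\|W_{\mathrm{in}}^d\|_{2,\dimin}+\|W_{\mathrm{out}}^d\|_{2,\dimout}\big),
\]
with $\cstH_k:=\sup_s\|\boldsymbol{\Phi}_{k}(s)\|_{op}+\|\boldsymbol{\Psi}_{k}(s)\|_{op}$. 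This constant is finite by continuity on $[0,1]$. Squaring, averaging over $d$ and using Minkowski's inequality in $\R^D$ with the rescaled Frobenius norm $\|\cdot\|_{\ndbar}$ gives exactly $\cstH_k(\|W_{\mathrm{in}}^{(D)}\|_{\ndbar}+\|W_{\mathrm{out}}^{(D)}\|_{\ndbar})$.

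Second bound. I will apply Lemma~\ref{lem:RegSkel}-\ref{lemRegSkel:as} to each copy $(\bfH^d,\bfB^d)$. Its constant $C_k^{(ii)}$ depends only on $k(\eta_u\vee\eta_v)$ and on $\|M_{\wedge k-1}\|_\infty$, which are $L^2$ norms of the law. Since the copies share the same law, this constant is common to all $d$ and is a function of $\boldsymbol{\Gamma}^H_{\wedge k},\boldsymbol{\Gamma}^B_{\wedge k}$, because the diagonals of these matrices give the $M_j$. One should check that $F^d=F^{\bfH^d,\bfB^d}$ is defined through expectations under the common law and evaluated at the realization $\bfH^d$; this is the content of \eqref{eq:cleaner_system_Fk_definition}. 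Lemma~\ref{lem:RegSkel}-\ref{lemRegSkel:as} then gives, a.s.,
\[
\sup_{j\in[0:k],\,s}|F^d_j|\vee|G^d_j| \leq C_k^{(ii)}M\sup_{j\le k-1,\,s}|H_j^d(s)|\vee|B_j^d(s)|.
\]
Taking the $\|\cdot\|_{\ndbar}$ norm over $d$ and inserting the first bound concludes, after enlarging $\cstH_k$.

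The only delicate point is the indexing. The skeleton at step $k$ involves $\bfH_{\wedge k-1}$, while the proposition is stated for $\bfH_{\wedge k}$. Using the step-$k$ (or step-$(K-1)$) maps $\boldsymbol{\Phi},\boldsymbol{\Psi}$ covers both cases at the cost of a constant depending on $\boldsymbol{\Gamma}_{\wedge k}$, which is allowed by the statement. Nothing here is a genuine obstacle; the whole argument is bookkeeping of constants.
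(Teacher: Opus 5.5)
Your proposal is correct and matches the paper's proof: the a.s.\ linear representation from item (iii) of Proposition~\ref{prop:explicit_dependence_on_WeWu}, averaged in RMS norm, gives the first bound. Lemma~\ref{lem:RegSkel}-\ref{lemRegSkel:as}, whose constant is common to all i.i.d.\ copies because it depends only on their shared law, then gives the second. Your Minkowski step $\|a+b\|_{\ndbar}\le\|a\|_{\ndbar}+\|b\|_{\ndbar}$ is slightly cleaner than the paper's intermediate display, which silently drops the factor $2$ from $(a+b)^2\le 2(a^2+b^2)$; this is harmless, since the factor is absorbed into $\cstH_k$.
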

\begin{proof}
As before, let $\cstH_k$ denote a constant depending only on $k$ and $\boldsymbol{\Gamma}_{\wedge k}^H, \boldsymbol{\Gamma}_{\wedge k}^B$. From Proposition~\ref{prop:explicit_dependence_on_WeWu}, we know that for all $d\in [1:D]$, $s\in[0,1]$ and $j\in[0:k]$, we have $|H_j^d(s)|\vee|B_j^d(s)| \leq \cstH_k[\|W_{\mathrm{in}}^d\|_{2, \dimin} + \|W_{\mathrm{out}}^d\|_{2, \dimout}]$ a.s. Taking $\sup_{j,s}$ and RMS norm, we get
    \begin{multline*}
        \bigg\| \sup_{\substack{s\in[0,1]\\j\in[0:k]}}|H_j^{(D)}(s)|\vee|B_j^{(D)}(s)|\bigg\|_{\ndbar} \leq \cstH_k\sqrt{\frac{1}{D}\sum_{d=1}^D \|W_{\mathrm{in}}^d\|_{2, \dimin}^2 + \|W_{\mathrm{out}}^d\|_{2, \dimout}^2} \\\leq \cstH_k[ \|W_{\mathrm{in}}^{(D)}\|_{\ndbar} + \|W_{\mathrm{out}}^{(D)}\|_{\ndbar} ].
    \end{multline*}
    From here, using Lemma~\ref{lem:RegSkel}-\ref{lemRegSkel:as}, we obtain the second bound.
\end{proof}

\subsubsection{Concentration estimates}

From this point onward, we consider the following LLN terms, which play a key role in the proof of Theorem~\ref{thm:quantitative_convergence_main}.

\begin{definition}[Subexponential LLN concentration terms]\label{def:lln_terms_RW}
    Consider the solution \((\bfH, \bfB)\) of \eqref{eq:LimH}-\eqref{eq:LimB}. A \textit{subexponential LLN concentration term}, denoted by $\llnW_k(s)$ for $s\in[0,1]$, is a quantity of the form
    \[\llnW_k(s) = \bigg\|\frac{1}{D}\sum_{d=1}^D X^d_k(s)\bigg\|_{\H},\]
    where $(X^d_k(s))_{d\geq 1}$ is an i.i.d.~sequence of centered, subexponential and $\Finit$-measurable random variables defined on a finite-dimensional vector space $\H$ 
    such that $\|\sup_{s\in[0,1]} \| X^1_k(s)\|_{\H} \|_{\psi_1} \leq c\vpk^2$, for some absolute constant $c>0$ and $\vpk^2$ being the centered variance proxy of $\|(\bfH_{\wedge k-1}, \bfB_{\wedge k-1})\|_{\infty}$.
    For brevity, we also absorb finite sums of such terms into the notation $\llnW_k(s)$.
\end{definition}

\begin{rem}[Examples of subexponential LLN concentration terms]\label{rem:examples_llnW_terms}
    Using the subexponential property from Proposition~\ref{prop:skeleton_subgaussianity}, the following are  subexponential LLN concentration terms:
    \[\begin{cases}
        \llnW_k(s) := \big\| \frac{1}{D}\sum_{d=1}^{D}  \bfH_{\wedge k-1}^d(s) H_k^d(s) - \E[\bfH_{\wedge k-1}^1(s) H_k^1(s)]\big\|_{2,k},\\
        \llnW_k(s) := \big\| \frac{1}{D}\sum_{d=1}^{D}  \bfH_{\wedge k-1}^d(s) \bfH_{\wedge k-1}^d(s)^{\top} - \E[\bfH_{\wedge k-1}^1(s) \bfH_{\wedge k-1}^1(s)^{\top}]\big\|_{2,k\times k},\\
        \llnW_k(s) := \big\| \frac{1}{D}\sum_{d=1}^{D}  \Win \Wout^{\top} - \E[\Win \Wout^{\top}]\big\|_{2,\dimin\times\dimout},
    \end{cases}\]
    as well as many other variants involving $\bfH$, $\bfB$, $\Win$ and $\Wout$.
\end{rem}

We have the following concentration result for such terms.
\begin{proposition}\label{prop:bernstein_control_llnW}
    Let $\llnW_k(s)$ be a subexponential LLN concentration term as in Definition~\ref{def:lln_terms_RW}.
    For some absolute constant $c>0$, it holds that for any $k\in[0:K-1]$, $s\in[0,1]$, $D\geq 1$ and $\delta\in(0,1)$,
    \[\P\Bigg(\llnW_k(s) \leq c\vpk^2\bigg(\frac{\log(2/\delta)}{D} + \sqrt{\frac{\log(2/\delta)}{D}} \bigg) \Bigg) \geq 1-\delta,\]
    where $\vpk^2$ is the variance proxy of $\|(\bfH_{\wedge k-1}, \bfB_{\wedge k-1})\|_{\infty}$.
    More importantly,
    \[\P\Bigg(\int_0^1\llnW_k(s)\d s \leq c\vpk^2\bigg(\frac{\log(2/\delta)}{D} + \sqrt{\frac{\log(2/\delta)}{D}} \bigg) \Bigg) \geq 1-\delta.\]
    In particular,
    \[\P\Bigg(\max_{\substack{j\in[0:k]}}\int_0^1\llnW_j(s)\d s \leq c\vpk^2\bigg(\frac{\log(2k/\delta)}{D} + \sqrt{\frac{\log(2k/\delta)}{D}} \bigg) \Bigg) \geq 1-\delta.\]
\end{proposition}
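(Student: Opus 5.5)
The plan is to reduce all three claims to Bernstein's inequality for sums of i.i.d.\ centered subexponential vectors. The only subtlety is that the integrated and supremum-type quantities must inherit a $\psi_1$ bound that does not depend on $s$. Definition~\ref{def:lln_terms_RW} provides exactly this through $\|\sup_{s}\|X^1_k(s)\|_{\H}\|_{\psi_1}\le c\vpk^2$.

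\textbf{Pointwise bound.} Fix $s$. Choose an orthonormal basis of the finite-dimensional space $\H$, with dimension $m$ depending only on $k$, $\dimin$, $\dimout$. Each coordinate of $X^d_k(s)$ is centered and subexponential with $\psi_1$-norm at most $c\vpk^2$. Apply the scalar Bernstein inequality (e.g.\ \cite[Cor.~2.8.3]{vershynin2018highdimensionalproba}) to each coordinate at level $\delta/m$, then take a union bound. Summing the coordinates gives
\[
\llnW_k(s)\le c\vpk^2\Big(\tfrac{\log(2/\delta)}{D}+\sqrt{\tfrac{\log(2/\delta)}{D}}\Big)
\]
with probability $1-\delta$. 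The factor $m$ and the $\log m$ shift are absorbed into $c$, which is legitimate since $\vpk$ is already allowed to depend on $k$ and the dimensions. A slightly cleaner alternative is a vector Bernstein bound via a $1/2$-net of the unit sphere of $\H$.

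\textbf{Integrated bound.} This is the main point, since $s\mapsto X_k^d(s)$ is a continuum of variables and a union bound over $s$ is not available. I would use the triangle inequality in Bochner form rather than a supremum. Set $Y^d:=\int_0^1 X^d_k(s)\,\d s$ and $\Xi:=\frac1D\sum_d X^d_k(\cdot)$, viewed as an element of $L^1([0,1],\H)$. Then $\int_0^1\llnW_k(s)\,\d s=\|\Xi\|_{L^1([0,1],\H)}$. That norm is not a finite-dimensional norm, so a genuinely different argument is needed. I would go through moments: for $p\ge1$, Minkowski's integral inequality gives
\[
\E^{1/p}\Big[\Big(\int_0^1 \llnW_k(s)\,\d s\Big)^p\Big]\le \int_0^1 \E^{1/p}[\llnW_k(s)^p]\,\d s .
\]
The pointwise Bernstein tail is uniform in $s$, because the $\psi_1$ bound on $\sup_s\|X^1_k(s)\|$ controls every $s$ at once. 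Integrating that tail gives $\E^{1/p}[\llnW_k(s)^p]\le c\vpk^2(p/D+\sqrt{p/D})$ uniformly in $s$. The same moment bound therefore holds for $\int_0^1\llnW_k$. Converting moments back to tails through Markov's inequality with $p=\log(2/\delta)$ recovers the stated mixed subgaussian/subexponential tail, up to an absolute constant. Measurability of $s\mapsto\llnW_k(s)$ follows from the continuity of $\bfH,\bfB$ in $s$.

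\textbf{Maximum over $j$.} The last claim follows from a union bound over $j\in[0:k]$ at level $\delta/(k+1)$. For $j\le k$ one has $\vpk[j]\le\vpk$, which holds by monotonicity if one defines $\vpk$ via $\|(\bfH_{\wedge k},\bfB_{\wedge k})\|_\infty$, so the maximum is controlled by $\vpk^2$ and $\log(2(k+1)/\delta)\lesssim\log(2k/\delta)$.
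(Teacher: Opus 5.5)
Your argument is correct. The pointwise bound and the maximum over $j$ match the paper: Bernstein's inequality, then a union bound. For the integrated bound you take a genuinely different route.

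The paper treats $s\mapsto X^d_k(s)$ as an i.i.d.\ sequence in the Hilbert space $L^2([0,1],\mathcal{H})$. It checks the Bernstein moment condition $\E\|X^1_k\|^p_{L^2([0,1],\mathcal{H})}\le p!\,c^p\vpk^{2p}$ from the $\psi_1$ bound on $\sup_s\|X^1_k(s)\|_{\mathcal{H}}$, and applies Pinelis' Bernstein inequality for Hilbert-space-valued sums. It then concludes with
\[
\int_0^1\llnW_k(s)\,\d s\le\Big(\int_0^1\llnW_k(s)^2\,\d s\Big)^{1/2}=\Big\|\tfrac1D\textstyle\sum_d X^d_k\Big\|_{L^2([0,1],\mathcal{H})}.
\]
You instead stay pointwise in $s$. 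Minkowski's integral inequality transfers uniform-in-$s$ moment bounds of order $p/D+\sqrt{p/D}$ to $\int_0^1\llnW_k$. Markov's inequality with $p\asymp\log(1/\delta)$ then returns the tail, taking $p=1$ when $\delta>1/e$ since Minkowski needs $p\ge 1$.

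What your route buys: it is more elementary, with no infinite-dimensional concentration. It also only needs a uniform pointwise bound $\sup_s\big\|\|X^1_k(s)\|_{\mathcal{H}}\big\|_{\psi_1}<\infty$, rather than a $\psi_1$ bound on the supremum over $s$.

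What the paper's route buys: it is dimension-free. Pinelis' inequality holds in any Hilbert space, so its constant $c$ is genuinely absolute. Your coordinatewise union bound, or the $1/2$-net alternative, costs factors such as $\sqrt{m}$ and $\log m$ with $m=\dim\mathcal{H}$. So your $c$ depends on $k$, $\dimin$ and $\dimout$. Your justification that this is ``legitimate since $\vpk$ already depends on the dimensions'' does not address this, because the statement asks for an absolute $c$.

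This is harmless downstream, where everything is absorbed into $\cstH_K$, but as written you prove a slightly weaker statement. You recover an absolute constant by replacing your coordinatewise step with a Hilbert-space Bernstein inequality applied directly in $\mathcal{H}$. Your Minkowski argument then goes through unchanged.
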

\begin{proof}
    The first result follows from the usual Bernstein concentration for empirical means of sequences of i.i.d.~subexponential random variables \cite[Theorem 2.9.1]{vershynin2018highdimensionalproba}.  

    For the integral control, we recall from Definition~\ref{def:lln_terms_RW} that a subexponential LLN concentration term $\llnW_k(s)$ is of the form
    \(\llnW_k(s) = \|\frac{1}{D}\sum_{d=1}^D X^d_k(s) \|_{\H}\)
    for $(X^d_k(s))_{d\geq 1}$ i.i.d.~centered subexponential, $\Finit$-measurable random variables, with $\|\sup_{s\in[0,1]} \| X^1_k(s)\|_{\H} \|_{\psi_1} \leq c\vpk^2$.
    To obtain the desired result, we use Pinelis' Theorem \cite[Theorem 3.3]{pinelis1994BanachConcentration}, applied on the Hilbert space $L^2([0,1], \H)$. 
    We can see that $(X_k^d)_{d\geq 1} :=(s\mapsto X_k^d(s))_{d\geq 1}$ is an i.i.d.~sequence on this space, which satisfies \[\|X_k^1(s)\|^2_{L^2([0,1], \H)} = \int_0^1 \|X_k^1(s)\|_{\H}^2 ds \leq \sup_{s\in[0,1]} \|X_k^1(s)\|_{\H}^2.\]
    Thus, for any $p\geq 2$,
    \[\E[\|X_k^1(s)\|_{L^2([0,1],\H)}^p]\leq \E\Big[\sup_{s\in[0,1]} \|X_k^1(s)\|_{\H}^p\Big] \leq p! \Big\| \sup_{s\in[0,1]} \|X_k^1(s)\|_{\H}\Big\|_{\psi_1}^p \leq p!c^p\vpk^{2p},\]
    and so we can directly apply the Bernstein concentration from Pinelis' Theorem to obtain, for a potentially different absolute constant $c>0$,
    \[\P\Bigg(\bigg\| \frac{1}{D}\sum_{d=1}^D X^d_k\bigg\|_{L^2([0,1], \H)}\leq c\vpk^2\bigg(\frac{\log(2/\delta)}{D} + \sqrt{\frac{\log(2/\delta)}{D}} \bigg) \Bigg) \geq 1-\delta.\]
    In particular, since
    \[\int_0^1\llnW_k(s)\d s \leq \bigg(\int_0^1|\llnW_k(s)|^2\d s \bigg)^{1/2} = \bigg(\int_0^1\Big\|\frac{1}{D}\sum_{d=1}^D X^d_k(s)\Big\|^2_\H\d s \bigg)^{1/2}= \bigg\| \frac{1}{D}\sum_{d=1}^D X^d_k\bigg\|_{L^2([0,1], \H)},\]
    we finally obtain
    \[\P\Bigg(\int_0^1\llnW_k(s)\d s \leq c\vpk^2\bigg(\frac{\log(2/\delta)}{D} + \sqrt{\frac{\log(2/\delta)}{D}} \bigg) \Bigg) \geq 1-\delta.\]
    The control for \(\max_{j\in[0:k-1]} \int_0^1\llnW_j(s)\d s\) is obtained from performing a classical union bound.
\end{proof}

We eventually state the following generic exponential concentration bound.
\begin{lemma}\label{lem:subweibull_concetration}
    Let $p\geq 2$. For some absolute constant $c>0$, it holds that, for any $\delta \in (0,1)$, 
    \[
\P\Bigg( \bigg| \frac{1}{D}\sum_{d=1}^D \|\Win^d\|_{2,\dimin}^p -\E[\|\Win^1\|_{2,\dimin}^p] \bigg| \leq  c \|\|\Win\|_{2, \dimin}\|_{\psi_2}^p \bigg( \sqrt{\frac{\log(2/\delta)}{D}} \vee \Big( \frac{\log(2/\delta)}{D} \Big)^{p/2} \bigg) \Bigg) \geq 1 - \delta,
\] and similarly for $\Wout$.
\end{lemma}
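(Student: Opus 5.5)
The approach is to view $X^d := \|\Win^d\|_{2,\dimin}^p$ as an i.i.d.\ sequence of $(2/p)$-subweibull random variables and to apply a concentration inequality for sums of centered subweibull variables. The first step is to record the Orlicz-norm identity $\|\,|Y|^p\|_{\psi_{2/p}} = \|Y\|_{\psi_2}^p$ for $Y := \|\Win^1\|_{2,\dimin}$. This follows directly from the definition, since $\E[\exp((|Y|^p/c^p)^{2/p})] = \E[\exp((|Y|/c)^2)]$. As recalled in the proof of Proposition~\ref{prop:skeleton_subgaussianity}, $Y$ is subgaussian because $\Win$ is a subgaussian vector, so $\|X^1\|_{\psi_{2/p}} = \|Y\|_{\psi_2}^p < \infty$. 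Next we center the variables: $X^d - \E[X^d]$ has $\psi_{2/p}$-quasi-norm at most $C\|Y\|_{\psi_2}^p$. For $\alpha = 2/p \le 1$ the Orlicz functional is only a quasi-norm, but a centering lemma with a constant depending only on $\alpha$ still holds (for instance by bounding $|\E X^1| \lesssim \|X^1\|_{\psi_\alpha}$ via the moment characterisation $\E^{1/q}|X|^q \lesssim q^{1/\alpha}\|X\|_{\psi_\alpha}$).

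The core step is the tail bound for sums of i.i.d.\ centered $\alpha$-subweibull variables with $\alpha\in(0,1]$. Here I would invoke the generalized Bernstein inequality, for instance from Kuchibhotla--Chakrabortty's work on sub-Weibull concentration or Götze--Sambale--Sinulis. It gives, for $Z_d$ i.i.d.\ centered with $\|Z_1\|_{\psi_\alpha}\le K$,
\[
\P\Big(\Big|\frac1D\sum_{d=1}^D Z_d\Big| \ge C_\alpha K\Big(\sqrt{\tfrac{t}{D}} + \tfrac{t^{1/\alpha}}{D}\Big)\Big) \le 2e^{-t}.
\]
Setting $t = \log(2/\delta)$ and $1/\alpha = p/2$ gives a deviation of order $K\big(\sqrt{\log(2/\delta)/D} + \log(2/\delta)^{p/2}/D\big)$. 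The last term is bounded by $(\log(2/\delta)/D)^{p/2}$ times $D^{p/2-1}$, which is not harmless. So we need to compare more carefully and get the stated form $\sqrt{\cdot}\vee(\cdot)^{p/2}$. If $\log(2/\delta)\le D$, then $\log(2/\delta)^{p/2}/D \le \sqrt{\log(2/\delta)/D}\cdot(\log(2/\delta)/D)^{(p-1)/2}\cdot D^{(p-2)/2}\dots$, and this does not directly close. Therefore the preferred route is a moment bound. By Rosenthal/Latała-type inequalities for i.i.d.\ sums (or the moment version of the sub-Weibull Bernstein inequality), we have
\[
\E^{1/q}\Big|\tfrac1D\textstyle\sum_d Z_d\Big| \lesssim K\Big(\sqrt{q/D} + q^{p/2} D^{1/q-1}\Big).
\]
An alternative is a truncation argument: split $X^d = X^d\mathbbm{1}_{Y^d\le \tau} + X^d\mathbbm{1}_{Y^d>\tau}$ with the threshold $\tau \asymp \|Y\|_{\psi_2}\sqrt{\log(2D/\delta)}$. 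A union bound shows that no $Y^d$ exceeds $\tau$ with probability at least $1-\delta/2$. The truncated part is bounded by $\tau^p$ with variance $\lesssim\|Y\|_{\psi_2}^{2p}$, and Bernstein's inequality then gives a deviation of $\|Y\|_{\psi_2}^p\sqrt{\log(2/\delta)/D} + \tau^p\log(2/\delta)/D$. The bias from truncation is exponentially small. The term $\tau^p\log/D$ is dominated by $\|Y\|_{\psi_2}^p\sqrt{\log/D}$ once $D$ is large relative to the log factors, and otherwise it is absorbed into the regime where $\log(2/\delta)/D$ is of order one, where the bound $(\log(2/\delta)/D)^{p/2}$ takes over.

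I expect the main obstacle to be exactly this reconciliation of the heavy-tail term $t^{p/2}/D$ with the claimed $(t/D)^{p/2}$. In the regime $t\le D$ one needs the sharper fact that the Bernstein heavy-tail term $t^{p/2}/D$ is at most $\sqrt{t/D}$ whenever $t^{p-1}\le D$. For the intermediate range $D^{1/(p-1)}<t\le D$ a careful choice (e.g.\ taking $\delta\ge e^{-D}$, as in Theorem~\ref{thm:quantitative_convergence_main}, or adjusting the absolute constant with $p$) will be needed. If this fails, I would accept a constant $c=c_p$ depending on $p$, which is harmless since the lemma is only used for $p\in\{3,4\}$ in defining $\cstW_k$. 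The same argument applies verbatim to $\Wout$.
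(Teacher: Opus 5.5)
Your route is the paper's: $\|\Win^d\|_{2,\dimin}^p$ is $(2/p)$-subweibull with $\psi_{2/p}$-norm $K:=\|\|\Win\|_{2,\dimin}\|_{\psi_2}^p$, and one applies Theorem~3.1 of \cite{kuchibhotla2022subweibull}. You read off correctly what that theorem gives for weights $1/D$ and $\alpha=2/p\le 1$. With $t=\log(2/\delta)$, the deviation is $C_pK\big(\sqrt{t/D}+t^{p/2}/D\big)$, whose heavy-tail term is $t^{p/2}/D$ and not $(t/D)^{p/2}$; the two agree only for $p=2$. The paper's proof makes this replacement without justification, and the obstacle you flagged cannot be overcome. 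For $p>2$ the statement is false as soon as $\log(2/\delta)\gg D^{1/(p-1)}$. The reason is that a single coordinate of size $(2t)^{p/2}$, which for Gaussian-type inputs occurs with probability $e^{-t}$ up to polynomial factors, already moves the empirical mean by about $t^{p/2}/D$. In that range this is far more than $\sqrt{t/D}\vee(t/D)^{p/2}$.

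Concretely, take $\dimin=1$, $\Win\sim\mathcal{N}(0,1)$ and $p=4$, so $\E[(\Win^1)^4]=3$ and $K$ is a numerical constant. On the event $\{|\Win^1|\ge\sqrt2\,D^{1/4}\}$ the empirical mean is at least $4$, because all summands are nonnegative, so the centred mean is at least $1$. By the Gaussian tail bound this event has probability at least $c_0D^{-1/4}e^{-\sqrt D}$ for an absolute $c_0>0$. Set $\delta:=\tfrac{c_0}{2}D^{-1/4}e^{-\sqrt D}$. For large $D$ one has $\delta\ge e^{-D}$ and $\log(2/\delta)\le 2\sqrt D$, so the claimed radius is at most $cK\big(\sqrt2\,D^{-1/4}\vee 4D^{-1}\big)<1$. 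Hence the event in the lemma has probability at most $1-2\delta<1-\delta$.

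This also shows that none of your repairs can work:
\begin{itemize}
\item The example lies inside $\delta\ge e^{-D}$, so that restriction does not help.
\item A $p$-dependent constant does not help either, since the ratio of true to claimed deviation is unbounded in $D$.
\item Your truncation argument fails at the same point. Take $p=4$ and $\log(2/\delta)=\sqrt D$, so that $\tau^4\asymp K\big(\log(2D/\delta)\big)^{2}$. Then the term $\tau^4\log(2/\delta)/D$ is of order $K\sqrt D$. It is neither dominated by $K\sqrt{\log(2/\delta)/D}$ nor in the regime $\log(2/\delta)\asymp D$.
\item Your Rosenthal-type moment bound, taken at $q\asymp t$, just reproduces $t^{p/2}/D$.
\end{itemize}

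What your first step actually proves, and what the paper's citation supports, is
\[
\P\Bigg(\bigg|\frac1D\sum_{d=1}^D\|\Win^d\|_{2,\dimin}^p-\E\big[\|\Win^1\|_{2,\dimin}^p\big]\bigg|\le c_pK\bigg(\sqrt{\frac{\log(2/\delta)}{D}}+\frac{\big(\log(2/\delta)\big)^{p/2}}{D}\bigg)\Bigg)\ge 1-\delta,
\]
with $c_p$ depending only on $p$. This corrected form keeps the $p$-th empirical moment of order $K$ only when $\log(2/\delta)\lesssim D^{2/p}$, and the Gaussian example shows that range is sharp. So the paper's use of the lemma with $p\in\{3,4\}$ and $\delta\in(e^{-D},1)$ in \eqref{eq:final_thm_whp_concentration_normpW} needs the same restriction on $\delta$, unless the embeddings are bounded.
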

\begin{proof}
    Let $p\geq 2$.
    Using standard arguments, $\|\Win\|_{2,\dimin}$ and $\|\Wout\|_{2,\dimout}$ are subgaussian random variables, from where $\|\Win\|_{2,\dimin}^p$ and $\|\Wout\|_{2,\dimout}^p$ are $(2/p)$-subweibull random variables \cite[Definition 2.2]{kuchibhotla2022subweibull}. It follows that \(\frac{1}{D}\sum_{d=1}^D \|\Win^d\|_{2,\dimin}^p -\E[\|\Win^1\|_{2,\dimin}^p]\) is an empirical average of centered i.i.d.~subweibull terms. By a usual concentration result \cite[Theorem 3.1]{kuchibhotla2022subweibull}, we obtain
    \[
\P\Bigg( \bigg| \frac{1}{D}\sum_{d=1}^D \|\Win^d\|_{2,\dimin}^p -\E[\|\Win^1\|_{2,\dimin}^p] \bigg| \leq  c \|\|\Win\|_{2, \dimin}\|_{\psi_2}^p \bigg( \sqrt{\frac{\log(2/\delta)}{D}} \vee \Big( \frac{\log(2/\delta)}{D} \Big)^{p/2} \bigg) \Bigg) \geq 1 - \delta,
\] and similarly for $\Wout$.
\end{proof}

\subsubsection{Propagation of chaos for skeleton maps}

The following result controls propagation of chaos for the skeleton maps in terms of the one for $\bfh^{(D)}$ and $\bfb^{(D)}$.

\begin{proposition}[Propagation of chaos for skeleton maps]\label{prop:propcaos_Dbar_whp}
    Let $D\geq 1$. Let $\mathbf{f}^{(D)}, \mathbf{g}^{(D)}$ be the skeleton maps associated to $\bfh^{(D)}, \bfb^{(D)}$ and $\mathbf{F}^{(D)}$, $\mathbf{G}^{(D)}$ those associated to $\bfH^{(D)}$, $\bfB^{(D)}$, as introduced previously. There exists a random constant $\cstW_k$ that only depends on $k$, $\boldsymbol{\Gamma}_{\wedge k}^H$, $\boldsymbol{\Gamma}_{\wedge k}^B$, $\|W_{\mathrm{in}}^{(D)}\|_{\ndbar}\vee\|W_{\mathrm{out}}^{(D)}\|_{\ndbar}$, such that for every $s\in[0,1]$ and $\bfz^h,\bfz^b \in \R^K$ with $\max_{j\in[0:k-1]} \vert z^h_j \vert \vee \vert z^b_j \vert \leq M$, for some $M \geq 1$,
\begin{multline*}
\max_{\substack{j\in[0:k]}} \|f^{(D)}_j (s , \bfz^h, \bfz^b) - F^{(D)}_j (s , \bfz^h, \bfz^b) \|_{\ndbar}\vee \| g^{(D)}_j(s , \bfz^h, \bfz^b) - G^{(D)}_j (s , \bfz^h, \bfz^b) \|_{\ndbar} \\
\leq e^{\cstW_k M} \Bigg( \max_{\substack{j\in[0:k-1]}} \| h^{(D)}_j (s ) - H^{(D)}_j (s ) \|_{\ndbar} \vee \| b^{(D)}_j (s ) - B^{(D)}_j (s ) \|_{\ndbar} +  \max_{\substack{j\in[0:k-1]}}\llnW_j(s) \Bigg)\,\text{a.s.} 
\end{multline*} 
\end{proposition}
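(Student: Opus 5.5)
We argue by induction on $k$, comparing the two recursions \eqref{eq:def_skeleton_finiteD} and \eqref{eq:def_skeleton_meanfield} coordinatewise in RMS norm, with $(s,\bfz^h,\bfz^b)$ fixed throughout. Set
\[
e_j := \|f^{(D)}_j - F^{(D)}_j\|_{\ndbar}\vee\|g^{(D)}_j - G^{(D)}_j\|_{\ndbar},\qquad
\epsilon_j := \|h^{(D)}_j - H^{(D)}_j\|_{\ndbar}\vee\|b^{(D)}_j - B^{(D)}_j\|_{\ndbar}.
\]
Since $e_0=0$, it suffices to derive a one-step bound of the form $e_{j+1}\leq (1+A_j)e_j + A_j(\epsilon_j + \llnW_j(s))$, with $A_j \leq e^{\cstW_k M}$, and then close by the discrete Grönwall lemma.

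For the one-step bound, write $\hat p_j := z^h_j + \langle h_j, f_j\rangle_{\ndbar}$ and $p^\infty_j := z^h_j + \E[H_j F_j]$, and define $\hat q_j, q^\infty_j$ analogously. Then
\[
f_{j+1}-F_{j+1} = (f_j - F_j) - \eta_u\big[\rho'(\hat p_j)\hat q_j\,(h_j - H_j) + \big(\rho'(\hat p_j)\hat q_j - \rho'(p^\infty_j)q^\infty_j\big)H_j\big],
\]
where in the second bracket the factor $H_j$ is the vector $H^{(D)}_j$. The first bracket contributes $C_\rho|\hat q_j|\,\epsilon_j$, and $|\hat q_j|\leq M + \cstW_j M$ by Lemma~\ref{lem:Reg_skel_finiteD}-\ref{lemRegSkelfiniteD:L2} combined with Lemma~\ref{lem:embeddingcontrol}. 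The second bracket contributes $\|H^{(D)}_j\|_{\ndbar}\big(|\hat p_j - p^\infty_j|\,|\hat q_j| + |\hat q_j - q^\infty_j|\big)$ up to a factor $C_\rho$, and $\|H^{(D)}_j\|_{\ndbar}\leq\cstW_k$ by Lemma~\ref{lem:MF_skeleton_bounded_ndbar}. The core of the argument is the decomposition of the inner-product discrepancy:
\[
\langle h_j,f_j\rangle_{\ndbar} - \E[H_jF_j] = \langle h_j - H_j, f_j\rangle_{\ndbar} + \langle H_j, f_j - F_j\rangle_{\ndbar} + \Big(\tfrac1D\textstyle\sum_{d} H^d_jF^d_j - \E[H_jF_j]\Big).
\]
By Cauchy–Schwarz together with Lemmas~\ref{lem:finite_skeleton_bounded_ndbar} and~\ref{lem:MF_skeleton_bounded_ndbar}, the first two terms are bounded by $\cstW_k M(\epsilon_j + e_j)$.

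The main obstacle is the third term: it must be shown to be an LLN term in the sense of Definition~\ref{def:lln_terms_RW}, with a bound that is uniform over $\bfz$. I would use the linear representation \eqref{eq:cleaner_system_Fk_definition}, $F^d_j = -\eta_u\vec{\mathbf{v}}^F_j(s,\bfz^h,\bfz^b)^\top\bfH^d_{\wedge j-1}(s)$, in which the skeleton vector is deterministic and does not depend on $d$. This gives
\[
\tfrac1D\textstyle\sum_d H^d_jF^d_j - \E[H_jF_j] = -\eta_u\,\vec{\mathbf{v}}^F_j{}^\top\Big(\tfrac1D\sum_d \bfH^d_{\wedge j-1}H^d_j - \E[\bfH_{\wedge j-1}H_j]\Big),
\]
whose absolute value is at most $\eta_u\|\vec{\mathbf{v}}^F_j\|_{2,j}\,\llnW_j(s)$. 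Here the bracketed quantity is exactly the first example in Remark~\ref{rem:examples_llnW_terms}, and Lemma~\ref{lem:explicit_regskel} gives $\|\vec{\mathbf{v}}^F_j\|_{2,j}\leq C^{H,B}_j M$. The analogous treatment of the $B_jG_j$ term uses $\vec{\mathbf{v}}^G_j$. The $g$-recursion is handled in the same way, with the Lipschitz constant of $\rho$ in place of that of $\rho'$, and with the linear growth of $\rho$ controlling $|\rho(\hat p_j)|$.

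Collecting terms gives $e_{j+1}\leq (1+\cstW_kM)e_j + \cstW_k M^2(\epsilon_j + \llnW_j(s))$. The discrete Grönwall lemma then yields $e_k\leq (1+\cstW_kM)^k\,\cstW_kM^2\max_{j<k}(\epsilon_j+\llnW_j(s))$, which is bounded by $e^{\cstW_k M}$ times the claimed quantity after absorbing the polynomial factors in $M\geq1$. All constants depend only on $\boldsymbol{\Gamma}^H_{\wedge k},\boldsymbol{\Gamma}^B_{\wedge k}$ and on the RMS norms of the embeddings, as the statement requires.
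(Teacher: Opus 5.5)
Your proposal is correct and follows essentially the same route as the paper: a one-step recursion in RMS norm at fixed $(s,\bfz^h,\bfz^b)$, Cauchy--Schwarz for the propagation parts of the inner-product discrepancy, the linear representation $F^d_j=-\eta_u\vec{\mathbf{v}}^F_j{}^\top\bfH^d_{\wedge j-1}$ to extract an LLN term uniformly in $\bfz$, and discrete Gr\"onwall. The only difference is cosmetic: you pivot on $H_j$ and bound $|\hat q_j|$ with the finite-$D$ skeleton lemma, whereas the paper pivots on $h_j$ and bounds the mean-field quantity $|z^b_j+\E[B_jG_j]|$; the polynomial factors in $M$ this produces are absorbed into $e^{\cstW_k M}$ in both cases.
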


\begin{proof}
        To ease notation, we drop the $(D)$ superscript. Recalling Definitions~\ref{def:finite_skeleton_maps} and \ref{def:mf_skeleton_maps} we have, underlying evaluation at \((s, \bfz^h, \bfz^b )\), 
    \begin{align*}
        \|f_{k+1} - F_{k+1}\|_{\ndbar} &\leq  \|f_{k} - F_{k}\|_{\ndbar}  + \eta_u \big\| \rho'(z^h_k + \langle h_k, f_k\rangle_{\ndbar}) \left( z^b_k + \langle b_k, g_k\rangle_{\ndbar} \right) h_k \\ &\qquad\qquad\qquad\qquad\qquad\qquad\qquad\qquad -  \rho' ( z^h_k + \E [ H^1_k F_k^1 ] ) ( z^b_k + \E [ B_k^1 G_k^1 ]) H_k \big\|_{\ndbar} \,,\\
        \|g_{k+1} - G_{k+1}\|_{\ndbar} &\leq  \|g_{k} - G_{k}\|_{\ndbar}  + \eta_v \big\| \rho(z^h_k + \langle h_k, f_k\rangle_{\ndbar}) b_k -   \rho ( z^h_k + \E [ H^1_k F_k^1 ] ) B_k \big\|_{\ndbar}.
    \end{align*}
    where we replaced $\E[H^d_kF^d_k] = \E[H^1_kF^1_k]$ and $\E[B^d_kG^d_k] = \E[B^1_kG^1_k]$, since the variables are i.i.d.~
    We separate the right-most term in both equations, using the linear growth of $\rho$ and the boundedness of $\rho'$ and $\rho''$, to obtain
\begin{align*}
    \|f_{k+1} - F_{k+1}\|_{\ndbar} &\leq  \|f_{k} - F_{k}\|_{\ndbar}  + \eta_u C_\rho\big[ |\langle h_k, f_k\rangle_{\ndbar} - \E[H^1_kF^1_k]| |z^b_k + \E[B_k^1G_k^1]| \|h_k\|_{\ndbar }\\ &\qquad\qquad\qquad\qquad+\| h_k\|_{\ndbar} |\langle b_k, g_k\rangle_{\ndbar}-\E [ B_k^1 G_k^1 ]| + |z^b_k + \E[B_k^1G_k^1]|\|h_k -H_k \|_{\ndbar} \big], \\
    \|g_{k+1}-G_{k+1}\|_{\ndbar} &\leq  \|g_{k} - G_{k}\|_{\ndbar}  + \eta_v C_\rho\big[ |\langle h_k, f_k\rangle_{\ndbar} - \E[H^1_kF^1_k]| \|b_k\|_{\ndbar }\\ &\qquad\qquad\qquad\qquad +(1+|z^h_k + \E[H_k^1 F^1_k]|)\|b_k -B_k \|_{\ndbar} \big].
\end{align*}
Using Lemma~\ref{lem:RegSkel}, we bound all the terms of the form
\[ |z^b_k + \E[B_k^1G_k^1]| \leq M+\E^{1/2}[B_k^2]\E^{1/2}[G_k^2] \leq M [1+M_k\cstH_k] =: \cstH_k M, \]
where \(\cstH_k\) is a constant that depends on the desired variables and may change from line to line.
We now study 
\begin{align*}
    \bigg\vert \frac{1}{D}\sum_{d=1}^D h_k^df_k^d - \E[H_k^d F_k^d ] \bigg\vert \leq \underbrace{\bigg| \frac{1}{D}\sum_{d=1}^D (h_k^df_k^d -  H_k^dF_k^d) \bigg|}_{\text{(I)}} + \underbrace{\bigg|\frac{1}{D}\sum_{d=1}^D H_k^dF_k^d - \E[H_k^d F_k^d ] \bigg|}_{\text{(II)}}.
\end{align*}
From \eqref{eq:cleaner_system_Fk_definition} and Lemma~\ref{lem:explicit_regskel}, we have that
\begin{multline*}
    \text{(II)} = \eta_u \Big|\vec{\mathbf{v}}_k^F(s, \bfz^h, \bfz^b)^\top \Big(\frac{1}{D}\sum_{d=1}^D H_k^d\mathbf{H}^d_{\wedge k-1}(s) - \E[H_k^d \bfH_{\wedge k-1}^d ] \Big) \Big| \\
    \leq \eta_u \big\|\vec{\mathbf{v}}_k^F(s, \bfz^h, \bfz^b)\big\|_{2,k} \Big\|\frac{1}{D}\sum_{d=1}^D H_k^d\mathbf{H}^d_{\wedge k-1}(s) - \E[H_k^d \bfH_{\wedge k-1}^d(s) ] \Big\|_{2,k} \leq \cstH_k M \llnW_k(s),
\end{multline*}
where in the last inequality we identified a subexponential LLN concentration term as in Definition~\ref{def:lln_terms_RW}.
Using C-S, we further bound
\begin{align*}
    \text{(I)}&\leq  \|h_k^{(D)}\|_{\ndbar}\|f_k^{(D)} - F_k^{(D)}\|_{\ndbar} + \|h_k^{(D)}-  H_k^{(D)}\|_{\ndbar} \|F_k^{(D)}\|_{\ndbar}.
\end{align*}
Combining this with the analogous bound for $|\langle b_k, g_k\rangle_{\ndbar} - \E[B^1_kG^1_k]|$ yields
\begin{multline*}
    \|f_{k+1} - F_{k+1}\|_{\ndbar} \leq  \|f_{k} - F_{k}\|_{\ndbar}  + \eta_u \cstH_kM \{ \|h_k\|_{\ndbar }[ \|h_k\|_{\ndbar}\|f_k - F_k\|_{\ndbar} + \|F_k\|_{\ndbar}\|h_k - H_k\|_{\ndbar} ] \\
    + \cstH_k M\llnW_k +\|h_k\|_{\ndbar }[ \|b_k\|_{\ndbar}\|g_k - G_k\|_{\ndbar} + \|G_k\|_{\ndbar}\|b_k - B_k\|_{\ndbar} + \cstH_k M\llnW_k ] +\|h_k -H_k \|_{\ndbar} \},
\end{multline*}
and similarly for $g$. From Lemma~\ref{lem:embeddingcontrol}, it holds that $m_k:=\sup_{s\in[0,1]}\|h_k^{(D)}(s)\|_{\ndbar} \vee \|b_k^{(D)}(s)\|_{\ndbar} \leq \cstW_k$ a.s. for a random constant $\cstW_k$ that depends on $\|W_{\mathrm{in}}^{(D)}\|_{\ndbar}\vee\|W_{\mathrm{out}}^{(D)}\|_{\ndbar}$ and may change from line to line. 
Similarly, from Lemma~\ref{lem:MF_skeleton_bounded_ndbar} we have \(\|F_k\|_{\ndbar}\vee \|G_k\|_{\ndbar} \leq \cstH_k\cstW_kM\) a.s., so that
\begin{multline}\label{eq:as_propcaos_gronwall}
    \|f_{k+1} - F_{k+1}\|_{\ndbar} \leq  \|f_{k} - F_{k}\|_{\ndbar}  + \eta_u \cstH_k\cstW_kM \big[ \|f_k - F_k\|_{\ndbar} + \|g_k - G_k\|_{\ndbar} \big] \\
    + \eta_u \cstH_k\cstW_ke^{\cstH_k M} \big[ \|h_k - H_k\|_{\ndbar} +\|b_k - B_k\|_{\ndbar} + \llnW_k \big]\;\; \text{a.s.},
\end{multline}
and further
\[\|f_{k+1} - F_{k+1}\|_{\ndbar}\leq v_k(s) [ 1  + 2\eta_u \cstH_k\cstW_kM ] 
    + 2\eta_u \cstH_k\cstW_ke^{\cstH_k M} [ x_k(s)+\llnW_k ]\;\; \text{a.s.}, \]
where we have introduced
\[v_k(s) := \|f_{k} - F_{k}\|_{\ndbar}\vee\|g_{k} - G_{k}\|_{\ndbar} \;\text{ and }\; x_k(s) := \|h_{k} - H_{k}\|_{\ndbar}\vee\|b_{k} - B_{k}\|_{\ndbar}.\]
Following the same steps, we obtain an analogous recurrence relation for $\|g_{k} - G_{k}\|_{\ndbar}$, which in turn yields
\[v_{k+1}(s)\leq v_k(s) [ 1  + (\eta_u\vee \eta_v) \cstW_kM ] 
    + (\eta_u\vee \eta_v) e^{\cstW_k M} \big[ \|x_{\wedge k}\|_{\infty}(s)+\max_{j\in[0:k]}\llnW_j(s)\big] \;\; \text{a.s.}, \]
where we absorbed constant terms into $\cstW_k$ and we have set $\|x_{\wedge k}\|_{\infty}(s) := \max_{j\in[0:k]} x_j(s)$. Applying the discrete Grönwall inequality, since $v_0(s) \equiv 0$, we obtain for all $s\in[0,1]$ and $k\in[0:K-1]$,
\begin{multline*}
    v_k(s) \leq \frac{(\eta_u\vee \eta_v) e^{\cstW_k M} \big[ \|x_{\wedge k-1}\|_{\infty}(s)+\max_{j\in[0:k-1]}\llnW_j(s)\big]}{(\eta_u\vee \eta_v) \cstW_kM}\big[ \exp(k(\eta_u\vee \eta_v) \cstW_kM) -1\big] \\
    \leq e^{\cstW_k M} \big[ \|x_{\wedge k-1}\|_{\infty}(s)+\max_{j\in[0:k-1]}\llnW_j(s) \big] \;\;\text{a.s.},
\end{multline*}
which gives the desired bound.
\end{proof}
\subsubsection{Decorrelation lemmas}
The convergence proof will rely on the following elementary but crucial lemma, which corresponds verbatim to \cite[Lemma 6.1]{chizat2025hidden}:
\begin{lemma}[Decorrelation Lemma from \cite{chizat2025hidden}]\label{lem:decorrelation_chizat}
Let $(X,Y)\in \mathbb{R}^D \times \mathbb{R}$ be a pair of random variables with  
$X \in L^q$ and $Y \in L^2$ for some $q \ge 2$.
\begin{enumerate}[label=(\roman*),ref=(\roman*)]
    \item\label{lemdecchizat:nonindep} Then:
    \( \| \mathbb{E}[XY] \|_{q,D} \;\le\; \E^{1/q}[ \|X\|_{q,D}^q ] \cdot \E^{1/2}[|Y|^2]. \)

    \item\label{lemdecchizat:indep} If moreover the coordinates of $X$ are centered independent with
    variance bounded by $\sigma^2$ -- but not necessarily independent of $Y$ --, then
    \[   \| \mathbb{E}[XY] \|_{q,D} \;\le\; \sigma \, \E^{1/2}[|Y|^2].
    \]
\end{enumerate}

In particular,
\( \| \mathbb{E}[XY] \|_{\ndbar} \;\le\; \sqrt{\sigma D}\, \E^{1/2}[|Y|^2]\).
\end{lemma}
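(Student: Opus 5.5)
Both parts follow from Cauchy--Schwarz (or H\"older) applied coordinatewise, combined with an expansion of $\E[XY]$ against an orthogonal family in the second case.

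For \ref{lemdecchizat:nonindep}, write $\E[XY]=(\E[X_dY])_{d=1}^D$. By Cauchy--Schwarz in each coordinate,
\[
|\E[X_dY]|\le \E^{1/2}[|X_dY|^{2/(q-1)\cdot 0}\cdots]
\]
is not what I want. Instead I use Minkowski's integral inequality in the form $\|\E[Z]\|_{q,D}\le \E[\|Z\|_{q,D}]$ with $Z=XY$. This gives $\|\E[XY]\|_{q,D}\le \E[\|X\|_{q,D}|Y|]$. H\"older with exponents $(2,2)$ then gives $\le \E^{1/2}[\|X\|_{q,D}^2]\,\E^{1/2}[|Y|^2]$. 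Since $q\ge2$, Jensen yields $\E^{1/2}[\|X\|^2_{q,D}]\le \E^{1/q}[\|X\|_{q,D}^q]$, which is the claimed bound.

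For \ref{lemdecchizat:indep}, the key point is that the coordinates $X_d$ are centered and independent, hence orthogonal in $L^2$, with $\E[X_d^2]\le\sigma^2$. Set $e_d:=X_d/\E^{1/2}[X_d^2]$, discarding indices with $\E[X_d^2]=0$, which contribute zero. The $e_d$ form an orthonormal family in $L^2$, so Bessel's inequality gives
\[
\sum_d \E[X_dY]^2=\sum_d \E[X_d^2]\,\E[e_dY]^2\le \sigma^2\sum_d\E[e_dY]^2\le \sigma^2\E[|Y|^2].
\]
Thus $\|\E[XY]\|_{2,D}\le\sigma\E^{1/2}[|Y|^2]$. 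For $q\ge2$ we have $\|\cdot\|_{q,D}\le\|\cdot\|_{2,D}$, which gives the claim.

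For the ``in particular'' statement, I convert to the RMS norm: $\|\E[XY]\|_{\ndbar}=D^{-1/2}\|\E[XY]\|_{2,D}\le D^{-1/2}\sigma\E^{1/2}[|Y|^2]$. This is stronger than the stated $\sqrt{\sigma D}$ form, which I take to be a typo or weak form; in any case the bound follows. The only subtle step is recognizing that orthogonality, and not independence from $Y$, is what makes part \ref{lemdecchizat:indep} work, via Bessel's inequality. There is no real obstacle beyond that.
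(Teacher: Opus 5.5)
Your proofs of \ref{lemdecchizat:nonindep} and \ref{lemdecchizat:indep} are correct, apart from a garbled false start in the first paragraph of part \ref{lemdecchizat:nonindep} that you should delete. The paper gives no proof of this lemma and simply quotes \cite[Lemma 6.1]{chizat2025hidden}. For the companion Lemma~\ref{lem:fine_decorrelation_lemma} it uses duality instead:
\[
\|\E[XY]\|_{2,D}=\sup_{\|a\|_2\le 1}\E[Y\,a^\top X]\le \E^{1/2}[|Y|^2]\,\sup_{\|a\|_2\le 1}\E^{1/2}[(a^\top X)^2],
\]
where $\E[(a^\top X)^2]=\sum_d a_d^2\,\E[X_d^2]\le\sigma^2$. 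Your Bessel-inequality argument gives the same bound by a different route. Neither argument uses independence beyond pairwise uncorrelatedness of the coordinates. The duality form is the one that extends directly to the fourth-moment version the paper needs later, via H\"older on $\E[|Y|\,|a^\top X|]$.

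Your final claim is wrong: the literal bound $\|\E[XY]\|_{\ndbar}\le\sqrt{\sigma D}\,\E^{1/2}[|Y|^2]$ does not ``follow in any case.'' Your bound $\sigma D^{-1/2}\,\E^{1/2}[|Y|^2]$ implies it only when $\sigma\le D^2$. Otherwise the literal statement is false: take $D=1$ and $Y=X\sim\mathcal{N}(0,\sigma^2)$, so the left side is $\sigma^2$ and the right side is $\sigma^{3/2}$. The intended consequence is your bound applied to $X=\sqrt{D}\,V^{(D)}$, whose coordinates have variance $D\sigma_v^2$. This is how the lemma is used, for instance for term (iii) in the proof of Lemma~\ref{lem:proof_cvg_prop_caos_H}. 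It gives $\|\E[\sqrt{D}\,V^{(D)}Y\mid\Finit]\|_{\ndbar}\le\sigma_v\,\E^{1/2}[|Y|^2\mid\Finit]$. State that corrected form rather than asserting that the typo'd one holds.
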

The above result still works with conditional expectations.

\begin{lemma}[Higher Order Decorrelation Lemma]\label{lem:fine_decorrelation_lemma}
    Let $X$ be a random vector in $\R^D$, independent from $\Finit$, with centered and i.i.d.~coordinates, variance bounded by $\sigma^2$ and fourth moment $\mu_4$. Let $Y$ be a random variable (possibly correlated to $X$) such that $|Y| \leq \sum_{i=1}^r|Z^{(1)}_i||Z^{(2)}_i|$ for some random variables $(Z^{(1)}_i, Z^{(2)}_i)_{i=1}^r$ -- not necessarily independent to $X$. Then:
    \[\|\E[XY|\Finit]\|_{2,D} \leq (\mu_4 + 6\sigma^4)^{1/4} \sum_{i=1}^r\E^{1/2}[(Z^{(1)}_i)^2|\Finit] \E^{1/4}[(Z^{(2)}_i)^4|\Finit]. \]
\end{lemma}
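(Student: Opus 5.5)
The plan is to reduce the vector bound to a scalar one by duality, and then apply a three-factor H\"older inequality together with an explicit fourth-moment computation for linear forms in $X$.

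First, I would set $\xi := \E[XY\mid\Finit] \in \R^D$, which is $\Finit$-measurable. On the event $\{\xi \neq 0\}$, let $a := \xi/\|\xi\|_{2,D}$; this is an $\Finit$-measurable unit vector. Then
\[
\|\xi\|_{2,D} = \langle a, \xi\rangle = \E\big[\langle a, X\rangle Y \,\big|\, \Finit\big],
\]
because $a$ can be pulled inside the conditional expectation. This removes the dependence on $D$ from the left-hand side: it now suffices to bound $\E[|\langle a,X\rangle|\,|Y| \mid \Finit]$ uniformly over $\Finit$-measurable unit vectors $a$.

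Second, I would insert the domination $|Y| \le \sum_{i=1}^r |Z^{(1)}_i|\,|Z^{(2)}_i|$. For each $i$, conditional H\"older with exponents $(2,4,4)$ gives
\[
\E\big[|\langle a,X\rangle|\,|Z^{(1)}_i|\,|Z^{(2)}_i| \,\big|\, \Finit\big] \le \E^{1/2}\big[(Z^{(1)}_i)^2\mid\Finit\big]\,\E^{1/4}\big[(Z^{(2)}_i)^4\mid\Finit\big]\,\E^{1/4}\big[\langle a,X\rangle^4\mid\Finit\big].
\]
No independence between $X$ and the $Z$'s is needed at this step. This is the key point: the correlation between $X$ and $Y$ is never decoupled, and is simply absorbed by H\"older.

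Third, I would compute the conditional fourth moment of the linear form $\langle a,X\rangle$. Since $X$ is independent of $\Finit$ and $a$ is $\Finit$-measurable, we can treat $a$ as deterministic. Expanding $\big(\sum_d a_d X^d\big)^4$ and using that the coordinates are centered and independent, every term containing an odd power of some $X^d$ vanishes. The surviving terms give
\[
\E\big[\langle a,X\rangle^4 \mid \Finit\big] = \mu_4 \sum_d a_d^4 + 3\sum_{d\neq d'} a_d^2 a_{d'}^2\, \E[(X^1)^2]^2 \le \mu_4 \|a\|_{2,D}^4 + 3\sigma^4 \|a\|_{2,D}^4 \le \mu_4 + 6\sigma^4,
\]
using $\sum_d a_d^4 \le \big(\sum_d a_d^2\big)^2 = 1$. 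The constant $6$ in the statement is the number of unordered pairs counted with multinomial weight, so it leaves some slack.

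Finally, summing over $i$ gives the claimed inequality on $\{\xi\neq 0\}$; on $\{\xi = 0\}$ the bound is trivial. The only point requiring care is the measurability of the dual vector $a$ and the legitimacy of treating it as constant inside the conditional expectation. Choosing $a$ explicitly as the normalized $\xi$, rather than taking a supremum, handles both, via the standard "freezing" property for functions of an independent variable and an $\Finit$-measurable one.
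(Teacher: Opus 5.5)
Your proof is correct and follows the paper's argument essentially step for step: the dual characterization of the Euclidean norm, the domination of $|Y|$, two Cauchy--Schwarz steps (equivalently your $(2,4,4)$ H\"older inequality), and the fourth-moment expansion of $a^\top X$ using independence from $\Finit$. Two small improvements over the paper: your explicit $\Finit$-measurable choice $a=\xi/\|\xi\|_{2,D}$ handles the supremum inside the conditional expectation more cleanly, and your exact expansion gives the sharper constant $\mu_4+3\sigma^4$, whereas the paper's displayed identity with $6\sigma^4$ is really only an upper bound.
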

\begin{proof}
We use the dual definition of the $2$-norm and develop from there:
    \begin{align*}
        \|\E[XY|\Finit] \|_2 &= \sup_{\|a\|_2\leq 1} \E[Y a^\top X|\Finit] \\&\leq \sup_{\|a\|_2\leq 1} \E[|Y|| a^\top X||\Finit]
        \\&\leq \sup_{\|a\|_2\leq 1} \sum_{i=1}^r\E[|Z^{(1)}_i||Z^{(2)}_i|| a^\top X||\Finit]\\
        &\leq \sup_{\|a\|_2\leq 1} \sum_{i=1}^r\E^{1/2}[|Z^{(1)}_i|^2|\Finit]\E^{1/4}[|Z^{(2)}_i|^4|\Finit] \E^{1/4}[| a^\top X|^4|\Finit]\\
        &\leq \bigg[\sup_{\|a\|_2\leq 1} \E^{1/4}[| a^\top X|^4|\Finit] \bigg]\sum_{i=1}^r\E^{1/2}[|Z^{(1)}_i|^2|\Finit]\E^{1/4}[|Z^{(2)}_i|^4|\Finit], 
    \end{align*}
    where we have used the hypothesis on $Y$, and the Cauchy-Schwarz inequality twice. To conclude, it suffices to note that \(\E[| a^\top X|^4|\Finit] = \E[| a^\top X|^4]\) due to independence, and that
    \begin{align*}
        \E[| a^\top X|^4] &=\sum_{d_1, d_2, d_3, d_4} a_{d_1}a_{d_2}a_{d_3}a_{d_4} \E[X_{d_1}X_{d_2}X_{d_3}X_{d_4}]\\
        &= \mu_4 \sum_{d=1}^D a_d^4 + 6 \sigma^4\Big( \sum_{d=1}^D a_d^2\Big)^2\\
        &\leq \left(\mu_4  + 6 \sigma^4\right) \|a\|_2^4,
    \end{align*}
    where we used that \( \sum_{d=1}^D a_d^4 \leq (\sum_{d=1}^D a_d^2)^2 = \|a\|_2^4 \). In conclusion,
    \begin{align*}
        \|\E[XY|\Finit] \|_2 
        &\leq \left(\sup_{\|a\|_2\leq 1} \left(\mu_4  + 6 \sigma^4\right)^{1/4} \|a\|_2 \right)\sum_{i=1}^r\E^{1/2}[|Z^{(1)}_i|^2|\Finit]\E^{1/4}[|Z^{(2)}_i|^4|\Finit]\\
        &\leq \left(\mu_4  + 6 \sigma^4\right)^{1/4}\sum_{i=1}^r\E^{1/2}[|Z^{(1)}_i|^2|\Finit]\E^{1/4}[|Z^{(2)}_i|^4|\Finit],
    \end{align*}
    as desired.
\end{proof}
\subsection{Proofs of Theorems~\ref{thm:quantitative_convergence_main} and \ref{thm:main_theorem_intro}}\label{sec:proof_of_thms_quant_cvg}

To prove Theorem~\ref{thm:quantitative_convergence_main}, we first establish the following intermediate result.

\begin{theorem}[Subexponential chaos]\label{thm:whp_recurrence relation}
Let $D\geq 1$. For all $k \in [0:K-1]$, it holds that
\[ \sup_{s\in[0,1]}\| h^{(D)}_k (s) - H^{(D)}_k (s) \|_{\ndbar} \leq \cstWh_k\Big(\frac{1}{\sqrt{D}} + \sum_{j=0}^k \int_0^1\llnW_j(s) \d s\Big)\;\;\text{a.s.}, \]
\[ \sup_{s\in[0,1]}\| b^{(D)}_k (s) - B^{(D)}_k (s) \|_{\ndbar} \leq \cstWh_k\Big(\frac{1}{\sqrt{D}} + \sum_{j=0}^k \int_0^1\llnW_j(s) \d s \Big)\;\;\text{a.s.} \]
where $\llnW_j(s)$ are subexponential LLN concentration terms as in Definition~\ref{def:lln_terms_RW}.
\end{theorem}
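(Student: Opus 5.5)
We argue by induction on $k\in[0:K-1]$. At each step we first bound the forward error $x^h_k(s):=\|h^{(D)}_k(s)-H^{(D)}_k(s)\|_{\ndbar}$ through a Grönwall argument in $s$ started at $s=0$. We then bound the backward error $x^b_k(s):=\|b^{(D)}_k(s)-B^{(D)}_k(s)\|_{\ndbar}$ through a backward Grönwall argument started at $s=1$, which uses the forward bound at step $k$ through $\hat y^D_k-y_k$. The base case is immediate: $h_0^{(D)}=H_0^{(D)}=W_{\mathrm{in}}^{(D)}x$. For the backward pass, $|\hat y^D_0-y_0|$ is an LLN term of the type in Remark~\ref{rem:examples_llnW_terms}, which gives the boundary error at $s=1$. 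The drift $\B^d_0$ is shown to be $O(D^{-1/2})$ in $\ndbar$-norm by the cavity expansion described below; the limit drift vanishes identically, as in the base case of Theorem~\ref{thm:well_posedness_main}. For the inductive step we write
\[
\partial_s(h^d_k-H^d_k)=\big(\H^d_k(s,\bfh^{(D)})-\widetilde{\H}^d_k(s)\big)+\big(\widetilde{\H}^d_k(s)-\overline\H^d_k(s)\big).
\]
Here $\widetilde{\H}^d_k$ is the cavity surrogate obtained by replacing $\bfS^{\bfb^D}$ with $\overline{\bfS}^{\bfb^D}=\bfS^{\bfb^D}-V^d\mathbf b^d/\sqrt D$ and Taylor expanding $\rho$ and $g_k^d$ to first order in $V^d\mathbf b^d/\sqrt D$, as in Section~\ref{sec:proof-strategy-cavity}.

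The first term is the cavity remainder, and its treatment is the core of the argument. Independence of $V^d$ from $(\bfS^{\bfh^D},\overline{\bfS}^{\bfb^D})$ given $\Finit$ kills the zeroth-order term. The first-order term produces $\varv^2\,\E[\rho'(\cdot)\langle\bnabla_{\bfz^b}f_k,h_k\rangle_{\ndbar}]\mathbf b^d$ and its $g$-analogue. What remains is a second-order Taylor remainder. It is of the form $\sqrt D\,V^d\times O(|V^d|^2|\mathbf b^d|^2/D)$ times derivatives of skeleton maps, which is bounded by Lemma~\ref{lem:Reg_skel_finiteD}\ref{lemRegSkelfiniteD:grad} with the polynomial version of Remark~\ref{rem:polynomial_skeleton_derivative_finiteD}. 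Summed over coordinates in $\ndbar$-norm, this remainder should be $O(D^{-1/2})$. We control it vectorially with Lemma~\ref{lem:fine_decorrelation_lemma}, taking $X=(V^d)_d$. The exponential moments $e^{\hat C M}$, with $M$ the max of the $|S_j|$, are absorbed using Lemma~\ref{lem:exp_moment_magic}, which gives the conditional subgaussianity of the CLT sums with proxies controlled by $\cstW_k$ through Lemma~\ref{lem:embeddingcontrol}. We also need to treat $\sqrt D V^d\,g_k^d$ carefully, because $g_k^d$ itself carries a factor $\mathbf b^d$. This is the step I expect to be the main obstacle: we must obtain an $\ndbar$-norm bound of order $D^{-1/2}$ uniformly over $d$ while the exponents $e^{\hat CM}$ stay integrable. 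I would first try to dominate the remainder by $|V^d|^3|\mathbf b^d|^2 e^{\hat C M}$ and then apply Hölder with the conditional moment bounds of Lemma~\ref{lem:moment_magic}. The backward drift $\B^d_k$ is handled by the same expansion in $U^d$, now at second order, because of the $\rho''$ and $\nabla_{\bfz^h}$ terms appearing in \eqref{eq:LimB}; this requires the Hessian bounds and hence derivatives of $\rho$ up to higher order.

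The second term compares finite-$D$ expectations with mean-field ones. We split it into three comparisons. First, we compare skeletons $f,g$ with $F,G$ using Proposition~\ref{prop:propcaos_Dbar_whp}, whose gradient analogue is proved the same way. Second, we compare $\langle\cdot,\cdot\rangle_{\ndbar}$ with $\E$; this gives terms $\llnW_k(s)$, using the explicit linear forms \eqref{eq:cleaner_system_Fk_definition} and Lemma~\ref{lem:explicit_regskel}. Third, we compare the laws of $(\bfS^{\bfh^D},\overline{\bfS}^{\bfb^D})$ and $(\bfZ^h,\bfZ^b)$. For this last comparison we interpose $\bfS^{\bfH^D}$: the difference between $\bfS^{\bfh^D}$ and $\bfS^{\bfH^D}$ is controlled in $L^2(\cdot\mid\Finit)$ by Lemma~\ref{lem:Magic} applied to $h-H$, giving $\varu\,x^h$. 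The distance between the laws of $\bfS^{\bfH^D}$ and $\bfZ$ is given by the quantitative CLT of the appendix. That CLT bound is of order $D^{-1/2}$ times $\Finit$-measurable $\ndbar$-moments of order $3$ or $4$ of $W$, and it is applied to the smooth test functions built from $\rho$ and the skeletons. Covariance mismatches between the empirical Gram matrix of $\bfH^{(D)}$ and $\boldsymbol\Gamma^H$ are again $\llnW$ terms.

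Collecting everything gives
\[
x^h_k(s)\le \cstW_k\Big(\int_0^s x^h_k+\max_{j<k}\sup_r x_j(r)+\frac1{\sqrt D}+\sum_{j\le k}\int_0^1\llnW_j\Big).
\]
Grönwall together with the induction hypothesis yields the forward bound. Then $|\hat y^D_k-y_k|\le \|W_{\mathrm{out}}\|_{\ndbar}x^h_k(1)+\llnW_k$, and Lipschitz continuity of $\nabla\ell$ gives the boundary error for the backward pass. A backward Grönwall argument on $x^b_k$, with the same decomposition, closes the induction. All constants stay $\Finit$-measurable functions of the $\overline{p}$-norms of $W_{\mathrm{in}}^{(D)}$ and $W_{\mathrm{out}}^{(D)}$, as required for $\cstWh_k$.
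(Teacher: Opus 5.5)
Your outer structure matches the paper: induction on $k$, a forward Grönwall argument from $s=0$, then a backward one from $s=1$ seeded by $\hat y^D_k-y_k$. The difference is the order of the two key steps, and that is a genuinely different route.

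\textbf{The paper's order.} The paper introduces the pivot $\H^d_k(s,\bfH^{(D)})$: the finite-$D$ drift formula with $(\bfh,\bfb,\mathbf f,\mathbf g)$ replaced by the i.i.d.\ copies $(\bfH,\bfB)$ and the mean-field skeletons $(\mathbf F,\mathbf G)$. It does propagation of chaos \emph{first}, on the full drift (Lemma~\ref{lem:proof_cvg_prop_caos_H}). The dangerous term $\E[(\rho(P^{\bfh^D,\bfb^D}_k)-\rho(P^{\bfH^D,\bfB^D}_k))\sqrt D V^{(D)}|\Finit]$ is never expanded. Lemma~\ref{lem:decorrelation_chizat}, a Bessel-type bound using that the $V^d/\varv$ are orthonormal, bounds it in $\ndbar$-norm by $\varv\,\E^{1/2}[|\rho(P^{\bfh^D,\bfb^D}_k)-\rho(P^{\bfH^D,\bfB^D}_k)|^2|\Finit]$. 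Lemma~\ref{lem:fine_decorrelation_lemma} plays the same role in the backward pass. The cavity expansion (Lemma~\ref{lem:proof_cvg_cavity_TCL_H}) and the CLT (Lemma~\ref{lem:proof_cvg_TCL_H}) are then applied only to the mean-field pivot. There, Lemma~\ref{lem:RegSkel}\ref{lemRegSkel:Hess} gives Hessian bounds, and Proposition~\ref{prop:explicit_dependence_on_WeWu} gives $|B^d_j(s)|\le\cstH_k(\|\Win^d\|_{2,\dimin}+\|\Wout^d\|_{2,\dimout})$. So the remainder $\frac{\cstW_k}{\sqrt D}\|\mathbf B^d_{\wedge k-1}(s)\|_{2,k}^2$ has $\ndbar$-norm controlled by the $\overline 4$-norms of the embeddings. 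This is exactly why $\cstWh_k$ may depend on them.

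\textbf{What your cavity-first route costs.} It avoids the decorrelation lemmas, but needs finite-$D$ estimates the paper never uses:
\begin{itemize}
\item[(a)] Hessian bounds for $\mathbf f^{(D)},\mathbf g^{(D)}$. Lemma~\ref{lem:Reg_skel_finiteD}\ref{lemRegSkelfiniteD:grad} only controls gradients, which is not enough for a second-order Taylor remainder.
\item[(b)] The gradient version of Proposition~\ref{prop:propcaos_Dbar_whp}, which you correctly flag. It is needed to pass $\langle\bnabla_{\bfz^b}f_k,h_k\rangle_{\ndbar}$ to $\E[H_k\bnabla_{\bfz^b}F_k]$.
\end{itemize}

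\textbf{(c) The step whose stated justification fails.} You claim the remainder, of size $\frac{\cstW_k}{\sqrt D}\|\mathbf b^d_{\wedge k-1}(s)\|_{2,k}^2$ per coordinate, is $O(D^{-1/2})$ in $\ndbar$-norm with a constant depending only on $\overline p$-norms of the embeddings. This needs a bound on $\frac1D\sum_d\|\mathbf b^d_{\wedge k-1}(s)\|_{2,k}^4$ for the \emph{finite} system, and nothing available provides it. Lemma~\ref{lem:embeddingcontrol} only controls RMS norms. A coordinatewise bound of $b^d$ by $W^d$ would itself require a cavity argument, because of the $\sqrt D V^d$ drift.

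It can be repaired. Write $\mathbf b^d$ for $\mathbf b^d_{\wedge k-1}(s)$ and $\|\cdot\|$ for $\|\cdot\|_{2,k}$. Then $\frac1D\sum_d\|\mathbf b^d\|^4\le 8\,\frac1D\sum_d\|\mathbf B^d\|^4+8D\,\|\mathbf b_{\wedge k-1}(s)-\mathbf B_{\wedge k-1}(s)\|_{\ndbar}^4$. Hence the remainder is at most $\frac{\cstWh_k}{\sqrt D}+\cstW_k\|\mathbf b_{\wedge k-1}-\mathbf B_{\wedge k-1}\|_{\ndbar}^2$. The last term is handled by the induction hypothesis together with the a.s.\ bound $\|\mathbf b-\mathbf B\|_{\ndbar}\le\cstW_k$. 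In the backward pass $\mathbf h_{\wedge k}$ enters instead, and the forward bound at step $k$ is already available.

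\textbf{Smaller corrections.}
\begin{itemize}
\item Lemma~\ref{lem:fine_decorrelation_lemma} does not apply to the cavity remainder: it needs one scalar $Y$ common to all coordinates, whereas yours depends on $d$. Your per-coordinate H\"older fallback is the right tool.
\item The drift is $\E[\rho(P_k)(\sqrt D V^d+g^d_k)|\Finit]$, a \emph{sum}, not a product $\sqrt D V^d g^d_k$. The $g^d_k$ part carries no $\sqrt D$, needs no cavity expansion, and its first-order term vanishes by independence. The only first-order cavity term in \eqref{eq:LimH} is the $\bnabla_{\bfz^b}F_k$ one.
\item In the base case, $\B^{(D)}_0\equiv 0$ exactly, by independence and centering of $U$ and $V$, so no expansion is needed there.
\end{itemize}
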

We first show that Theorem~\ref{thm:quantitative_convergence_main} follows from Theorem~\ref{thm:whp_recurrence relation}.
\begin{proof}[Proof of Theorem~\ref{thm:quantitative_convergence_main}]
Let $D\geq 1$. From Theorem~\ref{thm:whp_recurrence relation}, we get that
\[\max_{\substack{s\in[0,1]\\k\in[0:K-1]}}\| h^{(D)}_k (s) - H^{(D)}_k (s) \|_{\ndbar}\vee \| b^{(D)}_k (s) - B^{(D)}_k (s) \|_{\ndbar} \leq \cstWh_K\Big(\frac{1}{\sqrt{D}} + \sum_{j=0}^{K-1} \int_0^1\llnW_j(s) \d s\Big)\;\;\text{a.s.}\]
Let $\delta \in(0,1)$. From Proposition~\ref{prop:bernstein_control_llnW}, we know that
    \begin{equation}\label{eq:final_thm_whp_concentration_llnW}
    \P\Bigg(\sum_{j=0}^{K-1}\int_0^1\llnW_j(s)\d s \leq K\cstH_K\sigma_K^2\bigg(\frac{\log(4K/\delta)}{D} + \sqrt{\frac{\log(4K/\delta)}{D}} \bigg) \Bigg) \geq 1-\delta/2.\end{equation}
On the other hand, we need to prove concentration for the terms
    \[M_{\mathrm{in},p} :=\frac{1}{D}\sum_{d=1}^D \|\Win^d\|_{2,\dimin}^p \;\text{and}\;\;M_{\mathrm{out},p} := \frac{1}{D}\sum_{d=1}^D \|\Wout^d\|_{2,\dimout}^p, \quad p\in\{2,3,4\}.\]
    From Lemma~\ref{lem:subweibull_concetration}, 
    \[
\P\Bigg( \big| M_{\mathrm{in},p}  \big| \leq \E[\|\Win\|_{2,\dimin}^p] +C \|\|\Win\|_{2, \dimin}\|_{\psi_2}^p \bigg( \sqrt{\frac{\log(2/\delta)}{D}} \vee \Big( \frac{\log(2/\delta)}{D} \Big)^{p/2} \bigg) \Bigg) \geq 1 - \delta,
\]
and similarly for \(M_{\mathrm{out},p}\). Using a union bound, we get with probability higher than $1-\delta/2$ that for any $D\geq 1$,
\begin{multline*}
    \max_{p\in\{2,3,4\}} \bigg(\frac{1}{D}\sum_{d=1}^D \|\Win^d\|_{2,\dimin}^p\bigg)\vee\bigg(\frac{1}{D}\sum_{d=1}^D \|\Wout^d\|_{2,\dimout}^p\bigg) \\\leq \max_{p\in\{2,3,4\}} (\vpin\vee\vpout)^p\bigg(1+\sqrt{\frac{\log(12/\delta)}{D}} \vee \max_{p\in\{2,3,4\}}\Big( \frac{\log(12/\delta)}{D} \Big)^{p/2}\bigg).
\end{multline*}
In particular, if $\delta \geq e^{-D}$ (i.e. $ \frac{\log(12/\delta)}{D} = \frac{\log(12)}{D} + \frac{\log(1/\delta)}{D}\leq \log(12) + 1$), it holds that, for some absolute constant $c>0$,
\begin{equation}\label{eq:final_thm_whp_concentration_normpW}
    \max_{p\in\{2,3,4\}} \bigg(\frac{1}{D}\sum_{d=1}^D \|\Win^d\|_{2,\dimin}^p\bigg)\vee\bigg(\frac{1}{D}\sum_{d=1}^D \|\Wout^d\|_{2,\dimout}^p\bigg) \leq c\max_{p\in\{2,3,4\}} (\vpin\vee\vpout)^p.
\end{equation}
Injecting \eqref{eq:final_thm_whp_concentration_normpW} into the constant \(\cstWh_K\), we obtain a fully deterministic constant $\cstH_K$. Furthermore, if $(\Win,\Wout)$ are a.s. bounded, then \eqref{eq:final_thm_whp_concentration_normpW} holds for all $\delta \in(0,1)$.

Combining this into an union bound with \eqref{eq:final_thm_whp_concentration_llnW}, we obtain that with probability higher than $1-\delta$, it holds
\[ \sup_{\substack{s\in[0,1]\\ k\in[0:K-1]}}\| h^{(D)}_k (s) - H^{(D)}_k (s) \|_{\ndbar}\vee \| b^{(D)}_k (s) - B^{(D)}_k (s) \|_{\ndbar} \leq \cstH_K\bigg(\frac{1 + \log(4K/\delta)}{\sqrt{D}}\bigg), \]
where we have used that $\log(4K/\delta) \geq 1$ to keep only the leading term \(\log(4K/\delta)\geq\sqrt{\log(4K/\delta)}\).
We can further absorb \(\log(4K) \geq 1\) into $\cstH_K$ to conclude.

To obtain the claim on the output $\hat{y}_k$ as a consequence of Theorem~\ref{thm:whp_recurrence relation}, we follow a direct computation that we do not write here to avoid repetitions, as it exactly corresponds to \eqref{eq:proof_cvg_lemB_init} below in the proof of Theorem \ref{thm:whp_recurrence relation}.
This yields
\[\max_{k\in[0:K-1]}\|\hat{y}_k^D - y_k\|_{2, \dimout} \leq
        \cstWh_k\Big(\frac{1}{\sqrt{D}} + \sum_{j=0}^{K-1}\int_0^1\llnW_j(u)\d u \Big).\] From \eqref{eq:final_thm_whp_concentration_llnW} and \eqref{eq:final_thm_whp_concentration_normpW} it follows that \(\max_{k\in[0:K-1]}\|\hat{y}_k^D - y_k\|_{2, \dimout} \leq \cstH_K \frac{1 + \log(1/\delta)}{\sqrt{D}}\) with probability at least $1-\delta$, as desired.
\end{proof}

We now prove Theorem~\ref{thm:main_theorem_intro} using Theorem~\ref{thm:quantitative_convergence_main}.
\begin{proof}[Proof of Theorem~\ref{thm:main_theorem_intro}]
    Let $K\geq 0$. Consider the finite ResNet given by \eqref{eq:fully_finite_resnet} with preactivations $\th_k^{(\ell)}(x_i)$ and output \(\mathtt{y}_k(x_i)\); as well as the finite-$D$ Mean ODE given by \eqref{eq:original_finite_D_system}, with preactivations \(h_k^{(D)}(s, x_i)\) and output $\hat{y}^{D}_k(x_i)$. Let
    \[\begin{cases}
     \tilde{\Delta}_k^y \coloneqq \max_{i\in [1:n]} \Vert \mathtt{y}_k(x_i)-\hat{y}^{D}_k(x_i)\Vert_{2, \dimout},\\
     \tilde{\Delta}_k^h \coloneqq \max_{i\in [1:n]}\max_{\ell\in [1:L]} \Vert \th_k^{(\ell)}(x_i) - h_k^{(D)}(\ell/L,x_i) \Vert_{\ndbar}.
    \end{cases}\] 
    Let $R := c(\vpin\vee\vpout)^2/(C_0 \wedge\sqrt{C_0})$, for $c>0$ an absolute constant as in \eqref{eq:final_thm_whp_concentration_normpW}. If we suppose that $\|W_{\mathrm{in}}^D\|_{\ndbar} \vee \|W_{\mathrm{out}}^D\|_{\ndbar} \leq R$, the assumptions of Theorem~\ref{thm:main_theorem_intro} allow us to apply \cite[Theorem 3]{chizat2025hidden} to obtain constants $c_1^R, c_2^R >0$ that only depend on $k$, $n$ and $R$ such that, for any $\delta \in (e^{-M}, 1)$, if $c_1^R\Big( \frac{1}{L} + \frac{\sqrt{D} + \log(1/\delta)}{\sqrt{LM}}\Big) \leq c_2^R$, then
    \[\P\Big(\max_{k\leq K}\tilde{\Delta}_k^y \vee \tilde{\Delta}_k^h \leq c_1^R\Big( \frac{1}{L} + \frac{\sqrt{D} + \log(1/\delta)}{\sqrt{LM}}\Big)|\Finit\Big) \geq 1-\delta/4.\]
    We consider the events $E = \{\max_{k\leq K}\tilde{\Delta}_k^y \vee \tilde{\Delta}_k^h \leq c_1^R\Big( \frac{1}{L} + \frac{\sqrt{D} + \log(1/\delta)}{\sqrt{LM}}\Big)\}$ and $A =\{\|W_{\mathrm{in}}^D\|_{\ndbar} \vee \|W_{\mathrm{out}}^D\|_{\ndbar} \leq R \}$. We notice that
    \[\P(E) 
    = \E[\mathbbm{1}_{E} \mathbbm{1}_{A}] + \E[\mathbbm{1}_{E} \mathbbm{1}_{A^c}] \geq \E[\mathbbm{1}_{E} \mathbbm{1}_{A}] +0 = \E[\E[\mathbbm{1}_{E} \mathbbm{1}_{A}|\Finit]] = \E[\mathbbm{1}_{A}\P(E|\Finit)],\]
    where we used the fact that $A$ is $\Finit$ measurable.
    Let $\delta \geq e^{-D/C_0}$ ; 
    since $D \leq  C_0 M$, we have $\delta \geq e^{-M}$. 
    Under the event $A$, we can apply \cite[Theorem 3]{chizat2025hidden} and obtain $\P(E|\Finit) \geq 1-\delta/4$, so that \(\P(E) \geq \P(A)(1-\delta/4)\). 
    Since $\delta \geq e^{-D/C_0}$, we have $\frac{\log(1/\delta)}{D} \leq \frac{1}{C_0}$ and, as in the proof of Theorem~\ref{thm:quantitative_convergence_main}, we use Lemma~\ref{lem:subweibull_concetration} to prove that $\P(A) = \P(\|W_{\mathrm{in}}^D\|_{\ndbar} \vee \|W_{\mathrm{out}}^D\|_{\ndbar} \leq R) \geq 1-\delta/4$.
    Hence, \(\P(E) \geq (1-\delta/4)^2 = 1-2\frac{\delta}{4} + \frac{\delta^2}{16}\geq 1-\delta/2 + 0\). 
    That is, we have $c_1^R, c_2^R >0$ that only depend on $k$ and $R$ such that, for any $\delta \in (e^{-D/C_0}, 1)$, if $c_1^R\Big( \frac{1}{L} + \frac{\sqrt{D} + \log(1/\delta)}{\sqrt{LM}}\Big) \leq c_2^R$, 
    \begin{equation}\label{eq:proof_main_thm_intro_lenaic_part}
      \P\Big(\max_{k\leq K}\tilde{\Delta}_k^y \vee \tilde{\Delta}_k^h \leq c_1^R\Big( \frac{1}{L} + \frac{\sqrt{D} + \log(1/\delta)}{\sqrt{LM}}\Big)\Big) \geq 1-\delta/2.  
    \end{equation}
    For appropriately coupled i.i.d.~copies of the limit system, with preactivations $H_k^{(D)}(s,x)$ and output $y_k$, let
    \[\begin{cases}
     \hat{\Delta}_k^y \coloneqq \max_{i\in [1:n]} \Vert \hat{y}^{D}_k(x_i) - y_k(x_i)\Vert_{2, \dimout},\\
     \hat{\Delta}_k^h \coloneqq \max_{i\in [1:n]}\max_{\ell\in [1:L]} \Vert h_k^{(D)}(\ell/L,x_i) - H_k^{(D)}(\ell/L, x_i) \Vert_{\ndbar}.
    \end{cases}\] 
    By Theorem~\ref{thm:quantitative_convergence_main}, we have\footnote{Technically, the stament of Theorem~\ref{thm:quantitative_convergence_main} is for $\delta \in (e^{-D}, 1)$. Observing the argument in its proof, modulo modifying $\cstH_{K,N}$ using $C_0$, it holds for $\delta \in (e^{-D/C_0}, 1)$.} for $\delta \in (e^{-D/C_0}, 1)$
    \begin{equation}\label{eq:proof_main_thm_intro_new_part}
    \P\Big(\max_{k\leq K}\hat{\Delta}_k^y \vee \hat{\Delta}_k^h \leq\cstH_{K,N}\Big( \frac{1 + \log(1/\delta)}{\sqrt{D}} \Big) \Big) \geq 1-\delta/2.
    \end{equation}
    We let $c_1 = \cstH_{K,n} \vee c_1^R$ and $c_2 = (c_2^R \wedge \frac{c_1}{2C_0})$. Let $\delta\in(0,1)$. If $c_1\Big( \frac{1}{L} + \frac{\sqrt{D}}{\sqrt{LM}} + \frac{1 + \log(1/\delta)}{\sqrt{D}}\Big) \leq c_2$, then we have $c_1^R\Big( \frac{1}{L} + \frac{\sqrt{D} + \log(1/\delta)}{\sqrt{LM}}\Big) \leq c_2^R$, and also
    $c_1\frac{\log(1/\delta)}{D}\leq c_1\frac{\log(1/\delta)}{\sqrt{D}} \leq \frac{c_1}{2C_0}$, which forces $\log(1/\delta) < \frac{D}{C_0}$ and hence $\delta > e^{-D/C_0}$. From this, both \eqref{eq:proof_main_thm_intro_lenaic_part} and \eqref{eq:proof_main_thm_intro_new_part} hold.  
    A simple union bound and the triangle inequality finally yield
    \[\P\Big( \max_{k\leq K} \; \Delta_k^y \wedge \Delta_k^h \leq c_1\Big(\frac{1}{L}+\frac{\sqrt{D}}{\sqrt{ML}}+\frac{1+\log(1/\delta)}{\sqrt{D}}\Big)\Big) \geq 1-\delta,\] 
    which is the desired result. 
\end{proof}

The proof for Theorem~\ref{thm:whp_recurrence relation} is done by recurrence in the next subsections.

\subsection{Proof of Theorem~\ref{thm:whp_recurrence relation}}\label{ssec:proof_thm_recurrence_relation}
We fix $D\geq 1$. Throughout this proof, \(\cstWh_k\) represents a $\Finit$-measurable random constant that only depends on \(\|\Win^{(D)}\|_{\bar{p},D\times\dimin}, \|\Wout^{(D)}\|_{\bar{p},D\times\dimout}\) for $p\in\{2,3,4\}$ and may change from line to line. Similarly, \(\llnW_k(s)\) denotes a subexponential LLN concentration term as in Definition~\ref{def:lln_terms_RW}, which may also change from line to line.

\subsubsection{Base case}
By definition, for all $d\in[1:D]$ and $s\in[0,1]$,
$h_0^{d}(s) \equiv W_{\mathrm{in}}^d\cdot x \equiv H_0^d(s)$, which implies that $\sup_{s\in[0,1]}\| h^D_0 (s) - H^{D}_0 (s) \|_{\ndbar} = 0=O(D^{-1/2})$.
Similarly, $b_0^{(D)}(s) \equiv W_{\mathrm{out}}^{(D)}\nabla \ell(\hat{y}^D_0)$ and $B_0^{(D)}(s) \equiv W_{\mathrm{out}}^{(D)}\nabla\ell(y_0)$. 
Using that $\nabla\ell$ is Lipschitz,
\begin{align*}
        \|W_{\mathrm{out}}^{(D)} [ \nabla\ell(\hat{y}^{D}_0) -\nabla\ell(y_0) ] \|_{\ndbar}
        &\leq \| W_{\mathrm{out}}^{(D)}\|_{\ndbar}\bigg\| \frac{1}{D}\sum_{d=1}^D h_0^d(1)W_{\mathrm{out}}^d - \E[H_0^d(1)W_{\mathrm{out}}^d]\bigg\|_{2,\dimout}\\
        &\leq \| W_{\mathrm{out}}^{(D)}\|_{\ndbar}\bigg\| \frac{1}{D}\sum_{d=1}^D (W_{\mathrm{in}}^d\cdot x)W_{\mathrm{out}}^d - \E[(W_{\mathrm{in}}^d\cdot x)W_{\mathrm{out}}^d]\bigg\|_{2,\dimout}\\
        &\leq \| W_{\mathrm{out}}^{(D)}\|_{\ndbar}\bigg\| \frac{1}{D}\sum_{d=1}^D W_{\mathrm{out}}^d(W_{\mathrm{in}}^d)^\top - \E[W_{\mathrm{out}}^d(W_{\mathrm{in}}^d)^\top]\bigg\|_{2,\dimout\times \dimin} \|x\|_{2,\dimin}\\
        &=: \cstW_0 \llnW_0,
    \end{align*}
    where we denoted by \(\cstW_0\) a constant depending only on the RMS norm of the initialization, and by \(\llnW_0\) a subexponential LLN concentration term
    that, in this case, does not depend on $s$. Thus, as required, \[\sup_{s\in[0,1]}\|b_0^{(D)}(s) -B_0^{(D)}(s)\|_{\ndbar} \leq \cstW_0\Big(\frac{1}{\sqrt{D}} + \int_0^1\llnW_0(u)\d u\Big).\]
   
\subsubsection{Inductive step}
Let $k\in[1:K-1]$. By induction, assume that for all $j \in[0:k-1]$ it holds that 
\[\sup_{s\in[0,1]}\| h^D_{j} (s) - H^{D}_j (s) \|_{\ndbar} \vee \| b^D_{j} (s) - B^{D}_j (s) \|_{\ndbar}\leq \cstWh_j\Big(\frac{1}{\sqrt{D}} + \sum_{\ell=0}^{j-1} \int_0^1\llnW_\ell(s) \d s\Big)\;\;\text{a.s.},\]
which implies 
\begin{equation}\label{eq:proof_cvg_inductive_hypothesis}
    \max_{\substack{s\in[0,1]\\j\in[0:k-1]}}\| h^D_{j} (s) - H^{D}_j (s) \|_{\ndbar} \vee \| b^D_{j} (s) - B^{D}_j (s) \|_{\ndbar}\leq \cstWh_k\Big(\frac{1}{\sqrt{D}} + \sum_{j=0}^{k-1} \int_0^1\llnW_j(s) \d s\Big)\;\;\text{a.s.}
\end{equation}
We introduce the following notation for pivot terms
\begin{multline*}
\H^{d}_k ( s, \bfH^{(D)} ) := \E \big[ \rho \big( S^{\mathbf{H}^D}_k + \langle F^{(D)}_k (s, \bfS^{\mathbf{H}^D}, \bfS^{\mathbf{B}^D}  ) , H^{(D)}_k (s)\rangle_{\ndbar} \big) \sqrt{D} V^d \vert \Finit \big] \\+\E \big[ \rho \big( S^{\mathbf{H}^D}_k + \langle F^{(D)}_k (s, \bfS^{\mathbf{H}^D}, \bfS^{\mathbf{B}^D}  ) , H^{(D)}_k (s)\rangle_{\ndbar} \big)
G^d_k (s, \bfS^{\mathbf{H}^D}, \bfS^{\mathbf{B}^D} ) \vert \Finit \big]  \\
=: \hat{\H}^{d}_k( s, \bfH^{(D)} ) + \check{\H}^{d}_k( s, \bfH^{(D)} ).
\end{multline*}
We abuse notation here, since $\H^{d}_k ( s, \bfH^{(D)} )$ does not correspond to simply replacing \(\bfH^{(D)}\) in $\H^{d}_k ( s, \bfh^{(D)} )$; the difference between both will be clear from context.
We similarly define $(\hat{\overline{\H}}^d_k,\check{\overline{\H}}^d_k)$ and $(\hat{\B}_k^{d}, \check{\B}_k^{d}, \hat{\overline{\B}}_k^{d}, \check{\overline{\B}}_k^{d})$ .

\paragraph{Forward pass.} 
We start by computing, for $t \in [0,1]$,
    \[h^{(D)}_k (t) - H^{(D)}_k (t) = W_{\mathrm{in}}^{(D)} x + \int_0^t \H^{(D)}_k(s, h_k^{(D)}) \d s - W_{\mathrm{in}}^{(D)} x - \int_0^t \overline{\H}^{(D)}_k(s) \d s.\]
    Since the initial conditions coincide\footnote{In the setting of trainable embedding matrices, we would need to show that an extra term $\|(\hat{W}_e^{(D)})_k - (W_{\mathrm{in}}^{(D)})_k\|_{\ndbar}\|x\|_{2,\dimin}$ is $O(D^{-1/2})$ with high probability. See Appendix~\ref{sec:extensions_of_results} for further details.}, in order to bound \(\| h^D_k (s) - H^{D}_k (s) \|_{\ndbar}\) is bounded if we bound \(\|\H^{(D)}_k(s, \bfh^{(D)}) -\overline{\H}^{(D)}_k(s)\|_{\ndbar}\) uniformly in $s\in[0,1]$. To do so, we introduce $\H^{D}_k (s, \bfH^{(D)})$ as a pivot, and we decompose \(\H^{D}_k (s, \bfH^{(D)}) = \hat{\H}_k^{(D)}(s, \bfH^{(D)}) + \check{\H}_k^{(D)}(s, \bfH^{(D)})\) and \(\overline{\H}^{D}_k (s) = \hat{\overline{\H}}_k^{(D)}(s) + \check{\overline{\H}}_k^{(D)}(s)\). We are thus reduced to studying
    \begin{align}\label{eq:proof_cvg_recurrence_integral_decomposition_h}
        \|h^{(D)}_k (t) - H^{(D)}_k &(t)\|_{\ndbar} \leq \int_0^t\|\H^{(D)}_k(s, \bfh^{(D)}) -\H^{(D)}_k (s, \bfH^{(D)})\|_{\ndbar}\d s \\
        &+ \int_0^t\|\hat{\H}^{(D)}_k (s, \bfH^{(D)})-\hat{\overline{\H}}^{(D)}_k(s)\|_{\ndbar} \d s \notag + \int_0^t\|\check{\H}^{(D)}_k (s, \bfH^{(D)})-\check{\overline{\H}}^{(D)}_k(s)\|_{\ndbar} \d s.
    \end{align}
    To bound these quantities, we use the following lemmas.

    \begin{lemma}[Propagation of chaos -- Forward pass]\label{lem:proof_cvg_prop_caos_H} 
        For all $s\in[0,1]$,
        \begin{multline*}
            \|\H^{(D)}_k(s, h_k^{(D)}) -\H^{(D)}_k(s, H_k^{(D)})\|_{\ndbar} \\\leq \cstWh_k\Big(\| h_k^{(D)}(s) - H_k^{(D)}(s)\|_{\ndbar} +  \sum_{j=0}^{k-1} \llnW_j(s) +\frac{1}{\sqrt{D}} + \sum_{j=0}^{k-1} \int_0^1\llnW_j(u)\d u \Big). 
        \end{multline*}
    \end{lemma}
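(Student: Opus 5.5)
The plan is to compare the two expressions coordinate by coordinate, splitting $\H^d_k(s,\bfh^{(D)})-\H^d_k(s,\bfH^{(D)})$ into its "hat" part (the term carrying $\sqrt{D}V^d$) and its "check" part (the term carrying $g^d_k$ versus $G^d_k$). Write $\hat P := P_k^{\bfh^D,\bfb^D}$ for the finite argument and $\check P := S^{\bfH^D}_k+\langle F^{(D)}_k(s,\bfS^{\bfH^D},\bfS^{\bfB^D}),H^{(D)}_k(s)\rangle_{\ndbar}$ for the pivot argument.

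The check part is the easier one. We write
\[
\rho(\hat P)\,g^d_k(\bfS^{\bfh},\bfS^{\bfb})-\rho(\check P)\,G^d_k(\bfS^{\bfH},\bfS^{\bfB})
\]
as a sum of three differences: $\rho(\hat P)-\rho(\check P)$, then $g^d_k-G^d_k$ evaluated at the same point $(\bfS^{\bfh},\bfS^{\bfb})$, then $G^d_k(\bfS^{\bfh},\bfS^{\bfb})-G^d_k(\bfS^{\bfH},\bfS^{\bfB})$. We take the RMS norm over $d$ and apply Jensen and Cauchy--Schwarz under $\E[\cdot|\Finit]$. Each difference is then handled by a result already available:
\begin{itemize}
\item the difference $g-G$ is controlled by Proposition~\ref{prop:propcaos_Dbar_whp}, which contributes $\|h_j-H_j\|_{\ndbar}$, $\|b_j-B_j\|_{\ndbar}$ for $j\le k-1$ and the terms $\llnW_j(s)$, with a factor $e^{\cstW_k M}$;
\item the gradient bound of Lemma~\ref{lem:RegSkel}, together with the mean-value theorem, bounds the $G$ increment by $e^{\hat C M}\|\bfS^{\bfh}-\bfS^{\bfH}\|$;
\item the $\rho$ increment uses that $\rho$ is Lipschitz, together with
\[
|\hat P-\check P|\le |S^{\bfh}_k-S^{\bfH}_k|+\|f_k-F_k\|_{\ndbar}\|h_k\|_{\ndbar}+\|F_k\|_{\ndbar}\|h_k-H_k\|_{\ndbar}+\|F_k(\bfS^{\bfh})-F_k(\bfS^{\bfH})\|_{\ndbar}.
\]
\end{itemize}
Every conditional moment of $S^{\bfh}-S^{\bfH}$ equals $\|h_j-H_j\|_{\ndbar}$ up to constants, by Lemmas~\ref{lem:Magic} and \ref{lem:moment_magic}. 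The random $M=1\vee\max_j|S_j|$ enters only through exponential weights $e^{CM}$, whose conditional expectations are bounded by $\cstW_k$ thanks to Lemma~\ref{lem:exp_moment_magic} and the a priori bounds of Lemmas~\ref{lem:embeddingcontrol} and \ref{lem:MF_skeleton_bounded_ndbar}. Past errors ($j\le k-1$) are then replaced by the induction hypothesis \eqref{eq:proof_cvg_inductive_hypothesis}; this is where the $\frac{1}{\sqrt D}+\sum_j\int_0^1\llnW_j$ terms come from.

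The main obstacle is the hat part, because of the $\sqrt D$ factor: we must bound $\sqrt D\,\|\E[(\rho(\hat P)-\rho(\check P))V^{(D)}\,|\,\Finit]\|_{\ndbar}=\|\E[XY\,|\,\Finit]\|_{2,D}$ with $X=V^{(D)}$ and $Y=\rho(\hat P)-\rho(\check P)$. Bounding $Y$ crudely with Lemma~\ref{lem:decorrelation_chizat} would lose nothing in $D$ but would require $X$ independent of $Y$, which fails. Instead I would apply Lemma~\ref{lem:fine_decorrelation_lemma}, since $X$ is independent of $\Finit$ with i.i.d.\ centered coordinates. This requires writing $|Y|\le\sum_i|Z^{(1)}_i||Z^{(2)}_i|$, with $Z^{(1)}_i$ one of the error quantities listed above and $Z^{(2)}_i$ an exponential weight such as $e^{\cstW M}$ or $1$. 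The decomposition of $|\hat P-\check P|$ above has exactly this product form, because the $\rho$ increment is Lipschitz and the skeleton increments satisfy the bound of Proposition~\ref{prop:propcaos_Dbar_whp} with the factor $e^{\cstW M}$. The lemma then gives
\[
\|\cdot\|_{2,D}\le C\sum_i\E^{1/2}[(Z^{(1)}_i)^2|\Finit]\,\E^{1/4}[(Z^{(2)}_i)^4|\Finit],
\]
with no $\sqrt D$ loss, and the second-moment factors reduce to the same error quantities as in the check part. One thing to verify is that the finite-$D$ skeleton $f$ is compared with $F$ using the Lipschitz-in-$\bfz$ bound of Lemma~\ref{lem:Reg_skel_finiteD}, so that all increments are genuinely in product form.

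Finally, the term $\|F_k\|_{\ndbar}\|h_k-H_k\|_{\ndbar}$ is the only one that carries the current-step error $\|h_k(s)-H_k(s)\|_{\ndbar}$. Keeping it separate and collecting all the pieces gives the stated bound. In the Grönwall argument that follows in the paper, this current-step term is what gets absorbed.
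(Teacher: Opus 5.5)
Your proposal is correct and is essentially the paper's proof: your hat part together with the three-way split of the check part is exactly the paper's decomposition into the terms (i)--(iv), treated with the same tools (Proposition~\ref{prop:propcaos_Dbar_whp}, the gradient bounds of Lemma~\ref{lem:RegSkel}, Lemmas~\ref{lem:Magic}--\ref{lem:moment_magic}, and the induction hypothesis \eqref{eq:proof_cvg_inductive_hypothesis}). The only deviation is in the $\sqrt{D}V^{(D)}$ term, where your reason for avoiding Lemma~\ref{lem:decorrelation_chizat} is mistaken: its part (ii) explicitly allows $X$ to be correlated with $Y$ (it is Bessel's inequality for the orthogonal family $(V^d)_{d}$), and the paper uses it to get the bound $\varv\,\E^{1/2}[|\rho(\hat P)-\rho(\check P)|^2\,|\,\Finit]$, which is the same quantity already needed for the check part, so your detour through Lemma~\ref{lem:fine_decorrelation_lemma} is valid but unnecessary.
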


    \begin{lemma}[Quantitative CLT -- Forward pass]\label{lem:proof_cvg_TCL_H} 
        For all $s\in[0,1]$, 
        \[\|\check{\H}^{(D)}_k ( s, \bfH^{(D)} ) - \check{\overline{\H}}^{(D)}_k (s)\|_{\ndbar} \leq \cstWh_k\Big(\frac{1}{\sqrt{D}}  + \llnW_k(s)\Big).\]
    \end{lemma}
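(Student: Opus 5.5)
The plan is to exploit the linear structure \eqref{eq:cleaner_system_Fk_definition} to reduce both $\check{\H}^{(D)}_k(s,\bfH^{(D)})$ and $\check{\overline{\H}}^{(D)}_k(s)$ to expectations of one fixed smooth test function. Because the $(\bfH^d,\bfB^d)$ are i.i.d.\ copies of the same law, the skeleton vectors $\vec{\mathbf{v}}^F_k,\vec{\mathbf{v}}^G_k$ of Definition~\ref{def:linear_skeleton_meanfield} depend only on $\boldsymbol{\Gamma}^H_{\wedge k-1},\boldsymbol{\Gamma}^B_{\wedge k-1}$ and are the same for every $d$. Hence
\[
G^d_k(s,\bfz^h,\bfz^b)=-\eta_v\vec{\mathbf{v}}^G_k(s,\bfz^h,\bfz^b)^\top\bfB^d_{\wedge k-1}(s),
\qquad
\langle F^{(D)}_k,H^{(D)}_k\rangle_{\ndbar}=-\eta_u\vec{\mathbf{v}}^F_k(s,\bfz^h,\bfz^b)^\top\Big(\tfrac1D\textstyle\sum_d \bfH^d_{\wedge k-1}H^d_k\Big).
\]
I will replace the empirical covariance in the last expression by $\vec{\boldsymbol{\Gamma}}^H_k(s)$. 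The resulting error is exactly a term $\llnW_k(s)$ (Remark~\ref{rem:examples_llnW_terms}), multiplied by $\|\vec{\mathbf{v}}^F_k\|_{2,k}\le C^{H,B}_k M$ by Lemma~\ref{lem:explicit_regskel}. Since $\rho$ is Lipschitz, this gives
\[
\check{\H}^d_k(s,\bfH^{(D)})=-\eta_v\,\E\big[\varphi(\bfS^{\bfH^D},\bfS^{\bfB^D})\,|\,\Finit\big]^\top\bfB^d_{\wedge k-1}(s)+\mathrm{err}^d,
\qquad
\varphi(\bfz):=\rho(p_k(s,\bfz))\,\vec{\mathbf{v}}^G_k(s,\bfz).
\]
The remainder satisfies $\|\mathrm{err}^{(D)}\|_{\ndbar}\le \cstWh_k\,\llnW_k(s)\,\E[M_S^2|\Finit]$, where $M_S$ is the max of the first $k$ coordinates of $(\bfS^{\bfH^D},\bfS^{\bfB^D})$. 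This conditional moment is bounded by $\cstWh_k$ via Lemmas~\ref{lem:moment_magic} and~\ref{lem:MF_skeleton_bounded_ndbar}.

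In the same way, since $P_k=p_k(\mathbf{Z}^h,\mathbf{Z}^b)$ (Remark~\ref{rem:linStruc}), the limit term reads
\[
\check{\overline{\H}}^d_k(s)=-\eta_v\,\E[\varphi(\mathbf{Z}^h,\mathbf{Z}^b)]^\top\bfB^d_{\wedge k-1}(s).
\]
Taking the RMS norm, the difference is bounded by $\|\E[\varphi(\bfS)|\Finit]-\E[\varphi(\mathbf{Z})]\|_{2,k}\,\|\bfB_{\wedge k-1}(s)\|_{\ndbar}$, and the last factor is a $\cstWh_k$ by Lemma~\ref{lem:MF_skeleton_bounded_ndbar}. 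It therefore remains to show
\[
\big\|\E[\varphi(\bfS^{\bfH^D},\bfS^{\bfB^D})|\Finit]-\E[\varphi(\mathbf{Z}^h(s),\mathbf{Z}^b(s))]\big\|\le \cstWh_k\big(D^{-1/2}+\llnW_k(s)\big).
\]
I split this with a Gaussian pivot $\tilde{\mathbf{Z}}$, which conditionally on $\Finit$ is centered Gaussian with the empirical covariance $\mathrm{diag}\big(\varu^2\tfrac1D\sum_d\bfH^d\bfH^{d\top},\ \varv^2\tfrac1D\sum_d\bfB^d\bfB^{d\top}\big)$ at time $s$ (first $k$ coordinates). This is exactly the conditional covariance of $\bfS$ by Lemma~\ref{lem:Magic}.

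The first half of the split, $\bfS$ versus $\tilde{\mathbf{Z}}$, is the quantitative CLT of Appendix~\ref{sec:quantitative_clt}. I would run it as a Lindeberg replacement: conditionally on $\Finit$, $\bfS=\sum_d U^d(\bfH^d,0)/\sqrt D+V^d(0,\bfB^d)/\sqrt D$ is a sum of independent terms. Swapping them one at a time for Gaussians with matching first two moments and Taylor expanding to third order leaves an error of the form
\[
\frac{1}{D^{3/2}}\sum_d\big(|\bfH^d|^3+|\bfB^d|^3\big)\,\sup\E\big[|\nabla^3\varphi|\big].
\]
The prefactor is $D^{-1/2}$ times the empirical third moment $\|W\|_{\bar 3}^3$, which is admissible in $\cstWh_k$. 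The derivatives of $\varphi$ up to order three are controlled by Lemma~\ref{lem:explicit_regskel} together with its higher-order analogue from Lemma~\ref{lem:RegSkel}(v), using the bounded derivatives of $\rho$ up to fifth order. These bounds grow like $e^{C M}$ in the size $M$ of the arguments. Since the interpolated arguments are conditionally subgaussian with proxy $\lesssim\|\bfH\|_{\ndbar}^2+\|\bfB\|_{\ndbar}^2$, Lemma~\ref{lem:exp_moment_magic} gives finite exponential moments. After Hölder's inequality, the growth is absorbed into $\cstWh_k$.

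The second half, $\tilde{\mathbf{Z}}$ versus $\mathbf{Z}(s)$, compares two centered Gaussians whose covariances differ by an empirical-minus-true covariance, i.e.\ an $\llnW_k(s)$ term. I would use Gaussian interpolation $\sqrt{t}\,\tilde{\mathbf{Z}}+\sqrt{1-t}\,\mathbf{Z}$, which gives $|\E\varphi(\tilde{\mathbf{Z}})-\E\varphi(\mathbf{Z})|\le\tfrac12\|\tilde\Sigma-\Sigma\|\sup_t\E|\nabla^2\varphi|$. Only the first $k$ coordinates enter, and the second-derivative moment is again finite by Fernique (Remark~\ref{rem:GaussianHasExponentialMoments}). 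One technical point: $\varphi$ depends on $s$ only through deterministic $\boldsymbol{\Gamma}$-quantities, so there is no hidden coupling with $\Finit$.

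I expect the main obstacle to be the Lindeberg step. The test function has exponentially growing derivatives, and $p_k$ enters $\rho$ inside a nonlinear recursion, so the hybrid variables must be shown to keep uniform conditional exponential moments. Each hybrid is a sum of independent conditionally subgaussian terms with proxy bounded by that of $\bfS$, so Lemma~\ref{lem:exp_moment_magic} applies uniformly across the replacement steps, and I would verify this first.
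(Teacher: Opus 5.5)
Your proposal is correct. It uses the same three ingredients as the paper: an LLN pivot for the empirical inner product $\langle F^{(D)}_k,H^{(D)}_k\rangle_{\ndbar}$, a Lindeberg swap, and a Gaussian covariance interpolation. It arranges them differently, though.

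\textbf{The paper's order.} It feeds the $\Finit$-measurable, $\R^D$-valued map $\varphi^{(D)}(\bfz)=\rho\big(z^h_k+\langle F^{(D)}_k(s,\bfz),H^{(D)}_k(s)\rangle_{\ndbar}\big)G^{(D)}_k(s,\bfz)$ directly into Corollary~\ref{cor:quantitative_noniid_clt_conditional}. It bounds the derivatives coordinate by coordinate by $\max_{t,j}(|H^d_j(t)|+|B^d_j(t)|)$ and aggregates these growth constants in RMS norm. Only afterwards does it replace the empirical inner product by its expectation, on the Gaussian side ($\tilde P_k^{\bfZ^H,\bfZ^B}$ versus $P_k^{\bfZ^H,\bfZ^B}$).

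\textbf{Your order.} You make the LLN replacement first, at the argument $\bfS$. You then use the skeleton-vector representation to pull $\bfB^d_{\wedge k-1}(s)$ out of the conditional expectation. As a result, the CLT only ever meets the single deterministic map $\varphi=\rho(p_k)\,\vec{\mathbf{v}}^G_k$ with values in $\R^k$.

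What this buys you:
\begin{itemize}
\item You need no random, $D$-dimensional test function.
\item All $D$-dependence is isolated in the factor $\|\bfB_{\wedge k-1}(s)\|_{\ndbar}$.
\item Your Lindeberg run over the $2D$ independent summands $U^d(\bfH^d,0)/\sqrt D$ and $V^d(0,\bfB^d)/\sqrt D$ makes explicit a point the paper passes over. The vector $(U^d\bfH^d,V^d\bfB^d)$ is not literally of the form $Y_iv_i$ with a single scalar $Y_i$, as assumed in Theorem~\ref{thm:quantitative_noniid_clt}. The Lindeberg argument only uses independence, centering and subgaussianity, so your splitting is the right way to apply it.
\end{itemize}

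The price is that you need third-order derivative bounds on $\vec{\mathbf{v}}^F_k$ and $\vec{\mathbf{v}}^G_k$, whereas Lemma~\ref{lem:explicit_regskel} states them only to first order. As you indicate, they follow from the same recursion as Lemma~\ref{lem:RegSkel}(v). The paper's route needs derivatives of the same order, but on $F^d_k$ and $G^d_k$ instead.

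One small correction. $\varphi$ depends on $z^h_k$ through $p_k(s,\bfz)=z^h_k-\eta_u\vec{\mathbf{v}}^F_k(s,\bfz)^\top\vec{\boldsymbol{\Gamma}}^H_k(s)$, so it is defined on $\R^{k+1}\times\R^k$. Both your Gaussian pivot and the covariance mismatch must therefore involve $(\bfH_{\wedge k}(s),\bfB_{\wedge k-1}(s))$, not only the first $k$ coordinates. The entries involving $H_k$ are exactly why the mismatch is an $\llnW_k(s)$ term.
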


    \begin{lemma}[Cavity method -- Forward pass]\label{lem:proof_cvg_cavity_TCL_H}
    For all $s\in[0,1]$,
    \[\|\hat{\H}^{(D)}_k ( s, \bfH^{(D)} ) - \hat{\overline{\H}}^{(D)}_k (s)\|_{\ndbar}\leq \cstWh_k\Big(\frac{1}{\sqrt{D}}  + \llnW_k(s)\Big).\]
    \end{lemma}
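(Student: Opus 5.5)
The plan is to run the cavity expansion of Section~\ref{sec:proof-strategy-cavity} coordinate by coordinate, quantitatively. The point is that the law of $(\bfS^{\bfH^D},\bfS^{\bfB^D})$ matters only at the very end, through smooth test functions.

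\textbf{Step 1: isolating $V^d$.} Fix $d$ and $s$, and write
\[
P := S^{\bfH^D}_k + \langle F^{(D)}_k(\bfS^{\bfH^D},\bfS^{\bfB^D}), H^{(D)}_k\rangle_{\ndbar}.
\]
The variable $V^d$ enters $P$ only through $\bfS^{\bfB^D} = \overline{\bfS}^{d} + V^d \bfB^d_{\wedge k-1}/\sqrt{D}$, where $\overline{\bfS}^{d} := \bfS^{\bfB^D} - V^d\bfB^d/\sqrt D$. Note that $S^{\bfH^D}$ does not involve $V$, and $F_k$ depends only on the first $k$ components.

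By \eqref{eq:cleaner_system_Fk_definition}, $\langle F^{(D)}_k(z), H^{(D)}_k\rangle_{\ndbar} = -\eta_u \vec{\mathbf{v}}^F_k(z)^\top \widehat{\boldsymbol\Gamma}^{H,D}_k$, where $\widehat{\boldsymbol\Gamma}^{H,D}_k := \frac1D\sum_{d'}H_k^{d'}\bfH^{d'}_{\wedge k-1}$. So $P = \hat p_k(S^{\bfH^D},\bfS^{\bfB^D})$ for a smooth map $\hat p_k$ built like $p_k$ in Definition~\ref{def:linear_skeleton_meanfield}, but with the empirical covariance in place of $\vec{\boldsymbol\Gamma}^H_k$. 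Its derivatives up to order two are bounded by $e^{\cstWh_k M}$ by Lemma~\ref{lem:explicit_regskel}, with $M$ the sup of the past arguments.

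Taylor expand $\rho(\hat p_k(\cdot,\bfS^{\bfB^D}))$ around $\overline{\bfS}^{d}$ to second order. This gives three terms.
\begin{itemize}
\item The zeroth-order term is $\E[\rho(\hat p_k(S^{\bfH^D},\overline{\bfS}^{d}))\sqrt D V^d\,|\,\Finit]$. It vanishes, because $V^d$ is centered and independent of $(S^{\bfH^D},\overline{\bfS}^{d})$ given $\Finit$.
\item The first-order term is
\[
\E\big[(V^d)^2\rho'(\hat p_k(S^{\bfH^D},\overline{\bfS}^d))\,\nabla_{\bfz^b}\hat p_k(S^{\bfH^D},\overline{\bfS}^d)\,\big|\,\Finit\big]\cdot \bfB^d_{\wedge k-1},
\]
and by independence it equals $\varv^2\,\E[\rho'(\hat p_k)\nabla_{\bfz^b}\hat p_k(S^{\bfH^D},\overline{\bfS}^d)\,|\,\Finit]\cdot\bfB^d_{\wedge k-1}$.
\item The remainder is bounded by $e^{\cstWh_k M}|V^d|^3\|\bfB^d_{\wedge k-1}\|^2/\sqrt D$, with $M$ evaluated at an intermediate point.
\end{itemize}
Exponential moments of $M$ are controlled conditionally on $\Finit$ by Lemma~\ref{lem:exp_moment_magic}, since all entries of $\bfS$ are conditionally subgaussian with proxy $\lesssim\|\cdot\|_{\ndbar}$. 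Taking the RMS norm over $d$ then costs $\frac1D\sum_d\|\bfB^d\|^4$, which is where the $\bar p=4$ norms of $\Wout^{(D)}$ enter $\cstWh_k$. The remainder contribution is therefore $\cstWh_k/\sqrt D$.

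\textbf{Step 2: from the perturbed first-order term to the limit.} Three replacements are needed.
\begin{enumerate}
\item Replace $\overline{\bfS}^d$ by $\bfS^{\bfB^D}$ in the first-order term. A further mean-value step gives an error of order $|V^d|\,\|\bfB^d\|/\sqrt D$ times an exponential weight, which is again $\cstWh_k/\sqrt D$ in RMS.
\item Replace $\widehat{\boldsymbol\Gamma}^{H,D}_k$, and the analogous empirical $B$-covariances hidden in $\hat p_k$ and $\nabla\hat p_k$, by their expectations. The resulting error is a Lipschitz multiple, with exponential weight, of $\llnW_k(s)$-type terms as in Remark~\ref{rem:examples_llnW_terms}.
\item Replace the law of $(\bfS^{\bfH^D}_{\wedge k},\bfS^{\bfB^D}_{\wedge k})(s)$ by that of $(\bfZ^h,\bfZ^b)(s)$. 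Given $\Finit$, the covariance of this vector is the empirical one, $\varu^2\frac1D\sum_d\bfH^d(\bfH^d)^\top$ and similarly for $\bfB$. First compare it to the Gaussian with this empirical covariance using the quantitative CLT of Appendix~\ref{sec:quantitative_clt}. The test function $z\mapsto\rho'(p_k(z))\nabla_{\bfz^b}p_k(z)$ has two derivatives bounded by $e^{C|z|}$, which is why $\rho^{(3)}$ is needed; the error is $\cstWh_k/\sqrt D$, with third moments of $\bfH^d,\bfB^d$, i.e.\ $\bar p=3$. Then compare the two Gaussians, which differ only in covariance, via a Gaussian interpolation. This costs another $\llnW_k(s)$.
\end{enumerate}

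After these replacements, $\varv^2\E[\rho'(P_k)\nabla_{\bfz^b}p_k(\bfZ^h,\bfZ^b)]\cdot\bfB^d_{\wedge k-1}$ is exactly the coordinate $\hat{\overline{\H}}^d_k(s)$. Indeed, $\nabla_{\bfz^b}p_k$ equals $\E[H_k\nabla_{\bfz^b}F_k\,|\,\bfZ]$ by \eqref{eq:cleaner_system_Fk_definition}. Each error above is a scalar, or a $k$-vector, multiplied by $\bfB^d_{\wedge k-1}$. Taking the RMS norm over $d$ therefore only multiplies by $\|\sup|\bfB^{(D)}|\|_{\ndbar}\le\cstW_k$, using Lemma~\ref{lem:MF_skeleton_bounded_ndbar}.

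\textbf{Main obstacle.} I expect the quantitative CLT step to be the hardest. The test functions are unbounded with exponentially growing derivatives, the sums are vector valued, and the covariance is random and $\Finit$-measurable. I would handle it with a Lindeberg replacement conditional on $\Finit$, controlling the exponential weights with conditional subgaussianity, so that the bound is $D^{-1/2}$ times $\frac1D\sum_d\|(\bfH^d,\bfB^d)\|^3$ times $e^{\cstWh_k}$.

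A secondary difficulty is that the cavity point $\overline{\bfS}^d$ depends on $d$. This is harmless here: every estimate is conditional on $\Finit$ and uniform in $d$, and the $d$-dependence only enters through $\|\bfB^d\|$, whose fourth empirical moment is absorbed into $\cstWh_k$.
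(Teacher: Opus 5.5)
Your proposal is correct and follows the paper's proof essentially step for step: the same second-order Taylor expansion of $\rho(\hat p_k)$ around the cavity point $\overline{\bfS}^{d,\bfB^D}$, the same LLN replacement of $\frac1D\sum_{d'}H_k^{d'}\bfH^{d'}_{\wedge k-1}$ through the skeleton vectors, and the same conditional quantitative CLT, the only difference being that you first move $\overline{\bfS}^{d,\bfB^D}$ back to $\bfS^{\bfB^D}$ (which makes the CLT input independent of $d$), whereas the paper applies Corollary~\ref{cor:quantitative_noniid_clt_conditional} directly at the cavity point. One bookkeeping slip: the Lindeberg bound of Theorem~\ref{thm:quantitative_noniid_clt} needs three exponentially controlled derivatives of $\rho'(p_k)\nabla_{\bfz^b}p_k$ (through $M_3$), not two, so it uses bounded $\rho^{(4)}$ and $\rho^{(5)}$ from Assumption~\ref{assumption:quantitative_convergence}, not merely $\rho^{(3)}$.
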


Gathering the results from Lemmas~\ref{lem:proof_cvg_prop_caos_H}, \ref{lem:proof_cvg_TCL_H} and \ref{lem:proof_cvg_cavity_TCL_H}, and replacing them in \eqref{eq:proof_cvg_recurrence_integral_decomposition_h} yields, for $t\in[0,1]$,
\begin{multline*}
        \|h^{(D)}_k (t) - H^{(D)}_k (t)\|_{\ndbar} 
        \\\leq \cstWh_k\bigg(\frac{1}{\sqrt{D}} +\sum_{j=0}^{k-1} \int_0^1\llnW_j(u)\d u + \int_0^t\|h^{(D)}_k (s) - H^{(D)}_k (s)\|_{\ndbar}\d s + \int_0^t \sum_{j=0}^{k-1} \llnW_j(s)\d s+ \int_0^t \llnW_k(s)\d s \bigg)\\
    \leq \cstWh_k\bigg(\frac{1}{\sqrt{D}} +\sum_{j=0}^{k} \int_0^1\llnW_j(u)\d u\bigg) + \cstWh_k\int_0^t \|h^{(D)}_k (s) - H^{(D)}_k (s)\|_{\ndbar} \d s.
\end{multline*}
Applying Grönwall's inequality, we finally obtain
\begin{multline}\label{eq:lemh-Hquantitative}
    \sup_{t\in[0,1]}\|h^{(D)}_k (t) - H^{(D)}_k (t)\|_{\ndbar} \leq \cstWh_k\bigg(\frac{1}{\sqrt{D}} +\sum_{j=0}^{k} \int_0^1\llnW_j(u)\d u\bigg)\exp(\cstWh_k)\\
    \leq \cstWh_k\bigg(\frac{1}{\sqrt{D}} +\sum_{j=0}^{k} \int_0^1\llnW_j(u)\d u\bigg),\quad\text{a.s.},
\end{multline}
which is the bound we wanted.

\paragraph{Backward pass.}
For $t \in [0,1]$,
    \[b^{(D)}_k (t) - B^{(D)}_k (t) = W_{\mathrm{out}}^{(D)} \nabla\ell(\hat{y}^{D}_k) + \int_0^t \B^{(D)}_k(s, \bfb^{(D)}) ds - W_{\mathrm{out}}^{(D)} \nabla\ell(y_k) - \int_0^t \overline{\B}^{(D)}_k(s) ds.\]
    As before, we introduce the pivot term $\B^{D}_k (s, \bfB^{(D)})$ and decompose \(\B^{D}_k (s, \bfB^{(D)}) = \hat{\B}_k^{(D)}(s, \bfB^{(D)}) + \check{\B}_k^{(D)}(s, \bfB^{(D)})\) and \(\overline{\B}^{D}_k (s) = \hat{\overline{\B}}_k^{(D)}(s) + \check{\overline{\B}}_k^{(D)}(s)\), to obtain
    \begin{multline}\label{eq:proof_cvg_recurrence_integral_decomposition_b}
        \|b^{(D)}_k (t) - B^{(D)}_k (t)\|_{\ndbar} \leq \|W_{\mathrm{out}}^{(D)}( \nabla\ell(\hat{y}^{D}_k) -\nabla\ell(y_k))\|_{\ndbar}+ \int_0^t\|\B^{(D)}_k(s, \bfb^{(D)}) -\B^{(D)}_k (s, \bfB^{(D)})\|_{\ndbar}\d s \\+ \int_0^t\|\hat{\B}^{(D)}_k (s, \bfB^{(D)})-\hat{\overline{\B}}^{(D)}_k(s)\|_{\ndbar} \d s+ \int_0^t\|\check{\B}^{(D)}_k (s, \bfB^{(D)})-\check{\overline{\B}}^{(D)}_k(s)\|_{\ndbar} \d s.
    \end{multline}
    Unlike the forward pass, the initial conditions do not coincide here. Despite this, from the Lipschitz property of $\nabla\ell$ and the definition of the outputs $(\hat{y}^D_k, y_k)$, we bound
    \[
        \|W_{\mathrm{out}}^{(D)}( \nabla\ell(\hat{y}^{D}_k) -\nabla\ell(y_k))\|_{\ndbar} \leq \| W_{\mathrm{out}}^{(D)}\|_{\ndbar}\| \nabla\ell(\hat{y}^{D}_k) -\nabla\ell(y_k)\|_{2, \dimout} \leq \cstW_k\|\hat{y}^{(D)}_k - y_k\|_{2, \dimout},
    \]
    where we used the notation $\cstW_k$ as before.
    We focus on the term \(\|\hat{y}^{(D)}_k - y_k\|_{2, \dimout}\): introducing the pivot $\frac{1}{D}\sum_{d=1}^D H_k^{d}(1)W_{\mathrm{out}}^d$, using C-S yields
    \begin{align*}
        \|\hat{y}^{(D)}_k -& y_k\|_{2, \dimout} = \bigg\| \frac{1}{D}\sum_{d=1}^D h_k^{d}(1)W_{\mathrm{out}}^d-\E[H_k^d(1)W_{\mathrm{out}}^d]\bigg\|_{2, \dimout}\\&\leq \frac{1}{D}\sum_{d=1}^D |h_k^{d}(1)-H_k^d(1)|\|W_{\mathrm{out}}^d\|_{2,\dimout}+ \bigg\| \frac{1}{D}\sum_{d=1}^D H_k^{d}(1)W_{\mathrm{out}}^d-\E[H_k^d(1)W_{\mathrm{out}}^d]\bigg\|_{2,\dimout}\\
        &\leq \| W_{\mathrm{out}}^{(D)}\|_{\ndbar} \|h_k^{(D)}(1)-H_k^{(D)}(1)\|_{\ndbar}+ \bigg\| \frac{1}{D}\sum_{d=1}^D H_k^{d}(1)W_{\mathrm{out}}^d-\E[H_k^d(1)W_{\mathrm{out}}^d]\bigg\|_{2,\dimout}\\
        &\leq \cstW_k\|h_k^{(D)}(1)-H_k^{(D)}(1)\|_{\ndbar} + \llnW_k  \\
        &= \cstW_k\|h_k^{(D)}(1)-H_k^{(D)}(1)\|_{\ndbar} + \int_0^1\llnW_k(u)\d u ,
    \end{align*}
    using the notations $\cstW_k$ and $\llnW_k$ as before\footnote{In the setting of trainable embedding matrices, we would have to consider two additional pivot terms: $\frac{1}{D}\sum_{d=1}^D H_k^{d}(1)(\hat{W}_{\mathrm{out}}^d)_k$ and $\frac{1}{D}\sum_{d=1}^D H_k^{d}(1)(W_{\mathrm{out}}^d)_k$. From here, the same two terms from the current analysis would appear, along with a term $\|\frac{1}{D}\sum_{d=1}^D H_k^{d}(1)((\hat{W}_r^d)_k - (W_{\mathrm{out}}^d)_k)\|_{\dimout, 2} \leq \|H_k^{(D)}(1)\|_{\ndbar}\|(\hat{W}_{\mathrm{out}}^{(D)})_k - (W_{\mathrm{out}}^{(D)})_k\|_{\ndbar}$ which can be proved to be $O(D^{-1/2})$ with high probability. See Appendix~\ref{sec:extensions_of_results} for further details.} -- here $\llnW_k$ does not depend on $t\in[0,1]$, but we still write \(\int_0^1 \llnW_k(u) \d u\) for notational consistency. Using \eqref{eq:lemh-Hquantitative}, we get 
    \begin{align}\label{eq:proof_cvg_lemB_init}
        \|\hat{y}^{(D)}_k - y_k\|_{2, \dimout}& \leq
        \cstWh_k\Big(\frac{1}{\sqrt{D}} + \sum_{j=0}^k\int_0^1\llnW_j(u)\d u \Big)\notag\\
        \|W_{\mathrm{out}}^{(D)}( \nabla\ell(\hat{y}^{D}_k) -\nabla\ell(y_k))\|_{\ndbar}, &\leq
        \cstWh_k\Big(\frac{1}{\sqrt{D}} + \sum_{j=0}^k\int_0^1\llnW_j(u)\d u \Big)\;\;\text{a.s.}
    \end{align}
    From there, we conclude by means of the following lemmas.
    \begin{lemma}[Propagation of Chaos -- Backward Pass]\label{lem:proof_cvg_prop_caos_B} For all $s\in[0,1]$,
        \begin{multline*}
            \|\B^{(D)}_k(s, \bfb^{(D)}) -\B^{(D)}_k(s, \bfB^{(D)})\|_{\ndbar} \leq \cstWh_k\bigg( \| b_k^{(D)}(s) - B_k^{(D)}(s)\|_{\ndbar} +\| h_k^{(D)}(s) - H_k^{(D)}(s)\|_{\ndbar} \\+\sum_{j=0}^{k-1} \llnW_j(s)+\frac{1}{\sqrt{D}} + \sum_{j=0}^{k-1} \int_0^1 \llnW_j(u) \d u\bigg).
        \end{multline*}
    \end{lemma}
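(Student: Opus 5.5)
We follow the forward-pass analogue, Lemma~\ref{lem:proof_cvg_prop_caos_H}, and split $\B^{d}_k(s,\bfb^{(D)})-\B^{d}_k(s,\bfB^{(D)})$ according to where the finite system differs from the pivot. Recall
\[
\B^{d}_k(s,\bfb^{(D)}) = -\E\big[\rho'(P_k^{\bfh,\bfb})\,Q_k^{\bfh,\bfb}\,\big(\sqrt{D}U^d + f^d_k(s,\bfS^{\bfh},\bfS^{\bfb})\big)\,\big|\,\Finit\big],
\]
while the pivot uses $(P_k^{\bfH,\bfB},Q_k^{\bfH,\bfB})$, $F^d_k$ and the sums $(\bfS^{\bfH},\bfS^{\bfB})$. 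Write $\Delta P := P_k^{\bfh,\bfb}-P_k^{\bfH,\bfB}$ and $\Delta Q := Q_k^{\bfh,\bfb}-Q_k^{\bfH,\bfB}$. A telescoping decomposition gives four types of terms:
\begin{enumerate}[label=(\alph*)]
\item $\E[(\rho'(P)-\rho'(P^\star))\,Q\,(\sqrt D U^d+f^d)]$;
\item $\E[\rho'(P^\star)\,\Delta Q\,(\sqrt D U^d + f^d)]$;
\item $\E[\rho'(P^\star)\,Q^\star\,(f^d_k(\bfS^{\bfh},\bfS^{\bfb})-f^d_k(\bfS^{\bfH},\bfS^{\bfB}))]$;
\item $\E[\rho'(P^\star)\,Q^\star\,(f^d_k-F^d_k)(\bfS^{\bfH},\bfS^{\bfB})]$.
\end{enumerate}

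\textbf{Terms without the $\sqrt D U^d$ factor.} These are handled by Cauchy--Schwarz in $\ndbar$.
\begin{itemize}
\item Term (d) is controlled by Proposition~\ref{prop:propcaos_Dbar_whp}. Its factor $e^{\cstW_k M}$ with $M$ a random maximum of $|S|$'s is integrated using the conditional subgaussianity of Lemma~\ref{lem:exp_moment_magic}. This produces the inductive errors together with $\sum_{j<k}\llnW_j(s)$, and these are then bounded by \eqref{eq:proof_cvg_inductive_hypothesis}.
\item Term (c) uses the gradient bound Lemma~\ref{lem:Reg_skel_finiteD}\ref{lemRegSkelfiniteD:grad} together with $\E^{1/2}[|\bfS^{\bfh}-\bfS^{\bfH}|^2|\Finit]\lesssim\|\bfh-\bfH\|_{\ndbar}$, which follows from Lemma~\ref{lem:Magic}.
\item The $f^d$ parts of (a) and (b) are controlled in the same way.
\end{itemize}
The differences $\Delta P$ and $\Delta Q$ themselves are bounded in conditional $L^p$ by $|S^{\bfh}_k-S^{\bfH}_k|$ (Lemma~\ref{lem:moment_magic}) plus $\langle f_k,h_k\rangle_{\ndbar}-\langle F_k,H_k\rangle_{\ndbar}$. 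The latter is handled by the same (I)/(II) splitting as in the proof of Proposition~\ref{prop:propcaos_Dbar_whp}, which is where $\|h_k-H_k\|_{\ndbar}$ and $\|b_k-B_k\|_{\ndbar}$ enter at level $k$.

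\textbf{Terms with the $\sqrt D U^d$ factor.} These are the main obstacle, since a naive bound loses $\sqrt D$. We apply Lemma~\ref{lem:fine_decorrelation_lemma} with $X=U^{(D)}$, so that the $\sqrt D$ is absorbed by the $\ndbar$ normalisation: $\|\E[\sqrt D U Y|\Finit]\|_{\ndbar}=\|\E[UY|\Finit]\|_{2,D}$. It therefore suffices to write $|Y|$ as a sum of products $|Z^{(1)}||Z^{(2)}|$.
\begin{itemize}
\item In (a), $|\rho'(P)-\rho'(P^\star)|\,|Q|\le C_\rho|\Delta P|\,|Q|$. We take $Z^{(1)}=\Delta P$, controlled in conditional $L^2$ as above, and $Z^{(2)}=Q$, whose conditional fourth moment is bounded by $\cstW_k$ via Lemmas~\ref{lem:moment_magic} and~\ref{lem:embeddingcontrol}.
\item In (b), we take $Z^{(1)}=\Delta Q$ and $Z^{(2)}=1$.
\end{itemize}
The subtle point is that $\Delta P$ contains the $S_k$ difference and skeleton differences evaluated at random arguments. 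We therefore need $L^2$ bounds for them that are uniform over the Gaussian-like arguments, obtained by integrating the exponential-in-$M$ constants against subgaussian tails. This forces fourth-moment conditions on $\Win,\Wout$, which explains why $\cstWh_k$ depends on $\|\cdot\|_{\bar p}$ for $p\le4$.

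\textbf{Assembling the bound.} Collecting terms, each is bounded by $\cstWh_k$ times
\[
\|b_k-B_k\|_{\ndbar}+\|h_k-H_k\|_{\ndbar}+\max_{j<k}\big(\|h_j-H_j\|_{\ndbar}\vee\|b_j-B_j\|_{\ndbar}\big)+\sum_{j<k}\llnW_j(s).
\]
Inserting \eqref{eq:proof_cvg_inductive_hypothesis} for the $j<k$ errors gives the claimed bound.
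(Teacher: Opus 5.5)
Your proposal is correct and follows the paper's proof essentially verbatim: the same telescoping of $\rho'(P)Q(\sqrt{D}U^d+f^d)$, the same use of Lemma~\ref{lem:fine_decorrelation_lemma} for the $\sqrt{D}U$ part via $|Y|\le C_\rho|\Delta P|\,|Q|+C_\rho|\Delta Q|$, and Proposition~\ref{prop:propcaos_Dbar_whp} plus a Lipschitz-in-$\bfz$ bound for the skeleton mismatch. The only change is the pivot: the paper goes through $F_k$ at $(\bfS^{\bfh},\bfS^{\bfb})$ and uses Lemma~\ref{lem:RegSkel} with Lemma~\ref{lem:MF_skeleton_bounded_ndbar}, which avoids the control of $\|\max_{j,s}|h_j^{(D)}(s)|\|_{\ndbar}$ that your use of Lemma~\ref{lem:Reg_skel_finiteD}\ref{lemRegSkelfiniteD:grad} tacitly needs (harmless, since that recursion actually gives the bound at fixed $s$). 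Two small corrections: because $P_k^{\bfH,\bfB}$ uses the empirical product $\langle F_k,H_k\rangle_{\ndbar}$, only the (I)-type Cauchy--Schwarz step is needed for $\Delta P,\Delta Q$ (pivoting through $\E[H_kF_k]$ would create a level-$k$ term $\llnW_k(s)$ absent from the statement), and the dependence of $\cstWh_k$ on $\|\cdot\|_{\bar p}$, $p\le 4$, is inherited from the inductive hypothesis rather than forced here, since all fourth moments in this lemma are conditional on $\Finit$ and controlled by RMS norms via Lemma~\ref{lem:moment_magic}.
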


    \begin{lemma}[Quantitative CLT -- Backward Pass]\label{lem:proof_cvg_TCL_B}
    For all $s\in[0,1]$,
        \[\|\check{\B}^{(D)}_k ( s ) - \check{\overline{\B}}^{(D)}_k (s)\|_{\ndbar} \leq \cstWh_k \Big(\llnW_k(s) + \frac{1}{\sqrt{D}}\Big).\]
    \end{lemma}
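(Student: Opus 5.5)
The plan is to exploit the linear structure \eqref{eq:cleaner_system_Fk_definition} so that the whole comparison reduces to a single scalar expectation. Writing $F^d_k(s,\bfz^h,\bfz^b) = -\eta_u \vec{\mathbf{v}}_k^F(s,\bfz^h,\bfz^b)^\top \bfH^d_{\wedge k-1}(s)$, and using that $\bfH^d_{\wedge k-1}$ is $\Finit$-measurable, both $\check{\B}^d_k(s,\bfB^{(D)})$ and $\check{\overline{\B}}^d_k(s)$ take the form $\eta_u\, \mathbf{m}^\top \bfH^d_{\wedge k-1}(s)$. The vector $\mathbf{m}\in\R^k$ is the same for all $d$:
\[
\mathbf{m} = \E\big[\rho'(P)\,Q\,\vec{\mathbf{v}}_k^F(s,\bfS)\,\big|\,\Finit\big] \quad\text{resp.}\quad \mathbf{m}^\infty = \E\big[\rho'(P_k)\,Q_k\,\vec{\mathbf{v}}_k^F(s,\bfZ)\big].
\]
Here $P,Q$ are built from the CLT sums $\bfS:=(\bfS^{\bfH^D},\bfS^{\bfB^D})$. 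Hence the quantity to bound is at most $\eta_u \|\sup_{s,j}|H^{(D)}_j(s)|\|_{\ndbar}\,\|\mathbf{m}-\mathbf{m}^\infty\|_{2,k}$. By Lemma~\ref{lem:MF_skeleton_bounded_ndbar}, the prefactor is a $\cstW_k$ constant, so it suffices to show $\|\mathbf{m}-\mathbf{m}^\infty\|_{2,k}\le \cstWh_k(D^{-1/2}+\llnW_k(s))$.

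\textbf{Step 1 (replace empirical inner products by covariances).} By \eqref{eq:cleaner_system_Fk_definition}, $\langle F^{(D)}_k(\bfS),H^{(D)}_k\rangle_{\ndbar} = -\eta_u\vec{\mathbf{v}}_k^F(\bfS)^\top \frac1D\sum_d \bfH^d_{\wedge k-1}H^d_k$. This equals $-\eta_u\vec{\mathbf{v}}_k^F(\bfS)^\top\vec{\boldsymbol{\Gamma}}^H_k(s)$ up to $\|\vec{\mathbf{v}}_k^F(\bfS)\|_{2,k}\,\llnW_k(s)$ (Remark~\ref{rem:examples_llnW_terms}). Thus $P = p_k(s,\bfS)+\mathrm{err}$ and similarly $Q=q_k(s,\bfS)+\mathrm{err}$, with $|\mathrm{err}|\lesssim \|\bfS_{\wedge k-1}\|\,\llnW_k(s)$ by Lemma~\ref{lem:explicit_regskel}. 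I then use the Lipschitz properties of $\rho,\rho'$ and the bounds $\|\vec{\mathbf{v}}_k^F\|\le C M$, $\|\bnabla\vec{\mathbf{v}}\|\le e^{CM}$. Combined with the conditional subgaussianity of $\bfS$ (Lemma~\ref{lem:exp_moment_magic}, with variance proxy controlled by $\|H^{(D)}_j\|_{\ndbar}$, hence by $\cstW_k$), this gives $\|\mathbf{m}-\E[\varphi(\bfS)|\Finit]\|\le\cstWh_k\llnW_k(s)$, where $\varphi(\bfz):=\rho'(p_k(s,\bfz))\,q_k(s,\bfz)\,\vec{\mathbf{v}}_k^F(s,\bfz)$ is a deterministic function.

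\textbf{Step 2 (quantitative CLT).} Conditionally on $\Finit$, $\bfS=D^{-1/2}\sum_d (U^d\bfH^d_{\wedge k}(s),V^d\bfB^d_{\wedge k}(s))$ is a normalized sum of independent centered vectors. Its covariance is $\hat{\boldsymbol\Gamma}=\mathrm{diag}(\varu^2\frac1D\sum_d\bfH^d\bfH^{d\top},\varv^2\frac1D\sum_d\bfB^d\bfB^{d\top})$. I would apply the Lindeberg-type quantitative CLT of Appendix~\ref{sec:quantitative_clt} to the smooth test function $\varphi$. It gives $|\E[\varphi(\bfS)|\Finit]-\E[\varphi(\hat{\bfZ})]|\le C\,D^{-1/2}\cdot\frac1D\sum_d(|\bfH^d|^3+|\bfB^d|^3)\cdot\sup(\text{exp.\ moments of }\nabla^3\varphi)$, where $\hat{\bfZ}\sim\N(0,\hat{\boldsymbol\Gamma})$. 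The third-order derivatives of $\varphi$ exist and grow at most like $e^{C\|\bfz\|}$ thanks to Assumption~\ref{assumption:quantitative_convergence}(i) and Lemma~\ref{lem:RegSkel}. The empirical third moments are bounded through $\|\Win^{(D)}\|_{\bar 3}$ and $\|\Wout^{(D)}\|_{\bar 3}$, hence by $\cstWh_k$.

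\textbf{Step 3 (Gaussian to Gaussian).} Finally, I compare $\E[\varphi(\hat{\bfZ})]$ with $\E[\varphi(\bfZ)]=\mathbf{m}^\infty$, where $\bfZ\sim\N(0,\boldsymbol\Gamma)$ restricted to the first $k$ components at time $s$. Coupling $\hat{\bfZ}=\hat{\boldsymbol\Gamma}^{1/2}\xi$ and $\bfZ=\boldsymbol\Gamma^{1/2}\xi$, or interpolating with Gaussian integration by parts, the difference is bounded by $\|\hat{\boldsymbol\Gamma}-\boldsymbol\Gamma\|$ times exponential moments of $\nabla^2\varphi$. That norm is an $\llnW_k(s)$ term (Remark~\ref{rem:examples_llnW_terms}). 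Summing the three steps gives the claim.

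The main obstacle is Step 2: the test function has derivatives growing exponentially in $\|\bfz\|$, and the summands $U^d\bfH^d$ are non-identically distributed given $\Finit$. I would handle this by running the Lindeberg swapping argument with Hölder's inequality at each swap, controlling $\E[e^{C\|\cdot\|}]$ of every partially swapped sum uniformly via conditional subgaussianity (variance proxy $\vpu^2\|H^{(D)}\|^2_{\ndbar}\le\cstW_k$). The swapped summands' third moments are what produce $\|\cdot\|_{\bar 3}$ and $\|\cdot\|_{\bar 4}$ in the constant.
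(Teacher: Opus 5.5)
Your proposal is correct and follows essentially the paper's route. The paper likewise first replaces the empirical averages $\frac1D\sum_d \bfH^d_{\wedge k-1}H^d_k$ and $\frac1D\sum_d \bfB^d_{\wedge k-1}B^d_k$ inside $P,Q$ by their expectations via the skeleton vectors (its term $(\mathfrak{i})$, bounded by $\cstW_k\llnW_k(s)$), and then applies Corollary~\ref{cor:quantitative_noniid_clt_conditional}, whose proof is exactly your Lindeberg swap (Step 2) plus the Gaussian-interpolation covariance-mismatch bound (Step 3). The only difference is cosmetic: you factor out the $\Finit$-measurable $\bfH^d_{\wedge k-1}(s)$ to work with a deterministic $\R^{k}$-valued test function, whereas the paper keeps the $\Finit$-measurable $\R^D$-valued map $\tilde{\varphi}^{(D)}$ and controls the RMS norm of its coordinate-wise growth constants through Lemma~\ref{lem:MF_skeleton_bounded_ndbar}.
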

    \begin{lemma}[Cavity Method -- Backward Pass]\label{lem:proof_cvg_cavity_TCL_B}
For all $s\in[0,1]$,
\[\|\hat{\B}^{(D)}_k ( s ) - \hat{\overline{\B}}^{(D)}_k (s)\|_{\ndbar}\leq \cstWh_k \Big(\llnW_k(s) + \frac{1}{\sqrt{D}}\Big).\]
\end{lemma}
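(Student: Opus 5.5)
The plan is to run, for the backward pass, the same cavity expansion used heuristically in Section~\ref{sec:proof-strategy-cavity} and in Lemma~\ref{lem:proof_cvg_cavity_TCL_H}, now expanding in $U^d$ instead of $V^d$. Recall that
\[
\hat{\B}^d_k(s,\bfB^{(D)}) = -\E\big[\Phi(\bfS^{\bfH^D},\bfS^{\bfB^D})\sqrt{D}\,U^d\,\big|\,\Finit\big],
\qquad \Phi(\bfz^h,\bfz^b) := \rho'(\hat p_k)\hat q_k ,
\]
where $\hat p_k = z^h_k+\langle F^{(D)}_k(\bfz^h,\bfz^b),H^{(D)}_k\rangle_{\ndbar}$ and $\hat q_k = z^b_k+\langle G^{(D)}_k(\bfz^h,\bfz^b),B^{(D)}_k\rangle_{\ndbar}$. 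Conditionally on $\Finit$, $\Phi$ is a fixed smooth function of the sums only.

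\textbf{Step 1: cavity Taylor expansion.} For each $d$, set $\overline{\bfS}^{h,d} := \bfS^{\bfH^D} - U^d \bfH^d/\sqrt D$, which is independent of $U^d$ given $\Finit$, as is $\bfS^{\bfB^D}$. Expand $\Phi$ in $\bfz^h$ to second order around $(\overline{\bfS}^{h,d},\bfS^{\bfB^D})$. The zeroth-order term vanishes after multiplication by the centered $U^d$. The first-order term gives
\[
-\varu^2\,\E\big[\bnabla_{\bfz^h}\Phi(\overline{\bfS}^{h,d},\bfS^{\bfB^D})\,|\,\Finit\big]\cdot \bfH^d_{\wedge k}.
\]
The remainder is bounded by $D^{-1/2}\,\E[|U^d|^3 \sup|\bnabla^2\Phi|\,|\,\Finit]\,\|\bfH^d_{\wedge k}\|^2$. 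To control $\sup|\bnabla^2\Phi|$ I use Lemma~\ref{lem:RegSkel}(iv) (Hessians of $F,G$; this needs $\rho'''$ bounded) together with Lemma~\ref{lem:MF_skeleton_bounded_ndbar}, which gives a bound of the form $e^{\cstW_k M}$ with $M$ the size of the sums. That exponential is then integrated using the conditional subgaussianity of Lemma~\ref{lem:exp_moment_magic}. Taking the RMS norm over $d$ requires $\frac1D\sum_d\|\bfH^d\|^4$, which is controlled through $\|\Win^{(D)}\|_{\bar 4}$, $\|\Wout^{(D)}\|_{\bar 4}$ and Proposition~\ref{prop:explicit_dependence_on_WeWu}. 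This is absorbed into $\cstWh_k$, so the remainder contributes $\cstWh_k/\sqrt D$. A further first-order Taylor step replaces $\overline{\bfS}^{h,d}$ by $\bfS^{\bfH^D}$ at the same $O(D^{-1/2})$ cost.

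\textbf{Step 2: identify the limit structure.} Differentiating gives
\[
\bnabla_{\bfz^h}\Phi = \rho''(\hat p_k)\,\hat q_k\big(e_k+\langle \bnabla_{\bfz^h}F_k,H_k\rangle_{\ndbar}\big) + \rho'(\hat p_k)\,\langle\bnabla_{\bfz^h}G_k,B_k\rangle_{\ndbar}.
\]
Dotted with $\bfH^d_{\wedge k}$, this is exactly the integrand of $\hat{\overline{\B}}^d_k$ (the two $\varu^2$ terms of \eqref{eq:LimB}), except for two differences. First, RMS inner products appear where the limit has expectations $\E[\cdot\,|\,\bfZ^h,\bfZ^b]$. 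Second, the law of $(\bfS^{\bfH^D},\bfS^{\bfB^D})$ appears where the limit uses the Gaussian $(\bfZ^h,\bfZ^b)$. I handle the first difference through \eqref{eq:cleaner_system_Fk_definition}: each inner product becomes $\vec{\mathbf{v}}$-vectors (resp.\ their gradients) times empirical covariances $\frac1D\sum_d \bfH^d_{\wedge k-1}H^d_k$ and similar. Replacing these by $\vec{\boldsymbol\Gamma}$ costs a term $\llnW_k(s)$, multiplied by polynomial/exponential weights in the sums that are again integrable by Lemma~\ref{lem:exp_moment_magic} and Lemma~\ref{lem:explicit_regskel}. After this replacement the test function $\psi(\bfz^h,\bfz^b)$ is deterministic given the covariances, is the same for every $d$, and multiplies $\bfH^d_{\wedge k}$ outside the expectation.

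\textbf{Step 3: quantitative CLT, the main obstacle.} It remains to bound $|\E[\psi(\bfS^{\bfH^D},\bfS^{\bfB^D})|\Finit]-\E[\psi(\bfZ^h,\bfZ^b)]|$. I would invoke the Lindeberg-type CLT of Appendix~\ref{sec:quantitative_clt}, conditionally on $\Finit$. It compares the sums with a Gaussian whose covariance is the empirical one $\varu^2\frac1D\sum_d\bfH^d(\bfH^d)^\top$ (and the analogue for $\bfB$), at cost $\cstWh_k D^{-1/2}$ times third moments $\|\cdot\|_{\bar 3}$. A Gaussian interpolation between the empirical and the true covariance then costs another $\llnW_k(s)$. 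This is the hardest step. The CLT needs third derivatives of $\psi$ bounded with at most exponential growth, and $\psi$ already contains $\rho''$ and gradients of $F$ and $G$, so it uses derivatives up to $\rho^{(5)}$ and the higher-order skeleton bounds of Lemma~\ref{lem:RegSkel}(v). The exponential growth must be absorbed uniformly using both the subgaussian sums and the Gaussian field (Remark~\ref{rem:GaussianHasExponentialMoments}). Collecting the three steps and taking the RMS norm over $d$ gives the claimed $\cstWh_k(\llnW_k(s)+D^{-1/2})$.
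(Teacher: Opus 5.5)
Your proposal is correct and follows essentially the same route as the paper. You Taylor-expand $\rho'(\hat p_k)\hat q_k$ in $\bfz^h$ around the leave-one-out sum, kill the zeroth-order term by the independence of $U^d$, and bound the remainder by $D^{-1/2}$ times fourth moments. You then rewrite the RMS inner products through the skeleton vectors to extract $\llnW_k(s)$, and close with the conditional quantitative CLT, which indeed needs derivatives of $\rho$ up to order five.

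The only difference is cosmetic. You Taylor back from $\overline{\bfS}^{h,d}$ to $\bfS^{\bfH^D}$ before the CLT, so a single $d$-independent CLT application suffices. The paper instead applies the CLT directly to the leave-one-out sums $(\overline{\bfS}^{d,\bfH^D},\bfS^{\bfB^D})$. Both cost $O(D^{-1/2})$ after taking the RMS norm.
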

Injecting the bounds from Lemmas~\ref{lem:proof_cvg_prop_caos_B}, \ref{lem:proof_cvg_TCL_B} and \ref{lem:proof_cvg_cavity_TCL_B} into \eqref{eq:proof_cvg_recurrence_integral_decomposition_b}, as well as \eqref{eq:proof_cvg_lemB_init}, 
\begin{multline*}
        \|b^D_k (t) - B^{D}_k (t)\|_{\ndbar}
        \leq \cstWh_k\Big( \int_0^t \|b^D_k (s) - B^{D}_k (s)\|_{\ndbar}\d s + \sup_{s\in[0,1]} \|h^D_k (s) - H^{D}_k (s)\|_{\ndbar}\Big)\\
        +\cstWh_k\Big(\frac{1}{\sqrt{D}} + \sum_{j=0}^k\int_0^1\llnW_j(u)\d u \Big) 
        + \cstWh_k\Big( \frac{1}{\sqrt{D}} + \sum_{j=0}^{k-1} \int_0^1 \llnW_j(u)\d u\Big) \\+ \cstWh_k \Big( \frac{1}{\sqrt{D}} + \int_0^t \llnW_k(s) \d s\Big) + \cstWh_k \int_0^t\sum_{j=0}^{k-1} \llnW_j(s)\d s\\
    \leq \cstWh_k\bigg(\frac{1}{\sqrt{D}} +\sum_{j=0}^{k} \int_0^1\llnW_j(u)\d u\bigg) + \cstWh_k\int_0^t \|b^D_k (s) - B^{D}_k (s)\|_{\ndbar} \d s,
\end{multline*}
where we further used \eqref{eq:lemh-Hquantitative}.
Applying Grönwall's inequality, we finally obtain
\begin{multline}
    \sup_{t\in[0,1]}\|b^{(D)}_k (t) - B^{(D)}_k (t)\|_{\ndbar} \leq \cstWh_k \exp(\cstWh_k) \bigg(\frac{1}{\sqrt{D}} +\sum_{j=0}^{k} \int_0^1\llnW_j(u)\d u\bigg)\\
    \leq \cstWh_k\bigg(\frac{1}{\sqrt{D}} +\sum_{j=0}^{k} \int_0^1\llnW_j(u)\d u\bigg)\;\;\text{a.s.},
\end{multline}
which concludes the proof.

\subsubsection{Proof of intermediary estimates}

\begin{proof}[Proof of Lemma~\ref{lem:proof_cvg_prop_caos_H}]
        We directly compute: 
            \begin{align*}
        \H^{(D)}_k (s, \bfh^D )& - \H^{(D)}_k (s, \bfH^D) = \E \Big[ \rho ( P_k^{\mathbf{h}^D, \mathbf{b}^D} (s) ) \Big( g^{(D)}_k (s, \bfS^{\mathbf{h}^D}, \bfS^{\mathbf{b}^D}  ) - G^{(D)}_k (s, \bfS^{\mathbf{h}^D}, \bfS^{\mathbf{b}^D}  ) \Big) \big\vert \Finit\Big] \\
        &\qquad + \E \Big[ \rho ( P_k^{\mathbf{h}^D, \mathbf{b}^D} (s) ) \Big( G^{(D)}_k (s, \bfS^{\mathbf{h}^D}, \bfS^{\mathbf{b}^D}  ) - G^{(D)}_k (s, \bfS^{\mathbf{H}^D}, \bfS^{\mathbf{B}^D}  ) \Big) \big\vert \Finit\Big] \\
&\qquad + \E \Big[ \Big( \rho ( P^{\mathbf{h}^D, \mathbf{b}^D}_k (s) ) - \rho ( P^{\mathbf{H}^D, \mathbf{B}^D}_k (s) ) \Big) \sqrt{D} V^{(D)}\big\vert \Finit\Big]\\
&\qquad + \E \Big[ \Big( \rho ( P^{\mathbf{h}^D, \mathbf{b}^D}_k (s) ) - \rho ( P^{\mathbf{H}^D, \mathbf{B}^D}_k (s) ) \Big) G^{(D)}_k (s, \bfS^{\mathbf{H}^D}, \bfS^{\mathbf{B}^D} ) \big\vert \Finit\Big]\\
&=: \blue{(i)} + \blue{(ii)} + \blue{(iii)} + \blue{(iv)},
\end{align*}
where we introduced various pivots and we set
\begin{equation}\label{eq:proof_cvg_prop_caos_definition_P}
        \begin{cases}
     P_k^{\mathbf{h}^D, \mathbf{b}^D}(s) := \bfS^{\mathbf{h}^D}_k(s) + \Big\langle f^{(D)}_k (s, \bfS^{\mathbf{h}^D}(s), \bfS^{\mathbf{b}^D}(s)), h_k^{(D)}(s)\Big\rangle_{\ndbar}, \\  
     P_k^{\mathbf{H}^D, \mathbf{B}^D}(s) := \bfS^{\mathbf{H}^D}_k(s) + \Big\langle F^{(D)}_k (s, \bfS^{\mathbf{H}^D}(s), \bfS^{\mathbf{B}^D}(s)), H_k^{(D)}(s)\Big\rangle_{\ndbar}.
    \end{cases}
    \end{equation}
    We now study the RMS norm of each term separately.
    
    \paragraph*{Bound on \blue{(i)}.}
    We apply Lemma~\ref{lem:decorrelation_chizat}-\ref{lemdecchizat:nonindep} to obtain
    \begin{align*}
        \|\blue{(i)}\|_{\ndbar}\leq \E^{1/2} \Big[ \big( \rho ( P^{\mathbf{h}^D, \mathbf{b}^D}_k (s) ) \big)^2 \big\vert \Finit\Big]\E^{1/2} \Big[ \big\|g^{(D)}_k (s, \bfS^{\mathbf{h}^D}, \bfS^{\mathbf{b}^D}  ) - G^{(D)}_k (s, \bfS^{\mathbf{h}^D}, \bfS^{\mathbf{b}^D}  ) \big\|_{\ndbar}^2\big\vert \Finit\Big]. 
    \end{align*}
    Similarly, using the triangle inequality,
\begin{align*}
    \|\blue{(ii)}\|_{\ndbar}\leq \E \Big[ \big|\rho ( P_k^{\mathbf{h}^D, \mathbf{b}^D} (s) )\big| \big\| G^{(D)}_k (s, \bfS^{\mathbf{h}^D}, \bfS^{\mathbf{b}^D}  ) - G^{(D)}_k (s, \bfS^{\mathbf{H}^D}, \bfS^{\mathbf{B}^D}  )\big\|_{\ndbar}\big\vert \Finit\Big].
\end{align*}
For $p\geq 2$, from the linear growth of $\rho$, we have
\begin{equation}\label{eq:proof_cvg_prop_caos_rho_lin_growth}
    \E^{1/p}\big[\big| \rho ( P^{\mathbf{h}^D, \mathbf{b}^D}_k (s) ) \big|^p | \Finit\big] \leq C_\rho (1 + \E^{1/p}\big[\big| P^{\mathbf{h}^D, \mathbf{b}^D}_k (s)  \big|^p | \Finit\big]).
\end{equation} Furthermore,
\begin{align*}
    \E^{1/p}\big[\big| P^{\mathbf{h}^D, \mathbf{b}^D}_k (s)  \big|^p | \Finit\big] &\leq \E^{1/p}\Big[ \Big| \bfS^{\mathbf{h}^D}_k(s) + \Big\langle f^{(D)}_k (s, \bfS^{\mathbf{h}^D}(s), \bfS^{\mathbf{b}^D}(s)), h_k^{(D)}(s)\Big\rangle_{\ndbar}\Big|^p \big| \Finit \Big]\\
    &\leq C_p\vpu \big\| h^D_k(s) \big\|_{\ndbar} + \E^{1/p}\Big[\big\| f^{(D)}_k (s, \bfS^{\mathbf{h}^D}(s), \bfS^{\mathbf{b}^D}(s))\big\|_{\ndbar}^p \big\|h_k^{(D)}(s)\big\|_{\ndbar}^p| \Finit\Big] \\
    &\leq C_p\Big(\vpu + \E^{1/p}\big[\big\| f^{(D)}_k (s, \bfS^{\mathbf{h}^D}(s), \bfS^{\mathbf{b}^D}(s))\big\|_{\ndbar}^p| \Finit\big]\Big)\big\| h^D_k(s) \big\|_{\ndbar},
\end{align*}
where we have used Lemma~\ref{lem:moment_magic}, C-S, and the $\Finit$-measurability of $\|h_k^{(D)}\|_{\ndbar}$. From Lemma~\ref{lem:finite_skeleton_bounded_ndbar} we also have
\(\big\| f^{(D)}_k (s, \bfS^{\mathbf{h}^D}(s), \bfS^{\mathbf{b}^D}(s))\big\|_{\ndbar} \leq \cstW_k(1+\|(\bfS^{\mathbf{h}^D}(s), \bfS^{\mathbf{b}^D}(s))\|_{p, 2k})\), and thus, taking the $L^p(\Finit)$ norm yields
\begin{align}\label{eq:proof_cvg_prop_caos_bound_f}
    \E^{1/p}\big[\big\| f^{(D)}_k (s, \bfS^{\mathbf{h}^D}(s), \bfS^{\mathbf{b}^D}(s))\big\|_{\ndbar}^p |\Finit \big] &\leq \cstW_k(1+\|\|(\bfS^{\mathbf{h}^D}(s), \bfS^{\mathbf{b}^D}(s))\|_{p, 2k}\|_{L^p(\Finit)}) \notag\\
    &\leq \cstW_k\Big(1+\sum_{j=0}^{k-1}\|S_j^{\mathbf{h}^D}(s)\|_{L^p(\Finit)} + \|S_j^{\mathbf{b}^D}(s)\|_{L^p(\Finit)}\Big)\notag\\
    &\leq \cstW_k\Big(1+\sum_{j=0}^{k-1}C_p\vpu \|h_j^{(D)}(s)\|_{\ndbar} + C_p\vpv \|b_j^{(D)}(s)\|_{\ndbar}\Big)\notag\\
    &\leq \cstW_k\Big(1+\sum_{j=0}^{k-1}\cstW_k\Big) \leq k\cstW_k,
\end{align}
where the third inequality comes from Lemma~\ref{lem:moment_magic}, and the fourth one from the fact that $\max_{j\in[0:k-1]}\|h_j^{(D)}(s)\|_{\ndbar} \leq \cstW_k$ from Lemma~\ref{lem:embeddingcontrol}.
Absorbing $k$ and other constants into $\cstW_k$, gathering terms yields
\begin{equation}\label{eq:proof_cvg_prop_caos_bound_P}
    \E^{1/p}\big[\big| P^{\mathbf{h}^D, \mathbf{b}^D}_k (s)  \big|^p | \Finit\big] \leq \cstW_k  \;\; \text{ and } \;\;\E^{1/p}\big[\big| \rho ( P^{\mathbf{h}^D, \mathbf{b}^D}_k (s) ) \big|^p | \Finit\big]  \leq \cstW_k \;\; \text{a.s.}
\end{equation} Replacing in our bound for \blue{(i)} yields
\begin{align*}
    \|\blue{(i)}\|_{\ndbar} 
    &\leq \cstW_k\E^{1/2}\Big[\big\|g^{(D)}_k (s, \bfS^{\mathbf{h}^D}, \bfS^{\mathbf{b}^D}  ) - G^{(D)}_k (s, \bfS^{\mathbf{h}^D}, \bfS^{\mathbf{b}^D}  ) \big\|_{\ndbar}^2|\Finit\Big].
\end{align*}
The right hand-side is then handled using Proposition~\ref{prop:propcaos_Dbar_whp}, using the same argument as the one giving \eqref{eq:proof_cvg_prop_caos_bound_prefinal} in the bound on \blue{(iii)} below; we omit it to avoid redundancies.

\paragraph*{Bound on \blue{(ii)}.} For all $d\geq 1$, we have
\begin{multline*}
    \Big|G^{d}_k (s, \bfS^{\mathbf{h}^D}(s), \bfS^{\mathbf{b}^D}(s)  ) - G^{d}_k (s, \bfS^{\mathbf{H}^D}(s), \bfS^{\mathbf{B}^D}(s)) \Big|\\\leq \|\bnabla G_k^d(s, \xi_{\bfS^{\mathbf{h}^D}(s), \bfS^{\mathbf{H}^D}(s)}, \bfS^{\mathbf{b}^D}(s)) \|_{k,2} \| \bfS^{\mathbf{h}^D}(s)- \bfS^{\mathbf{H}^D}(s)\|_{k,2}\\
    \;\; + \|\bnabla G_k^d(s, \bfS^{\mathbf{h}^D}(s), \xi_{\bfS^{\mathbf{b}^D}(s), \bfS^{\mathbf{B}^D}(s)})) \|_{k,2} \| \bfS^{\mathbf{b}^D}(s)- \bfS^{\mathbf{B}^D}(s)\|_{k,2},
\end{multline*}
where \(\xi_{\bfS^{\mathbf{h}^D}(s), \bfS^{\mathbf{H}^D}(s)}\) represents a point in the line segment between \(\bfS^{\mathbf{h}^D}(s)\) and \(\bfS^{\mathbf{H}^D}(s)\), and similarly for \(\xi_{\bfS^{\mathbf{b}^D}(s), \bfS^{\mathbf{B}^D}(s)}\). From Lemma~\ref{lem:RegSkel}-\ref{lemRegSkel:Grad}, it follows that
\begin{multline*}
    \big|G^{d}_k (s, \bfS^{\mathbf{h}^D}(s), \bfS^{\mathbf{b}^D}(s)  ) - G^{d}_k (s, \bfS^{\mathbf{H}^D}(s), \bfS^{\mathbf{B}^D}(s)) \big|\\\leq e^{\cstHhat_kM_S}\big( \| \bfS^{\mathbf{h}^D}(s)- \bfS^{\mathbf{H}^D}(s)\|_{k,2} + \| \bfS^{\mathbf{b}^D}(s)- \bfS^{\mathbf{B}^D}(s)\|_{k,2}\big) \max_{\substack{i\in[0:k-1]\\t\in[0,1]}}|B_i^d(t)|,
\end{multline*}
where $M_S := 1\vee\max_{i\in[0:k-1]} |S_i^{\mathbf{h}^D}(s)|\vee |S_i^{\mathbf{b}^D}(s)| \vee |S_i^{\mathbf{H}^D}(s)|\vee |S_i^{\mathbf{B}^D}(s)|$. Taking the RMS norm further yields
\begin{multline}\label{eq:proof_cvg_LipschitzBoundG}
\big\|G^{(D)}_k (s, \bfS^{\mathbf{h}^D}(s), \bfS^{\mathbf{b}^D}(s)  ) - G^{(D)}_k (s, \bfS^{\mathbf{H}^D}(s), \bfS^{\mathbf{B}^D}(s)) \big\|_{\ndbar}\\
\leq e^{\cstHhat_k M_S}\big( \| \bfS^{\mathbf{h}^D}(s)- \bfS^{\mathbf{H}^D}(s)\|_{k,2} + \| \bfS^{\mathbf{b}^D}(s)- \bfS^{\mathbf{B}^D}(s)\|_{k,2}\big) \bigg\|\max_{\substack{i\in[0:k-1]\\t\in[0,1]}}|B_i^{(D)}(t)|\bigg\|_{\ndbar},
\end{multline}
and using Lemma~\ref{lem:MF_skeleton_bounded_ndbar}, we obtain
\begin{align*}
    \|\blue{(ii)}\|_{\ndbar}
    &\leq \E \big[ |\rho ( P_k^{\mathbf{h}^D, \mathbf{b}^D} (s) )| e^{\cstHhat_k M_S}\big( \| \bfS^{\mathbf{h}^D}(s)- \bfS^{\mathbf{H}^D}(s)\|_{k,2} + \| \bfS^{\mathbf{b}^D}(s)- \bfS^{\mathbf{B}^D}(s)\|_{k,2}\big) \big\vert \Finit\big] \cstW_k.
\end{align*}
We apply C-S twice on the conditional expectation, to obtain using \eqref{eq:proof_cvg_prop_caos_bound_P} that
\[\|\blue{(ii)}\|_{\ndbar}\leq \cstW_k\E^{1/4}\big[e^{4\cstHhat_kM_S} |\Finit\big] \E^{1/2}\big[\| \bfS^{\mathbf{h}^D}(s)- \bfS^{\mathbf{H}^D}(s)\|_{k,2}^2 + \| \bfS^{\mathbf{b}^D}(s)- \bfS^{\mathbf{B}^D}(s)\|^2_{k,2} |\Finit\big].\]
From Lemma~\ref{lem:Magic}, we have 
\begin{multline*}
    \E^{1/2}\big[\| \bfS^{\mathbf{h}^D}(s)- \bfS^{\mathbf{H}^D}(s)\|_{k,2}^2 + \| \bfS^{\mathbf{b}^D}(s)- \bfS^{\mathbf{B}^D}(s)\|^2_{k,2} |\Finit\big] \\\leq \sum_{i=0}^{k-1}(\vpu \|h_i^{(D)}(s) - H_i^{(D)}(s) \|_{\ndbar} + \vpv \|b_i^{(D)}(s) - B_i^{(D)}(s) \|_{\ndbar}).
\end{multline*}
On the other hand, since \begin{multline*}
    \exp(\cstHhat_kM_S)\leq e^{\cstHhat_k}+ \sum_{j=0}^{k-1}(\exp(\cstHhat_k|S_j^{\bfh^D}(s)|) +\exp(\cstHhat_k|S_j^{\bfb^D}(s)|) )  \\+ \sum_{j=0}^{k-1}\exp(\cstHhat_k|S_j^{\bfH^D}(s)|)  + \exp(\cstHhat_k|S_j^{\bfB^D}(s)|)),
\end{multline*} from Lemma~\ref{lem:exp_moment_magic} we have that $\E[\exp(\cstHhat_kM_S)|\Finit]$ is a.s. bounded by terms depending on $\max_{\substack{j\in[0:k-1]\\s\in[0,1]}}\|h_j^{(D)}(s)\|_{\ndbar}\vee\|b_j^{(D)}(s)\|_{\ndbar}\vee\|H_j^{(D)}(s)\|_{\ndbar}\vee\|B_j^{(D)}(s)\|_{\ndbar}$. That is, it can be absorbed into $\cstW_k$ by means of Lemmas~\ref{lem:embeddingcontrol} and \ref{lem:MF_skeleton_bounded_ndbar}, thus yielding
\begin{equation}\label{eq:proof_cvg_prop_caos_bound_ii}
    \|\blue{(ii)}\|_{\ndbar}\leq \cstW_k\sum_{i=0}^{k-1} \|h_i^{(D)}(s) - H_i^{(D)}(s) \|_{\ndbar} + \|b_i^{(D)}(s) - B_i^{(D)}(s) \|_{\ndbar}.
\end{equation}

\paragraph*{Bound on \blue{(iv)}.}Using the triangle inequality and C-S, we obtain
\begin{align*}
        \|\blue{(iv)}\|_{\ndbar}&\leq \E \big[ \big| \rho ( P^{\mathbf{h}^D, \mathbf{b}^D}_k (s) ) - \rho ( P^{\mathbf{H}^D, \mathbf{B}^D}_k (s) ) \big| \big\|G^{(D)}_k (s, \bfS^{\mathbf{H}^D}_k, \bfS^{\mathbf{B}^D} ) \big\|_{\ndbar}\big\vert \Finit\big]\\
        &\leq \E^{1/2}\big[ \big|\rho ( P^{\mathbf{h}^D, \mathbf{b}^D}_k (s) ) - \rho ( P^{\mathbf{H}^D, \mathbf{B}^D}_k (s) )\big|^2 \big\vert \Finit \big] \E^{1/2}\big[\big\|G^{(D)}_k (s, \bfS^{\mathbf{H}^D}, \bfS^{\mathbf{B}^D} ) \big\|_{\ndbar}^2 \big\vert \Finit \big]\\
        &\leq \cstW_k\E^{1/2}\big[ \big|\rho ( P^{\mathbf{h}^D, \mathbf{b}^D}_k (s) ) - \rho ( P^{\mathbf{H}^D, \mathbf{B}^D}_k (s) )\big|^2 \vert \Finit \big] \E^{1/2}\big[\check{M}_S^2 \vert \Finit \big]\\
        &\leq \cstW_k\E^{1/2}\big[ \big|\rho ( P^{\mathbf{h}^D, \mathbf{b}^D}_k (s) ) - \rho ( P^{\mathbf{H}^D, \mathbf{B}^D}_k (s) )\big|^2 \vert \Finit \big],
    \end{align*}
where the third inequality used Lemma~\ref{lem:MF_skeleton_bounded_ndbar}, and we defined $\check{M}_S := 1\vee\max_{i\in[0:k-1]} |S_i^{\mathbf{H}^D}(s)|\vee |S_i^{\mathbf{B}^D}(s)|$. The last inequality follows from absorbing \(\E^{1/2}\big[\check{M}_S^2 \big\vert \Finit \big]\) into \(\cstW_k\), arguing as in \blue{(i)} using Lemmas~\ref{lem:moment_magic} and \ref{lem:MF_skeleton_bounded_ndbar}.
We now prove an estimate on \(| \rho ( P^{\mathbf{h}^D, \mathbf{b}^D}_k (s) ) - \rho ( P^{\mathbf{H}^D, \mathbf{B}^D}_k (s) ) |\). Since $\rho$ is Lipschitz, it suffices to study 
\begin{multline*}
    \big|  P^{\mathbf{h}^D, \mathbf{b}^D}_k (s) - P^{\mathbf{H}^D, \mathbf{B}^D}_k (s) \big|\leq \big|  \bfS^{\mathbf{h}^D}_k(s) - \bfS^{\mathbf{H}^D}_k(s)\big|\\ +  \big|\big\langle f^{(D)}_k (s, \bfS^{\mathbf{h}^D}(s), \bfS^{\mathbf{b}^D}(s)), h_k^{(D)}(s)\big\rangle_{\ndbar} - \big\langle F^{(D)}_k (s, \bfS^{\mathbf{H}^D}(s), \bfS^{\mathbf{B}^D}(s)), H_k^{(D)}(s)\big\rangle_{\ndbar} \big|.
\end{multline*}
We introduce a pivot term and use C-S to obtain
\begin{align*}
    \big| P^{\mathbf{h}^D, \mathbf{b}^D}_k& (s) - P^{\mathbf{H}^D, \mathbf{B}^D}_k (s) \big|\\&\leq \big|  \bfS^{\mathbf{h}^D}_k(s) - \bfS^{\mathbf{H}^D}_k(s)\big| + \big|\big\langle f^{(D)}_k (s, \bfS^{\mathbf{h}^D}(s), \bfS^{\mathbf{b}^D}(s))- F^{(D)}_k (s, \bfS^{\mathbf{H}^D}(s), \bfS^{\mathbf{B}^D}(s)), h_k^{(D)}(s)\big\rangle_{\ndbar} \big|\\
    &\qquad\qquad+ \big| \big\langle F^{(D)}_k (s, \bfS^{\mathbf{H}^D}(s), \bfS^{\mathbf{B}^D}(s)), h_k^{(D)}(s) - H_k^{(D)}(s)\big\rangle_{\ndbar} \big|\\
    &\leq \big|  \bfS^{\mathbf{h}^D}_k(s) - \bfS^{\mathbf{H}^D}_k(s)\big| +  \big\| F^{(D)}_k (s, \bfS^{\mathbf{H}^D}(s), \bfS^{\mathbf{B}^D}(s))\big\|_{\ndbar} \big\|h_k^{(D)}(s) - H_k^{(D)}(s)\big\|_{\ndbar}\\&\qquad\qquad\qquad\qquad +  \big\| f^{(D)}_k (s, \bfS^{\mathbf{h}^D}(s), \bfS^{\mathbf{b}^D}(s))- F^{(D)}_k (s, \bfS^{\mathbf{H}^D}(s), \bfS^{\mathbf{B}^D}(s))\big\|_{\ndbar}  \big\|h_k^{(D)}(s)\big\|_{\ndbar}\\
    &\leq \big|  \bfS^{\mathbf{h}^D}_k(s) - \bfS^{\mathbf{H}^D}_k(s)\big| +  \cstW_k\check{M}_S \big\|h_k^{(D)}(s) - H_k^{(D)}(s)\big\|_{\ndbar}\\&\qquad\qquad+  \cstW_k\big(\big\| f^{(D)}_k (s, \bfS^{\mathbf{h}^D}(s), \bfS^{\mathbf{b}^D}(s))- F^{(D)}_k (s, \bfS^{\mathbf{h}^D}(s), \bfS^{\mathbf{b}^D}(s))\big\|_{\ndbar}\\
    &\qquad\qquad\qquad\qquad\qquad +  \big\| F^{(D)}_k (s, \bfS^{\mathbf{h}^D}(s), \bfS^{\mathbf{b}^D}(s))- F^{(D)}_k (s, \bfS^{\mathbf{H}^D}(s), \bfS^{\mathbf{B}^D}(s))\big\|_{\ndbar} \big),
\end{align*}
where in the last inequality we used Lemmas~\ref{lem:embeddingcontrol} and \ref{lem:MF_skeleton_bounded_ndbar}, absorbed terms into $\cstW_k$, and introduced yet another pivot.
Using Lemma~\ref{lem:MF_skeleton_bounded_ndbar} as in equation \eqref{eq:proof_cvg_LipschitzBoundG} we further obtain
\begin{multline*}
\big\|F^{(D)}_k (s, \bfS^{\mathbf{h}^D}(s), \bfS^{\mathbf{b}^D}(s)) - F^{(D)}_k (s, \bfS^{\mathbf{H}^D}(s), \bfS^{\mathbf{B}^D}(s)) \big\|_{\ndbar}\\
\leq \cstW_k e^{\cstHhat_kM_S}\left( \| \bfS^{\mathbf{h}^D}(s)- \bfS^{\mathbf{H}^D}(s)\|_{k,2} + \| \bfS^{\mathbf{b}^D}(s)- \bfS^{\mathbf{B}^D}(s)\|_{k,2}\right).
\end{multline*}
Gathering these estimates and using C-S we obtain
\begin{multline}\label{eq:proof_cvg_prop_caos_lip_P}
    \E^{1/2}\big[\big|\rho ( P^{\mathbf{h}^D, \mathbf{b}^D}_k (s) ) - \rho ( P^{\mathbf{H}^D, \mathbf{B}^D}_k (s) )\big|^2\big|\Finit\big] \\
    \leq C_\rho \E^{1/2}\big[ \big| \bfS^{\mathbf{h}^D}_k(s) - \bfS^{\mathbf{H}^D}_k(s)\big|^2\big|\Finit \big] + \cstW_k \E^{1/2}\big[|\check{M}_S|^2|\Finit\big]\big\|h_k^{(D)}(s) - H_k^{(D)}(s)\big\|_{\ndbar} \\\qquad + \cstW_k \E^{1/2}\big[\big\| f^{(D)}_k (s, \bfS^{\mathbf{h}^D}(s), \bfS^{\mathbf{b}^D}(s))- F^{(D)}_k (s, \bfS^{\mathbf{h}^D}(s), \bfS^{\mathbf{b}^D}(s))\big\|_{\ndbar}^2\big|\Finit  \big]\\
    \qquad + \cstW_k \E^{1/4}\big[e^{4\cstHhat_k M_S}|\Finit\big] \E^{1/4}\big[\| \bfS^{\mathbf{h}^D}(s)- \bfS^{\mathbf{H}^D}(s)\|_{k,2}^4|\Finit \big] 
    \\
    \qquad +\cstW_k \E^{1/4}\big[e^{4\cstHhat_k M_S}|\Finit\big]\E^{1/4}\big[ \| \bfS^{\mathbf{b}^D}(s)- \bfS^{\mathbf{B}^D}(s)\|_{k,2}^4|\Finit\big] 
    \\
    \leq \cstW_k \bigg(\big\|h_k^{(D)}(s) - H_k^{(D)}(s)\big\|_{\ndbar}
    + \sum_{i=0}^{k-1} \|h_i^{(D)}(s) - H_i^{(D)}(s)\|_{\ndbar} + \|b_i^{(D)}(s) - B_i^{(D)}(s)\|_{\ndbar}\\
    +\E^{1/2}\big[\big\| f^{(D)}_k (s, \bfS^{\mathbf{h}^D}(s), \bfS^{\mathbf{b}^D}(s))- F^{(D)}_k (s, \bfS^{\mathbf{h}^D}(s), \bfS^{\mathbf{b}^D}(s))\big\|_{\ndbar}^2\big|\Finit  \big]\bigg),
\end{multline}
where we argue as in \eqref{eq:proof_cvg_prop_caos_bound_ii}, using Lemma~\ref{lem:exp_moment_magic} to absorb \(\E^{1/4}[e^{4\cstHhat_k M_S}|\Finit] \) into $\cstW_k$, and Lemma~\ref{lem:moment_magic} to control $\E^{1/4}[\| \bfS^{\mathbf{h}^D}(s)- \bfS^{\mathbf{H}^D}(s)\|_{k,2}^4|\Finit]$ by $\cstW_k\sum_{i=0}^{k-1} \|h_i^{(D)}(s) - H_i^{(D)}(s)\|_{\ndbar}$, and similarly for $b$. Gathering terms, we obtain
\begin{multline*}
    \|\blue{(iv)}\|_{\ndbar}\leq \cstW_k \Big(\big\|h_k^{(D)}(s) - H_k^{(D)}(s)\big\|_{\ndbar}
    + \sum_{i=0}^{k-1} \|h_i^{(D)}(s) - H_i^{(D)}(s)\|_{\ndbar} + \|b_i^{(D)}(s) - B_i^{(D)}(s)\|_{\ndbar}\\
    +\E^{1/2}\big[\big\| f^{(D)}_k (s, \bfS^{\mathbf{h}^D}(s), \bfS^{\mathbf{b}^D}(s))- F^{(D)}_k (s, \bfS^{\mathbf{h}^D}(s), \bfS^{\mathbf{b}^D}(s))\big\|_{\ndbar}^2\big|\Finit  \big]\Big).
\end{multline*}

\paragraph*{Bound on \blue{(iii)}.} We use the decorrelation estimate from Lemma~\ref{lem:decorrelation_chizat} to obtain
\begin{multline*}
    \|\blue{(iii)}\|_{\ndbar} = \big\|\E \big[ \big( \rho ( P^{\mathbf{h}^D, \mathbf{b}^D}_k (s) ) - \rho ( P^{\mathbf{H}^D, \mathbf{B}^D}_k (s) ) \big) \sqrt{D}V^{(D)} \big\vert \Finit\big]\big\|_{\ndbar}\\
        \leq  \varv \E^{1/2}\big[ \big| \rho ( P^{\mathbf{h}^D, \mathbf{b}^D}_k (s) ) - \rho ( P^{\mathbf{H}^D, \mathbf{B}^D}_k (s) )\big|^2 |\Finit\big]. 
\end{multline*}
From \eqref{eq:proof_cvg_prop_caos_lip_P}, it follows that \(\|\blue{(iii)}\|_{\ndbar}\) satisfies the same a.s. bound as \(\|\blue{(iv)}\|_{\ndbar}\). 
Gathering all terms, we obtain
\begin{multline*}
        \|\H^{(D)}(s, \bfh^{(D)}) - \H^{(D)}(s, \bfH^{(D)})\|_{\ndbar} \leq \cstW_k \Big(\big\|h_k^{(D)}(s) - H_k^{(D)}(s)\big\|_{\ndbar} \\+ \sum_{i=0}^{k-1} \|h_i^{(D)}(s) - H_i^{(D)}(s)\|_{\ndbar} + \|b_i^{(D)}(s) - B_i^{(D)}(s)\|_{\ndbar}\\
    +\E^{1/2}\big[\big\| g^{(D)}_k (s, \bfS^{\mathbf{h}^D}(s), \bfS^{\mathbf{b}^D}(s))- G^{(D)}_k (s, \bfS^{\mathbf{h}^D}(s), \bfS^{\mathbf{b}^D}(s))\big\|_{\ndbar}^2\big|\Finit  \big] \\
    +\E^{1/2}\big[\big\| f^{(D)}_k (s, \bfS^{\mathbf{h}^D}(s), \bfS^{\mathbf{b}^D}(s))- F^{(D)}_k (s, \bfS^{\mathbf{h}^D}(s), \bfS^{\mathbf{b}^D}(s))\big\|_{\ndbar}^2\big|\Finit  \big]\Big).
\end{multline*}
The last two terms are handled using Proposition~\ref{prop:propcaos_Dbar_whp}, which yields
\begin{multline*}
 \|f^{(D)}_k (s , \bfS^{\mathbf{h}^D}(s), \bfS^{\mathbf{b}^D}(s)) - F^{(D)}_k (s , \bfS^{\mathbf{h}^D}(s), \bfS^{\mathbf{b}^D}(s)) \|_{\ndbar} \\
\leq e^{\cstW_k M_S} \Big( \max_{\substack{j\in[0:k-1]}} \| h^{(D)}_j (s ) - H^{(D)}_j (s ) \|_{\ndbar} \vee \| b^{(D)}_j (s ) - B^{(D)}_j (s ) \|_{\ndbar} +  \max_{\substack{j\in[0:k-1]}}\llnW_j(s) \Big),
\end{multline*}
with $M_S$ as before. Taking the conditional $L^2$ norm we obtain
\begin{multline}\label{eq:proof_cvg_prop_caos_bound_prefinal}
 \E^{1/2}\Big[\|f^{(D)}_k (s , \bfS^{\mathbf{h}^D}(s), \bfS^{\mathbf{b}^D}(s)) - F^{(D)}_k (s , \bfS^{\mathbf{h}^D}(s), \bfS^{\mathbf{b}^D}(s)) \|_{\ndbar}^2|\Finit\Big] \\
\leq \E^{1/2}\Big[e^{2\cstW_k M_S}|\Finit\Big] \Bigg( \max_{\substack{j\in[0:k-1]}} \| h^{(D)}_j (s ) - H^{(D)}_j (s ) \|_{\ndbar} \vee \| b^{(D)}_j (s ) - B^{(D)}_j (s ) \|_{\ndbar} +  \max_{\substack{j\in[0:k-1]}}\llnW_j(s) \Bigg)\\
\leq \cstW_k \Bigg( \sum_{j=0}^{k-1} \| h^{(D)}_j (s ) - H^{(D)}_j (s ) \|_{\ndbar} + \| b^{(D)}_j (s ) - B^{(D)}_j (s ) \|_{\ndbar} +  \sum_{j=0}^{k-1}\llnW_j(s) \Bigg),
\end{multline}
where we used the $\Finit$-measurability of $(\bfh, \bfH, \bfb, \bfB)$ and $\llnW_k$, as well as Lemma~\ref{lem:exp_moment_magic}.

\paragraph*{Gathering bounds.}
This finally yields
\begin{multline*}
        \|\H^{(D)}(s, \bfh^{(D)}) - \H^{(D)}(s, \bfH^{(D)})\|_{\ndbar} \\\leq \cstW_k \Big(\big\|h_k^{(D)}(s) - H_k^{(D)}(s)\big\|_{\ndbar}+ \sum_{j=0}^{k-1}\llnW_j(s)+ \sum_{j=0}^{k-1} \|h_j^{(D)}(s) - H_j^{(D)}(s)\|_{\ndbar} + \|b_j^{(D)}(s) - B_j^{(D)}(s)\|_{\ndbar}\Big)\\
        \leq \cstWh_k \Big(\big\|h_k^{(D)}(s) - H_k^{(D)}(s)\big\|_{\ndbar}+ \sum_{j=0}^{k-1}\llnW_j(s)+  \frac{1}{\sqrt{D}} + \sum_{j=0}^{k-1} \int_0^1\llnW_j(s)\d s\Big),
\end{multline*}
where the last line follows from our induction hypothesis \eqref{eq:proof_cvg_inductive_hypothesis}. This concludes the proof.
\end{proof}

\begin{proof}[Proof of Lemma~\ref{lem:proof_cvg_TCL_H}]
    Recalling our definitions for the involved terms, we have for $s\in[0,1]$,
    \begin{multline*}
        \check{\H}^{(D)}_k ( s, \bfH^{(D)} ) - \check{\overline{\H}}^{(D)}_k (s) =\E[\rho(P_k^{\mathbf{H}^D, \mathbf{B}^D}(s)) G_k^{(D)}(s, \bfS^{\mathbf{H}^D}(s), \bfS^{\mathbf{B}^D}(s))|\Finit] \\\qquad- \E[\rho(P_k^{\mathbf{Z}^H, \mathbf{Z}^B}(s)) G_k^{(D)}(s, \mathbf{Z}^{H}(s), \mathbf{Z}^{B}(s))|\Finit]\\
        =\E[\rho(P_k^{\mathbf{H}^D, \mathbf{B}^D}(s)) G_k^{(D)}(s, \bfS^{\mathbf{H}^D}(s), \bfS^{\mathbf{B}^D}(s))|\Finit] - \E[\rho(\tilde{P}_k^{\mathbf{Z}^H, \mathbf{Z}^B}(s)) G_k^{(D)}(s, \mathbf{Z}^{H}(s), \mathbf{Z}^{B}(s))|\Finit] \\+ \E[(\rho(\tilde{P}_k^{\mathbf{Z}^H, \mathbf{Z}^B}(s)) - \rho(P_k^{\mathbf{Z}^H, \mathbf{Z}^B}(s))) G_k^{(D)}(s, \mathbf{Z}^{H}(s), \mathbf{Z}^{B}(s))|\Finit]
        =:\blue{(\mathfrak{i})} + \blue{(\mathfrak{ii})},
    \end{multline*}
    where we have introduced a pivot term and we set
    \[\begin{cases}
        P_k^{\mathbf{H}^D, \mathbf{B}^D}(s) = \bfS^{\mathbf{H}^D}_k(s) + \big\langle F^{(D)}_k (s, \bfS^{\mathbf{H}^D}(s), \bfS^{\mathbf{B}^D}(s)), H_k^{(D)}(s)\big\rangle_{\ndbar} ,\\
        \tilde{P}_k^{\mathbf{Z}^H, \mathbf{Z}^B}(s) = \mathbf{Z}^{H}_k(s) + \big\langle F^{(D)}_k (s, \mathbf{Z}^{H}(s), \mathbf{Z}^{B}(s)), H_k^{(D)}(s)\big\rangle_{\ndbar},\\
        P_k^{\mathbf{Z}^H, \mathbf{Z}^B}(s) = \mathbf{Z}^{H}_k(s) + \E\big[ F^1_k (s, \mathbf{Z}^{H}(s), \mathbf{Z}^{B}(s))H_k^{1}(s)\big \vert \mathbf{Z}^{H}, \mathbf{Z}^{B}\big]. 
    \end{cases}\]

    \paragraph*{Bound on \(\blue{(\mathfrak{ii})}\).} 
    We define the $\Finit$-measurable function $\varphi^{(D)}:\R^{k+1}\times\R^{k}\to \R^D$, given by
    \[\varphi^{(D)}(\bfz^h, \bfz^b) := \rho\big(z_k^h + \big\langle F^{(D)}_k (s, \bfz^{h}, \bfz^{b}), H_k^{(D)}(s)\big\rangle_{\ndbar}\big)G_k^{(D)}(s, \bfz^{h}, \bfz^{b}).\]
    From the bounds in Lemmas~\ref{lem:RegSkel} and \ref{lem:MF_skeleton_bounded_ndbar}, we have that, for $k \in \{1,2,3 \}$ and $d\in[1:D]$,
    \[\|D^k\varphi^{d}(\bfz^h, \bfz^b)\|_{op} \leq \cstW_k\max_{\substack{t\in[0,1]\\j\in[0:k-1]}}(|H_j^{d}(t)| + |B_j^{d}(t)|)\exp(\cstHhat_k\|(\bfz^h, \bfz^b)\|_{2k+1,2}),\]
where $\cstHhat_k>0$ and $\cstW_k$ depends solely on $\|W_{\mathrm{in}}^{(D)}\|_{\ndbar}$, $\|W_{\mathrm{out}}^{(D)}\|_{\ndbar}$, as before. This exponential growth allows us to apply the quantitative CLT result from Corollary~\ref{cor:quantitative_noniid_clt_conditional}. Hence, we obtain a constant $\cstWh_k>0$ depending solely on $\cstHhat_k, k, \vpu, \vpv$, $\|W_{\mathrm{in}}^{(D)}\|_{\ndbar}$, $\|W_{\mathrm{out}}^{(D)}\|_{\ndbar}$, \(\|W_{\mathrm{in}}^{(D)}\|_{\overline{3}, D}\) and \(\|W_{\mathrm{out}}^{(D)}\|_{\overline{3}, D}\), such that
    \begin{align}\label{eq:proof_cvg_TCL_bound_ii}
        \|\blue{(\mathfrak{ii})}\|_{\ndbar}
        &= \|\E[\varphi^{(D)}(\bfS^{\mathbf{H}^D}(s), \bfS^{\mathbf{B}^D}(s))|\Finit] - \E[\varphi^{(D)}(\mathbf{Z}^{H}(s), \mathbf{Z}^{B}(s))|\Finit]\|_{\ndbar}\notag
        \\&\leq \cstW_k\|(\max_{t\in[0,1]}(|H_k^{d}(t)| + |B_k^{d}(t)|))_{d=1}^D\|_{\ndbar}\big(\frac{1}{\sqrt{D}} + \llnW_k(s)\big)\notag\\
        &\leq \cstWh_k\big(\frac{1}{\sqrt{D}} + \llnW_k(s)\big),
    \end{align}
    where we used Lemma~\ref{lem:MF_skeleton_bounded_ndbar} to absorb \(\|(\max_{t\in[0,1]}(|H^{d}(t)| + |B^{d}(t)|))_{d=1}^D\|_{\ndbar}\) into $\cstW_k$.
    The term $\llnW_k(s)$ here comes from Corollary~\ref{cor:quantitative_noniid_clt_conditional} and corresponds to
    \begin{multline*}
    \Big\| \frac{1}{D}\sum_{d=1}^D \E[(U^d\bfH^d_{\wedge k}, V^d\bfB^d_{\wedge k-1})(s)(U^d\bfH^d_{\wedge k}, V^d\bfB^d_{\wedge k-1})(s)^\top|\Finit] - K_{U^1\bfH^1_{\wedge k}, V^1\bfB^1_{\wedge k-1}}(s,s)\Big\|_{op, 2k+1\times 2k+1},
    \end{multline*}
    with \(K_{U^1\bfH^1_{\wedge k}, V^1\bfB^1_{\wedge k-1}}(s,s)\) being the covariance of \((\bfZ^\bfH_{\wedge k}(s), \bfZ^\bfB_{\wedge k-1}(s))\), as in \eqref{eq:covariance_kernel_gaussian}. Using the Frobenius norm, which dominates the operator norm, and using Proposition~\ref{prop:skeleton_subgaussianity} as in Remark~\ref{rem:examples_llnW_terms}, this is indeed a subexponential LLN concentration term as in Definition~\ref{def:lln_terms_RW}, which we denote by $\llnW_k(s)$.
    
    \paragraph*{Bound on \(\blue{(\mathfrak{i})}\).} Using that $\rho$ is Lipschitz and the triangle and C-S inequalities,
    \[\|\blue{(\mathfrak{i})}\|_{\ndbar}\leq \E^{1/2}\big[|\tilde{P}_k^{\mathbf{Z}^H, \mathbf{Z}^B}(s) - P_k^{\mathbf{Z}^H, \mathbf{Z}^B}(s)|^2|\Finit \big] \E^{1/2}\big[\|G_k^{(D)}(s, \mathbf{Z}^{H}(s), \mathbf{Z}^{B}(s))\|_{\ndbar}^2|\Finit\big].\]
    By Lemma~\ref{lem:MF_skeleton_bounded_ndbar}, the second factor is bounded by $\cstW_k\E^{1/2}\big[|M_Z(s)|^2 |\Finit\big]$, with $M_Z(s) := 1\vee\max_{j\in[0:k-1]}|Z_j^h(s)|\vee|Z_j^b(s)|$. Furthermore, since $(\bfZ^H, \bfZ^B)$ is independent from $\Finit$ and has exponentially bounded moments -- see Remark~\ref{rem:GaussianHasExponentialMoments} -- we bound this term by a deterministic constant $\cstH_k>0$. That is,
    \[\|\blue{(\mathfrak{i})}\|_{\ndbar}\leq \cstH_k\E^{1/2}\big[|\tilde{P}_k^{\mathbf{Z}^H, \mathbf{Z}^B}(s) - P_k^{\mathbf{Z}^H, \mathbf{Z}^B}(s)|^2|\Finit \big].\]
   Using \eqref{eq:cleaner_system_Fk_definition} and C-S,
    \begin{multline*}
        |\tilde{P}_k^{\mathbf{Z}^H, \mathbf{Z}^B}(s) - P_k^{\mathbf{Z}^H, \mathbf{Z}^B}(s)|\\
     = \left| \frac{1}{D}\sum_{d=1}^{D} F_k^d(s, \mathbf{Z}^H(s), \mathbf{Z}^B(s)) H_k^d(s) - \E[F_k^1(s, \mathbf{Z}^H(s), \mathbf{Z}^B(s)) H_k^1(s)|\mathbf{Z}^H, \mathbf{Z}^B]\right|\\
     \leq \eta_u \big\|\vec{\mathbf{v}}_k^F(s, \bfZ^H, \bfZ^B)\big\|_{2,k} \Big\|\frac{1}{D}\sum_{d=1}^D H_k^d\mathbf{H}^d_{\wedge k-1}(s) - \E[H_k^d \bfH_{\wedge k-1}^d(s) ] \Big\|_{2,k}
    \leq \cstH_k M_Z \llnW_k(s),
    \end{multline*}
    where we used Lemma~\ref{lem:explicit_regskel} and identified a subexponential LLN concentration term.
    Using the independence and arbitrarily bounded moments of $(\bfZ^H, \bfZ^B)$, as well as the $\Finit$-measurability of $\llnW_k$, we obtain 
    \[\E^{1/2}[|\tilde{P}_k^{\mathbf{Z}^H, \mathbf{Z}^B}(s) - P_k^{\mathbf{Z}^H, \mathbf{Z}^B}(s)|^2|\Finit] \leq \cstH_k\E^{1/2}[M_Z^{2}|\Finit]\llnW_k(s) \leq \cstH_k \llnW_k(s).\]
    Gathering all contributions, we obtain the desired result
    \[\|\check{\H}^{(D)}_k ( s, \bfH^{(D)}  ) - \check{\overline{\H}}^{(D)}_k (s)\|_{\ndbar} \leq \|\blue{(\mathfrak{i})}\|_{\ndbar} + \|\blue{(\mathfrak{ii})}\|_{\ndbar}\leq \cstWh_k\Big(\frac{1}{\sqrt{D}} + \llnW_k(s)\Big).\]
\end{proof}

\begin{proof}[Proof of Lemma~\ref{lem:proof_cvg_cavity_TCL_H}]
    We need to compare, for $d\in[1:D]$ and $s\in[0,1]$:
    \begin{multline*}
        |\hat{\H}^d_k ( s, \bfH^{(D)}  ) - \hat{\overline{\H}}^d_k (s)| = \big|\E[\rho(P_k^{\mathbf{H}^D, \mathbf{B}^D}(s))\sqrt{D}V^d|\Finit] \\- \varv^2\E\big[ \rho'(P_k^{\mathbf{Z}^H, \mathbf{Z}^B}(s)) \E[H_k^1(s) \bnabla_{\bfz^{b}}F_k^1(s, \mathbf{Z}^H(s), \mathbf{Z}^B(s))|\mathbf{Z}^H, \mathbf{Z}^B]\big]\cdot \mathbf{B}_{\wedge k-1}^d(s)\big|,
    \end{multline*}
    where we have set
\begin{equation}\label{eq:proof_cvg_TCL_cavity_definition_P}
        \begin{cases}
        P_k^{\mathbf{H}^D, \mathbf{B}^D}(s) = \bfS^{\mathbf{H}^D}_k(s) + \big\langle F^{(D)}_k (s, \bfS^{\mathbf{H}^D}(s), \bfS^{\mathbf{B}^D}(s)), H_k^{(D)}(s)\big\rangle_{\ndbar},\\
        P_k^{\mathbf{Z}^H, \mathbf{Z}^B}(s) = \mathbf{Z}^{H}_k(s) + \E\big[ F^1_k (s, \mathbf{Z}^{H}(s), \mathbf{Z}^{B}(s))H_k^{1}(s)\big \vert \mathbf{Z}^{H}, \mathbf{Z}^{B}\big].
    \end{cases}
    \end{equation}
    
    \paragraph*{Cavity method.} We write a Taylor expansion for the $\Finit$-measurable function 
    \[ \phi:\R^{k+1}\times\R^{k} \to \R, \quad (\bfz^h, \bfz^b) \mapsto \rho(z_k^h + \langle F^{(D)}_k(s, \bfz^h, \bfz^b), H_k^{(D)}(s)\rangle_{\ndbar}). \]
    Since this function is twice continuously differentiable, we obtain, for all $\overline{\bfz}^b \in\R^k$,
    \[\phi(\bfz^h, \bfz^b) = \phi(\bfz^h, \overline{\bfz}^b) + \bnabla_{\bfz^b}\phi(\bfz^h, \overline{\bfz}^b)\cdot(\bfz^b-\overline{\bfz}^b) + r(\bfz^h, \bfz^b, \overline{\bfz}^b),\]
    where \(r(\bfz^h, \bfz^b, \overline{\bfz}^b) = \int_0^1(1-t) \bnabla_{\bfz^b}^2\phi(\bfz^h, t\bfz^b + (1-t)\overline{\bfz}^b) [\bfz^b -\overline{\bfz}^b]^{\otimes2}\d t\) is the integral remainder, which satisfies
    \begin{equation}\label{eq:proof_cvg_TCL_cavity_bound_integral_remainder}
        |r(\bfz^h, \bfz^b, \overline{\bfz}^b)| \leq \frac{1}{2}\sup_{t\in[0,1]} \|\bnabla_{\bfz^b}^2\phi(\bfz^h, t\bfz^b + (1-t)\overline{\bfz}^b)\|_{2,k\times k} \|\bfz^b -\overline{\bfz}^b\|_{2,k}^2.
    \end{equation}
    Computing the gradient and Hessian of $\phi$ explicitly, we obtain
    \[\bnabla_{\bfz^b}\phi(\bfz^h, \bfz^b) = \rho'(\tilde{p}_k(s, \bfz^h, \bfz^b))\frac{1}{D} \bnabla_{\bfz^b}F^{(D)}_k(s, \bfz^h, \bfz^b) H_k^{(D)}(s),\]
    \begin{align*}
        \bnabla_{\bfz^b}^2\phi(\bfz^h, \bfz^b) &= \frac{\rho''(\tilde{p}_k(s, \bfz^h, \bfz^b))}{D^2}(\bnabla_{\bfz^b}F^{(D)}_k(s, \bfz^h, \bfz^b) H_k^{(D)}(s))(\bnabla_{\bfz^b}F^{(D)}_k(s, \bfz^h, \bfz^b) H_k^{(D)}(s))^{\top}\\
        &\qquad + \rho'(\tilde{p}_k(s, \bfz^h, \bfz^b)) \frac{1}{D}\sum_{d=1}^D H_k^{d}(s) \bnabla^2_{\bfz^b}F^{d}_k(s, \bfz^h, \bfz^b), 
    \end{align*}
    where we used the shorthand $\tilde{p}_k(s, \bfz^h, \bfz^b) := z_k^h + \langle F^{(D)}_k(s, \bfz^h, \bfz^b), H_k^{(D)}(s)\rangle_{\ndbar}$. From Lemma~\ref{lem:RegSkel} it follows that, for a constant $\cstHhat_k>0$,
    \begin{equation}\label{eq:proof_cvg_TCL_cavity_bound_hessian_norm}
        \|\bnabla_{\bfz^b}^2\phi(\bfz^h, \bfz^b)\|_{2,k\times k} \leq e^{\hat{C}_k(1 \vee \|(\bfz^h, \bfz^b)\|_{\infty, 2k})} \bigg(\big\|\max_{\substack{j\in[0:k-1]\\t\in[0,1]}} |H_j^{(D)}(t)|\big\|_{\ndbar}^2 + \big\|\max_{\substack{j\in[0:k-1]\\t\in[0,1]}} |H_j^{(D)}(t)|\big\|_{\ndbar}\bigg).
    \end{equation}
    Since \[\E[\rho(P_k^{\mathbf{H}^D, \mathbf{B}^D}(s))\sqrt{D}V^d|\Finit] = \E[\phi(\bfS^{\mathbf{H}^D}, \bfS^{\mathbf{B}^D})\sqrt{D}V^d|\Finit],\] we apply our Taylor expansion around \(\overline{\bfS}^{d,\mathbf{B}^D} = \bfS^{\mathbf{B}^D}-\frac{1}{\sqrt{D}}V^d\mathbf{B}_{\wedge k-1}^d(s)\), which yields
    \begin{align*}
        \E[\phi(\bfS^{\mathbf{H}^D}, \bfS^{\mathbf{B}^D})\sqrt{D}V^d|\Finit] &= \E[\phi(\bfS^{\mathbf{H}^D}, \overline{\bfS}^{d,\mathbf{B}^D})\sqrt{D}V^d|\Finit]\\&\qquad+\E[\bnabla_{\bfz^b}\phi(\bfS^{\mathbf{H}^D}, \overline{\bfS}^{d,\mathbf{B}^D})\cdot\big(\frac{1}{\sqrt{D}}V^d\mathbf{B}_{\wedge k-1}^d(s)\big)\sqrt{D}V^d|\Finit]\\&\qquad + \E[r(\bfS^{\mathbf{H}^D}, \bfS^{\mathbf{B}^D}, \overline{\bfS}^{d,\mathbf{B}^D})\sqrt{D}V^d|\Finit]\\
        &=: \purple{(\mathfrak{i})_d} + \purple{(\mathfrak{ii})_d} + \purple{(\mathfrak{iii})_d}.
    \end{align*}
    Since $V^d$ is centered and independent from \((\bfS^{\mathbf{H}^D}, \overline{\bfS}^{d,\mathbf{B}^D})\), even conditionally to $\Finit$, we get 
    \[\purple{(\mathfrak{i})_d}=0,\;\;\text{ and }\;\;\purple{(\mathfrak{ii})_d} = \varv^2\E[\bnabla_{\bfz^b}\phi(\bfS^{\mathbf{H}^D}, \overline{\bfS}^{d,\mathbf{B}^D})|\Finit]\cdot \mathbf{B}_{\wedge k-1}^d(s).\]
    Using \eqref{eq:proof_cvg_TCL_cavity_bound_integral_remainder}, C-S, Lemma~\ref{prop:moment_bound_subgaussian} for the subgaussian variable $V^d$ and \eqref{eq:proof_cvg_TCL_cavity_bound_hessian_norm}, we get
    \begin{align}
        |\purple{(\mathfrak{iii})_d}|&\leq \frac{\sqrt{D}}{2}\E\Big[\sup_{t\in[0,1]} \|\bnabla_{\bfz^b}^2\phi(\bfS^{\mathbf{H}^D}, t\bfS^{\mathbf{B}^D} + (1-t)\overline{\bfS}^{d,\mathbf{B}^D})\|_{2,k\times k} \big\| \frac{1}{\sqrt{D}}V^d\mathbf{B}_{\wedge k-1}^d(s)\big\|_{2,k}^2 |V^d|\big|\Finit\Big]\notag\\
        &\leq \frac{1}{2\sqrt{D}}\E\Big[\sup_{t\in[0,1]} \|\bnabla_{\bfz^b}^2\phi(\bfS^{\mathbf{H}^D}, t\bfS^{\mathbf{B}^D} + (1-t)\overline{\bfS}^{d,\mathbf{B}^D})\|_{2,k\times k} |V^d|^3\big|\Finit\Big] \big\|\mathbf{B}_{\wedge k-1}^d(s)\big\|_{2,k}^2\notag\\
        &\leq \frac{C \vpv^3}{\sqrt{D}}\E^{1/2}\Big[\sup_{t\in[0,1]} \|\bnabla_{\bfz^b}^2\phi(\bfS^{\mathbf{H}^D}, t\bfS^{\mathbf{B}^D} + (1-t)\overline{\bfS}^{d,\mathbf{B}^D})\|_{2,k\times k}^2 \big|\Finit\Big] \big\|\mathbf{B}_{\wedge k-1}^d(s)\big\|_{2,k}^2\notag\\
        &\leq \frac{C \vpv^3}{\sqrt{D}}\E^{1/2}\Big[\sup_{t\in[0,1]} e^{\hat{C}_k M_{t,S}} \big|\Finit\Big] \Big(1\vee \big\|\max_{\substack{j\in[0:k-1]\\t\in[0,1]}} |H_j^{(D)}(t)|\big\|_{\ndbar}^2 \Big)\big\|\mathbf{B}_{\wedge k-1}^d(s)\big\|_{2,k}^2\notag\\
        &\leq \frac{\cstW_k}{\sqrt{D}}\E^{1/2}\big[\sup_{t\in[0,1]} e^{\hat{C}_k M_{t,S}} \big|\Finit\big]\big\|\mathbf{B}_{\wedge k-1}^d(s)\big\|_{2,k}^2,
    \end{align}
    where we introduced \(M_{t,S} := \max_{j\in[0:k-1]} |S_j^{\mathbf{H}^D}(s)| + t|S_j^{\mathbf{B}^D}(s)| + (1-t)|\overline{S}_j^{d,\mathbf{B}^D}(s)|\), and we used  Lemma~\ref{lem:MF_skeleton_bounded_ndbar} in the last inequality. For any $t\in[0,1]$, \[M_{t,S}\leq M_{S, \overline{S}} := \max_{j\in[0:k-1]} |S_j^{\mathbf{H}^D}(s)| + |S_j^{\mathbf{B}^D}(s)| + |\overline{S}_j^{d,\mathbf{B}^D}(s)|,\] and hence Lemma~\ref{lem:exp_moment_magic} allows us to absorb  $\E^{1/2}\big[e^{C M_{S, \overline{S}}} |\Finit\big]$ into $\cstW_k$, finally yielding \begin{equation}\label{eq:proof_cvg_TCL_cavity_iii}
        |\purple{(\mathfrak{iii})_d}|\leq \frac{\cstW_k}{\sqrt{D}}\big\|\mathbf{B}_{\wedge k-1}^d(s)\big\|_{2,k}^2.
    \end{equation}
    
    \paragraph*{Quantitative CLT.} Gathering terms,
    \begin{multline*}
        |\hat{\H}^d_k ( s, \bfH^{(D)}  ) - \hat{\overline{\H}}^d_k (s)|
        \\\leq \big|\purple{(\mathfrak{ii})_d}
        - \varv^2\E\big[ \rho'(P_k^{\mathbf{Z}^H, \mathbf{Z}^B}(s)) \E[H_k^1(s) \bnabla_{\bfz^{b}}F_k^1(s, \mathbf{Z}^H(s), \mathbf{Z}^B(s))|\mathbf{Z}^H, \mathbf{Z}^B]\big]\cdot \mathbf{B}_{\wedge k-1}^d(s)\big| + \big|\purple{(\mathfrak{iii})_d}\big|\\
        \leq \varv^2\big\|\E[\bnabla_{\bfz^b}\phi(\bfS^{\mathbf{H}^D}(s), \overline{\bfS}^{d,\mathbf{B}^D}(s))|\Finit] - \E[\psi(\mathbf{Z}^H(s), \mathbf{Z}^B(s))]\big\|_{2,k}\|\mathbf{B}_{\wedge k-1}^d(s)\|_{2,k} + |\purple{(\mathfrak{iii})_d}|,
    \end{multline*}
    where in the last line we used C-S and introduced the function $\psi:\R^{k+1}\times\R^k\to\R^k$, given by \[\psi(\bfz^h, \bfz^b) =\rho'(z_k^h + \E[F_k^1(s, \bfz^h, \bfz^b)H_k^1(s)])\E[H_k^1(s)\bnabla_{\bfz^b}F_k^1(s, \bfz^h, \bfz^b)].\]
    Taking the RMS norm, we obtain
    \begin{multline}\label{eq:proof_cvg_TCL_cavity_bound_RMS}
        \Big\|\hat{\H}^{(D)}_k ( s, \bfH^{(D)}  ) - \hat{\overline{\H}}^{(D)}_k (s)\Big\|_{\ndbar}\\\leq \varv^2\Big\|\E[\bnabla_{\bfz^b}\phi(\bfS^{\mathbf{H}^D}(s), \overline{\bfS}^{d,\mathbf{B}^D}(s))|\Finit] - \E[\psi(\mathbf{Z}^H(s), \mathbf{Z}^B(s))]\Big\|_{2,k}\Big\|\|\mathbf{B}_{\wedge k-1}^{(D)}(s)\|_{2,k}\Big\|_{\ndbar} \\+ \frac{\cstW_k}{\sqrt{D}}\Big\|\big\|\mathbf{B}_{\wedge k-1}^{(D)}(s)\big\|_{2,k}^2\Big\|_{\ndbar}\\
        \leq \cstW_k\Big\|\E[\bnabla_{\bfz^b}\phi(\bfS^{\mathbf{H}^D}(s), \overline{\bfS}^{d,\mathbf{B}^D}(s))|\Finit] - \E[\psi(\mathbf{Z}^H(s), \mathbf{Z}^B(s))]\Big\|_{2,k}+ \frac{\cstWh_k}{\sqrt{D}},
    \end{multline}
    where we noted that, from Proposition~\ref{prop:explicit_dependence_on_WeWu},
    \begin{multline*}
        \Big\|\big\|\mathbf{B}_{\wedge k-1}^{(D)}(s)\big\|_{2,k}^2\Big\|_{\ndbar} =\Big( \frac{1}{D}\sum_{d=1}^D \Big(\sum_{j=0}^{k-1} |B_j^d(s)|^2\Big)^2\Big)^{1/2}
        \leq k \Big( \frac{1}{D}\sum_{d=1}^D \max_{j\in[0:k-1]}|B_j^d(s)|^4\Big)^{1/2}\\ \leq \cstH_k\Big( \frac{1}{D}\sum_{d=1}^D \| W_{\mathrm{in}}^d\|^4_{2, \dimin} + \| W_{\mathrm{out}}^d\|^4_{2, \dimout}\Big)^{1/2} =: \cstWh_k, 
    \end{multline*}
    and $\cstWh_k$ now further depends on $\|W_{\mathrm{in}}^{(D)}\|_{\overline{4}, D}$ and $\|W_{\mathrm{out}}^{(D)}\|_{\overline{4}, D}$. In \eqref{eq:proof_cvg_TCL_cavity_bound_RMS} we are reduced to studying
\begin{multline*}
    \Big\|\E[\bnabla_{\bfz^b}\phi(\bfS^{\mathbf{H}^D}(s), \overline{\bfS}^{d,\mathbf{B}^D}(s))|\Finit] - \E[\psi(\mathbf{Z}^H(s), \mathbf{Z}^B(s))]\Big\|_{2,k}\\
    \leq \big\|\E[\bnabla_{\bfz^b}\phi(\bfS^{\mathbf{H}^D}(s), \overline{\bfS}^{d,\mathbf{B}^D}(s))|\Finit] - \E[\psi(\bfS^{\mathbf{H}^D}(s), \overline{\bfS}^{d,\mathbf{B}^D}(s))\big|\Finit] \big\|_{2,k}\\+\big\|\E[\psi(\bfS^{\mathbf{H}^D}(s), \overline{\bfS}^{d,\mathbf{B}^D}(s))\big|\Finit] -\E[\psi(\mathbf{Z}^H(s), \mathbf{Z}^B(s))]\big\|_{2,k}\\
    =: \purple{(i)} + \purple{(ii)},
\end{multline*}
where we introduced the pivot \(\E[\psi(\bfS^{\mathbf{H}^D}(s), \overline{\bfS}^{d,\mathbf{B}^D}(s))\big|\Finit]\).

The term \purple{(ii)} is controlled as in \eqref{eq:proof_cvg_TCL_bound_ii} from the proof of Lemma~\ref{lem:proof_cvg_TCL_H}, using the quantitative CLT result from Corollary~\ref{cor:quantitative_noniid_clt_conditional}. Indeed, $\psi$ is deterministic and has $\cstHhat_k$-exponentially growing derivatives of orders up to 3, so we get a constant $\cstWh_k>0$ depending only on $\cstHhat_k, k, \vpu, \vpv$, $\|W_{\mathrm{in}}^{(D)}\|_{\ndbar}$, $ \|W_{\mathrm{out}}^{(D)}\|_{\ndbar}$, \(\|W_{\mathrm{in}}^{(D)}\|_{\overline{3}, D}\) and \(\|W_{\mathrm{out}}^{(D)}\|_{\overline{3}, D}\), such that
\[\purple{(ii)} =\big\|\E[\psi(\bfS^{\mathbf{H}^D}(s), \overline{\bfS}^{d,\mathbf{B}^D}(s))\big|\Finit] -\E[\psi(\mathbf{Z}^H(s), \mathbf{Z}^B(s))]\big\|_{2,k} \leq \cstWh_k\Big(\frac{1}{\sqrt{D}} + \llnW_k(s)\Big).\]

 \paragraph*{Bounding the remainder.} 
To handle \purple{(i)}, we use the skeleton vectors from Definition~\ref{def:linear_skeleton_meanfield} to obtain, for all $\bfz^h, \bfz^b \in \R^K$, 
\[\bnabla_{\bfz^b}\phi(\bfz^h, \bfz^b) = -\eta_u\rho'\Big(z_k^h -\eta_u \vec{\mathbf{v}}_k^F(s, \bfz^h, \bfz^b)^\top \Big(\frac{1}{D} \sum_{d=1}^D\bfH^d_{\wedge k-1}(s) H_k^d\Big) \Big)\bnabla_{\bfz^b}\vec{\mathbf{v}}_k^F(s, \bfz^h, \bfz^b)\frac{1}{D}\sum_{d=1}^D \bfH_{\wedge k-1}^d(s) H_k^{d}(s),\]
where we used \(\big\langle F_k^{(D)}(s, \bfz^h, \bfz^b), H_k^{(D)}(s)\big\rangle_{\ndbar} =  -\eta_u \vec{\mathbf{v}}_k^F(s, \bfz^h, \bfz^b)^\top \Big(\frac{1}{D} \sum_{d=1}^D\bfH^d_{\wedge k-1}(s) H_k^d\Big)\) and similarly for \(\bnabla_{\bfz^b}F_k^{(D)}(s, \bfz^h, \bfz^b)\). Also, since \(\vec{\mathbf{v}}_k^F\) depends solely on the law of $(\bfH^d, \bfB^d)$, it is the same for all $d\in[1:D]$, since they are i.i.d.~Similarly,
\[\psi(\bfz^h, \bfz^b) =-\eta_u\rho'\big(z_k^h -\eta_u \vec{\mathbf{v}}_k^F(s, \bfz^h, \bfz^b)^\top \E[H_k^1(s)\bfH_{\wedge k-1}^1(s)]\big)\bnabla_{\bfz^b}\vec{\mathbf{v}}_k^F(s, \bfz^h, \bfz^b)\E[H_k^1(s)\bfH_{\wedge k-1}^1(s)].\]
We then define
\begin{equation}\label{eq:proof_cvg_TCL_cavity_def_terms_with_v}
    \begin{cases}
    \hat{P}_k^{\bfH^D, \overline{\bfB}^D}(s) := S_k^{\mathbf{H}^D}(s) -\eta_u \vec{\mathbf{v}}_k^F(s, \bfS^{\mathbf{H}^D}(s), \overline{\bfS}^{d,\mathbf{B}^D}(s))^\top \Big(\frac{1}{D} \sum_{d=1}^D\bfH^d_{\wedge k-1}(s) H_k^d\Big),\\
    \check{P}_k^{\bfH^D, \overline{\bfB}^D}(s) := S_k^{\mathbf{H}^D}(s)  -\eta_u \vec{\mathbf{v}}_k^F(s, \bfS^{\mathbf{H}^D}(s), \overline{\bfS}^{d,\mathbf{B}^D}(s))^\top\E\big[ \bfH^1_{\wedge k-1}(s) H_k^{1}(s)\big],
\end{cases}
\end{equation}
and introduce a pivot to obtain
\begin{multline*}
    \purple{(i)} =\eta_u\bigg\|\E\Big[\rho'\big(\hat{P}_k^{\bfH^D, \overline{\bfB}^D}(s) \big) \bnabla_{\bfz^b}\vec{\mathbf{v}}_k^F(s, \bfS^{\mathbf{H}^D}(s), \overline{\bfS}^{d,\mathbf{B}^D}(s))\frac{1}{D}\sum_{d=1}^D \bfH_{\wedge k-1}^d(s) H_k^{d}(s)\big| \Finit\Big]
    \\ -\E\Big[\rho'\big(\check{P}_k^{\bfH^D, \overline{\bfB}^D}(s) \big) \bnabla_{\bfz^b}\vec{\mathbf{v}}_k^F(s, \bfS^{\mathbf{H}^D}(s), \overline{\bfS}^{d,\mathbf{B}^D}(s))\E\big[ H_k^{1}(s)\bfH_{\wedge k-1}^1(s)\big] \big|\Finit\Big]\Big\|_{2,k}\\
    \leq\eta_u\Big\|\E\Big[\Big(\rho'\big(\hat{P}_k^{\bfH^D, \overline{\bfB}^D}(s) \big) - \rho'\big(\check{P}_k^{\bfH^D, \overline{\bfB}^D}(s) \big)\Big) \bnabla_{\bfz^b}\vec{\mathbf{v}}_k^F(s, \bfS^{\mathbf{H}^D}(s), \overline{\bfS}^{d,\mathbf{B}^D}(s))\frac{1}{D}\sum_{d=1}^D \bfH_{\wedge k-1}^d(s) H_k^{d}(s)\big| \Finit\Big]\Big\|_{2,k} \\ +\eta_u\Big\|\E\Big[\rho'\big(\check{P}_k^{\bfH^D, \overline{\bfB}^D}(s) \big) \bnabla_{\bfz^b}\vec{\mathbf{v}}_k^F(s, \bfS^{\mathbf{H}^D}(s), \overline{\bfS}^{d,\mathbf{B}^D}(s))\Big(\frac{1}{D}\sum_{d=1}^D \bfH_{\wedge k-1}^d(s) H_k^{d}(s) - \E\big[ H_k^{1}(s)\bfH_{\wedge k-1}^1(s)\big]\Big)\big|\Finit\Big]\Big\|_{2,k}.
\end{multline*}
We then use that $\rho'$ is bounded and Lipschitz, we apply the triangle and C-S inequalities, as well as the $\Finit$-measurability of $\bfH$ and Lemma~\ref{lem:explicit_regskel} to obtain
\begin{align}\label{eq:proof_cvg_TCL_cavity_bound_llnTerm}
    &\purple{(i)} \leq 
    C_\rho\eta_u \Big\| \frac{1}{D}\sum_{d=1}^D \bfH_{\wedge k-1}^d(s) H_k^{(D)}(s) - \E\big[ H_k^{1}(s)\bfH_{\wedge k-1}^1(s)\big]\Big\|_{2,k} \Big\|\frac{1}{D}\sum_{d=1}^D \bfH_{\wedge k-1}^d(s) H_k^{d}(s)\Big\|_{2,k}\notag\\
    &\qquad\qquad\qquad\qquad\cdot\E\big[\big|\vec{\mathbf{v}}_k^F(s, \bfS^{\mathbf{H}^D}(s), \overline{\bfS}^{d,\mathbf{B}^D}(s))\big|_{2,k}\big\| \bnabla_{\bfz^b}\vec{\mathbf{v}}_k^F(s, \bfS^{\mathbf{H}^D}(s), \overline{\bfS}^{d,\mathbf{B}^D}(s))\big\|_{2, k\times k}|\Finit\big]\notag\\
    & +C_\rho\eta_u \E\big[ \big\|\bnabla_{\bfz^b}\vec{\mathbf{v}}_k^F(s, \bfS^{\mathbf{H}^D}(s), \overline{\bfS}^{d,\mathbf{B}^D}(s))\big\|_{2,k\times k} | \Finit \big]\Big\|\frac{1}{D}\sum_{d=1}^D \bfH_{\wedge k-1}^d(s) H_k^{(D)}(s) - \E\big[ H_k^{1}(s)\bfH_{\wedge k-1}^1(s)\big]\Big\|_{2,k}\notag\\
    &\leq \cstW_k \llnW_k(s) \E^{1/2}\big[e^{\cstHhat_kM_{S, \overline{S}}}|\Finit\big]\frac{1}{D}\sum_{d'=1}^D\max_{\substack{j\in[0:k]\\s\in[0,1]}}|H_j^{d'}(s)|^2+\cstW_k \llnW_k(s) \E^{1/2}\big[e^{\cstHhat_kM_{S, \overline{S}}}|\Finit\big]\notag\\
    &\leq \cstW_k\llnW_k(s),
\end{align}
where we absorbed terms into $\llnW_k(s)$ and $\cstW_k$ using Lemma~\ref{lem:MF_skeleton_bounded_ndbar}. Joining all contributions yields
\begin{equation*}
    \big\|\hat{\H}^{(D)}_k ( s, \bfH^{(D)}  ) - \hat{\overline{\H}}^{(D)}_k (s)\big\|_{\ndbar}\leq \cstW_k(\purple{(i)} + \purple{(ii)})+ \frac{\cstWh_k}{\sqrt{D}}\leq \cstWh_k\big(\frac{1}{\sqrt{D}} + \llnW_k(s)\big),
\end{equation*}
which is what we wanted to prove.
\end{proof}
    
    \begin{proof}[Proof of Lemma~\ref{lem:proof_cvg_prop_caos_B}]
        The argument is analogous to that of Lemma~\ref{lem:proof_cvg_prop_caos_H}. We decompose
        \begin{align*}
            \|&\B^{(D)}(s, \bfb^{(D)}) - \B^{(D)}(s, \bfB^{(D)})\|_{\ndbar} \leq\\
            & +\Big\|\E\Big[\rho'(P_k^{\mathbf{H}^D, \mathbf{B}^D})Q_k^{\mathbf{H}^D, \mathbf{B}^D} \big( f_k^{(D)}(s, \bfS^{\mathbf{h}^D}(s), \bfS^{\mathbf{b}^D}(s)) - F_k^{(D)}(s, \bfS^{\mathbf{h}^D}(s), \bfS^{\mathbf{b}^D}(s))\big)|\Finit\Big] \Big\|_{\ndbar}\\
            & +\Big\|\E\Big[\rho'(P_k^{\mathbf{H}^D, \mathbf{B}^D})Q_k^{\mathbf{H}^D, \mathbf{B}^D} \big( F_k^{(D)}(s, \bfS^{\mathbf{h}^D}(s), \bfS^{\mathbf{b}^D}(s)) - F_k^{(D)}(s, \bfS^{\mathbf{H}^D}(s), \bfS^{\mathbf{B}^D}(s))\big)|\Finit\Big] \Big\|_{\ndbar}\\
            & +\Big\|\E\Big[\big(\rho'(P_k^{\mathbf{h}^D, \mathbf{b}^D})Q_k^{\mathbf{h}^D, \mathbf{b}^D} - \rho'(P_k^{\mathbf{H}^D, \mathbf{B}^D})Q_k^{\mathbf{H}^D, \mathbf{B}^D}\big)\sqrt{D}U^{(D)}|\Finit\Big] \Big\|_{\ndbar}\\
            & +\Big\|\E\Big[\big(\rho'(P_k^{\mathbf{h}^D, \mathbf{b}^D})Q_k^{\mathbf{h}^D, \mathbf{b}^D} - \rho'(P_k^{\mathbf{H}^D, \mathbf{B}^D})Q_k^{\mathbf{H}^D, \mathbf{B}^D}\big)f_k^{(D)}(s, \bfS^{\mathbf{h}^D}(s), \bfS^{\mathbf{b}^D}(s))|\Finit\Big] \Big\|_{\ndbar}\\
            &=: \|\blue{(i)}\|_{\ndbar} + \|\blue{(ii)}\|_{\ndbar} + \|\blue{(iii)}\|_{\ndbar} + \|\blue{(iv)}\|_{\ndbar},
        \end{align*}
        where we have posed \(P_k^{\mathbf{h}^D, \mathbf{b}^D}(s)\) and \(P_k^{\mathbf{H}^D, \mathbf{B}^D}(s)\) as in \eqref{eq:proof_cvg_prop_caos_definition_P}, and
    \[\begin{cases}
     Q_k^{\mathbf{h}^D, \mathbf{b}^D}(s) := \bfS^{\mathbf{b}^D}_k(s) + \big\langle g^{(D)}_k (s, \bfS^{\mathbf{h}^D}(s), \bfS^{\mathbf{b}^D}(s)), b_k^{(D)}(s)\big\rangle_{\ndbar}, \\  
     Q_k^{\mathbf{H}^D, \mathbf{B}^D}(s) := \bfS^{\mathbf{B}^D}_k(s) + \big\langle G^{(D)}_k (s, \bfS^{\mathbf{H}^D}(s), \bfS^{\mathbf{B}^D}(s)), B_k^{(D)}(s)\big\rangle_{\ndbar}.
    \end{cases}\]
        Each term is bounded exactly as in the proof of Lemma~\ref{lem:proof_cvg_prop_caos_H}. The only small difference comes in \blue{(iii)} and \blue{(iv)}, where we have
        \begin{multline*}
            \Big|\rho'(P_k^{\mathbf{h}^D, \mathbf{b}^D})Q_k^{\mathbf{h}^D, \mathbf{b}^D} - \rho'(P_k^{\mathbf{H}^D, \mathbf{B}^D})Q_k^{\mathbf{H}^D, \mathbf{B}^D}\Big|\\
            \leq\big|\rho'(P_k^{\mathbf{h}^D, \mathbf{b}^D}) - \rho'(P_k^{\mathbf{H}^D, \mathbf{B}^D})\big|\big| Q_k^{\mathbf{h}^D, \mathbf{b}^D}\big| + \big|\rho'(P_k^{\mathbf{H}^D, \mathbf{B}^D})\big| \big|Q_k^{\mathbf{h}^D, \mathbf{b}^D} - Q_k^{\mathbf{H}^D, \mathbf{B}^D}\big|\\
            \leq C_\rho \big|P_k^{\mathbf{h}^D, \mathbf{b}^D}-P_k^{\mathbf{H}^D, \mathbf{B}^D}\big|\big| Q_k^{\mathbf{h}^D, \mathbf{b}^D}\big| + C_\rho \big|Q_k^{\mathbf{h}^D, \mathbf{b}^D} - Q_k^{\mathbf{H}^D, \mathbf{B}^D}\big|,
        \end{multline*}
        using that $\rho'$ is bounded and Lipchitz of $\rho'$. 
        Since the factors in the first term can be correlated, we use Lemma~\ref{lem:fine_decorrelation_lemma} to bound \blue{(iii)}, instead of Lemma~\ref{lem:decorrelation_chizat}.
        This yields
        \begin{multline*}
            \|\blue{(iii)}\|_{\ndbar} \leq \cstW_k\Big(\E^{1/2}\big[\big|P_k^{\mathbf{h}^D, \mathbf{b}^D}-P_k^{\mathbf{H}^D, \mathbf{B}^D}\big|^2 | \Finit \big]\E^{1/4}\big[\big|Q_k^{\mathbf{h}^D, \mathbf{b}^D}\big|^4 |\Finit \big]  \\+ \E^{1/2}\big[\big|Q_k^{\mathbf{h}^D, \mathbf{b}^D}-Q_k^{\mathbf{H}^D, \mathbf{B}^D}\big|^2 |\Finit \big]  \Big)\\
            \leq \cstW_k\Big( \underbrace{\E^{1/2}\big[\big|P_k^{\mathbf{h}^D, \mathbf{b}^D}-P_k^{\mathbf{H}^D, \mathbf{B}^D}\big|^2 | \Finit \big]}_{\blue{(*)_1}}+ \underbrace{\E^{1/2}\big[\big|Q_k^{\mathbf{h}^D, \mathbf{b}^D}-Q_k^{\mathbf{H}^D, \mathbf{B}^D}\big|^2 |\Finit \big]}_{\blue{(*)_2}}  \Big),
        \end{multline*}
        where we absorbed \(\E^{1/4}\big[\big|Q_k^{\mathbf{h}^D, \mathbf{b}^D}\big|^4 |\Finit \big]\) into $\cstW_k$ by the same argument as in \eqref{eq:proof_cvg_prop_caos_bound_P}. On the other hand, we bound \(\blue{(*)_1}\) as in \eqref{eq:proof_cvg_prop_caos_lip_P} and similarly for \(\blue{(*)_2}\). This finally yields
        \begin{multline*}
    \|\blue{(iii)}\|_{\ndbar}\leq \cstW_k \bigg(\big\|h_k^{(D)}(s) - H_k^{(D)}(s)\big\|_{\ndbar} + \big\|b_k^{(D)}(s) - B_k^{(D)}(s)\big\|_{\ndbar}
    \\+ \sum_{i=0}^{k-1} \|h_i^{(D)}(s) - H_i^{(D)}(s)\|_{\ndbar} + \|b_i^{(D)}(s) - B_i^{(D)}(s)\|_{\ndbar}\\
    +\E^{1/2}\big[\big\| g^{(D)}_k (s, \bfS^{\mathbf{h}^D}(s), \bfS^{\mathbf{b}^D}(s))- G^{(D)}_k (s, \bfS^{\mathbf{h}^D}(s), \bfS^{\mathbf{b}^D}(s))\big\|_{\ndbar}^2\big|\Finit  \big] \\
    +\E^{1/2}\big[\big\| f^{(D)}_k (s, \bfS^{\mathbf{h}^D}(s), \bfS^{\mathbf{b}^D}(s))- F^{(D)}_k (s, \bfS^{\mathbf{h}^D}(s), \bfS^{\mathbf{b}^D}(s))\big\|_{\ndbar}^2\big|\Finit  \big]\bigg).
\end{multline*}
The term $\|\blue{(iv)}\|_{\ndbar}$ satisfies the exact same bound, using C-S and the calculations in \eqref{eq:proof_cvg_prop_caos_bound_f}.
For \blue{(i)}, we use C-S to bound
        \begin{multline*}
            \|\blue{(i)}\|_{\ndbar} \leq \E^{1/2}\big[\big|\rho'(P_k^{\mathbf{H}^D, \mathbf{B}^D})Q_k^{\mathbf{H}^D, \mathbf{B}^D}\big|^2 | \Finit \big]\\ \cdot \E^{1/2}\big[\big\| f^{(D)}_k (s, \bfS^{\mathbf{h}^D}(s), \bfS^{\mathbf{b}^D}(s))- F^{(D)}_k (s, \bfS^{\mathbf{h}^D}(s), \bfS^{\mathbf{b}^D}(s))\big\|_{\ndbar}^2\big|\Finit  \big],
        \end{multline*}
        where the first term is absorbed into \(\cstW_k\) and the second one is bounded using Proposition~\ref{prop:propcaos_Dbar_whp} at the end of the proof. 
        The term \blue{(ii)} is handled exactly as in \eqref{eq:proof_cvg_prop_caos_bound_ii}, 
        using that $F_k^d$ is Lipschitz to obtain
        \[\|\blue{(ii)}\|_{\ndbar}\leq \cstW_k\sum_{i=0}^{k-1} \|h_i^{(D)}(s) - H_i^{(D)}(s) \|_{\ndbar} + \|b_i^{(D)}(s) - B_i^{(D)}(s) \|_{\ndbar}.\]
        Summing up all contributions, we obtain
        \begin{multline*}
        \|\B^{(D)}(s, \bfb^{(D)}) - \B^{(D)}(s, \bfB^{(D)})\|_{\ndbar} \leq \cstW_k \bigg(\big\|h_k^{(D)}(s) - H_k^{(D)}(s)\big\|_{\ndbar} + \big\|b_k^{(D)}(s) - B_k^{(D)}(s)\big\|_{\ndbar}
    \\+ \sum_{i=0}^{k-1} \|h_i^{(D)}(s) - H_i^{(D)}(s)\|_{\ndbar} + \|b_i^{(D)}(s) - B_i^{(D)}(s)\|_{\ndbar}\\
    +\E^{1/2}\big[\big\| g^{(D)}_k (s, \bfS^{\mathbf{h}^D}(s), \bfS^{\mathbf{b}^D}(s))- G^{(D)}_k (s, \bfS^{\mathbf{h}^D}(s), \bfS^{\mathbf{b}^D}(s))\big\|_{\ndbar}^2\big|\Finit  \big] \\
    +\E^{1/2}\big[\big\| f^{(D)}_k (s, \bfS^{\mathbf{h}^D}(s), \bfS^{\mathbf{b}^D}(s))- F^{(D)}_k (s, \bfS^{\mathbf{h}^D}(s), \bfS^{\mathbf{b}^D}(s))\big\|_{\ndbar}^2\big|\Finit  \big]\bigg).
\end{multline*}
We now use Proposition~\ref{prop:propcaos_Dbar_whp} as in \eqref{eq:proof_cvg_prop_caos_bound_prefinal} to get
\begin{multline*}
    \|\B^{(D)}(s, \bfb^{(D)}) - \B^{(D)}(s, \bfB^{(D)})\|_{\ndbar} \leq \cstW_k \bigg(\big\|h_k^{(D)}(s) - H_k^{(D)}(s)\big\|_{\ndbar} + \big\|b_k^{(D)}(s) - B_k^{(D)}(s)\big\|_{\ndbar}
    \\+ \sum_{i=0}^{k-1} \|h_i^{(D)}(s) - H_i^{(D)}(s)\|_{\ndbar} + \|b_i^{(D)}(s) - B_i^{(D)}(s)\|_{\ndbar} + \sum_{j=0}^{k-1}\llnW_j(s)\bigg).
\end{multline*}
Finally, using our inductive hypothesis \eqref{eq:proof_cvg_inductive_hypothesis} we obtain the desired result
\begin{multline*}
    \|\B^{(D)}(s, \bfb^{(D)}) - \B^{(D)}(s, \bfB^{(D)})\|_{\ndbar} \\\leq \cstW_k \Big(\big\|h_k^{(D)}(s) - H_k^{(D)}(s)\big\|_{\ndbar} + \big\|b_k^{(D)}(s) - B_k^{(D)}(s)\big\|_{\ndbar}
    + \frac{1}{\sqrt{D}} + \sum_{j=0}^{k-1} \int_0^1\llnW_j(u)\d u + \sum_{j=0}^{k-1}\llnW_j(s)\Big).
\end{multline*}
    \end{proof}

    \begin{proof}[Proof of Lemma~\ref{lem:proof_cvg_TCL_B}]
        As in the proof of Lemma~\ref{lem:proof_cvg_TCL_H}, we have that
        \begin{multline*}
            \check{\B}^{(D)}_k ( s, \bfB^{(D)} ) - \check{\overline{\B}}^{(D)}_k (s) =\E[\rho'(P_k^{\mathbf{H}^D, \mathbf{B}^D}(s)) Q_k^{\mathbf{H}^D, \mathbf{B}^D}(s) F_k^{(D)}(s, \bfS^{\mathbf{H}^D}(s), \bfS^{\mathbf{B}^D}(s))|\Finit] \\- \E[\rho'(P_k^{\mathbf{Z}^H, \mathbf{Z}^B}(s)) Q_k^{\mathbf{Z}^H, \mathbf{Z}^B}(s) F_k^{(D)}(s, \mathbf{Z}^{H}(s), \mathbf{Z}^{B}(s))|\Finit],
        \end{multline*}
        where
        \[\begin{cases}
            P_k^{\mathbf{H}^D, \mathbf{B}^D}(s) = \bfS^{\mathbf{H}^D}_k(s) + \big\langle F^{(D)}_k (s, \bfS^{\mathbf{H}^D}(s), \bfS^{\mathbf{B}^D}(s)), H_k^{(D)}(s)\big\rangle_{\ndbar},\\
            P_k^{\mathbf{Z}^H, \mathbf{Z}^B}(s) = \mathbf{Z}^{H}_k(s) + \E\big[ F^1_k (s, \mathbf{Z}^{H}(s), \mathbf{Z}^{B}(s))H_k^{1}(s)\big \vert \mathbf{Z}^{H}, \mathbf{Z}^{B}\big],
        \end{cases}\]
        and $Q_k^{\mathbf{H}^D, \mathbf{B}^D}(s)$, $Q_k^{\mathbf{Z}^H, \mathbf{Z}^B}(s)$ are defined analogously. As in Lemma~\ref{lem:proof_cvg_TCL_H}, we define a $\Finit$-measurable function $\tilde{\varphi}^{(D)}:\R^{k+1}\times\R^{k+1}\to \R^D$, given by
    \[\tilde{\varphi}^{(D)}(\bfz^h, \bfz^b) = \rho'\big(z_k^h + \E[H_k^1(s)F_k^1(s, \bfz^h, \bfz^b)]\big)\, \big(z_k^b + \E[B_k^1(s)G_k^1(s, \bfz^h, \bfz^b)]\big)\,F_k^{(D)}(s, \bfz^h, \bfz^b),\]
    which allows us to write
\begin{multline*}
            \big\|\check{\B}^{(D)}_k ( s, \bfB^{(D)} ) - \check{\overline{\B}}^{(D)}_k (s)\big\|_{\ndbar} \\\leq\bigg\|\E[\rho'(P_k^{\mathbf{H}^D, \mathbf{B}^D}(s)) Q_k^{\mathbf{H}^D, \mathbf{B}^D}(s) F_k^{(D)}(s, \bfS^{\mathbf{H}^D}(s), \bfS^{\mathbf{B}^D}(s))-\tilde{\varphi}^{(D)}(\bfS^{\mathbf{H}^D}, \bfS^{\mathbf{B}^D})|\Finit] \bigg\|_{\ndbar} \\+ \bigg\|\E[\tilde{\varphi}^{(D)}(\bfS^{\mathbf{H}^D}, \bfS^{\mathbf{B}^D})|\Finit] -\E[\tilde{\varphi}^{(D)}(\mathbf{Z}^{H}, \mathbf{Z}^{B})|\Finit]\bigg\|_{\ndbar}\\
            =: \|\blue{(\mathfrak{i})}\|_{\ndbar} + \|\blue{(\mathfrak{ii})}\|_{\ndbar},
        \end{multline*}
        where we identified 
        $\E[\tilde{\varphi}^{(D)}(\mathbf{Z}^{H}, \mathbf{Z}^{B})|\Finit]$ and introduced $\E[\tilde{\varphi}^{(D)}(\bfS^{\mathbf{H}^D}, \bfS^{\mathbf{B}^D})|\Finit]$ as a pivot.

    The term $\|\blue{(\mathfrak{ii})}\|_{\ndbar}$ can be handled using the quantitative CLT from Corollary~\ref{cor:quantitative_noniid_clt_conditional}, exactly as in Lemma~\ref{lem:proof_cvg_TCL_H}. Indeed, from Lemma~\ref{lem:RegSkel}, for $k \in \{ 1,2,3 \}$ and $d\in[1:D]$,
    \[\|D^k\tilde{\varphi}^{d}(\bfz^h, \bfz^b)\|_{op} \leq \cstW_k\max_{t\in[0,1]}(|H_k^{d}(t)| + |B_k^{d}(t)|)\exp(\cstHhat_k\|(\bfz^h, \bfz^b)\|_{2(k+1),2}),\] where $\cstHhat_k>0$ and $\cstW_k$ only depends on $\|W_{\mathrm{in}}^{(D)}\|_{\ndbar}$, $\|W_{\mathrm{out}}^{(D)}\|_{\ndbar}$, as before. As in \eqref{eq:proof_cvg_TCL_bound_ii}, we obtain
    \[\|\blue{(\mathfrak{ii})}\|_{\ndbar}
        \leq \cstWh_k\Big(\frac{1}{\sqrt{D}} + \llnW_k(s)\Big).\]

    To handle $\|\blue{(\mathfrak{i})}\|_{\ndbar}$ we note that
    \begin{multline*}
        \|\blue{(\mathfrak{i})}\|_{\ndbar} \leq \E\big[\big|\rho'(P_k^{\mathbf{H}^D, \mathbf{B}^D}(s)) Q_k^{\mathbf{H}^D, \mathbf{B}^D}(s) -\rho'(\tilde{P}_k^{\mathbf{H}^D, \mathbf{B}^D}(s)) \tilde{Q}_k^{\mathbf{H}^D, \mathbf{B}^D}(s) \big| \big\|F_k^{(D)}(s, \bfS^{\mathbf{H}^D}(s), \bfS^{\mathbf{B}^D}(s)) \big\|_{\ndbar}\big|\Finit\big]\\
        \leq \E\big[\big|\rho'(P_k^{\mathbf{H}^D, \mathbf{B}^D}(s))\big| \big| Q_k^{\mathbf{H}^D, \mathbf{B}^D}(s) - \tilde{Q}_k^{\mathbf{H}^D, \mathbf{B}^D}(s) \big| \big\|F_k^{(D)}(s, \bfS^{\mathbf{H}^D}(s), \bfS^{\mathbf{B}^D}(s)) \big\|_{\ndbar}\big|\Finit\big]\\
        + \E\big[\big|\rho'(P_k^{\mathbf{H}^D, \mathbf{B}^D}(s))  -\rho'(\tilde{P}_k^{\mathbf{H}^D, \mathbf{B}^D}(s))\big| \big|\tilde{Q}_k^{\mathbf{H}^D, \mathbf{B}^D}(s) \big| \big\|F_k^{(D)}(s, \bfS^{\mathbf{H}^D}(s), \bfS^{\mathbf{B}^D}(s)) \big\|_{\ndbar}\big|\Finit\big],
    \end{multline*}
    where we have set, using the skeleton vectors from Definition~\ref{def:linear_skeleton_meanfield} as in \eqref{eq:proof_cvg_TCL_cavity_def_terms_with_v},
    \[\begin{cases}
            \tilde{P}_k^{\mathbf{H}^D, \mathbf{B}^D}(s) = \bfS^{\mathbf{H}^D}_k(s) -\eta_u \vec{\mathbf{v}}_k^F(s, \bfS^{\mathbf{H}^D}(s), \bfS^{\bfB^D}(s))^\top\E\big[H_k^{1}(s) \bfH^1_{\wedge k-1}(s)\big],\\ 
            \tilde{Q}_k^{\mathbf{H}^D, \mathbf{B}^D}(s) = \bfS^{\mathbf{B}^D}_k(s) -\eta_v \vec{\mathbf{v}}_k^G(s, \bfS^{\mathbf{H}^D}(s), \bfS^{\bfB^D}(s))^\top\E\big[B_k^{1}(s) \bfB^1_{\wedge k-1}(s)\big].
        \end{cases}\]
        We also write \(P_k^{\mathbf{H}^D, \mathbf{B}^D}(s)\) and \(Q_k^{\mathbf{H}^D, \mathbf{B}^D}(s)\) in terms of $\vec{\mathbf{v}}_k^F$ and $\vec{\mathbf{v}}_k^G$, respectively, using that \(\big\langle F_k^{(D)}(s, \bfz^h, \bfz^b), H_k^{(D)}(s)\big\rangle_{\ndbar} =  -\eta_u \vec{\mathbf{v}}_k^F(s, \bfz^h, \bfz^b)^\top \Big(\frac{1}{D} \sum_{d=1}^D\bfH^d_{\wedge k-1}(s) H_k^d\Big)\) and similarly for the term \(\big\langle G_k^{(D)}(s, \bfz^h, \bfz^b), B_k^{(D)}(s)\big\rangle_{\ndbar}\).
    Using that $\rho'$ is bounded and Lipschitz, as well as C-S and the $\Finit$-measurability of $(\bfH, \bfB)$, we are reduced to
    \begin{multline*}
        \|\blue{(\mathfrak{i})}\|_{\ndbar}
        \leq C_\rho \E^{1/2}\big[\big| Q_k^{\mathbf{H}^D, \mathbf{B}^D}(s) - \tilde{Q}_k^{\mathbf{H}^D, \mathbf{B}^D}(s) \big|^2 \big|\Finit\big]\E^{1/2}\big[ \big\|F_k^{(D)}(s, \bfS^{\mathbf{H}^D}(s), \bfS^{\mathbf{B}^D}(s)) \big\|_{\ndbar}^2\big|\Finit\big]\\
        + C_\rho \E^{1/2}\big[\big|P_k^{\mathbf{H}^D, \mathbf{B}^D}(s)  -\tilde{P}_k^{\mathbf{H}^D, \mathbf{B}^D}(s)\big|^2 \big|\Finit\big]\E^{1/4}\big[ \big|\tilde{Q}_k^{\mathbf{H}^D, \mathbf{B}^D}(s) \big|^4 \big|\Finit\big]\\\cdot\E^{1/4}\big[ \big\|F_k^{(D)}(s, \bfS^{\mathbf{H}^D}(s), \bfS^{\mathbf{B}^D}(s)) \big\|_{\ndbar}^4\big|\Finit\big]\\
        \leq \cstW_k \E^{1/2}\big[\|\vec{\mathbf{v}}_k^G(s, \bfS^{\mathbf{H}^D}(s), \bfS^{\bfB^D}(s))\|_{2,k}^2\big|\Finit\big]\Big\|\frac{1}{D} \sum_{d=1}^D\bfB^d_{\wedge k-1}(s) B_k^d - \E\big[B_k^{1}(s) \bfB^1_{\wedge k-1}(s)\big]\Big\|_{2,k}\\
        + \cstW_k \E^{1/2}\big[\|\vec{\mathbf{v}}_k^F(s, \bfS^{\mathbf{H}^D}(s), \bfS^{\bfB^D}(s))\|_{2,k}^2\big|\Finit\big]\Big\|\frac{1}{D} \sum_{d=1}^D\bfH^d_{\wedge k-1}(s) H_k^d - \E\big[H_k^{1}(s) \bfH^1_{\wedge k-1}(s)\big]\Big\|_{2,k}\\
        \leq \cstW_k \llnW_k(s),
    \end{multline*}
    where we have identified the subexponential LLN concentration term \(\llnW_k(s)\), 
    and we have absorbed terms into $\cstW_k$ using Lemma~\ref{lem:MF_skeleton_bounded_ndbar} and the same arguments as in \eqref{eq:proof_cvg_prop_caos_bound_P} for bounding \(\E^{1/4}\big[ \big|\tilde{Q}_k^{\mathbf{H}^D, \mathbf{B}^D}(s) \big|^4 \big|\Finit\big]\). The terms involving the skeleton vectors $\vec{\mathbf{v}}_k^F$ and $\vec{\mathbf{v}}_k^G$ are bounded using Lemmas~\ref{lem:explicit_regskel} and \ref{lem:exp_moment_magic} as in \eqref{eq:proof_cvg_prop_caos_bound_P}.

    Gathering all contributions yields
    \begin{equation*}
            \big\|\check{\B}^{(D)}_k ( s, \bfB^{(D)} ) - \check{\overline{\B}}^{(D)}_k (s)\big\|_{\ndbar} \\\leq \cstWh_k \Big(\frac{1}{\sqrt{D}}+\llnW_k(s)\Big), 
        \end{equation*}
        which is precisely what we wanted to obtain.   
        \end{proof}
    
\begin{proof}[Proof of Lemma~\ref{lem:proof_cvg_cavity_TCL_B}]
    We follow the proof of Lemma~\ref{lem:proof_cvg_cavity_TCL_H}. For $d\in[1:D]$ and $s\in[0,1]$,
    \begin{multline*}
        |\hat{\B}^d_k ( s, \bfB^{(D)} ) - \hat{\overline{\B}}^d_k (s)|\\
        = \Big|\E[\rho'(P_k^{\mathbf{H}^D, \mathbf{B}^D}(s))Q_k^{\mathbf{H}^D, \mathbf{B}^D}(s)\sqrt{D}U^d|\Finit]\qquad\qquad\qquad\qquad\qquad\qquad\qquad\qquad\qquad\qquad \\
        - \varu^2\E\big[ \rho'(P_k^{\mathbf{Z}^H, \mathbf{Z}^B}(s)) \E[B_k^1(s) \bnabla_{\bfz^{h}}G_k^1(s, \mathbf{Z}^H(s), \mathbf{Z}^B(s))|\mathbf{Z}^H, \mathbf{Z}^B]\big]\cdot \mathbf{H}_{\wedge k-1}^d(s)\\
         - \varu^2\E\big[ \rho''(P_k^{\mathbf{Z}^H, \mathbf{Z}^B}(s)) Q_k^{\mathbf{Z}^H, \mathbf{Z}^B}(s)\E[H_k^1(s) \bnabla_{\bfz^{h}}F_k^1(s, \mathbf{Z}^H(s), \mathbf{Z}^B(s))|\mathbf{Z}^H, \mathbf{Z}^B]\big]\cdot \mathbf{H}_{\wedge k-1}^d(s)\\
        - \varu^2\E\big[ \rho''(P_k^{\mathbf{Z}^H, \mathbf{Z}^B}(s)) Q_k^{\mathbf{Z}^H, \mathbf{Z}^B}(s)]H_k^d(s)\Big|,
    \end{multline*}
    where we have set $P_k^{\mathbf{H}^D, \mathbf{B}^D}(s)$, $P_k^{\mathbf{Z}^H, \mathbf{Z}^B}(s)$ as in \eqref{eq:proof_cvg_TCL_cavity_definition_P} and $Q_k^{\mathbf{H}^D, \mathbf{B}^D}(s)$, $Q_k^{\mathbf{Z}^H, \mathbf{Z}^B}(s)$
    defined analogously.
    To apply the cavity method, we write the Taylor expansion for the $\Finit$-measurable function $\tilde{\phi}:\R^{k+1}\times\R^{k+1} \to \R$ given by
    \[\tilde{\phi}(\bfz^h, \bfz^b) = \rho'(z_k^h + \langle F^{(D)}_k(s, \bfz^h, \bfz^b), H_k^{(D)}(s)\rangle_{\ndbar})(z_k^b + \langle G^{(D)}_k(s, \bfz^h, \bfz^b), B_k^{(D)}(s)\rangle_{\ndbar}),\]
    which is twice continuously differentiable. Hence, for $\overline{\bfz}^h \in\R^{k+1}$,
    \[\tilde{\phi}(\bfz^h, \bfz^b) = \tilde{\phi}(\overline{\bfz}^h, \bfz^b) + \bnabla_{\bfz^h}\tilde{\phi}(\overline{\bfz}^h, \bfz^b)\cdot(\bfz^h-\overline{\bfz}^h) + \tilde{r}(\bfz^h, \overline{\bfz}^h, \bfz^b)\]
    where \(\tilde{r}(\bfz^h, \overline{\bfz}^h, \bfz^b) = \int_0^1(1-t) \bnabla_{\bfz^h}^2\tilde{\phi}(t\bfz^h + (1-t)\overline{\bfz}^h, \bfz^b) [\bfz^h -\overline{\bfz}^h]^{\otimes2}\d t\) is the integral remainder. 
    We explicitly compute the gradient and hessian of $\tilde{\phi}$:
    \begin{align*}
        \bnabla_{\bfz^h}\tilde{\phi}(\bfz^h, \bfz^b) = \rho''(\tilde{p}_k(s, \bfz^h, \bfz^b))\tilde{q}_k(s, \bfz^h, \bfz^b)\nabla_{\bfz^h} \tilde{p}_k(s, \bfz^h, \bfz^b) + \rho'(\tilde{p}_k(s, \bfz^h, \bfz^b))\nabla_{\bfz^h} \tilde{q}_k(s, \bfz^h, \bfz^b), 
    \end{align*}
    and, underlying evaluation at $(s, \bfz^h, \bfz^b)$, 
    \begin{align*}
    \bnabla_{\bfz^h}^2\tilde{\phi}(\bfz^h, \bfz^b) &=
        \rho'''(\tilde{p}_k)\, \tilde{q}_k\,
\nabla_{\bfz^h} \tilde{p}_k (\nabla_{\bfz^h} \tilde{p}_k)^{\top}
+
\rho''(\tilde{p}_k)\Big(
\nabla_{\bfz^h} \tilde{p}_k (\nabla_{\bfz^h} \tilde{q}_k)^{\top}
+
\nabla_{\bfz^h} \tilde{q}_k (\nabla_{\bfz^h} \tilde{p}_k)^{\top}
\Big)
\\
&+
\rho''(\tilde{p}_k)\, \tilde{q}_k\, \nabla_{\bfz^h}^2 \tilde{p}_k
+
\rho'(\tilde{p}_k)\, \nabla_{\bfz^h}^2 \tilde{q}_k 
    \end{align*}
    where we used the shorthand $\tilde{p}_k(s, \bfz^h, \bfz^b) = z_k^h + \langle F^{(D)}_k(s, \bfz^h, \bfz^b), H_k^{(D)}(s)\rangle_{\ndbar}$ and $\tilde{q}_k(s, \bfz^h, \bfz^b) = z_k^b + \langle G^{(D)}_k(s, \bfz^h, \bfz^b), B_k^{(D)}(s)\rangle_{\ndbar}$, as well as their gradients \[\footnotesize\nabla_{\bfz^h} \tilde{p}_k(s, \bfz^h, \bfz^b) =\begin{pmatrix}
            \frac{1}{D} \bnabla_{\bfz^h}F^{(D)}_k(s, \bfz^h, \bfz^b) H_k^{(D)}(s)\\
            1
        \end{pmatrix} \text{, } \nabla_{\bfz^h} \tilde{q}_k(s, \bfz^h, \bfz^b) =\begin{pmatrix}
            \frac{1}{D} \bnabla_{\bfz^h}G^{(D)}_k(s, \bfz^h, \bfz^b) B_k^{(D)}(s)\\
            0
        \end{pmatrix},\]
    and their Hessians
    \begin{align*}
        \scriptsize\nabla^2_{\bfz^h} \tilde{p}_k(s, \bfz^h, \bfz^b) =\begin{pmatrix}
            \frac{1}{D} \bnabla^2_{\bfz^h}F^{(D)}_k(s, \bfz^h, \bfz^b) H_k^{(D)}(s) & 0\\
            0 &0
        \end{pmatrix} \text{, } \nabla_{\bfz^h}^2 \tilde{q}_k(s, \bfz^h, \bfz^b) =\begin{pmatrix}
            \frac{1}{D} \bnabla^2_{\bfz^h}G^{(D)}_k(s, \bfz^h, \bfz^b) B_k^{(D)}(s) & 0\\
            0 & 0
        \end{pmatrix}.
    \end{align*}
    Since \[\E\big[\rho'(P_k^{\mathbf{H}^D, \mathbf{B}^D}(s))Q_k^{\mathbf{H}^D, \mathbf{B}^D}(s)\sqrt{D}U^d|\Finit\big] = \E\big[\tilde{\phi}(\bfS^{\mathbf{H}^D}, \bfS^{\mathbf{B}^D})\sqrt{D}U^d|\Finit\big],\] 
    we can perform a Taylor expansion around \(\overline{\bfS}^{d,\mathbf{H}^D} = \bfS^{\mathbf{H}^D}-\frac{1}{\sqrt{D}}U^d\mathbf{H}_{\wedge k}^d(s) \in \R^{k+1}\), including $\bfH$ up to time $k$. This yields
    \begin{align*}
        \E[\tilde{\phi}(\bfS^{\mathbf{H}^D}, \bfS^{\mathbf{B}^D})\sqrt{D}U^d|\Finit] &= \E[\tilde{\phi}(\overline{\bfS}^{d,\mathbf{H}^D},\bfS^{\mathbf{B}^D})\sqrt{D}U^d|\Finit]\\&\qquad+\E[\bnabla_{\bfz^h}\tilde{\phi}(\overline{\bfS}^{d,\mathbf{H}^D},\bfS^{\mathbf{B}^D})\cdot\big(\frac{1}{\sqrt{D}}U^d\mathbf{H}_{\wedge k}^d(s)\big)\sqrt{D}U^d|\Finit]\\&\qquad + \E[\tilde{r}(\bfS^{\mathbf{H}^D}, \overline{\bfS}^{d,\mathbf{H}^D}, \bfS^{\mathbf{B}^D})\sqrt{D}U^d|\Finit]\\
        &=: \purple{(\mathfrak{i})_d} + \purple{(\mathfrak{ii})_d} + \purple{(\mathfrak{iii})_d}.
    \end{align*}
    Since $U^d$ is independent from $(\overline{\bfS}^{d,\mathbf{H}^D}, \bfS^{\mathbf{B}^D})$, even conditionally on $\Finit$, we have \[\purple{(\mathfrak{i})_d}=0,\;\;\text{ and }\;\;\purple{(\mathfrak{ii})_d}=\varu^2\E[\bnabla_{\bfz^h}\tilde{\phi}(\overline{\bfS}^{d,\mathbf{H}^D}, \bfS^{\mathbf{B}^D})|\Finit]\cdot \mathbf{H}_{\wedge k}^d(s).\] 
    The remainder term $\purple{(\mathfrak{iii})_d}$ is bounded exactly as in \eqref{eq:proof_cvg_TCL_cavity_iii}, yielding 
    \[|\purple{(\mathfrak{iii})_d}|\leq \frac{\cstW_k}{\sqrt{D}}\big\|\bfH_{\wedge k-1}^d(s)\big\|_{2,k}^2.\]
    Gathering terms, we have
    \begin{multline*}
        |\hat{\B}^d_k ( s ) - \hat{\overline{\B}}^d_k (s)|= \big|\purple{(\mathfrak{iii})_d}\big| +\big|\purple{(\mathfrak{ii})_d}  \\- \varu^2\E\big[ \rho'(P_k^{\mathbf{Z}^H, \mathbf{Z}^B}(s)) \E[B_k^1(s) \bnabla_{\bfz^{h}}G_k^1(s, \mathbf{Z}^H(s), \mathbf{Z}^B(s))|\mathbf{Z}^H, \mathbf{Z}^B]\big]\cdot \mathbf{H}_{\wedge k-1}^d(s)\\
         - \varu^2\E\big[ \rho''(P_k^{\mathbf{Z}^H, \mathbf{Z}^B}(s)) Q_k^{\mathbf{Z}^H, \mathbf{Z}^B}(s)\E[H_k^1(s) \bnabla_{\bfz^{h}}F_k^1(s, \mathbf{Z}^H(s), \mathbf{Z}^B(s))|\mathbf{Z}^H, \mathbf{Z}^B]\big]\cdot \mathbf{H}_{\wedge k-1}^d(s) \\- \varu^2\E\big[ \rho''(P_k^{\mathbf{Z}^H, \mathbf{Z}^B}(s)) Q_k^{\mathbf{Z}^H, \mathbf{Z}^B}(s)]H_k^d(s)
        \big|,
    \end{multline*}
    and so, by C-S,
    \begin{multline*}
        |\hat{\B}^d_k ( s ) - \hat{\overline{\B}}^d_k (s)|\leq \varu^2\big\|\E[\bnabla_{\bfz^h}\tilde{\phi}(\overline{\bfS}^{d,\mathbf{H}^D}, \bfS^{\mathbf{B}^D})|\Finit] - \E[\tilde{\psi}(\mathbf{Z}^H(s), \mathbf{Z}^B(s))]\big\|_{2,k+1}\|\bfH_{\wedge k}^d(s)\|_{2,k+1} + |\purple{(\mathfrak{iii})_d}|,
    \end{multline*}
    where, following the proof of Lemma~\ref{lem:proof_cvg_cavity_TCL_H}, we introduce the function $\tilde{\psi}:\R^{k+1}\times\R^{k+1}\to\R^{k+1}$, given by
    \begin{multline*}
    \tilde{\psi}(\bfz^h, \bfz^b) =\rho''(z_k^h + \E[F_k^1(s, \bfz^h, \bfz^b)H_k^1(s)])(z_k^b + \E[G_k^1(s, \bfz^h, \bfz^b)B_k^1(s)])\begin{pmatrix}
    \E[H_k^1(s)\bnabla_{\bfz^h}F_k^1(s, \bfz^h, \bfz^b)]\\
            1
        \end{pmatrix}\\
        +\rho'(z_k^h + \E[F_k^1(s, \bfz^h, \bfz^b)H_k^1(s)])\begin{pmatrix}
    \E[B_k^1(s)\bnabla_{\bfz^h}G_k^1(s, \bfz^h, \bfz^b)]\\
            0
        \end{pmatrix}.
    \end{multline*}
    Taking the RMS norm, as in \eqref{eq:proof_cvg_TCL_cavity_bound_RMS}, we obtain
    \begin{multline*}
        \Big\|\hat{\B}^{(D)}_k ( s, \bfB^{(D)}  ) - \hat{\overline{\B}}^{(D)}_k (s)\Big\|_{\ndbar}
        \leq \cstW_k\Big\|\E[\bnabla_{\bfz^h}\tilde{\phi}(\overline{\bfS}^{d,\mathbf{H}^D}(s), \bfS^{\mathbf{B}^D}(s))|\Finit] - \E[\tilde{\psi}(\mathbf{Z}^H(s), \mathbf{Z}^B(s))]\Big\|_{2,k+1}\\+ \frac{\cstWh_k}{\sqrt{D}},
    \end{multline*}
    with $\cstWh_k$ a constant depending further on the normalized 4-norm of $W_{\mathrm{in}}^{(D)}$ and $W_{\mathrm{out}}^{(D)}$.
    By further introducing the pivot term \(\E[\tilde{\psi}(\overline{\bfS}^{d,\mathbf{H}^D}(s),\bfS^{\mathbf{B}^D}(s))\big|\Finit]\), we are reduced to studying:
    \begin{itemize}
        \item A term \(\| \E[\tilde{\psi}(\overline{\bfS}^{d,\mathbf{H}^D}(s),\bfS^{\mathbf{B}^D}(s))\big|\Finit] - \E[\tilde{\psi}(\mathbf{Z}^H(s), \mathbf{Z}^B(s))]\|_{k+1,2}\) that is
        handled using the quantitative CLT result from Corollary~\ref{cor:quantitative_noniid_clt_conditional}. Indeed, \(\tilde{\psi}\) has $\cstHhat_k$-exponentially growing derivatives of orders up to 3, and so, for a constant $\cstWh_k>0$ depending only on $\cstHhat_k, k, \vpu, \vpv$ $\|W_{\mathrm{in}}^{(D)}\|_{\ndbar}$, $\|W_{\mathrm{out}}^{(D)}\|_{\ndbar}$, \(\|W_{\mathrm{in}}^{(D)}\|_{\overline{3}, D}\) and \(\|W_{\mathrm{out}}^{(D)}\|_{\overline{3}, D}\), we have
\[\| \E[\tilde{\psi}(\overline{\bfS}^{d,\mathbf{H}^D}(s),\bfS^{\mathbf{B}^D}(s))\big|\Finit] - \E[\tilde{\psi}(\mathbf{Z}^H(s), \mathbf{Z}^B(s))]\|_{k+1,2} \leq \cstWh_k\Big(\frac{1}{\sqrt{D}} + \llnW_k(s)\Big).\]
        \item A term \(\big\| \E\big[\bnabla_{\bfz^h}\tilde{\phi}(\overline{\bfS}^{d,\mathbf{H}^D}, \bfS^{\mathbf{B}^D}) -\tilde{\psi}(\overline{\bfS}^{d,\mathbf{H}^D}(s),\bfS^{\mathbf{B}^D}(s))\big|\Finit\big]\big\|_{k+1,2}\) that can be rearranged, introducing pivots as in \eqref{eq:proof_cvg_TCL_cavity_def_terms_with_v}, involving the skeleton vectors $\vec{\mathbf{v}}_k^F$ and $\vec{\mathbf{v}}_k^G$ from Definition~\ref{def:linear_skeleton_meanfield}, to yield a subexponential LLN concentration term as in \eqref{eq:proof_cvg_TCL_cavity_bound_llnTerm}, so that
        \[\big\| \E\big[\bnabla_{\bfz^h}\tilde{\phi}(\overline{\bfS}^{d,\mathbf{H}^D}, \bfS^{\mathbf{B}^D}) -\tilde{\psi}(\overline{\bfS}^{d,\mathbf{H}^D}(s),\bfS^{\mathbf{B}^D}(s))\big|\Finit\big]\big\|_{k+1,2}
            \leq\cstW_k \llnW_k(s),\]
        where, as in \eqref{eq:proof_cvg_TCL_cavity_bound_llnTerm}, we absorb terms into $\cstW_k$ using Lemmas~\ref{lem:explicit_regskel} and \ref{lem:MF_skeleton_bounded_ndbar}. 
    \end{itemize}
    Gathering all terms, we finally obtain
    \begin{equation*}
        \big\|\hat{\B}^{(D)}_k ( s, \bfB^{(D)}  ) - \hat{\overline{\B}}^{(D)}_k (s)\big\|_{\ndbar}
        \leq 
        \cstWh_k \Big( \llnW_k(s) +\frac{1}{\sqrt{D}}\Big),
    \end{equation*}
    which concludes.
    \end{proof}

\appendix

\section{Quantitative CLT}\label{sec:quantitative_clt}

In this section we prove the following quantitative CLT result, using standard arguments.
First recall the following elementary proposition involving subgaussian random vectors. Recall the variance-proxy seminorm for a random vector $X$ in $\R^d$:
\[\|X\|_{vp} := \inf\left\{s>0: \forall u\in \R^d,\;\E\left[e^{u^{\top(X-\E[X])}}\right]\leq e^{\frac{s^2\|u\|^2}{2}}\right\},\]
and recall that we say that $X$ is a $\sigma^2$-subgaussian random vector if $\|X\|_{vp}\leq \sigma <\infty$. 
The following properties are standard and straightforward, see e.g. \cite{vershynin2018highdimensionalproba}.

\begin{proposition}\label{prop:Gaussian_is_subgaussian}
    If $Z \sim \N(m, \Sigma)$ is a Gaussian random vector, it is also $\|\Sigma\|_{op}$-subgaussian.
\end{proposition}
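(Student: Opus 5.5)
The statement to prove is Proposition~\ref{prop:Gaussian_is_subgaussian}: a Gaussian vector $Z\sim\N(m,\Sigma)$ is subgaussian with variance proxy governed by $\|\Sigma\|_{op}$. Since the variance-proxy seminorm is defined through the centered moment generating function, I would compute that function exactly.

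\textbf{Step 1: exact centered MGF.} Fix $u\in\R^d$. The scalar $u^\top(Z-m)$ is a centered one-dimensional Gaussian with variance $u^\top\Sigma u$. Hence, by the Gaussian moment generating function,
\[\E\big[e^{u^\top(Z-\E[Z])}\big]=\exp\Big(\tfrac12 u^\top\Sigma u\Big).\]

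\textbf{Step 2: operator-norm bound.} Since $\Sigma$ is symmetric positive semidefinite, $u^\top\Sigma u\le \|\Sigma\|_{op}\|u\|^2$. Combining this with Step 1 gives
\[\E\big[e^{u^\top(Z-\E[Z])}\big]\le e^{\|\Sigma\|_{op}\|u\|^2/2}\quad\text{for all }u.\]

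\textbf{Step 3: conclude and fix the normalization.} The bound of Step 2 shows that $s=\|\Sigma\|_{op}^{1/2}$ is admissible in the infimum defining $\|Z\|_{vp}$, so $\|Z\|_{vp}\le\|\Sigma\|_{op}^{1/2}$. In other words, $Z$ is $\|\Sigma\|_{op}$-subgaussian in the paper's convention, where "$\sigma^2$-subgaussian" means $\|Z\|_{vp}\le\sigma$.

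There is no real obstacle. The only point needing care is the normalization in Step 3: the variance proxy is $\|\Sigma\|_{op}$, while the seminorm bound is its square root $\|\Sigma\|_{op}^{1/2}$.
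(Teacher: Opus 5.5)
Your argument is correct and complete. Combining the exact centered MGF $\exp(\tfrac12 u^\top\Sigma u)$ with $u^\top\Sigma u\le\|\Sigma\|_{op}\|u\|^2$ gives $\|Z\|_{vp}\le\|\Sigma\|_{op}^{1/2}$, which is exactly the paper's meaning of $\|\Sigma\|_{op}$-subgaussian; this is also how the result is used for $W_i\sim\N(0,\Sigma_i^X)$ in the proof of Theorem~\ref{thm:quantitative_noniid_clt}. The paper gives no proof of its own, calling the fact standard and citing \cite{vershynin2018highdimensionalproba}, and your MGF computation is precisely that standard argument.
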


\begin{proposition}\label{prop:sum_independent_subgaussians}
If $(X_i)_{i\geq 1}$ are independent, $\sigma_i^2$-subgaussian random vectors in $\R^d$, then:
\[S := \sum_{i=1}^n X_i \;\;\text{ is a }\sum_{i=1}^n \sigma^2_i \text{-subgaussian random vector.}\]
\end{proposition}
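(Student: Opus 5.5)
We prove Proposition~\ref{prop:sum_independent_subgaussians} directly from the definition of the variance-proxy seminorm, via the moment generating function. Set $Y_i := X_i - \E[X_i]$. Then $S - \E[S] = \sum_{i=1}^n Y_i$, and the $Y_i$ are independent and centered. By definition of $\|\cdot\|_{vp}$, for each $i$ and every $u\in\R^d$ we have $\E[e^{u^\top Y_i}] \le e^{\sigma_i^2\|u\|^2/2}$.

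The definition is an infimum, so we first check that the bound actually holds at $s=\sigma_i$. If it holds for every $s>\|X_i\|_{vp}$, then letting $s\downarrow\|X_i\|_{vp}$ and using continuity of the right-hand side in $s$ gives it at the infimum. Since $\|X_i\|_{vp}\le\sigma_i$, the bound then holds at $s=\sigma_i$ as well.

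Fix $u\in\R^d$. By independence, the expectation of the product factorizes:
\[
\E\Big[e^{u^\top (S-\E[S])}\Big] = \prod_{i=1}^n \E\big[e^{u^\top Y_i}\big] \le \prod_{i=1}^n e^{\sigma_i^2\|u\|^2/2} = e^{(\sum_i \sigma_i^2)\|u\|^2/2}.
\]
Taking $s^2=\sum_i\sigma_i^2$ in the definition gives $\|S\|_{vp}^2\le\sum_{i=1}^n\sigma_i^2$. This is exactly the statement that $S$ is $\sum_i\sigma_i^2$-subgaussian.

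The argument is routine. The only points needing care are the factorization under independence, which holds because the exponentials are nonnegative so Tonelli applies even without prior integrability, and the infimum/limit remark above. No earlier result of the paper is required beyond the definition of $\|\cdot\|_{vp}$.
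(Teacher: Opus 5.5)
Your proof is correct. The MGF factorization under independence, together with your check that the defining inequality holds at the infimum (and hence at $s=\sigma_i$), is exactly the standard argument. The paper gives no proof of its own; it simply cites this as a standard fact from Vershynin's textbook, which uses the same argument.
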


\begin{proposition}\label{prop:exp_moment_bound_subgaussian}
    Let $X$ be a $\sigma^2$-subgaussian random vector in $\R^d$. There exists an absolute constant $c >0$ such that,
    \[ \forall t \in \R, \qquad \E[\exp(t\|X\|_2)] \leq c(1\vee\sigma t) 4^d e^{2\sigma^2t^2},\]
    where $\|\cdot\|_2$ denotes the Euclidean norm in $\R^d$.
\end{proposition}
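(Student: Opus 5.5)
The statement to prove is Proposition~\ref{prop:exp_moment_bound_subgaussian}. My plan is to reduce the norm to finitely many one-dimensional projections with an $\varepsilon$-net argument, and then integrate the tail.

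\textbf{Step 1 (net reduction).} Let $\mathcal{N}$ be a $1/2$-net of the unit sphere $S^{d-1}$. A standard volumetric argument gives one with $|\mathcal{N}|\le 5^d$; to match the $4^d$ in the statement, use a slightly finer packing estimate, or simply absorb the discrepancy into $c$. The usual net argument then gives
\[\|X-\E X\|_2\le 2\max_{u\in\mathcal{N}} u^\top(X-\E X).\]

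\textbf{Step 2 (directional bounds).} Fix $u\in\mathcal N$. By the definition of $\|X\|_{vp}$, the scalar $u^\top(X-\E X)$ has moment generating function bounded by $e^{\sigma^2\lambda^2/2}$. The Chernoff bound then yields the tail
\[\P\big(u^\top(X-\E X)>r\big)\le e^{-r^2/(2\sigma^2)}.\]
A union bound over $\mathcal N$ gives
\[\P\big(\|X-\E X\|_2>2r\big)\le |\mathcal N|\,e^{-r^2/(2\sigma^2)}.\]
Alternatively, one can bound the moment generating function of the maximum directly:
\[\E\big[e^{2t\max_u u^\top(X-\E X)}\big]\le\sum_{u\in\mathcal N}\E\big[e^{2t\,u^\top(X-\E X)}\big]\le|\mathcal N|\,e^{2\sigma^2t^2}.\]
This produces exactly the factor $e^{2\sigma^2t^2}$ in the statement.

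\textbf{Step 3 (the mean).} The centred version is now controlled. If $X$ is centred, as it is in every application in the paper, we are done. Otherwise we need a bound on the mean, $\|\E X\|_2$. The variance-proxy seminorm gives no information about the mean, so in the non-centred case I would state the result for $X-\E X$. The factor $(1\vee\sigma t)$ presumably comes from a tail-integration route rather than the moment generating function route.

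\textbf{Step 4 (assembly, if using tails).} If one instead integrates the tail from Step 2, write
\[\E e^{tY}=1+\int_0^\infty te^{ty}\,\P(Y>y)\,dy.\]
Substituting the Gaussian tail $|\mathcal N|e^{-y^2/(8\sigma^2)}$ and completing the square gives a bound of the form
\[|\mathcal N|\big(1+c\,\sigma t\big)e^{2\sigma^2t^2}.\]
This is where the factor $(1\vee\sigma t)$ arises.

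\textbf{Expected obstacle.} The delicate point is only bookkeeping: matching the exact constants ($4^d$ and $2\sigma^2$). The moment generating function approach in Step 2 already gives $e^{2\sigma^2t^2}$ with the cardinality of the net as the prefactor, so any constant mismatch can be absorbed into the absolute constant $c$.
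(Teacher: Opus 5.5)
\textbf{Verdict.} For centred $X$ your proof goes through with $5^d$ in place of $4^d$. The claim that this mismatch can be absorbed into $c$ is false, so as written you do not prove the stated constant.

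\textbf{What is right.} Your net-plus-moment-generating-function argument is the standard route; the paper gives no proof, only a pointer to Vershynin. Your centring remark is correct and worth making explicit. With the paper's definition of $\|X\|_{vp}$, a deterministic $X=m\neq 0$ has $\|X\|_{vp}=0$. It is therefore $\sigma^2$-subgaussian for every $\sigma$, while $\E[e^{t\|X\|_2}]=e^{t\|m\|_2}$ is unbounded in $m$. So the proposition is false as stated for non-centred vectors. Every use of it in the appendix ($X_i$, $W_i$, $U_i$, $G_t$) involves centred vectors, so nothing breaks. The case $t\le 0$ is trivial, since the left-hand side is then at most $1$.

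\textbf{The gap: constant bookkeeping.} Your argument yields $\E[e^{t\|X\|_2}]\le 5^d e^{2\sigma^2t^2}$. The ratio $(5/4)^d$ depends on $d$, so it cannot be absorbed into an \emph{absolute} constant $c$. The ``slightly finer packing estimate'' does not repair this either. The sharp cap-packing bound, using disjoint caps of angular radius $\arcsin(1/4)$, only certifies a $1/2$-net of size of order $\sqrt{d}\,4^d$. Reaching $c\,4^d$ needs a genuine covering estimate for the sphere, such as a random-covering or Rogers-type argument, which your sketch does not supply. For the paper's purposes this is harmless, since the constants in the quantitative CLT may depend on $d$. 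It does, however, leave the statement as written unproved.

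\textbf{A net-free fix.} A short argument gives the stated bound, and in fact a stronger one. Take $X$ centred, $t\ge 0$, $\sigma>0$ and $0<\lambda<\sigma^{-2}$.
\begin{itemize}
\item Write $e^{\lambda\|x\|_2^2/2}=\E_g[e^{\sqrt{\lambda}\,g^\top x}]$ with $g\sim\N(0,I_d)$ independent of $X$.
\item Tonelli and the subgaussian bound, applied with $u=\sqrt{\lambda}\,g$, give $\E[e^{\lambda\|X\|_2^2/2}]\le\E_g[e^{\lambda\sigma^2\|g\|_2^2/2}]=(1-\lambda\sigma^2)^{-d/2}$.
\item Combine this with $t\|X\|_2\le\frac{\lambda}{2}\|X\|_2^2+\frac{t^2}{2\lambda}$ and choose $\lambda=1/(4\sigma^2)$.
\end{itemize}
The result is $\E[e^{t\|X\|_2}]\le (4/3)^{d/2}e^{2\sigma^2t^2}$. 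This is the proposition with $c=1$, with $(4/3)^{d/2}\le 4^d$ replacing $4^d$, and without the factor $1\vee\sigma t$.
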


\begin{proposition}\label{prop:moment_bound_subgaussian}
    Let $X$ be a $\sigma^2$-subgaussian random vector in $\R^d$. For any $p\geq 1$, there exists $C_p >0$ that only depends on $p$ such that
    \[\E^{1/p}[\|X\|^p_2] \leq C_p\sqrt{d}\sigma,\]
    where $\|\cdot\|_2$ denotes the Euclidean norm in $\R^d$
\end{proposition}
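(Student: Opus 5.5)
The statement to prove is Proposition~\ref{prop:moment_bound_subgaussian}: for a $\sigma^2$-subgaussian vector $X$ in $\R^d$ and $p\geq 1$, $\E^{1/p}[\|X\|_2^p]\leq C_p\sqrt d\,\sigma$. I read the variance-proxy seminorm as controlling the centered vector $X-\E[X]$; the statement is then meant for centered $X$, which is how it is used in the paper (e.g.\ for $V^d$). So I would prove the bound for $X$ centered, or equivalently for $X-\E[X]$.

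\textbf{Step 1 (reduction to coordinates).} By the triangle inequality in $L^p$ (Minkowski, $p\geq1$), $\E^{1/p}[\|X\|_2^p]\leq \E^{1/p}[\|X\|_1^p]\leq \sum_{j=1}^d \E^{1/p}[|X_j|^p]$. This would give a factor $d$, which is too crude. Instead, for $p\geq2$ I would use $\|X\|_2^2=\sum_j X_j^2$ and Minkowski in $L^{p/2}$:
\[
\E^{2/p}[\|X\|_2^p]\leq \sum_{j=1}^d \E^{2/p}[|X_j|^p].
\]
So it suffices to bound each coordinate moment by $C_p\sigma$; the result then gives $\E^{1/p}[\|X\|_2^p]\leq C_p\sqrt d\,\sigma$. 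The case $p\in[1,2)$ follows from $p=2$ by Jensen.

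\textbf{Step 2 (one-dimensional moments).} Testing the definition with $u=\lambda e_j$ shows that each $X_j$ satisfies $\E[e^{\lambda X_j}]\leq e^{\sigma^2\lambda^2/2}$. Chernoff's bound applied to $\pm X_j$ then gives $\P(|X_j|\geq t)\leq 2e^{-t^2/(2\sigma^2)}$. Integrating the tail,
\[
\E[|X_j|^p]=\int_0^\infty p t^{p-1}\P(|X_j|\geq t)\,\d t\leq 2p\,(2\sigma^2)^{p/2}\,\Gamma(p/2)/2,
\]
which is $C_p^p\sigma^p$ with $C_p\sim\sqrt p$.

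\textbf{Step 3 (assembly).} Combining Steps 1 and 2 gives $\E^{2/p}[\|X\|_2^p]\leq d\,C_p^2\sigma^2$, hence the claim. There is no real obstacle here. The only points needing care are the correct use of the centering convention, and using Minkowski in $L^{p/2}$ rather than in $L^p$, so as to obtain $\sqrt d$ and not $d$.
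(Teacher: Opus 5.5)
Your proof is correct. The paper gives no argument of its own for this proposition: it lists it among ``standard and straightforward'' facts and points to \cite{vershynin2018highdimensionalproba}. What you wrote is the standard argument. You test the definition with $u=\lambda e_j$ to get the coordinate MGF bound, derive the Chernoff tails $\P(|X_j|\ge t)\le 2e^{-t^2/(2\sigma^2)}$, and integrate them to get $\E[|X_j|^p]\le p\,\Gamma(p/2)\,(2\sigma^2)^{p/2}$. Using Minkowski in $L^{p/2}$ rather than $L^p$ then turns the $d$ coordinates into a factor $\sqrt d$ instead of $d$. The constants check out: $C_p=\sqrt2\,(p\,\Gamma(p/2))^{1/p}\sim\sqrt p$ for $p\ge 2$, and $C_p=C_2$ for $p\in[1,2)$ by Jensen.

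Your centering caveat is a genuine correction to the statement, not just a choice of convention. The seminorm $\|\cdot\|_{vp}$ only controls $X-\E[X]$, so a nonzero deterministic vector has $\|X\|_{vp}=0$, and the bound as literally stated fails for it. For non-centered $X$ one only gets $\E^{1/p}[\|X\|_2^p]\le \|\E[X]\|_2+C_p\sqrt d\,\sigma$. Every place the paper invokes the proposition involves a centered vector: $V^d$ in the cavity remainder, and $X_i=Y_iv_i$ and $W_i$ in the Lindeberg step. So the centered version you prove is exactly what is needed.
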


We now state and prove the main result of this section. 

\begin{theorem}\label{thm:quantitative_noniid_clt}
  Let $(Y_i)_{i\geq 1}$ be an i.i.d.~sequence of centred, $\sigma^2$-subgaussian real-valued random variables, and $(v_i)_{i\geq 1} \subseteq \R^d$ be a fixed sequence of deterministic vectors in $\R^d$, from which we define the 
  random vectors $X_i := Y_iv_i$, $i\geq 1$. 
  Let $f\in C^3(\R^d,\R)$ be such that there exists $\alpha >0$ satisfying 
  \[ \forall k \in \{1,2,3\}, \qquad M_k :=\sup_{x\in\R^d} \frac{\| D^kf(x)\|_{op}}{\exp(\alpha\|x\|)} <\infty. \]
  Consider a Gaussian random variable $Z \sim \N(0, \Sigma)$,
  independent from $(X_i)_{i\geq 1}$, with fixed covariance matrix $\Sigma$. Finally, let $n\geq 1$, and consider the partial sums \(S_n = \frac{1}{\sqrt{n}}\sum_{i=1}^n X_i\), which have covariance matrix $\Sigma_n^S := \mathrm{Cov} (S_n)$. Also, define the deterministic constant
  \[B_n := \Big(\frac{1}{n} \sum_{i=1}^n\|v_i\|^2\Big)^{1/2} \vee \Big(\frac{1}{n} \sum_{i=1}^n\|v_i\|^3\Big)^{1/3}.\]
  
  There exists a constant $C_n>0$ that only depends on $\alpha, d, \sigma, B_n$ and $\|\Sigma\|_{op}$ such that
  \[|\E[f(S_n)] - \E[f(Z)]| \leq C_n\left[ \frac{M_3}{\sqrt{n}} + M_2\| \Sigma_n^S - \Sigma\|_{op}\right]. \]
  Note that $C_n$ only depends on $n$ through $B_n$.\footnote{In particular, if $\sup_{n\geq 1} B_n \leq B$ for some fixed $B>0$, then $C>0$ can be taken independent of $n$.}
\end{theorem}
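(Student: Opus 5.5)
The plan is to split the error into a Lindeberg replacement error plus a Gaussian covariance mismatch, via the triangle inequality with the pivot $Z_n \sim \N(0,\Sigma_n^S)$, taken independent of everything else:
\[
|\E[f(S_n)]-\E[f(Z)]| \le |\E[f(S_n)]-\E[f(Z_n)]| + |\E[f(Z_n)]-\E[f(Z)]|.
\]
Since $\Sigma_n^S = \frac{\sigma_Y^2}{n}\sum_i v_iv_i^\top$, where $\sigma_Y^2=\E[Y_1^2]\le\sigma^2$, we have $\|\Sigma_n^S\|_{op}\le \sigma^2 B_n^2$. It follows that all Gaussians appearing below are subgaussian with proxy controlled by $\sigma,B_n,\|\Sigma\|_{op}$ (Proposition~\ref{prop:Gaussian_is_subgaussian}), so their exponential moments are bounded by Proposition~\ref{prop:exp_moment_bound_subgaussian}.

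\textbf{Step 1: Lindeberg swapping.} Write $Z_n = \frac{1}{\sqrt n}\sum_i G_i v_i$ with $G_i\sim\N(0,\sigma_Y^2)$ i.i.d., independent of the $Y_i$. For $i\in[0:n]$ define the hybrid sums $T_i := \frac{1}{\sqrt n}\big(\sum_{j\le i} G_jv_j + \sum_{j>i} Y_jv_j\big)$ and telescope. At step $i$, set $W_i := T_i - G_iv_i/\sqrt n$, which is independent of $(Y_i,G_i)$. Taylor-expand $f$ to third order around $W_i$. The first- and second-order terms cancel because $Y_i$ and $G_i$ share their first two moments and both are independent of $W_i$. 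The remainder is bounded by
\[
\frac{\|v_i\|^3}{6\,n^{3/2}}\,\E\Big[\sup_{t\in[0,1]}\|D^3f(W_i+t\xi)\|_{op}\,|\xi|^3\Big], \qquad \xi\in\{Y_i,G_i\}.
\]
Using the growth bound $M_3 e^{\alpha\|x\|}$ and Cauchy--Schwarz, this is at most
\[
M_3\, \E^{1/2}\big[e^{2\alpha\|W_i\|}\big]\; \E^{1/4}\big[e^{4\alpha\|v_i\||\xi|/\sqrt n}\big]\; \E^{1/4}\big[|\xi|^{12}\big].
\]

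\textbf{Main obstacle: exponential moments uniform in $i$.} This is where I expect the real difficulty. For $W_i$, Proposition~\ref{prop:sum_independent_subgaussians} shows it is $\frac{\sigma^2}{n}\sum_j\|v_j\|^2\le \sigma^2B_n^2$-subgaussian, so Proposition~\ref{prop:exp_moment_bound_subgaussian} bounds its exponential moment in terms of $\alpha,d,\sigma,B_n$. The factor $e^{4\alpha\|v_i\||\xi|/\sqrt n}$ is more delicate, since an individual $\|v_i\|/\sqrt n$ is only controlled through $\|v_i\|^2\le nB_n^2$, i.e.\ $\|v_i\|/\sqrt n\le B_n$. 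That bound suffices: the factor becomes $e^{4\alpha B_n|\xi|}$, whose expectation is bounded by subgaussianity of $\xi$. Summing over $i$ gives
\[
|\E[f(S_n)]-\E[f(Z_n)]| \le C\, M_3\, n^{-3/2}\sum_i\|v_i\|^3 \le C\,M_3\,B_n^3/\sqrt n,
\]
which is the desired first term.

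\textbf{Step 2: Gaussian comparison.} Interpolate via $Z_t := \sqrt{t}\,Z_n + \sqrt{1-t}\,Z$ for $t\in[0,1]$. Gaussian integration by parts (Stein's identity) yields
\[
\frac{d}{dt}\E[f(Z_t)] = \tfrac12\,\E\big[\mathrm{tr}\big((\Sigma_n^S-\Sigma)D^2f(Z_t)\big)\big].
\]
Bounding the trace by $d\,\|\Sigma_n^S-\Sigma\|_{op}\,\|D^2f\|_{op}$ and using $\|D^2f(Z_t)\|_{op}\le M_2 e^{\alpha\|Z_t\|}$, it remains to control $\E[e^{\alpha\|Z_t\|}]$. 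Since $Z_t$ is Gaussian with covariance $t\Sigma_n^S+(1-t)\Sigma$, Proposition~\ref{prop:exp_moment_bound_subgaussian} bounds this by a constant depending on $\alpha,d,\sigma,B_n,\|\Sigma\|_{op}$. Integrating over $t\in[0,1]$ gives the second term $C\,M_2\|\Sigma_n^S-\Sigma\|_{op}$, which completes the proof.
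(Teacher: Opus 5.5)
Your proposal is correct and follows essentially the same route as the paper: the pivot $Z_n\sim\N(0,\Sigma_n^S)$, Lindeberg swapping with Gaussian surrogates and cancellation of the first- and second-order Taylor terms, the bound $\|v_i\|/\sqrt{n}\le B_n$ to control the exponential moments of the increments, and a Stein-identity interpolation between $\N(0,\Sigma_n^S)$ and $\N(0,\Sigma)$. Your choice of surrogates $G_i\sim\N(0,\E[Y_1^2])$ is, if anything, slightly more careful than the paper. The paper writes $\Sigma_i^X=\sigma^2 v_iv_i^\top$ and so conflates the variance of $Y_i$ with its variance proxy; the argument is unaffected either way.
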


\begin{proof}
Fix $n\geq 1$. We note that both \(\E[f(S_n)]\) and \(\E[f(Z)]\) are well defined, since $f$ has at most exponential growth. Indeed using the condition on $D^{1}f$, \(|f(x)| \leq \sup_{t\in[0,1]}\exp(\alpha t \|x\|) \|x\| \leq \exp(c\alpha\|x\|)\) for some $c>0$.

The variables $X_i$ are centred with covariance $\Sigma_i^X = \sigma^2v_iv_i^\top$. 
Since \(\|\Sigma_i^X\|_{op} = \sigma^2 \|v_i\|^2\) and the $Y_i$ are $\sigma^2$-subgaussian, this implies that $X_i$ is $\|\Sigma_i^X\|_{op}$-subgaussian for each $i \geq 1$,  
i.e. $\sigma^2\|v_i\|^2$-subgaussian.
Similarly, 
\[\Sigma_n^S = \E[S_nS_n^\top] = \frac{1}{n}\sum_{i,j=1}^n\E[X_iX_j^\top]= \frac{1}{n}\sum_{i=1}^n\Sigma_i^X,\]
where we used that the $(X_i)_{i \geq 1}$ are centred and independent. In particular, 
\begin{equation}\label{eq:quant_clt_uniform_cov_bound}
\|\Sigma_n^S\|_{op} \leq \frac{1}{n} \sum_{i=1}^n\|\Sigma_i^X\|_{op} = \sigma^2 \frac{1}{n} \sum_{i=1}^n\|v_i\|^2 \leq \sigma^2B_n^2.
\end{equation} From Proposition~\ref{prop:sum_independent_subgaussians}, we also get that $S_n$ is $\frac{1}{n}\sum_{i=1}^n\|\Sigma_i^X\|_{op}$-subgaussian, and thus $\sigma^2B^2_n$-subgaussian.

Our proof relies on Lindeberg's swapping trick. For this purpose, we consider a Gaussian random variable $Z_n \sim \N(0,\Sigma^S_n)$ and split the error into a ``CLT term'' and a ``covariance mismatch term'':
\[
\left|\E [f(S_n)] - \E [f(Z)]\right| \leq 
\underbrace{|\E [f(S_n)] - \E[ f(Z_n)]|}_{\text{CLT term}} + 
\underbrace{|\E[f(Z_n)] - \E [f(Z)]|}_{\text{covariance mismatch}},
\]
which we now proceed to separately bound.

\subsection*{Part A. Lindeberg replacement bound for $|\E[f(S_n)] - \E [f(Z_n)]|$.}
\paragraph{Step A1. Telescoping with Gaussian surrogates.}

We consider a family $(W_i)_{i\geq1}$ of {independent} Gaussian vectors that are {independent} of $(X_i)_{i\geq 1}$ and such that
\(
W_i \sim N(0, \Sigma_i^X)
\), $i\geq 1$.
We define the Gaussianized sum
\(
\tilde{S}_n \coloneqq \frac{1}{\sqrt{n}} \sum_{i=1}^n W_i,
\)
which is 
a centred Gaussian vector with covariance
\[\Cov(\tilde{S}_n) = \frac{1}{n} \sum_{i=1}^n \Cov(W_i) = \frac{1}{n} \sum_{i=1}^n \Sigma_i^X = \Sigma_n^S.
\]
In particular, $\tilde{S}_n \stackrel{d}{=} Z_n$ so that it suffices to bound
\(|\E [f(S_n)] - \E [f(\tilde{S}_n)]|\).
For $i \in [1:n]$, we define the mixed sum
\[
P_i \coloneqq \frac{1}{\sqrt{n}} \left( \sum_{j=1}^{i} W_j + \sum_{j=i+1}^{n} X_j \right),
\]
so that $P_0 = S_n$ and $P_n = \tilde{S}_n$. We have
\(
\E[f(S_n)] - \E[f(\widetilde{S}_n)] = \sum_{i=1}^n \E\left[ f(P_{i-1}) - f(P_i) \right]
\).
Fixing $i \in [1,n]$, we define the ``common part''
\[
U_i \coloneqq \frac{1}{\sqrt{n}} \left( \sum_{j=1}^{i-1} W_j + \sum_{j=i+1}^{n} X_j \right),
\]
and the ``increment parts''
\(\Delta_i \coloneqq \frac{1}{\sqrt{n}} X_i\), \(\Gamma_i \coloneqq \frac{1}{\sqrt{n}} W_i\), so that \(
P_{i-1} = U_i + \Delta_i\) and \(P_i = U_i + \Gamma_i\).
Importantly, $U_i$ is independent of both $X_i$ and $W_i$, hence independent of the centred variables $\Delta_i$ and $\Gamma_i$. 
Therefore,
\[
\E\left[ f(P_{i-1}) - f(P_i) \right] = \E\left[ f(U_i + \Delta_i) - f(U_i + \Gamma_i) \right],
\] 
and we are reduced to computing Taylor approximations for our function $f$.

\paragraph{Step A2. Second-order Taylor expansion with integral remainder.}

For $u \in \R^d$, Taylor's expansions with integral remainder yields
\[
f(u+x) = f(u) + \langle \nabla f(u), x \rangle + \frac{1}{2} D^2 f(u)[x,x] + R_u(x),
\]
with
\(R_u(x) := \frac{1}{6} \int_0^1 (1-t)^2 \, D^3 f(u+tx)[x,x,x] \, \d t\).
For $(u,x) = (U_i,\Delta_i)$ and $(u,x) = (U_i,\Gamma_i)$, this gives
\begin{multline*}
f(U_i + \Delta_i) - f(U_i + \Gamma_i) = 
\langle \nabla f(U_i), \Delta_i - \Gamma_i \rangle + \frac{1}{2} \left( D^2 f(U_i)[\Delta_i,\Delta_i] - D^2 f(U_i)[\Gamma_i,\Gamma_i] \right) \\
+ R_{U_i}(\Delta_i) - R_{U_i}(\Gamma_i).
\end{multline*}
Since $U_i$ is independent of the centred variables $\Delta_i, \Gamma_i$, we have
\begin{multline*}
  \E[\langle \nabla f(U_i), \Delta_i - \Gamma_i \rangle] = \langle \E[\nabla f(U_i)], \E[\Delta_i - \Gamma_i]\rangle =0, \\  
  \E[D^2 f(U_i)[\Delta_i,\Delta_i]] = \sum_{k,j=1}^d\E[ (\Delta_i)_k(\Delta_i)_j\big(D^2f(U_i)\big)_{ij}] = \sum_{k,j=1}^d\E[ (\Delta_i)_k(\Delta_i)_j]\E[\big(D^2f(U_i)\big)_{ij}]. \phantom{a.}
  \end{multline*}
Since
\[
\E[\Delta_i \Delta_i^\top] = \frac{1}{n} \E[X_iX_i^\top] = \frac{1}{n}\Sigma_i^X = \frac{1}{n}\E[W_iW_i^\top] = \E[\Gamma_i \Gamma_i^\top],
\]
this implies \(\E[D^2 f(U_i)[\Delta_i,\Delta_i]] = \E[D^2 f(U_i)[\Gamma_i,\Gamma_i]]\), so that the quadratic terms vanish and we get
\[
\E\left[ f(U_i + \Delta_i) - f(U_i + \Gamma_i) \right] = \E\left[ R_{U_i}(\Delta_i) - R_{U_i}(\Gamma_i) \right],
\]
and then
\[
\big|\E\left[ f(P_{i-1}) - f(P_i) \right]\big|\leq \E[\big|R_{U_i}(\Delta_i)\big|] + \E[\big|R_{U_i}(\Gamma_i)\big|].
\]

\paragraph{Step A3. Bounding the remainder using exponential growth of $D^3 f$.}

For any $u,x \in \R^d$, the definition of the operator norm yields
\[
|D^3 f(u+tx)[x,x,x]| \leq \|D^3 f(u+tx)\|_{op} \|x\|^3,
\]
so that
\[
|R_u(x)| \leq \frac{1}{6} \int_0^1 (1-t)^2 \, \|D^3 f(u+tx)\|_{op} \, \|x\|^3 \, \d t.
\]
Since $\|D^3 f(y)\|_{op} \leq M_3 \exp(\alpha \|y\|)$, we get
\[
|R_u(x)| \leq \frac{M_3}{6} \|x\|^3 \int_0^1 (1-t)^2 \exp\left(\alpha\|u+tx\|\right) \, \d t.
\]
Using $\|u+tx\| \leq \|u\| + \|x\|$ 
and $\int_0^1 (1-t)^2 dt = 1/3$, we obtain
\[
|R_u(x)| \leq \frac{M_3}{18} \|x\|^3 \exp(\alpha\|u\|)\exp(\alpha\|x\|).
\]
Applying this this with $(u,x) = (U_i, \Delta_i)$ yields
\[
\E[|R_{U_i}(\Delta_i)|] \leq \frac{M_3}{18} \, \E\left[ \|\Delta_i\|^3 \exp(\alpha\|U_i\|)\exp(\alpha\|\Delta_i\|) \right],
\]
and similarly for  $(u,x) = (U_i, \Gamma_i)$. 

Let now $\xi$ be a generic $\sigma_\xi^2$-subgaussian random vector, independent from $U_i$ and such that $\frac{\sigma_\xi}{\sqrt{n}} \leq \sigma B_n$. We have
\begin{align*}
    \E\left[\left|R_{U_i}\left(\frac{\xi}{\sqrt{n}}\right)\right|\right] &\leq \frac{M_3}{18} \, \E\left[ \left\|\frac{\xi}{\sqrt{n}}\right\|^3 \exp(\alpha\|U_i\|)\exp\Big(\alpha \Big\|\frac{\xi}{\sqrt{n}}\Big\|\Big) \right]\\
    &\leq \frac{M_3}{18} \frac{1}{n^{3/2}}\, \E\left[ \left\|\xi\right\|^3 \exp(\alpha\|U_i\|)\exp\Big(\frac{\alpha}{\sqrt{n}}\|\xi\|\Big) \right]\\
    &\leq \frac{M_3}{18} \frac{1}{n^{3/2}}\, \E\big[\exp(\alpha\|U_i\|)\big]\E\Big[ \left\|\xi\right\|^3 \exp\Big(\frac{\alpha}{\sqrt{n}}\|\xi\|\Big) \Big]\\
    &\leq \frac{M_3}{18} \frac{1}{n^{3/2}}\, \E\big[\exp(\alpha\|U_i\|)\big]\E^{1/2}\big[ \left\|\xi\right\|^6\big] \E^{1/2}\Big[\exp\Big(\frac{2\alpha}{\sqrt{n}}\|\xi\|\Big) \Big]\\
    &\leq \frac{M_3}{18} \frac{1}{n^{3/2}}\, \E\big[\exp(\alpha\|U_i\|)\big]\Big( C^6_6 d^{3}\sigma_{\xi}^6\Big)^{1/2} \bigg(c\Big(1\vee \sigma_{\xi}\frac{2\alpha}{\sqrt{n}}\Big)4^{d} \exp\Big(2\big(\sigma_{\xi}\frac{2\alpha}{\sqrt{n}}\big)^2\Big) \bigg)^{1/2} \\
    &\leq C_{\alpha, d} \frac{M_3}{n^{3/2}}\, \sigma_{\xi}^3\Big(1\vee \frac{\sigma_{\xi}}{\sqrt{n}}\Big)^{1/2} \exp\Big((2\alpha)^2\big(\frac{\sigma_{\xi}}{\sqrt{n}}\big)^2\Big)\E\big[\exp(\alpha\|U_i\|)\big]\\
    &\leq C_{\alpha, d, \sigma, B_n} \frac{M_3}{n^{3/2}}\, \sigma_{\xi}^3 \E\big[\exp(\alpha\|U_i\|)\big],
\end{align*}
where the third inequality uses the independence of $\xi$ from $U_i$, the fourth one uses C-S, and the fifth one combines Propositions~\ref{prop:exp_moment_bound_subgaussian}-\ref{prop:moment_bound_subgaussian}. 
We also absorbed the resulting constants into $C_n:=C_{\alpha, d, \sigma, B_n}>0$, which only depends on $\alpha, d, \sigma, B_n$, and may change from line to line in the following. We note that the only dependency on $n$ is through $B_n$.

We recall that the $X_i$ are $(\sigma \|v_i\|)^2$-subgaussian. 
By Proposition~\ref{prop:Gaussian_is_subgaussian} and the definition of $\|\Sigma_i^X\|_{op}$, the $W_i\sim\N(0, \Sigma_i^X)$ are also $(\sigma \|v_i\|)^2$-subgaussian. As a consequence, the quantity \(U_i =\frac{1}{\sqrt{n}} \left( \sum_{j=1}^{i-1} W_j + \sum_{j=i+1}^{n} X_j \right)\) is a sum of {independent} subgaussian random vectors and is thus subgaussian  by Proposition~\ref{prop:sum_independent_subgaussians}, with variance proxy: 
\[\sum_{\substack{j=1\\j\neq i}}^{n} \Big(\frac{\sigma \|v_j\|}{\sqrt{n}}\Big)^2 = \sigma^2\frac{1}{n}\sum_{\substack{j=1\\j\neq i}}^{n} \|v_j\|^2 \leq (\sigma B_n)^2.\] 
In particular, we can use Proposition~\ref{prop:exp_moment_bound_subgaussian} to obtain
\[\E[\exp(\alpha \|U_i\|)]\leq c(1\vee \alpha\sigma B_n)4^d e^{2\sigma^2B_n^2\alpha^2} \leq C_n. \] Joining everything, \begin{equation}\label{eq:quant_clt_remainder_bound}
    \E\left[\left|R_{U_i}\left(\frac{\xi}{\sqrt{n}}\right)\right|\right] \leq C_n \frac{M_3}{n^{3/2}}\sigma_{\xi}^3.
\end{equation}
We apply this principle to $\xi = X_i$ and $\xi = W_i$, which are both $(\sigma \|v_i\|)^2$-subgaussian. From the fact that $\frac{1}{n}\sum_{i=1}^n \|v_i\|^2 \leq B_n^2$, we further get that $\frac{\sigma \|v_i\|}{\sqrt{n}} \leq \sigma B_n$, so that we can use \eqref{eq:quant_clt_remainder_bound} as stated. We then obtain
\[\E\left[\left|R_{U_i}\left(\Delta_i\right)\right|\right] \leq C_n M_3 \frac{\|v_i\|^3}{n^{3/2}}, \;\;\E\left[\left|R_{U_i}\left(\Gamma_i\right)\right|\right] \leq C_n M_3 \frac{\|v_i\|^3}{n^{3/2}}, \]
where we further absorbed terms into the constant $C_n$.
Combining these bounds, we obtain that
\[
|\E\left[ f(P_{i-1}) - f(P_i) \right]| \leq \E[\big|R_{U_i}(\Delta_i)\big|] + \E[\big|R_{U_i}(\Gamma_i)\big|] \leq C_n M_3\frac{\|v_i\|^3}{n^{3/2}}.
\]

Summing over $i \in [1:n]$ yields
\begin{align*}
    \left|\E f(S_n) - \E f(\tilde{S}_n)\right| &= \left|\sum_{i=1}^n\E[f(P_{i-1}) - f(P_i)]\right|\leq \sum_{i=1}^n\left|\E[f(P_{i-1}) - f(P_i)]\right|\\
    &\leq \sum_{i=1}^n C_n M_3\frac{\|v_i\|^3}{n^{3/2}} = \frac{C_n M_3}{\sqrt{n}} \Big( \frac{1}{n}\sum_{i=1}^n \|v_i\|^3\Big)\\
    &\leq \frac{C_n M_3}{\sqrt{n}} B_n^3 =: \frac{C_n M_3}{\sqrt{n}},
\end{align*}
where we used again the definition of $B_n$, and absorbed terms into $C_n$.
All in all, since $\tilde{S}_n \stackrel{d}{=} Z_n$, we have proved that \(\left|\E f(S_n) - \E f(Z_n)\right|\leq \frac{C_n M_3}{\sqrt{n}}\), completing \textbf{Part A}.

\subsection*{Part B. Covariance mismatch bound for $|\E f(Z_n) - \E f(Z)|$.}
\paragraph{Step B1. A coupling that interpolates covariances linearly.}

Let $Z_0 \sim N(0,\Sigma)$ and $Z_1 \sim N(0,\Sigma^S_n)$ be independent. For $t \in [0,1]$, we define the centred Gaussian vector
\(G_t := \sqrt{1-t} \, Z_0 + \sqrt{t} \, Z_1\), whose covariance is 
\(\Sigma_t := (1-t)\Sigma + t \Sigma^S_n\).
In particular,
\(G_0 = Z_0 \stackrel{d}{=} Z\) and \(G_1 = Z_1 \stackrel{d}{=} Z_n\). We introduce
\[
\psi :
\begin{cases}
[0,1] &\rightarrow \R, \\
t &\mapsto \E[f(G_t)],
\end{cases}
\]
so that \(\E[f(Z_n)] - \E[f(Z)] = \psi(1) - \psi(0)\). 
To estimate this difference, we compute \(\frac{d}{dt} G_t = -\frac{1}{2\sqrt{1-t}} Z_0 + \frac{1}{2\sqrt{t}} Z_1\), so that$\frac{d}{dt}f(G_t) = \nabla f(G_t) \cdot \frac{d}{dt}G_t$. 
From the exponential growth of $\nabla f$, we get 
\[\left|\frac{\d}{\d t}f(G_t)\right|\leq \|\nabla f(G_t)\|\left\|\frac{\d}{\d t} G_t\right\| \leq M_1\exp(\alpha \|G_t\|) \bigg[ \frac{1}{2\sqrt{1-t}}\|Z_0\| + \frac{1}{2\sqrt{t}}\|Z_1\| \bigg]. \]
For $t\in[0,1]$, we notice that $\exp(\alpha\|G_t\|) \leq \exp(\alpha\|Z_0\|)\exp(\alpha\|Z_1\|)$. 
Since $Z_0$ and $Z_1$ are Gaussian, they have bounded exponential moments (see Proposition~\ref{prop:exp_moment_bound_subgaussian}), so that
\[ \forall t\in (0,1), \qquad \left|\frac{d}{dt}f(G_t)\right| \leq Q_{m, Z_0, Z_1} \bigg[ \frac{1}{\sqrt{t}} + \frac{1}{\sqrt{1-t}} \bigg], \] 
for an integrable random variable $Q_{m, Z_0, Z_1}$. We can thus use the Fundamental Theorem of Calculus (FTC) to write
\[
\E[f(Z_n)] - \E[f(Z)] = \psi(1) - \psi(0) = \int_0^1 \psi'(t) \, dt,
\]
where 
\[
\psi'(t) = \E\left[ \nabla f(G_t) \cdot \frac{d}{dt}G_t \right] = 
-\frac{1}{2\sqrt{1-t}} \E\left[ \nabla f(G_t) \cdot Z_0 \right] + \frac{1}{2\sqrt{t}} \E\left[ \nabla f(G_t) \cdot Z_1 \right].
\]
We now compute each expectation using Stein's Lemma.

\paragraph{Step B2. Stein's Lemma for $\E [ \nabla f(G_t) \cdot Z_0 ]$ and $\E [ \nabla f(G_t) \cdot Z_1 ]$.} 

We recall the standard Gaussian integration by parts formula.

\begin{lemma}[Stein's Lemma]
Let $W \sim N(0,\Sigma^W)$ be a centered Gaussian in $\R^d$, and let $\Phi \colon \R^d \to \R^d$ be $C^1$ with suitable integrability . Then
\[
\E [ W \cdot \Phi(W)  ] = \E [\mathrm{Tr} (\Sigma^WD\Phi(W)  ) ],
\]
where $D\Phi$ is the Jacobian matrix.
\end{lemma}

We now apply this lemma to $\E\left[ \nabla f(G_t) \cdot Z_0 \right]$ and $\E\left[ \nabla f(G_t) \cdot Z_1 \right]$. 
For the first term, we consider the $\C^1$ function
\[
\Phi_{Z_1}^{(0)}:z_0\in\R^d \mapsto \nabla f\left( \sqrt{1-t} \, z_0 + \sqrt{t} \, Z_1 \right) \in \R^d,
\]
whose Jacobian is
\(D\Phi_{Z_1}^{(0)}(z_0) = \sqrt{1-t} \, D^2 f\left( \sqrt{1-t} \, z_0 + \sqrt{t} \, Z_1 \right)\). 
By definition, \(f(G_t) =\Phi_{Z_1}^{(0)}(Z_0) \).
Since $Z_0$ and $Z_1$ are independent, we can apply Stein's Lemma to $W = Z_0 \sim N(0,\Sigma)$ and $\Phi = \Phi_{Z_1}^{(0)}$, conditionally on $Z_1$, to obtain
\[\E [ Z_0 \cdot \nabla f(G_t) \mid Z_1 ]=\E [ Z_0 \cdot \Phi_{Z_1}^{(0)}(Z_0) \mid Z_1 ] = \E [ \mathrm{Tr} ( \Sigma \, D\Phi_{Z_1}^{(0)}(Z_0) ) \mid Z_1 ].
\]
Substituting in the Jacobian, we get \(D\Phi_{Z_1}^{(0)}(Z_0) = \sqrt{1-t} \, D^2 f\left( G_t\right)\), so that taking expectations over $Z_1$ yields
\[
\E[Z_0 \cdot \nabla f(G_t)] = \E[\E [ Z_0 \cdot \nabla f(G_t) \mid Z_1 ]] = \sqrt{1-t} \, \E [ \mathrm{Tr} ( \Sigma \, D^2 f(G_t) ) ].
\]
For the term $\E\left[ \nabla f(G_t) \cdot Z_1 \right]$, we proceed completely analogously, by conditioning on $Z_0$ and defining
\(\Phi_{Z_0}^{(1)}(z_1) \coloneqq \nabla f\left( \sqrt{1-t} \, Z_0 + \sqrt{t} \, z_1 \right)\), so that \(D\Phi_{Z_0}^{(1)}(z_1) = \sqrt{t} \, D^2 f ( \sqrt{1-t} \, Z_0 + \sqrt{t} \, z_1 ) \) and Stein's Lemma on $Z_1 \sim\N(0, \Sigma_n^S)$ conditionally on $Z_1$ yields
\[\E [ Z_1 \cdot \nabla f(G_t) ] = \sqrt{t} \, \E [ \mathrm{Tr}( \Sigma_n^S \, D^2 f(G_t) ) ]. \]

\paragraph{Step B3. Integrating the error.}

Plugging back our estimates, in the previous expression of $\psi'$,
\begin{align*}
\forall t \in (0,1), \quad    \psi'(t) &= -\frac{1}{2\sqrt{1-t}} \big[ \sqrt{1-t} \, \E[ \mathrm{Tr}( \Sigma \, D^2 f(G_t) ) ] \big]
+ \frac{1}{2\sqrt{t}} \big[ \sqrt{t} \, \E [ \mathrm{Tr} ( \Sigma^S_n \, D^2 f(G_t) ) ] \big] \\
&=\frac{1}{2} \, \E [ \mathrm{Tr} ( (\Sigma^S_n - \Sigma) \, D^2 f(G_t) ) ].
\end{align*}
Thus, integrating it from 0 to 1, we get, from our FTC expression:
\[
\E[f(Z_n)] - \E[f(Z)] = \psi(1) - \psi(0) = \int_0^1 \psi'(t) \, dt = \frac{1}{2} \int_0^1 \E\left[ \mathrm{Tr}\left( (\Sigma^S_n - \Sigma) \, D^2 f(G_t) \right) \right] \d t.
\]

\paragraph{Step B4. Bounding the mismatch term.}

We now take absolute values in our last expression, using that $|\mathrm{Tr}(AB)| \leq d\|A\|_{op} \|B\|_{op}$ for $A, B\in \R^{d\times d}$, so that
\[
|\E f(Z_n) - \E f(Z)| \leq \frac{d}{2} \int_0^1 \|\Sigma_n^S - \Sigma\|_{op} \, \E\left[ \|D^2 f(G_t)\|_{op} \right] \d t.
\]
Recalling our exponential growth condition for the derivatives of $f$, we get:
\[
\E[\|D^2 f(G_t)\|] \leq M_2  \E[\exp(\alpha \|G_t\|)] .
\]
We recall that $G_t \sim N(0,\Sigma_t)$ with $\Sigma_t = (1-t)\Sigma + t\Sigma^S_n$. By \eqref{eq:quant_clt_uniform_cov_bound}, we further have \(\|\Sigma_n^S\|_{op} \leq \sigma^2 B_n^2\). Thus, \[\|\Sigma_t\|_{op} \leq (1-t)\|\Sigma\|_{op} + t\|\Sigma_n^S\|_{op} \leq \|\Sigma\|_{op}\vee(\sigma B_n)^2. \] By Proposition~\ref{prop:Gaussian_is_subgaussian}, $G_t$ is thus a $\|\Sigma\|_{op}\vee(\sigma B_n)^2$-subgaussian random vector. In particular, Proposition~\ref{prop:exp_moment_bound_subgaussian} yields
\[\sup_{t \in [0,1]} \E[\exp(\alpha\|G_t\|)] \leq c(1\vee \sqrt{\|\Sigma\|_{op}}\vee\sigma B_n \alpha)4^d e^{2(\|\Sigma\|_{op}\vee\sigma^2B_n^2)\alpha^2} \leq C_n, \]
for a constant $C_n>0$ that only depends on $\alpha, d, \sigma, B_n$ and $\|\Sigma\|_{op}$.
As a consequence,
\[
|\E[f(Z_n)] - \E[f(Z)]| \leq \frac{d}{2} \|\Sigma^S_n - \Sigma\|_{op} \int_0^1 M_2 \sup_{r \in [0,1]} \E[\exp(\alpha\|G_r\|)] \, dt \leq C_n M_2 \|\Sigma_n^S - \Sigma\|_{op},
\]
completing \textbf{Part B}.

\subsection*{Part C. Conclusion.}
Putting Parts \textbf{A} and \textbf{B} together yields
\begin{align*}
|\E[f(S_n)] - \E[f(Z)]| &\leq |\E[f(S_n)] - \E[f(Z_n)]| + |\E[f(Z_n)] - \E[f(Z)]| \\
&\leq C_n\bigg[ \frac{M_3}{\sqrt{n}} + M_2 \, \|\Sigma_n - \Sigma\|_{op} \bigg],
\end{align*}
which is the desired bound.
\end{proof}

\begin{corollary}\label{cor:quantitative_noniid_clt_vector}
In the setting of Theorem~\ref{thm:quantitative_noniid_clt}, let $f\in \C^3 (\R^{d_1}, \R^{d_2})$ be such that there exists $\alpha >0$ satisfying  
\[ \forall k \in \{1,2,3\}, \qquad M_k :=\sup_{x\in\R^d} \frac{\| D^kf(x)\|_{op}}{\exp(\alpha\|x\|)} <\infty. \]
  Similarly, we denote by $M^j_k$ the exponential growth constant of the $j$-th component $f_j$ of $f$, and we set $\vec{M}_k := (M_k^j)_{j=1}^{d_2}$. Then, there exists $C_n>0$ that only depends on $\alpha, d_1, \sigma$, $B_n$, $\|\Sigma\|_{op}$ such that
\begin{align*}
    \|\E[f(S_n)] - \E[f(Z)]\|_{2,d_2} &\leq C_n\bigg[\frac{\|\vec{M}_3\|_{2,d_2}}{\sqrt{n}} + \|\vec{M}_2\|_{2,d_2}\| \Sigma_n^S - \Sigma\|_{op}\bigg]
\end{align*}
Again, $C_n$ depends on $n\geq 1$ only through $B_n$.
\end{corollary}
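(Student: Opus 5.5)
The plan is to reduce Corollary~\ref{cor:quantitative_noniid_clt_vector} to Theorem~\ref{thm:quantitative_noniid_clt} applied componentwise. Write $f=(f_1,\dots,f_{d_2})$ with each $f_j\in\C^3(\R^{d_1},\R)$. Each component inherits the exponential growth hypothesis with its own constants $M_k^j$. Indeed, $D^kf_j(x)[h_1,\dots,h_k]=\langle e_j, D^kf(x)[h_1,\dots,h_k]\rangle$, so $\|D^kf_j(x)\|_{op}\le\|D^kf(x)\|_{op}$ and hence $M_k^j\le M_k<\infty$. The random objects $S_n$, $Z$, $\Sigma_n^S$ and the data $(Y_i,v_i)$ are the same for every component. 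The dimension entering the theorem is $d=d_1$.

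First I apply Theorem~\ref{thm:quantitative_noniid_clt} to each scalar function $f_j$. This gives
\[
|\E[f_j(S_n)]-\E[f_j(Z)]|\le C_n\Big[\frac{M_3^j}{\sqrt n}+M_2^j\|\Sigma_n^S-\Sigma\|_{op}\Big].
\]
The crucial observation is that the constant $C_n$ depends only on $\alpha,d_1,\sigma,B_n,\|\Sigma\|_{op}$ and not on $j$. The reason is that the theorem's constant does not involve $f$ beyond the growth rate $\alpha$, which is common to all components, and the $M_k^j$ appear only as explicit linear factors.

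Then I take the Euclidean norm over $j\in[1:d_2]$. Write $a_j=M_3^j/\sqrt n$ and $b_j=M_2^j\|\Sigma_n^S-\Sigma\|_{op}$. The triangle inequality in $\R^{d_2}$ gives $\|(a_j+b_j)_j\|_2\le\|(a_j)_j\|_2+\|(b_j)_j\|_2$, which yields exactly
\[
C_n\Big[\frac{\|\vec M_3\|_{2,d_2}}{\sqrt n}+\|\vec M_2\|_{2,d_2}\|\Sigma_n^S-\Sigma\|_{op}\Big].
\]
The dependence of $C_n$ on $n$ only through $B_n$ carries over directly from the theorem.

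The only point needing care is checking that the constant in the theorem is genuinely uniform in $f$ once $\alpha$ is fixed. Inspecting Parts A and B of the proof, $f$ enters only through $M_3$ in the remainder bound and through $M_2$ in the covariance-mismatch bound, always multiplied by exponential moments of subgaussian vectors. Those moments depend on $\alpha,d,\sigma B_n,\|\Sigma\|_{op}$ alone. So I expect no real obstacle, and the argument is essentially bookkeeping.
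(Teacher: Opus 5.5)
Your proposal is correct and follows the paper's proof essentially verbatim: you bound $M_k^j \le M_k$ componentwise, apply Theorem~\ref{thm:quantitative_noniid_clt} to each $f_j$ with a constant uniform in $j$, and combine the resulting bounds via the triangle inequality in $\R^{d_2}$. Your extra check that the theorem's constant depends on $f$ only through $\alpha$, with $M_2$ and $M_3$ entering as linear factors, is left implicit in the paper and is accurate.
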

\begin{proof}
Recalling that $f = (f_j)_{j=1}^{d_2}$, we have 
    \[
\max_j \|D^k f_j(x)\|_{op} \;\leq\; \|D^k f(x)\|_{op} \;\leq\; \Bigl(\sum_j \|D^k f_j(x)\|_{op}^2\Bigr)^{1/2} 
\]
In particular,
$\max_j M_k^j \leq M_k$. 
Theorem~\ref{thm:quantitative_noniid_clt} then provides $C_n>0$ as desired such that 
\[ \forall j \in [1:d_2], \qquad
    |\E[f_j(S_n)] - \E[f_j(Z)]| \leq C_n\bigg[\frac{M_3^j}{\sqrt{n}} + M_2^j\| \Sigma_n^S - \Sigma\|_{op}\bigg].
\]
Taking the $\|\cdot\|_{2,d_2}$ norm, we obtain
\begin{align*}
    \|\E[f(S_n)] - \E[f(Z)]\|_{2,d_2} &\leq \bigg\| C_n\bigg[ \frac{\vec{M}_3}{\sqrt{n}} + \vec{M_2}\| \Sigma_n^S - \Sigma\|_{op}\bigg]\bigg\|_{2,d_2}\\
    &\leq C_n\bigg[\frac{\|\vec{M}_3\|_{2,d_2}}{\sqrt{n}} + \|\vec{M}_2\|_{2,d_2}\| \Sigma_n^S - \Sigma\|_{op}\bigg]
\end{align*}
which is what we wanted to prove.
\end{proof}

We finally state a version of this result that directly matches our setting.

\begin{corollary}\label{cor:quantitative_noniid_clt_conditional}
  Let $(Y_i)_{i\geq 1}$ be an i.i.d.~sequence of centred, $\sigma^2$-subgaussian real random variables. Let $(V_i)_{i\geq 1}$ be an i.i.d.~sequence of random vectors in $L^2$, independent from $(Y_i)_{i\geq 1}$, and let $\F_V := \sigma(V_i, i\geq 1)$.
  We define the random vectors $X_i := Y_iV_i$, $i\geq 1$, as well as the partial sums \(S_n = \frac{1}{\sqrt{n}}\sum_{i=1}^n X_i\), for $n\geq 1$ with (random) conditional covariance matrix $\Sigma_n^S := \E[S_n S_n^\top|\F_V]$. For $\Sigma := \sigma^2\E[V_1V_1^T]$, we further consider a Gaussian random variable $Z \sim \N(0, \Sigma)$ independent from $(Y_i)_{i\geq 1}$ and $\F_V$.
  Let $F\in \C^3(\R^{d_1}, \R^{d_2})$ be a $\F_V$-measurable (random) function with $M_k^j <\infty$ a.s. for $k=1,2,3$ and $j\in[1:d_2]$ as in the setting of Corollary~\ref{cor:quantitative_noniid_clt_vector}.
  
Then, for $n\geq 1$ there exists $C^{\overline{V}}_n>0$ that only depends on $\alpha, d_1, \sigma$, $\frac{1}{n} \sum_{i=1}^n \|V_i\|^2$, $\frac{1}{n} \sum_{i=1}^n \|V_i\|^3$, $\E[\|V_1\|^2]$, such that
\[\|\E[F(S_n)|\F_V] - \E[F(Z)|\F_V]\|_{2,d_2} \leq C^{\overline{V}}_n \bigg[  \frac{\|\vec{M}_3\|_{2,d_2}}{\sqrt{n}} + \|\vec{M}_2\|_{2,d_2}\| \Sigma_n^S - \Sigma\|_{op} \bigg] \;\;a.s.\]

In particular, this also holds when replacing \(\|\cdot\|_{2, d_2}\) by the RMS norm \(\|\cdot\|_{\overline{2}, d_2}:=\frac{1}{\sqrt{d_2}}\|\cdot\|_{2, d_2}\). Also, if $F$ is deterministic, we can replace $\E[F(Z)|\F_V] = \E[F(Z)]$ by independence.
\end{corollary}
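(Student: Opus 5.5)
The plan is to reduce Corollary~\ref{cor:quantitative_noniid_clt_conditional} to Corollary~\ref{cor:quantitative_noniid_clt_vector} by conditioning on $\F_V$. We freeze the realization of $(V_i)_{i\geq1}$ and set $v_i := V_i(\omega)$. Since $(Y_i)$ is independent of $\F_V$, a regular conditional distribution of $(Y_i)_{i\ge1}$ given $\F_V$ is just its unconditional law. The sequence $(Y_i)$ thus remains i.i.d., centred and $\sigma^2$-subgaussian. Likewise $Z$, being independent of both $(Y_i)$ and $\F_V$, keeps the law $\N(0,\Sigma)$ conditionally, and $\Sigma = \sigma^2\E[V_1V_1^\top]$ is deterministic. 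The map $F$ is $\F_V$-measurable, so it is a fixed $\C^3$ function $f := F(\omega,\cdot)$ with finite growth constants $M_k^j(\omega)$ a.s.

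The key identity we will establish is
\[
\E[F(S_n)\mid\F_V](\omega) = \E\Big[f\Big(\tfrac{1}{\sqrt n}\textstyle\sum_{i=1}^n Y_i v_i\Big)\Big], \qquad \E[F(Z)\mid\F_V](\omega)=\E[f(Z)].
\]
This follows from the freezing lemma for independent variables. Integrability needs a check, since $f$ has only exponential growth. The tail argument in the proof of Theorem~\ref{thm:quantitative_noniid_clt} applies: $S_n$ is $\sigma^2 B_n^2$-subgaussian given $v$, with $B_n$ built from the $v_i$, and Proposition~\ref{prop:exp_moment_bound_subgaussian} controls its exponential moments. Under this identification, the conditional covariance is $\Sigma_n^S(\omega) = \frac{\sigma^2}{n}\sum_i v_iv_i^\top$, which is exactly the $\Sigma_n^S$ of Theorem~\ref{thm:quantitative_noniid_clt}.

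Next we apply Corollary~\ref{cor:quantitative_noniid_clt_vector} pointwise in $\omega$. It yields the bound with a constant depending on $\alpha, d_1, \sigma$, $B_n(\omega)$ and $\|\Sigma\|_{op}$. Here $B_n(\omega)$ is a function of $\frac1n\sum_i\|V_i\|^2$ and $\frac1n\sum_i\|V_i\|^3$, and $\|\Sigma\|_{op}\le\sigma^2\E\|V_1\|^2$. Tracking the proof of Theorem~\ref{thm:quantitative_noniid_clt}, the constant is monotone in these quantities. We may therefore define $C^{\overline V}_n$ as that function evaluated at the stated empirical and population moments, which gives the claimed dependence. Since the inequality holds for a.e.\ $\omega$, it holds a.s.

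The RMS-norm statement follows by dividing both sides by $\sqrt{d_2}$. For the deterministic-$F$ remark, independence of $Z$ from $\F_V$ gives $\E[F(Z)\mid\F_V]=\E[F(Z)]$.

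The only delicate point is measurability. We need that $\omega\mapsto$ (the frozen bound) is $\F_V$-measurable, and that the freezing lemma applies with a random $F$. We handle this by viewing $F$ as a jointly measurable map $\Omega\times\R^{d_1}\to\R^{d_2}$, measurable in $\F_V\otimes\mathcal B$, and invoking Fubini together with the independence of $(Y,Z)$ from $\F_V$. We do not expect real difficulty beyond this bookkeeping.
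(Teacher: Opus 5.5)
Your proposal is correct and follows the same route as the paper. You condition on $\F_V$, freeze $v_i=V_i(\omega)$ and $f=F(\omega,\cdot)$, and apply Corollary~\ref{cor:quantitative_noniid_clt_vector} pointwise, with $\|\Sigma\|_{op}\le\sigma^2\E[\|V_1\|^2]$ and $B_n$ expressed through the two empirical moments. Your handling of the freezing lemma, conditional integrability and measurability fills in bookkeeping the paper leaves implicit, and it is sound.
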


\begin{proof}
    Conditionally on $\F_V$, the sequence $(X_i)_{i\geq 1}$ satisfies the conditions of Theorem~\ref{thm:quantitative_noniid_clt}. Similarly, $\|\Sigma\|_{op} \leq \sigma^2\E[\|V_1\|^2] $ as required in Theorem~\ref{thm:quantitative_noniid_clt}. Finally, conditionally on $\F_V$, $F\in\C(\R^{d_1}, \R^{d_2})$ satisfies the exponential growth conditions from Corollary~\ref{cor:quantitative_noniid_clt_vector}, allowing us to conclude that, conditionally on $\F_V$,
    \begin{align*}
        \|\E[F(S_n)|\F_V] - \E[F(Z)|\F_V]\|_{2,d_2} &\leq C_n^{\overline{V}}\bigg[ \frac{\|\vec{M}_3\|_{2,d_2}}{\sqrt{n}} + \|\vec{M}_2\|_{2,d_2}\| \Sigma_n^S - \Sigma\|_{op}\bigg]
    \end{align*}
    which gives the desired result.
\end{proof}

\section{Extension to trainable embeddings}\label{sec:extensions_of_results}
Throughout this work, we have taken the embedding matrices \(\Win,\Wout\) as fixed during training. Despite this, and as mentioned in Remarks~\ref{rem:training_embedding_matrices_finiteD} and \ref{rem:training_embedding_matrices_meanfield}, our results readily generalize to the setting of trainable embeddings.

Namely, the well-posedness from Theorem~\ref{thm:well_posedness_main} follows almost immediately without changing the proof, since the fixed-point argument for the limit ODE does not explicitly involve the updates of $\Win, \Wout$.

For the quantitative convergence in Theorem~\ref{thm:quantitative_convergence_main}, further details need to be considered.
Let us denote by $\Big((\hat{W}_{\mathrm{in}}^{(D)})_k\Big)_{k\geq 1}$ the iterates of the embedding matrices in the finite-$D$ system \eqref{eq:original_finite_D_system}; while we let $\Big((W_{\mathrm{in}}^{(D)})_k\Big)_{k\geq 1}$ be those of the limit one \eqref{eq:LimH}-\eqref{eq:LimB}.
The coupling of both systems occurs at initialization; namely, at iteration $k=0$, both are set as $(\hat{W}_{\mathrm{in}}^{(D)})_0 = (W_{\mathrm{in}}^{(D)})_0$.

The finite-dimensional update rule from Remark~\ref{rem:training_embedding_matrices_finiteD} reads, at each iteration $k\in[0:K-1]$ and for each row $d\in[1:D]$,
\[\begin{cases}
    (\hat{W}_{\mathrm{in}}^d)_{k+1} = (\hat{W}_{\mathrm{in}}^d)_{k} - \eta_{\mathrm{in}}b_k^d(0)x,\\
    (\hat{W}_{\mathrm{out}}^d)_{k+1} = (\hat{W}_{\mathrm{out}}^d)_{k} - \eta_{\mathrm{out}}h_k^d(1)\nabla\ell(\hat{y}_k^D).
\end{cases}\]
While the updates for each i.i.d.~copy of the limit system read, for $d\in[1:D]$,
\[\begin{cases}
    (W_{\mathrm{in}}^d)_{k+1} = (W_{\mathrm{in}}^d)_{k} - \eta_{\mathrm{in}}B_k^d(0)x,\\
    (W_{\mathrm{out}}^d)_{k+1} = (W_{\mathrm{out}}^d)_{k} - \eta_{\mathrm{out}}H_k^d(1)\nabla\ell(y_k).
\end{cases}\]

The adaptation of the proof of Theorem~\ref{thm:quantitative_convergence_main} then follows from the following facts.
\begin{enumerate}[label=(\roman*),ref=(\roman*)]
\item The following a priori estimates hold:
\[\|(\hat{W}_{\mathrm{in}}^{(D)})_{k+1} \|_{\ndbar} \leq \|(\hat{W}_{\mathrm{in}}^{(D)})_{k}\|_{\ndbar} + \eta_{\mathrm{in}} \|x\|_{2, \dimin}\|b_k^{(D)}(0) \|_{\ndbar},\]
\[\|(\hat{W}_{\mathrm{out}}^{(D)})_{k+1} \|_{\ndbar} \leq \|(\hat{W}_{\mathrm{out}}^{(D)})_{k}\|_{\ndbar} + \eta_{\mathrm{out}} \|\nabla\ell(\hat{y}_k^D)\|_{2, \dimin}\|h_k^{(D)}(1) \|_{\ndbar},\]
which allow for Lemma~\ref{lem:embeddingcontrol} to still hold despite the updates on \(\Win, \Wout\). The analogous estimates for \(\|(W_{\mathrm{in}}^{(D)})_{k+1} \|_{\ndbar}\) and \(\|(W_{\mathrm{our}}^{(D)})_{k+1} \|_{\ndbar}\) further allow for getting the result of Lemma~\ref{lem:MF_skeleton_bounded_ndbar}. Namely, we have for every $k\in[0:K-1]$,
\[\|(\hat{W}_{\mathrm{in}}^{(D)})_{k} \|_{\ndbar}\vee\|(W_{\mathrm{in}}^{(D)})_{k} \|_{\ndbar}\vee\|(\hat{W}_{\mathrm{out}}^{(D)})_{k} \|_{\ndbar}\vee\|(W_{\mathrm{out}}^{(D)})_{k} \|_{\ndbar} \leq \cstW_k \;\; \text{a.s.},\]
where \(\cstW_k\) depends solely on the RMS norm of the initialization of the embedding matrices.
\item The following propagation of chaos estimates hold:
\[\|(\hat{W}_{\mathrm{in}}^{(D)})_{k+1} - W_{\mathrm{in}}^{(D)})_{k+1}\|_{\ndbar} \leq \|(\hat{W}_{\mathrm{in}}^{(D)})_{k} - (W_{\mathrm{in}}^{(D)})_{k}\|_{\ndbar} + \eta_{\mathrm{in}} \|x\|_{2, \dimin}\|b_k^{(D)}(0) - B_k^{(D)}(0)\|_{\ndbar},\]
and 
\begin{multline*}
    \|(\hat{W}_{\mathrm{out}}^{(D)})_{k+1} - W_{\mathrm{out}}^{(D)})_{k+1}\|_{\ndbar} \leq \|(\hat{W}_{\mathrm{out}}^{(D)})_{k} - (W_{\mathrm{out}}^{(D)})_{k}\|_{\ndbar} + \eta_{\mathrm{out}} \|h_k^{(D)}(1)\|_{\ndbar}\|\nabla\ell(\hat{y}_k^D)-\nabla\ell(y_k)\|_{2, \dimout} \\+\eta_{\mathrm{out}} \|\nabla\ell(y_k)\|_{2, \dimout}\|h_k^{(D)}(1) - H_k^{(D)}(1)\|_{\ndbar}.
\end{multline*}
Namely, in our induction argument from Section~\ref{ssec:proof_thm_recurrence_relation} all of the terms on the RHS are controlled with high probability by $O(D^{-1/2})$. More explicitly, if we suppose
\[\|(\hat{W}_{\mathrm{in}}^{(D)})_{k} - W_{\mathrm{in}}^{(D)})_{k}\|_{\ndbar}\vee\|(\hat{W}_{\mathrm{out}}^{(D)})_{k} - W_{\mathrm{out}}^{(D)})_{k}\|_{\ndbar}\leq \cstWh_k\Big( \frac{1}{\sqrt{D}} + \sum_{j=0}^{k-1}\int_0^1\llnW_j(u)\d u\Big)\]
we get by induction, using the bounds from Section~\ref{ssec:proof_thm_recurrence_relation},
\[\|(\hat{W}_{\mathrm{in}}^{(D)})_{k+1} - W_{\mathrm{in}}^{(D)})_{k+1}\|_{\ndbar}\vee\|(\hat{W}_{\mathrm{out}}^{(D)})_{k+1} - W_{\mathrm{out}}^{(D)})_{k+1}\|_{\ndbar} \leq \cstWh_k\Big( \frac{1}{\sqrt{D}} + \sum_{j=0}^k\int_0^1\llnW_j(u)\d u\Big),\]
which allows us to adapt the proof of Theorem~\ref{thm:whp_recurrence relation} and to conclude that Theorem~\ref{thm:quantitative_convergence_main} still holds.
\end{enumerate}
For conciseness, the details of these estimates are left to the reader.

\section{The linear case: explicit system}\label{sec:linear_case_explicit_system}

We here provide some additional details regarding the limit system in the linear case.

\paragraph{Explicit structure in the linear case.} Even in the nonlinear case, the gradient matrices \(\bnabla_{\bfz^{\mathfrak{j}}} \vec{\mathbf{v}}_{k}^F\) and \(\bnabla_{\bfz^{\mathfrak{j}}} \vec{\mathbf{v}}_{k}^G\) are upper triangular. In the linear case, we further get an explicit description for them.
Let $k\in[1,K-1]$ and define, for \(i\in[0:k-1], d \in[0: k-i-1]\), $m \in[0:d]$: \(\Phi^{\pm}(0,i,0) := 1\) and for $d\geq 1$,
    \[\Phi^{\pm}(m,i,d) (s)= \sum_{\vec{u}\in \Upsilon_m(i, i+d)} (-a)^m\prod_{\ell=0}^{m-1} \tilde\Gamma_{u_\ell, u_{\ell+1}}^{(-1)^{\ell}.(\pm1)}(s),\]
    where for $n,\bar{n}\in\mathbb{N}$, $n<\bar{n}$, \(\Upsilon_m(n, \bar{n}) := \{(u_\ell)_{\ell=0}^m \,:\, u_0 =n, \, u_m = \bar{n},\, u_\ell < u_{\ell+1}\,\forall \ell\}\), and we identify $\tilde\Gamma_{i,j}^{(+1)}:=\eta_u\Gamma_{i,j}^H$ and $\tilde\Gamma_{i,j}^{(-1)} = \eta_v\Gamma_{i,j}^B$. 
    Then, for \(i\in[0:k-1]\) and \( d \in[0, k-i-1]\),
    \begin{align*}
    (\bnabla_{\bfz^{\mathfrak{j}}} \vec{\mathbf{v}}_{k}^F(s))_{i,(i+d)} &= a\Big(\sum_{m=0}^d \mathbbm{1}_{\mathfrak{j}=h,\,m \text{ odd}} \Phi^{-}(m,i,d)(s) +\mathbbm{1}_{\mathfrak{j}=b,\,m \text{ even}}\Phi^{+}(m,i,d)(s)\Big)\\
    (\bnabla_{\bfz^{\mathfrak{j}}} \vec{\mathbf{v}}_{k}^G(s))_{i,(i+d)} &= a\Big(\sum_{m=0}^d \mathbbm{1}_{\mathfrak{j}=b,\,m \text{ odd}} \Phi^{+}(m,i,d)(s) +\mathbbm{1}_{\mathfrak{j}=h,\,m \text{ even}}\Phi^{-}(m,i,d)(s)\Big)
    \end{align*}
This can be proved by induction, critically using the disjoint union
    \(\Upsilon_{m+1}(n, \bar{n}) = \bigsqcup_{d=m}^{\bar{n}-n-1} (\Upsilon_m(n,n+d), \bar{n})\), which turns a matrix multiplication into the $m+1$-th iterate in the construction.

This explicit expression suggests that, for small learning rates $\eta_u, \eta_v$, the involved matrices are expected to be close to diagonal, potentially allowing to simplify computations through suitable approximations of \eqref{eq:linear_system_explicit}. Further analysing this idea could be potentially interesting for future work, e.g. to obtain faster algorithms for solving the limit system.

\paragraph{Explicit covariance structure.} 
The experiments presented in Section~\ref{ssec:numerical_experiments} consider $W_{\mathrm{in}}, W_{\mathrm{out}}$ to be centered, Gaussian and --for simplicity-- independent and with i.i.d.~coordinates. In particular, \((\bfH, \bfB)\) is a centered Gaussian process, for which we only need to understand its covariance structure. We explicitly compute it as follows.

Consider the following quantities:
\[\begin{cases}
  \boldsymbol{\Lambda}_{\wedge k-1, \wedge k-1}^{H, B}(s) = \E[\bfH_{\wedge k-1}(s)\bfB_{\wedge k-1}(s)^\top],\\
  \boldsymbol{\Lambda}_{k,\wedge k-1}^{H, B}(s) = \E[H_k(s) \bfB_{\wedge k-1}(s)] = (\boldsymbol{\Lambda}^{H, B}_{k,j}(s))_{j=0}^{k-1},\\
  \boldsymbol{\Lambda}_{\wedge k-1,k}^{H, B}(s) = \E[B_k(s) \bfH_{\wedge k-1}(s)] = (\boldsymbol{\Lambda}^{H, B}_{j,k}(s))_{j=0}^{k-1}.
\end{cases}\]
Similarly, we write
\[\begin{cases}
  \boldsymbol{\xi}^{H, W_{\mathrm{in}}}_{\wedge k-1}(s) = (\xi^{H, W_{\mathrm{in}}}_j)_{j=0}^{k-1} =\E[\Win \bfH_{\wedge k-1}(s)^\top] \in\R^{\dimin\times k},\\
  \boldsymbol{\xi}^{H, \Wout}_{\wedge k-1}(s)= (\xi^{H, W_{\mathrm{out}}}_j)_{j=0}^{k-1} =\E[\Wout \bfH_{\wedge k-1}(s)^\top]\in\R^{\dimout\times k}.
\end{cases}\]
and analogously for \(\boldsymbol{\xi}^{B, W_{\mathrm{in}}}_{\wedge k-1}(s)\) and \(\boldsymbol{\xi}^{B, \Wout}_{\wedge k-1}(s)\). 

 At iteration $k=0$, we set 
 \[\begin{cases}
     \Gamma_{0,0}^H = \varin^2\|x\|^2,\\
     \Gamma_{0,0}^B = \varout^2\|\nabla\ell(0)\|^2,\\
     \boldsymbol{\Lambda}_{0,0}^{H,B} = 0,
 \end{cases} \quad \begin{cases}
     (\xi_0^{H,\Win}, \xi_0^{H,\Wout}) = (\varin^2x,0),\\
     (\xi_0^{B,\Win}, \xi_0^{B,\Wout}) = (0, \varout^2\nabla\ell(0)).
 \end{cases}\] Then, at every stage $k\in[1:K-1]$, supposing \(\boldsymbol{\Gamma}_{\wedge k-1}^H\), \(\boldsymbol{\Gamma}_{\wedge k-1}^B\), \(\boldsymbol{\Lambda}_{\wedge k-1, \wedge k-1}^{H,B}\), \(\boldsymbol{\xi}_{\wedge k-1}^{H, W_{\mathrm{in}}}\), \(\boldsymbol{\xi}_{\wedge k-1}^{H, W_{\mathrm{out}}}\), \(\boldsymbol{\xi}_{\wedge k-1}^{B, W_{\mathrm{in}}}\), \(\boldsymbol{\xi}_{\wedge k-1}^{B, W_{\mathrm{out}}}\) have been constructed, we solve, in order, the following set of first-order homogeneous linear systems of ODEs with variable coefficients
 \begin{enumerate}[label=(\roman*),ref=(\roman*)]
        \item 
    \(\begin{pmatrix}
            \vec{\boldsymbol{\Gamma}}_k^H(0)\\\boldsymbol{\Lambda}_{k,\wedge k-1}^{H, B}(0)
        \end{pmatrix} = \begin{pmatrix}
            \sigma_{W_{\mathrm{in}}}^2\|x\|^2.\vec{1}_k\\\boldsymbol{\xi}^{B, W_{\mathrm{in}}}_{\wedge k-1}(0)^\top x
        \end{pmatrix};
        \;\;\partial_s\begin{pmatrix}
            \vec{\boldsymbol{\Gamma}}_k^H(s)\\\boldsymbol{\Lambda}_{k,\wedge k-1}^{H, B}(s)
        \end{pmatrix} = \mathbf{C}_k^H(s).\begin{pmatrix}
            \vec{\boldsymbol{\Gamma}}_k^H(s)\\\boldsymbol{\Lambda}_{k,\wedge k-1}^{H, B}(s)
        \end{pmatrix}\)
        \item \(\begin{pmatrix}
            \xi_k^{H, \Win}\\
            \xi_k^{H, \Wout}
        \end{pmatrix}(0) = \begin{pmatrix}
            \vpin^2x\\ \vec{0}
        \end{pmatrix};\;\; \partial_s\begin{pmatrix}
            \xi_k^{H, \Win}\\
            \xi_k^{H, \Wout}
        \end{pmatrix}(s) = \begin{pmatrix}
            \boldsymbol{\xi}_{\wedge k-1}^{B,\Win}(s)\\
            \boldsymbol{\xi}_{\wedge k-1}^{B,\Wout}(s)
        \end{pmatrix}\mathbf{M}_k^{\mathbf{H}, \mathbf{B}}(s)^\top \vec{\boldsymbol{\Gamma}}_k^H(s)\)
        \item Compute $y_k = \xi^{H, W_{\mathrm{out}}}_k(1)$.
        \item 
        \(\begin{pmatrix}
            \vec{\boldsymbol{\Gamma}}_k^B(1)\\\boldsymbol{\Lambda}_{\wedge k-1,k}^{H, B}(1)
        \end{pmatrix} = \begin{pmatrix} \sigma_{W_{\mathrm{out}}}^2.\begin{pmatrix}
            \nabla\ell(y_k)^\top \nabla\ell(y_0)\\\vdots\\
            \nabla\ell(y_k)^\top \nabla\ell(y_{k-1})
        \end{pmatrix}
            \\\boldsymbol{\xi}^{H, W_{\mathrm{out}}}_{\wedge k-1}(1)^\top \nabla \ell(y_k)
        \end{pmatrix};
        \;\; \partial_s\begin{pmatrix}
            \vec{\boldsymbol{\Gamma}}_k^B(s)\\\boldsymbol{\Lambda}_{\wedge k-1,k}^{H, B}(s)
        \end{pmatrix} = \mathbf{C}_k^B(s).\begin{pmatrix}
            \vec{\boldsymbol{\Gamma}}_k^B(s)\\\boldsymbol{\Lambda}_{\wedge k-1,k}^{H, B}(s)
        \end{pmatrix}\)

    \item
    \(\begin{pmatrix}
            \xi_k^{B, \Win}\\
            \xi_k^{B, \Wout}
        \end{pmatrix}(1) = \begin{pmatrix}
            \vec{0}\\\sigma_{\mathrm{out}}^2\nabla\ell(y_k)
        \end{pmatrix};\;\; \partial_s\begin{pmatrix}
            \xi_k^{B, \Win}\\
            \xi_k^{B, \Wout}
        \end{pmatrix}(s) = -\begin{pmatrix}
            \boldsymbol{\xi}_{\wedge k-1}^{B,\Win}(s)\\
            \boldsymbol{\xi}_{\wedge k-1}^{B,\Wout}(s)
        \end{pmatrix}\mathbf{M}_k^{\mathbf{H}, \mathbf{B}}(s) \vec{\boldsymbol{\Gamma}}_k^B(s)\)
        \item \(\boldsymbol{\Lambda}_{ k,k}^{H, B}(s) \equiv x^\top \xi_k^{B, W_{\mathrm{in}}}(0) = \nabla\ell(y_k)^\top\xi_k^{H, W_{\mathrm{out}}}(1)\), and
    \begin{align*}
        \Gamma_{k,k}^H(0) = \sigma_{W_{\mathrm{in}}}^2\|x\|^2; & \qquad \partial_s\Gamma_{k,k}^H(s) = 2 (\vec{\boldsymbol{\Gamma}}_k^H(s))^T\mathbf{M}_k^{\mathbf{H}, \mathbf{B}}\boldsymbol{\Lambda}_{k,\wedge k}^{H, B}(s)\\
        \Gamma_{k,k}^B(0) = \sigma_{W_{\mathrm{out}}}^2\|\nabla\ell(y_k)\|^2; & \qquad \partial_s\Gamma_{k,k}^B(s) = -2 (\vec{\boldsymbol{\Gamma}}_k^B(s))^T(\mathbf{M}_k^{\mathbf{H}, \mathbf{B}})^\top\boldsymbol{\Lambda}_{\wedge k,k}^{H, B}(s)
    \end{align*}
    \end{enumerate}
    where we set, for \(i\in[0:k-1]\), \((\mathbf{W}_k^H(s))_{i,\cdot} = ((\vec{\boldsymbol{\Gamma}}_i^H(s))^{\top}\mathbf{M}^{\mathbf{H},\mathbf{B}}_i(s) | \mathbf{0}_{1 \times k-i})\) and similarly \((\mathbf{W}_k^B(s))_{i,\cdot} = ((\vec{\boldsymbol{\Gamma}}_i^B(s))^{\top}\tilde{\mathbf{M}}^{\mathbf{H},\mathbf{B}}_i(s) | \mathbf{0}_{1 \times k-i})\); and define by blocks,
    \[\mathbf{C}_k^H(s) = \begin{pmatrix}
    \boldsymbol{\Lambda}_{\wedge k-1,\wedge k-1}^{H, B}(s)(\mathbf{M}^{\mathbf{H},\mathbf{B}}_k(s))^{\top} & \mathbf{W}_k^H(s)\\
    \boldsymbol{\Gamma}_{\wedge k-1}^B(s)(\mathbf{M}^{\mathbf{H},\mathbf{B}}_k(s))^{\top} + \mathbf{W}_k^B(s) & \mathbf{0}_{k\times k}
\end{pmatrix}\]
\[\mathbf{C}_k^B(s) = \begin{pmatrix}
    -(\boldsymbol{\Lambda}_{\wedge k-1,\wedge k-1}^{H, B}(s))^\top\mathbf{M}^{\mathbf{H},\mathbf{B}}_k(s) & \mathbf{W}_k^B(s)\\
    -\boldsymbol{\Gamma}_{\wedge k-1}^H(s)\mathbf{M}^{\mathbf{H},\mathbf{B}}_k(s) + \mathbf{W}_k^H(s) & \mathbf{0}_{k\times k}
\end{pmatrix}\]
With this, we are able to add the $k+1$-th row/column to all relevant matrices, and be ready for step $k+1$.
    The above procedure can be implemented \textit{exactly}, modulo the choice of algorithm for numerically solving each linear ODE (e.g. 4th-order Runge-Kutta method). For $N=1$ and small $K$, solving this system is much more efficient than training a very large neural network (a few seconds vs. more than 20 minutes). However, as deduced from the path-dependent construction, the compute and memory required for determining $\Gamma_{K,K}^H$, $\Gamma_{K,K}^B$ scales with $\Omega((KN)^3)$, so that only modest examples allow for calculating it.
    
\printbibliography

\end{document}